\setlist[enumerate]{itemsep=0pt}
\colorlet{inlinkcolor}{purple}
\colorlet{exlinkcolor}{purple}
\colorlet{citecolor}{teal!50!blue}
\renewcommand*{\backrefalt}[4]{%
    \ifcase #1 \footnotesize{(Not cited.)}%
    \or        \footnotesize{(Cited on page~#2.)}%
    \else      \footnotesize{(Cited on pages~#2.)}%
    \fi}
\newtheorem{theorem}{Theorem}[section]
\newtheorem{proposition}[theorem]{Proposition}
\newtheorem{lemma}[theorem]{Lemma}
\newtheorem{corollary}[theorem]{Corollary}
\newenvironment{customproposition}[1]
{\innercustomproposition}
{\endinnercustomproposition}
\newenvironment{customlemma}[1]
{\innercustomlemma}
{\endinnercustomlemma}
\newenvironment{customtheorem}[1]
{\innercustomtheorem}
{\endinnercustomtheorem}
\theoremstyle{definition}
\newtheorem{definition}{Definition}
\newtheorem{assumption}{Assumption}
\theoremstyle{remark}
\newtheorem{remark}[theorem]{Remark}
\newtheorem{example}{Example}
\numberwithin{equation}{section} 
\newcommand{\setheader}[2]{%
    \pagestyle{fancy}
    \fancyhf{}
    \fancyhead[LO]{#1}
    \fancyhead[RO]{}
    \fancyhead[LE]{}
    \fancyhead[RE]{#2}
    \renewcommand{\headrulewidth}{0.4pt}
    \fancyfoot[C]{\thepage}
    \renewcommand{\headrule}{\color{gray!75}\hrule width\textwidth height 1.0pt}
}
\renewcommand{\maketitle}{
    \begin{center}
        {\Large\bfseries\@title\par}
        \vskip 1.5em
        {\@author\par}
        \vskip 1em
        {\@date\par}
    \end{center}
    \vskip 1.5em
}
\renewenvironment{abstract}{
    \small\noindent{\bfseries Abstract.}
    \ignorespaces
}{\par\vspace{0.5em}}
\newcommand{\keywords}[1]{
    \noindent{\small\bfseries Keywords:} {\small #1}
    \par\noindent\textcolor{gray}{\rule{\linewidth}{0.5pt}}
}
\titleformat{\subsection}[runin]
    {\normalfont\normalsize\bfseries}
    {\thesubsection}{1em}{}[.]
\titleformat{\subsubsection}[runin]
    {\normalfont\normalsize\itshape}
    {\thesubsubsection}{1em}{}[.]
\titleformat{\paragraph}[runin]
    {\normalfont\normalsize\bfseries}
    {\theparagraph}{1em}{}[.]
\def\d{\,\mathrm{d}}
\def\dt{\,\mathrm{d}t}
\def\ds{\,\mathrm{d}s}
\def\what{\widehat}
\def\bbone{{\mathbbm{1}}}
\def\bzero{{\mathbf{0}}}
\def\vepsilon{{\bm{\varepsilon}}}
\def\vb{\mathbf{b}}
\def\vg{\mathbf{g}}
\def\vn{\mathbf{n}}
\def\vs{\mathbf{s}}
\def\vv{\mathbf{v}}
\def\vx{\mathbf{x}}
\def\vy{\mathbf{y}}
\def\vz{\mathbf{z}}
\def\mA{\mathbf{A}}
\def\mB{\mathbf{B}}
\def\mI{\mathbf{I}}
\def\mP{\mathbf{P}}
\def\mX{\mathbf{X}}
\def\mY{\mathbf{Y}}
\def\mZ{\mathbf{Z}}
\def\calA{{\mathcal{A}}}
\def\calF{{\mathcal{F}}}
\def\calI{{\mathcal{I}}}
\def\calJ{{\mathcal{J}}}
\def\calM{{\mathcal{M}}}
\def\calN{{\mathcal{N}}}
\def\calO{{\mathcal{O}}}
\def\calP{{\mathcal{P}}}
\def\calT{{\mathcal{T}}}
\def\calW{{\mathcal{W}}}
\def\calX{{\mathcal{X}}}
\def\calZ{{\mathcal{Z}}}
\def\bbE{{\mathbb{E}}}
\def\bbF{{\mathbb{F}}}
\def\bbN{{\mathbb{N}}}
\def\bbP{{\mathbb{P}}}
\def\bbQ{{\mathbb{Q}}}
\def\bbR{{\mathbb{R}}}
\def\scrH{{\mathscr{H}}}
\def\frakR{{\mathfrak{R}}}
\DeclareMathOperator*{\argmax}{arg\,max}
\DeclareMathOperator*{\argmin}{arg\,min}
\def\kl{\mathrm{KL}}
\def\iid{\mathrm{i.i.d.}}
\def\cov{\mathrm{Cov}}
\def\var{\mathrm{Var}}
\def\refer{\mathrm{ref}}
\def\supp{\mathrm{supp}}
\def\op{\mathrm{op}}
\def\tv{\mathrm{TV}}
\DeclareMathOperator{\pr}{Pr}
\title{Inference-Time Alignment for Diffusion Models via Variationally Stable Doob's Matching}
\author[1,2]{Jinyuan Chang}
\author[3]{Chenguang Duan}
\author[4]{Yuling Jiao}
\author[5]{Yi Xu}
\author[5]{Jerry Zhijian Yang}
\affil[1]{Joint Laboratory of Data Science and Business Intelligence, Southwestern University of Finance and Economics, Chengdu, Sichuan 611130, China. \protect\\ \texttt{changjinyuan@swufe.edu.cn}}
\affil[2]{State Key Laboratory of Mathematical Sciences, Academy of Mathematics and Systems Science, Chinese Academy of Sciences, Beijing 100190, China}
\affil[3]{Institut f{\"u}r Geometrie und Praktische Mathematik, RWTH Aachen University, Im S{\"u}sterfeld 2, 52072 Aachen,Germany. \protect\\ \texttt{duan@igpm.rwth-aachen.de}}
\affil[4]{School of Artificial Intelligence, Wuhan University, Wuhan, Hubei 430072, China. \protect\\ \texttt{yulingjiaomath@whu.edu.cn}}
\affil[5]{School of Mathematics and Statistics, Wuhan University, Wuhan, Hubei 430072, China. \protect\\ \texttt{\{yixu.math,zjyang.math\}@whu.edu.cn}}
\date{\today}
\begin{document}

\thispagestyle{plain}
\maketitle

\begin{abstract}
Inference-time alignment for diffusion models aims to adapt a pre-trained reference diffusion model toward a target distribution without retraining the reference score network, thereby preserving the generative capacity of the reference model while enforcing desired properties at the inference time. A central mechanism for achieving such alignment is guidance, which modifies the sampling dynamics through an additional drift term. In this work, we introduce variationally stable Doob's matching, a novel framework for provable guidance estimation grounded in Doob's $h$-transform. Our approach formulates guidance as the gradient of logarithm of an underlying Doob's $h$-function and employs gradient-regularized regression to simultaneously estimate both the $h$-function and its gradient, resulting in a consistent estimator of the guidance. Theoretically, we establish non-asymptotic convergence rates for the estimated guidance. Moreover, we analyze the resulting controllable diffusion processes and prove non-asymptotic convergence guarantees for the generated distributions in the 2-Wasserstein distance. Finally, we show that variationally stable guidance estimators are adaptive to unknown low dimensionality, effectively mitigating the curse of dimensionality under low-dimensional subspace assumptions.
\end{abstract}

\keywords{Controllable generative learning, inference-time alignment, Doob's $h$-transform, convergence rate}


\section{Introduction}

\par Diffusion models~\citep{Sohl2015Deep,Ho2020Denoising,song2019Generative,song2021scorebased} have emerged as powerful generative tools for sampling from data distributions, achieving remarkable success across diverse domains, including text-to-image and text-to-video generation~\citep{Ramesh2021Zero}, Bayesian inverse problems~\citep{chung2023diffusion,song2023Loss,chen2025solving,chang2025provable}, and scientific applications~\citep{Bao2024Score,Li2025State,si2025latentensf,ding2025nonlinear,uehara2025reward}. Recent years have witnessed the development of large-scale diffusion models pre-trained on vast datasets. Despite the robust capabilities of these reference models in capturing the training distribution, the target distributions of downstream generative tasks rarely align perfectly with this reference distribution. For example, in conditional generative learning~\citep{Dhariwal2021Diffusion,ho2021classifierfree}, reference diffusion models generate samples from a mixture of distributions, whereas downstream tasks require sampling from specific constituent distributions. Similarly, in posterior sampling contexts~\citep{chung2023diffusion,song2023Loss,chen2025solving,chang2025provable,purohit2024posterior,martin2025pnpflow}, the reference distribution serves as the prior, while the target is the posterior defined by tilting the prior with a measurement likelihood. Furthermore, downstream tasks frequently impose additional constraints, such as human preferences or safety considerations~\citep{Domingo2024Stochastic,domingo2025adjoint,uehara2025inference,kim2025testtime,sabour2025test,denker2025iterative,ren2025drift}, which must be satisfied without compromising the model's generative quality.

\par To bridge the gap between reference and target distributions, researchers have proposed numerous alignment methods~\citep{Xu2023ImageReward,lee2023aligning,Fan2023DPOK,clark2024directly,Domingo2024Stochastic,domingo2025adjoint,uehara2024finetuning,uehara2025inference}. These strategies generally fall into two categories: fine-tuning and inference-time alignment. Fine-tuning approaches involve retraining the reference score network via supervised learning~\citep{lee2023aligning}, reinforcement learning~\citep{Fan2023DPOK,black2024training,clark2024directly,Uehara2024Feedback}, or classifier-free fine-tuning~\citep{ho2021classifierfree,Zhang2023Adding,yuan2023rewarddirected}. Despite its conceptual simplicity, fine-tuning presents significant limitations. First, it often requires a substantial collection of high-quality samples from the target distribution, which may be unavailable in practical scenarios like posterior sampling. Second, the computational cost of retraining score networks can be prohibitive, particularly for large-scale models with billions of parameters~\citep{uehara2025inference}. Third, fine-tuning is vulnerable to over-optimization~\citep{Gao2023Scaling,Rafailov2024Scaling,kim2025testtime}, where the network overfits to limited target samples or preferences, causing it to ``forget'' the valuable prior information encoded in the reference model. This degradation undermines the fundamental advantage of leveraging pre-trained models.

\par In contrast, inference-time alignment~\citep{uehara2025inference,kim2025testtime,sabour2025test,denker2025iterative,ren2025drift,pachebat2025iterative} eliminates the need to retrain the reference diffusion model. These methods offer substantial computational advantages and preserve the generative capacity of the underlying model. The core technique is guidance~\citep{jiao2025unified}, which incorporates target information by introducing an additional drift term to the reference diffusion model. Within the framework of Doob's $h$-transform~\citep{Rogers2000Diffusions,Sarkka2019applied,Chewi2025log}, the score function for the target tilted distribution decomposes into the sum of the reference score and a guidance term, where the guidance is defined as the gradient of the log-Doob's $h$-function~\citep{Heng2024Diffusion,Tang2024stochastic,Denker2024DEFT,denker2025iterative}. This relationship has also been investigated through the lens of classifier guidance~\citep{Dhariwal2021Diffusion}, stochastic optimal control~\citep{han2024stochastic,tang2025finetuning}, and Bayes' rule~\citep{chung2023diffusion,song2023Loss}. The primary challenge lies in accurately estimating this guidance.

\par Guidance estimation methods can be categorized into two main approaches: approximation and learning. The approximation approach, exemplified by diffusion posterior sampling~\citep{chung2023diffusion} and loss-guided diffusion~\citep{song2023Loss}, relies on heuristic approximations that often lead to inconsistencies with the underlying mathematical formulation. The learning approach~\citep{Dhariwal2021Diffusion,Tang2024stochastic,Denker2024DEFT,denker2025iterative} attempts to learn the necessary components by deep neural networks. Classifier guidance~\citep{Dhariwal2021Diffusion} learns Doob's $h$-function via a classifier, but this is effective primarily for discrete labels. Furthermore, the convergence of the plug-in gradient estimator of the classifier is not guaranteed, potentially undermining guidance reliability~\citep[Section~3.2.2]{mou2025rlfinetuning}. To mitigate this, \citet{Tang2024stochastic} estimate Doob's $h$-function and its gradient using separate neural networks, increasing training complexity. While \citet{Denker2024DEFT} directly learn the guidance, their method requires samples from the target tilted distribution. \citet{denker2025iterative} attempt to address this data requirement via iterative retraining.

\par To address these limitations, we introduce variationally stable Doob's $h$-matching, a novel framework for provable guidance estimation in inference-time alignment. We propose a gradient-regularized regression method that simultaneously estimates Doob's $h$-function and its gradient, yielding a consistent estimator of the guidance. When combined with the pre-trained reference score, our method enables efficient sampling from the target distribution without the need for computationally expensive fine-tuning or access to target distribution samples.

\subsection{Contributions}

Our main contributions are summarized as follows:
\begin{enumerate}[label=(\roman*)]
\item We introduce \emph{variationally stable Doob's matching}, a novel guidance estimation framework for controllable diffusion models grounded in Doob's $h$-transform. The Doob $h$-function is estimated via a least-squares regression approach augmented with a gradient regularization, and the plug-in gradient of the logarithm of the resulting $h$-function estimator yields an estimator for the Doob's guidance. Additionally, this method is derivative-free, meaning it does not require access to the gradient of the weight function between the target tilted distribution and the reference distribution.
\item We establish non-asymptotic convergence rates for variationally stable Doob's matching, showing that the proposed method guarantees convergence of both the $h$-function estimator and its gradient. These results provide rigorous theoretical guarantees for Doob's guidance estimation (Theorem~\ref{theorem:rate:guidance}). Moreover, we derive non-asymptotic convergence rates for the induced controllable diffusion models, thereby establishing rigorous guarantees for the generated distributions in the $2$-Wasserstein distance (Theorem~\ref{theorem:rate:controllable:diffusion}). Additionally, we obtain convergence rates that depend only on the intrinsic dimension, thereby mitigating the curse of dimensionality under low-dimensional subspace assumptions (Theorem~\ref{theorem:rate:guidance:low} and Corollary~\ref{corollary:rate:controllable:diffusion:low}).
\end{enumerate}

\subsection{Organization}
The remainder of this paper is organized as follows. In Section~\ref{section:preliminaries}, we provide a brief introduction to diffusion models. In Section~\ref{section:method}, we propose the stochastic dynamics of controllable diffusion models within the framework of Doob's $h$-transform, and we present a practical algorithm to simultaneously estimate Doob's $h$-function and its gradient. Section~\ref{section:convergence} establishes non-asymptotic error bounds for both the estimation of the $h$-function and the induced controllable diffusion models. Finally, concluding remarks are provided in Section~\ref{section:conclusion}. Detailed proofs of theoretical results are deferred to the appendix.


\section{Preliminaries on Diffusion Models}
\label{section:preliminaries}

\subsection{Forward and time-reversal process}

We consider the diffusion model for a reference distribution $p_{0}$. The forward process of the reference diffusion model is defined by the Ornstein--Uhlenbeck process:
\begin{equation}\label{eq:forward:base}
\d\mX_{t}=-\mX_{t}\dt+\sqrt{2}\d\mB_{t}, \quad t\in(0,T),~\mX_{0}\sim p_{0},
\end{equation}
where $\mB_{t}$ is a $d$-dimensional standard Brownian motion, and $T>0$ is the terminal time. The transition distribution of the forward process can be expressed as:
\begin{equation}\label{eq:forward:solution}
(\mX_{t}|\mX_{0}=\vx_{0})\sim \mathcal{N}(\mu_{t}\vx_{0},\sigma_{t}^{2}\mI_{d}),
\end{equation}
where the mean and variance coefficients are given, respectively, as $\mu_{t}=\exp(-t)$ and $\sigma_{t}^{2}=1-\exp(-2t)$. The forward process~\eqref{eq:forward:base} is commonly referred to as the variance-preserving (VP) SDE~\citep{song2021scorebased} as $\mu_{t}^{2}+\sigma_{t}^{2}=1$ for each $t\in(0,T)$. Denote by $p_{t}$ the marginal density of $\mX_{t}$ for each $t\in(0,T)$, which satisfies 
\begin{equation}\label{eq:marginal:density}
p_{t}(\vx_{t})=\int\varphi_{d}(\vx_{t};\mu_{t}\vx_{0},\sigma_{t}^{2}\mI_{d})p_{0}(\vx_{0})\d\vx_{0},
\end{equation}
where $\varphi_{d}(\cdot;\mu_{t}\vx_{0},\sigma_{t}^{2}\mI_{d})$ denotes the density function of the Gaussian distribution $\calN(\mu_{t}\vx_{0},\sigma_{t}^{2}\mI_{d})$. The corresponding time-reversal process~\citep{Anderson1982Reverse} of~\eqref{eq:forward:base} is characterized by: 
\begin{equation}\label{eq:base:reversal}
\begin{aligned}
\d\mX_{t}^{\leftarrow}&=\Big(\mX_{t}^{\leftarrow}+2\overbrace{\nabla\log p_{T-t}(\mX_{t}^{\leftarrow})}^{\text{base score}}\Big)\dt+\sqrt{2}\d\mB_{t}, \quad t\in(0,T), \\
\mX_{0}^{\leftarrow}&\sim p_{T}.
\end{aligned}
\end{equation}
It has been established that the path measure of the time-reversal process $(\mX_{t}^{\leftarrow})_{0\leq t\leq T}$ corresponds exactly to the reverse of the forward process $(\mX_{t})_{0\leq t\leq T}$~\citep{Anderson1982Reverse}.

\subsection{Path measure and filtration}
\label{section:preliminaries:path}

We formally define the probability space for the time-reversal process~\eqref{eq:base:reversal}. Let $\Omega\coloneq C([0,T],\bbR^{d})$ be the space of continuous functions mapping $[0,T]$ to $\bbR^{d}$, equipped with the topology of uniform convergence. Let $\calF$ be the Borel $\sigma$-algebra on $\Omega$. We define the canonical process $\mX^{\leftarrow}$ on $\Omega$ via the coordinate mapping $\mX_{t}^{\leftarrow}(\omega) = \omega(t)$ for all $\omega \in \Omega$. The natural filtration is given by $\bbF\coloneq (\calF_{t})_{0\leq t\leq T}$, where $\calF_{t}\coloneq\sigma(\mX_{s}^{\leftarrow}\mid 0\leq s\leq t)$ is the $\sigma$-algebra generated by the path up to time $t$. We denote by $\bbP$ the probability measure on $(\Omega,\calF)$ induced by the law of the solution to the SDE~\eqref{eq:base:reversal} with initial distribution $\mX_0^{\leftarrow} \sim p_{T}$. Consequently, the filtered probability space is denoted as $(\Omega,\calF,\bbF,\bbP)$, and $\mB_{t}$ is an $\bbF$-Brownian motion under $\bbP$.

\subsection{Training phase: score matching}

\par In generative learning, the exact reference score $\nabla\log p_{t}$ in the time-reversal process~\eqref{eq:base:reversal} is intractable. One can estimate the reference score using samples from the reference density $p_{0}$ via standard techniques such as implicit score matching~\citep{hyvarinen2005Estimation}, sliced score matching~\citep{song2020Sliced}, and denoising score matching~\citep{vincent2011connection}. Let $\what{\vs}:(0,T)\times\bbR^{d}\to\bbR^{d}$ denote an estimator for the prior score, that is,
\begin{equation}\label{eq:base:score:estimator}
\|\what{\vs}(t,\cdot)-\nabla\log p_{T-t}\|_{L^{2}(p_{T-t})}\leq\varepsilon,
\end{equation}
for a small tolerence $\varepsilon\ll 1$. Considerable research has established theoretical guarantees for this score estimation~\citep{Tang2024Adaptivity,Oko2023Diffusion,fu2024unveil,ding2025characteristic}, leveraging standard techniques from non-parametric regression with deep neural networks~\citep{bauer2019deep,schmidt2020nonparametric,kohler2021rate,Jiao2023deep}.

\subsection{Inference phase: sampling}
\label{section:preliminaries:inference}

\par Given a reference score estimator $\what{\vs}$ in~\eqref{eq:base:score:estimator}, the inference phase of diffusion models aims to generate samples by simulating the time-reversal process with estimated score. Since the explicit solution of the time-reversal process is intractable, we employ an exponential integrator~\citep{Hochbruck2005Explicit,Hochbruck2010Exponential,Lu2022DPMv1,zhang2023fast}. This approach is well-suited for solving the time-reversal process due to the semi-linearity of the drift term of the SDE in~\eqref{eq:base:reversal}.

Let $K\in\bbN$ denote the number of discretization steps, and let $T_{0}>0$ be an early-stopping time. We define a sequence of uniform time points $t_{k}\coloneq kh$ for $k=0,\dots,K$, where the step size is $h\coloneq(T-T_{0})/K$. In each time sub-interval, the exponential integrator approximates the score function by its value at the left endpoint:
\begin{equation}\label{eq:reverse:score:estimator:expint}
\begin{aligned}
\d\what{\mX}_{t}^{\leftarrow}&=(\what{\mX}_{t}^{\leftarrow}+2\what{\vs}(kh,\what{\mX}_{kh}^{\leftarrow}))\dt+\sqrt{2}\d\mB_{t}, \quad t\in[kh, (k+1)h), \\
\what{\mX}_{0}^{\leftarrow}&\sim\calN(\bzero,\mI_{d}),
\end{aligned}
\end{equation}
where $0\leq k\leq K-1$. The resulting linear approximation to the original semi-linear SDE has the following explicit solution:
\begin{equation*}
\what{\mX}_{(k+1)h}^{\leftarrow}=\exp(h)\what{\mX}_{kh}^{\leftarrow}+2\phi^{2}(h)\what{\vs}(kh,\what{\mX}_{kh}^{\leftarrow})+\phi(2h)\bm{\xi}_{k}, \quad 0 \leq k \leq K-1,
\end{equation*}
where $\phi(z)\coloneq\sqrt{\exp(z)-1}$, and $\bm{\xi}_{0},\ldots,\bm{\xi}_{K-1}$ are i.i.d. standard Gaussian random variables.

\begin{remark}[Initialization]
Note that the true initial distribution of the time-reversal process~\eqref{eq:base:reversal} is $p_{T}$, rather than the $\calN(\bzero,\mI_{d})$ used in~\eqref{eq:reverse:score:estimator:expint}. We adopt the standard normal distribution because sampling from $\calN(\bzero,\mI_{d})$ is significantly more computationally tractable. This approximation is justified by the fact that $p_{T}$ converges to $\calN(\bzero, \mI_{d})$ exponentially in KL-divergence as $T\to\infty$~\citep{Bakry2014Analysis,Vempala2019Rapid}; thus, the Gaussian initialization is valid for a sufficiently large terminal time $T$.
\end{remark}

\section{Controllable Diffusion Models and Doob's Transform}
\label{section:method}

\par In this section, we propose controllable diffusion models for sampling from a target distribution, defined as the reference distribution tilted by a weight function. We utilize the theory of measure change for diffusion processes on the filtered space $(\Omega,\calF,\bbF,\bbP)$.

\subsection{Problem setup}

\par We assume access to a pre-trained reference diffusion model~\eqref{eq:reverse:score:estimator:expint} that generates samples approximately from the reference distribution $p_{0}$. We aim to sample from a tilted distribution $q_{0}$, defined by reweighting the reference distribution with a known weight function $w:\bbR^{d}\to\bbR_{\geq 0}$:
\begin{equation}\label{eq:weight}
q_{0}(\vx)\coloneq\frac{w(\vx)p_{0}(\vx)}{Z}, \quad\text{where}~Z\coloneq\int w(\vx)p_{0}(\vx)\d\vx<\infty.
\end{equation}
Our goal is to derive a new diffusion process that generates samples from the tilted distribution $q_{0}$ directly by introducing a drift correction to the pre-trained reference diffusion model~\eqref{eq:reverse:score:estimator:expint}.

\par This problem encompasses a wide range of application scenarios.

\begin{example}[Bayesian inverse problems]
Bayesian inverse problems play a critical role in scientific computing~\citep{Stuart2010Inverse,Kantas2014Sequential,ding2025nonlinear}, image science~\citep{chung2023diffusion,mardani2024a,purohit2024posterior,chang2025provable}, and medical imaging~\citep{song2022solving}. In Bayesian inverse problems, we aim to recover an unknown signal $\mX_{0}\in\bbR^d$ from noisy measurements $\mY\in\bbR^{m}$, which are linked by the following measurement model:
\begin{equation}\label{eq:inverse:problem}
\mY=\calA(\mX_{0})+\vn,
\end{equation}
where $\mathcal{A}:\bbR^{d}\to\bbR^{m}$ is a known measurement operator, and $\vn\in\bbR^{m}$ represents a measurement noise with a known distribution. The Bayesian approach incorporates prior knowledge about $\mX_{0}$ in the form of a prior distribution $p_{0}$. Given observed measurements $\mY=\vy$, the goal of Bayesian inverse problems is to sample from the posterior distribution:
\begin{equation}\label{eq:applications:inverse:posterior}
q_{0}(\vx):=p_{\mX_{0}|\mY}(\vx|\vy)=\frac{w(\vx)p_{0}(\vx)}{Z},
\end{equation}
where $w(\vx)\coloneq p_{\mY|\mX_{0}}(\vy|\vx)$ is a likelihood determined by the measurement model~\eqref{eq:inverse:problem}, and $Z$ is a partition function to ensure $q_{0}$ is a valid probability density. For example, for a Gaussian noise $\vn\sim N(\bzero,\sigma^{2}\mI_{d})$, it holds that
\begin{equation}\label{eq:applications:inverse:guidance}
w(\vx)=(2\pi\sigma^{2})^{-\frac{d}{2}}\exp\Big(-\frac{1}{2\sigma^2}\|\vy - \mathcal{A}(\vx)\|_2^2\Big).
\end{equation}
In Bayesian inverse problems, one typically has a reference diffusion model pre-trained on the prior distribution, and aims to sample from the posterior distribution~\eqref{eq:applications:inverse:posterior} without retraining the reference model.
\end{example}

\begin{example}[Reward-guided generation]
In the reward-guided generation~\citep{domingo2025adjoint,uehara2025inference,kim2025testtime,sabour2025test,denker2025iterative,ren2025drift}, human preferences and constraints can be encoded into a reward function $r:\bbR^{d}\to\bbR$. For instance, in text-to-image generation, the reward function $r$ quantifies how well the generated data aligns with the input prompt. In practice, such reward function can be learned from the human feedback or preference data~\citep{Stiennon2020Learning,ouyang2022training,lee2023aligning}. For the sake of simplicity, we assume throughout this work that the reward function has already been given. A naive approach to reward-guided generation is to maximize the expected reward $\max_{\pi\in\calP}\bbE_{\pi}[r]$, where $\calP$ is the set of probability measures on $\bbR^{d}$. However, solely maximizing the expected reward may lead to over-optimization and degenerate solutions~\citep{kim2025testtime}. To mitigate this, entropy regularization~\citep{uehara2024fine,tang2025finetuning} is incorporated into the objective functional, yielding the following optimization problem:
\begin{equation}\label{eq:max:reward:entropy}
q_{0}\coloneq\argmax_{\pi\in\calP}\bbE_{\pi}[r]-\alpha\kl(\pi\| p_{0}),
\end{equation}
where $\alpha>0$ is a regularization parameter, and $p_{0}$ is the density of a reference distribution, i.e., the distribution corresponding to the pre-trained reference diffusion model. This objective comprises two components: the expected reward, which captures human preferences, and the KL-divergence term, which prevents the distribution from deviating excessively from the reference diffusion model. The closed-form solution to this optimization problem~\eqref{eq:max:reward:entropy} is given by~\citep{Rafailov2023Direct}:
\begin{equation}\label{eq:applications:align:target}
q_{0}(\vx)=\frac{w(\vx)p_{0}(\vx)}{Z}, \quad w(\vx):=\exp\Big(\frac{r(\vx)}{\alpha}\Big),
\end{equation}
where $Z$ is the partition function to ensure $q_{0}$ is a valid probability density. In reward-guided generation, the central objective is to incorporate preferences exclusively during the inference phase, thereby avoiding the substantial computational cost of retraining the large-scale reference diffusion model.
\end{example}

\begin{example}[Transfer learning for diffusion models]
Diffusion models have achieved remarkable success in image generation. However, their performance critically depends on the availability of large-scale training data. In the scenarios of few-shot generation, training diffusion models solely on limited samples typically results in poor generative performance. To retain strong generative capability under data scarcity, a common strategy is to transfer expressive diffusion models pre-trained on large datasets to the target domain~\citep{Ouyang2024Transfer,wang2024Bridging,zhong2025domain,bahram2026dogfit}. Formally, transfer learning aims to adapt a model pre-trained on a large-scale source distribution to a target distribution of much smaller size and diversity of samples. However, directly fine-tuning a large pre-trained diffusion model using only limited target samples often leads to severe overfitting~\citep{wang2024Bridging}. To address this issue, transfer learning approaches for diffusion models typically train a lightweight guidance network on the limited target data and combine it with the pre-trained reference score network, yielding a modified diffusion model capable of sampling from the target distribution~\citep{Ouyang2024Transfer,zhong2025domain,bahram2026dogfit}. Concretely,~\citet{Ouyang2024Transfer} estimates the density ratio between the target and source distributions,
\begin{equation}\label{eq:applications:transfer:density:ratio}
w(\vx):=\frac{q_{0}(\vx)}{p_{0}(\vx)},
\end{equation}
using limited samples from the target dsitribution. For simplicity, we assume throughout this work that the density ratio is known. Under this assumption, the objective of transfer learning for diffusion models is to sample from the target distribution $q_{0}$ by leveraging the pre-trained reference diffusion model together with the estimated density ratio.
\end{example}

\subsection{Controllable diffusion models with Doob's $h$-transform}

\par In this subsection, we achieve inference-time alignment by incorporating guidance into the reference diffusion models via Doob's $h$-transform~\citep{Rogers2000Diffusions,Sarkka2019applied,Heng2024Diffusion,Chewi2025log}. We begin by constructing a target path measure $\bbQ$ on the filtered probability space $(\Omega,\calF,\bbF)$. We assume that $\bbQ$ is absolutely continuous with respect to the reference path measure $\bbP$ defined in Section~\eqref{section:preliminaries:path}. By the Radon-Nikodym theorem, this relationship is characterized by the existence of an $\bbF$-adapted process $(L_{t})_{0\leq t\le T}$ with $L_{t}\geq 0$, such that for every $t\in[0,T]$:
\begin{equation}\label{eq:time:dependent:likelihood}
L_{t}\coloneq\frac{\d\bbQ}{\d\bbP}\Big|_{\calF_{t}}.
\end{equation}
The process $L_{t}$ is known as the Radon-Nikodym derivative process, representing the likelihood ratio between the target and reference measures conditioned on the filtration $\calF_{t}$.

\par We impose two boundary conditions on the target path measure $\bbQ$: (i) the marginal distribution of the initial state $\mX_{0}^{\leftarrow}$ under $\bbQ$ must match that of $\bbP$; and (ii) the marginal distribution of the terminal state $\mX_{T}^{\leftarrow}$ under $\bbQ$ must coincide with the tilted distribution $q_{0}$ defined in~\eqref{eq:weight}. To enforce these constraints, we specify the Radon-Nikodym derivatives of $\bbQ$ with respect to $\bbP$ at the initial and terminal times as follows:
\begin{equation}\label{eq:path:Q}
L_{0}=\frac{\d\bbQ}{\d\bbP}\Big|_{\calF_{0}}\equiv 1, \quad\text{and}\quad
L_{T}=\frac{\d\bbQ}{\d\bbP}\Big|_{\calF_{T}} =\frac{\d\bbQ}{\d\bbP} = \frac{w(\mX_{T}^{\leftarrow})}{\bbE^{\bbP}[w(\mX_{T}^{\leftarrow})]},
\end{equation}
where the denominator serves as the normalizing constant ensuring that $\bbQ$ is a valid probability measure, i.e., $\bbE^{\bbP}[L_{T}]=1$. 

\par The rest of our derivation proceeds in two steps. First, we characterize the stochastic dynamics of $\mX_{t}^{\leftarrow}$ under the target measure $\bbQ$. Second, by invoking the weak uniqueness of solutions to stochastic differential equations~\citep[Lemma~5.3.1]{Oksendal2003Stochastic}, we construct a controllable diffusion process under the reference path measure $\bbP$ whose terminal distribution coincides with the target tilted distribution $q_{0}$.

\paragraph{Dynamics under the target path measure}

\par Girsanov's theorem~\citep[Theorem 8.6.8]{Oksendal2003Stochastic} establishes a fundamental correspondence between a drift shift in the driving Brownian motion and the dynamics of the associated Radon-Nikodym derivative process. Accordingly, we first characterize the evolution of the time-dependent likelihood ratio $L_{t}$.

\begin{proposition}[Doob's $h$-function]
\label{proposition:doob:transform}
The Radon-Nikodym derivative process $L_t$, as defined in~\eqref{eq:time:dependent:likelihood}, admits the following representation:
\begin{equation*}
L_{t}=\bbE^{\bbP}\Big[\frac{\d\bbQ}{\d\bbP}\Big|\calF_{t}\Big]=\frac{h^{*}(t,\mX_{t}^{\leftarrow})}{\bbE^{\bbP}[w(\mX_{T}^{\leftarrow})]},
\end{equation*}
where $h^{*}:[0,T]\times\bbR^{d}\to\bbR$, referred to as Doob's $h$-function, is defined as the conditional expectation of the terminal weight:
\begin{equation}\label{eq:h:function}
h^{*}(t,\vx) \coloneq \bbE^{\bbP}[w(\mX_{T}^{\leftarrow}) \mid \mX_{t}^{\leftarrow}=\vx].
\end{equation}
Furthermore, the log-likelihood ratio satisfies the following SDE:
\begin{equation*}
\d(\log L_{t}) = \nabla\log h^{*}(t,\mX_{t}^{\leftarrow})^{\top}\sqrt{2}\d\mB_{t} - \|\nabla\log h^{*}(t,\mX_{t}^{\leftarrow})\|_{2}^{2}\dt.
\end{equation*}
\end{proposition}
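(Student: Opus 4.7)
\textbf{Proof proposal for Proposition~\ref{proposition:doob:transform}.}

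The plan is to prove the representation of $L_t$ first by exploiting the martingale and Markov structure, and then to obtain the SDE for $\log L_t$ by applying It\^o's formula twice: once to $h^{*}(t,\mX_{t}^{\leftarrow})$ to identify its stochastic differential, and once to the logarithm.

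For the representation, I would start from the standard fact that the Radon--Nikodym derivative process $L_{t}=\mathrm{d}\bbQ/\mathrm{d}\bbP|_{\calF_{t}}$ is a non-negative $\bbP$-martingale with respect to the filtration $\bbF$ (this is a direct consequence of the tower property). In particular $L_{t}=\bbE^{\bbP}[L_{T}\mid\calF_{t}]$, so substituting the prescribed terminal value $L_{T}=w(\mX_{T}^{\leftarrow})/\bbE^{\bbP}[w(\mX_{T}^{\leftarrow})]$ from~\eqref{eq:path:Q} yields
\begin{equation*}
L_{t}=\frac{1}{\bbE^{\bbP}[w(\mX_{T}^{\leftarrow})]}\,\bbE^{\bbP}\bigl[w(\mX_{T}^{\leftarrow})\,\big|\,\calF_{t}\bigr].
\end{equation*}
Since $(\mX_{t}^{\leftarrow})_{0\le t\le T}$ is a strong Markov process under $\bbP$ (being a solution to the time-reversal SDE~\eqref{eq:base:reversal}), the conditional expectation reduces to a function of $\mX_{t}^{\leftarrow}$ only, giving $\bbE^{\bbP}[w(\mX_{T}^{\leftarrow})\mid\calF_{t}]=h^{*}(t,\mX_{t}^{\leftarrow})$ in the notation of~\eqref{eq:h:function}. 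This proves the first assertion.

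For the SDE, the key observation is that $h^{*}(t,\mX_{t}^{\leftarrow})=\bbE^{\bbP}[w(\mX_{T}^{\leftarrow})]\cdot L_{t}$ is itself a $\bbP$-martingale. Applying It\^o's formula to $h^{*}(t,\mX_{t}^{\leftarrow})$ along the dynamics~\eqref{eq:base:reversal} gives a decomposition into a drift term involving $\partial_{t}h^{*}+(\vx+2\nabla\log p_{T-t}(\vx))^{\top}\nabla h^{*}+\Delta h^{*}$ (this is simply the Kolmogorov backward equation applied by the generator of $\mX_{t}^{\leftarrow}$) plus the martingale part $\sqrt{2}\,\nabla h^{*}(t,\mX_{t}^{\leftarrow})^{\top}\,\mathrm{d}\mB_{t}$. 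Because $h^{*}(t,\mX_{t}^{\leftarrow})$ is a martingale, the drift must vanish almost surely, so
\begin{equation*}
\mathrm{d}h^{*}(t,\mX_{t}^{\leftarrow})=\sqrt{2}\,\nabla h^{*}(t,\mX_{t}^{\leftarrow})^{\top}\,\mathrm{d}\mB_{t},\qquad\text{hence}\qquad \mathrm{d}L_{t}=\sqrt{2}\,L_{t}\,\nabla\log h^{*}(t,\mX_{t}^{\leftarrow})^{\top}\,\mathrm{d}\mB_{t}.
\end{equation*}
A second application of It\^o's formula to $\log L_{t}$, combined with the quadratic variation $\mathrm{d}\langle L\rangle_{t}=2L_{t}^{2}\|\nabla\log h^{*}(t,\mX_{t}^{\leftarrow})\|_{2}^{2}\,\mathrm{d}t$, then produces the advertised SDE $\mathrm{d}(\log L_{t})=\sqrt{2}\,\nabla\log h^{*}(t,\mX_{t}^{\leftarrow})^{\top}\,\mathrm{d}\mB_{t}-\|\nabla\log h^{*}(t,\mX_{t}^{\leftarrow})\|_{2}^{2}\,\mathrm{d}t$.

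The main obstacle I expect is the regularity needed to invoke It\^o's formula on $h^{*}$: one needs $h^{*}\in C^{1,2}((0,T)\times\bbR^{d})$ together with enough integrability so that the resulting local martingale is a true martingale (and $\log L_{t}$ is well-defined, i.e.\ $L_{t}>0$ almost surely). Under mild assumptions on the weight $w$ (boundedness or sufficient moments) and the smoothing effect of the Gaussian transition densities in~\eqref{eq:marginal:density}, the conditional expectation in~\eqref{eq:h:function} inherits smoothness and strict positivity, so this technical point can be handled by a standard mollification or localization argument, or by assuming $w$ is smooth and bounded away from zero as a working assumption of the framework.
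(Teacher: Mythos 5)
Your proposal is correct and follows essentially the same route as the paper: identify $L_t$ with $\bbE^{\bbP}[L_T\mid\calF_t]$, use the Markov property to collapse this to $h^*(t,\mX_t^\leftarrow)/Z$, deduce $\d h^*(t,\mX_t^\leftarrow)=\sqrt{2}\,\nabla h^*{}^\top\d\mB_t$ from the martingale property (drift must vanish) via It\^o's formula, and then apply It\^o a second time to $\log L_t$. The only organizational difference is that the paper isolates the martingale claim for $h^*(t,\mX_t^\leftarrow)$ into a separate Lemma~\ref{lemma:h:martingale}, proving it directly from the tower property and integrability of $w$, whereas you obtain it as a corollary of the fact that the Radon--Nikodym derivative process is a $\bbP$-martingale; the two are equivalent, and your extra remark about the $C^{1,2}$-regularity and strict positivity needed for the It\^o steps is a reasonable point that the paper implicitly relies on (and later justifies via Proposition~\ref{proposition:regularity:h:function} under its standing assumptions).
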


\par The proof of Proposition~\ref{proposition:doob:transform} is provided in Appendix~\ref{appendix:section:method}. With the dynamics of $\log L_{t}$ established, we derive the stochastic dynamics of $\mX_{t}^{\leftarrow}$ under the target measure $\bbQ$ via Girsanov's theorem~\citep[Theorem 8.6.6]{Oksendal2003Stochastic}.

\begin{proposition}
\label{proposition:controllable:sde:Q}
Let the reference process $\mX_t^{\leftarrow}$ satisfy the SDE~\eqref{eq:base:reversal} under the path measure $\bbP$, and let $\bbQ$ be the target path measure defined by~\eqref{eq:path:Q}. Assume that the Novikov condition holds:
\begin{equation}\label{eq:Novikov}  
\bbE^{\bbP}\Big[\exp\Big(\int_{0}^{T}\|\nabla\log h^{*}(s,\mX_{s}^{\leftarrow})\|_{2}^{2}\d s\Big)\Big]<\infty.
\end{equation}
Define a process $(\widetilde{\mB}_{t})_{1\leq t\leq T}$ by
\begin{equation}\label{eq:brownian:P:Q}
\d\widetilde{\mB}_{t}=\d\mB_{t}-\sqrt{2}\nabla\log h^{*}(t,\mX_{t}^{\leftarrow})\dt,
\end{equation}
where $\mB_{t}$ is a Brownian motion under $\bbP$. Then $\widetilde{\mB}_{t}$ is a standard Brownian motion under $\bbQ$. Further, under the path measure $\bbQ$, the reference process $\mX_t^{\leftarrow}$ in~\eqref{eq:base:reversal} evolves according to:
\begin{equation}\label{eq:Xt:Q:brownian}
\d\mX_{t}^{\leftarrow}=(\mX_{t}^{\leftarrow}+2\nabla\log p_{T-t}(\mX_{t}^{\leftarrow})+2\nabla\log h^{*}(t,\mX_{t}^{\leftarrow}))\dt+\sqrt{2}\d\widetilde{\mB}_{t}.
\end{equation}
\end{proposition}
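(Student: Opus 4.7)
The plan is to invoke Proposition~\ref{proposition:doob:transform} together with Girsanov's theorem in its standard form. First, I would recast the SDE for $\log L_t$ derived in Proposition~\ref{proposition:doob:transform} into an SDE for $L_t$ itself. Setting $\vu_t \coloneq \sqrt{2}\,\nabla\log h^{*}(t,\mX_t^{\leftarrow})$, the identity
\begin{equation*}
\d(\log L_t) = \vu_t^{\top}\d\mB_t - \tfrac{1}{2}\|\vu_t\|_2^2\dt
\end{equation*}
combined with Itô's formula applied to $L_t = \exp(\log L_t)$ yields $\d L_t = L_t \vu_t^{\top}\d\mB_t$, so that $(L_t)_{0\leq t\leq T}$ is a nonnegative local martingale with $L_0=1$, and is in fact the Doléans-Dade exponential $\calE(\vu\cdot\mB)_t$.

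Second, I would use the Novikov condition~\eqref{eq:Novikov} to upgrade $L_t$ from a local martingale to a true martingale under $\bbP$. Since the exponent appearing in Novikov's criterion is exactly $\tfrac{1}{2}\int_0^T\|\vu_s\|_2^2\d s = \int_0^T\|\nabla\log h^{*}(s,\mX_s^{\leftarrow})\|_2^2\d s$, the assumed integrability gives $\bbE^{\bbP}[L_T]=1$, so $L_T$ defines a probability measure $\bbQ$ on $(\Omega,\calF)$ consistent with~\eqref{eq:path:Q}. Girsanov's theorem~\citep[Theorem~8.6.6]{Oksendal2003Stochastic} then asserts that the process
\begin{equation*}
\widetilde{\mB}_t \coloneq \mB_t - \int_0^t \vu_s \ds = \mB_t - \int_0^t \sqrt{2}\,\nabla\log h^{*}(s,\mX_s^{\leftarrow})\ds
\end{equation*}
is a standard $\bbF$-Brownian motion under $\bbQ$, which in differential form is precisely~\eqref{eq:brownian:P:Q}.

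Finally, I would substitute $\d\mB_t = \d\widetilde{\mB}_t + \sqrt{2}\,\nabla\log h^{*}(t,\mX_t^{\leftarrow})\dt$ into the reference reverse-time SDE~\eqref{eq:base:reversal}. Multiplying by $\sqrt{2}$ gives $\sqrt{2}\d\mB_t = \sqrt{2}\d\widetilde{\mB}_t + 2\nabla\log h^{*}(t,\mX_t^{\leftarrow})\dt$, and absorbing the additional drift into the existing drift recovers~\eqref{eq:Xt:Q:brownian}. Note that weak existence and uniqueness in law of the $\bbP$-dynamics~\eqref{eq:base:reversal} (e.g.\ via \citep[Lemma~5.3.1]{Oksendal2003Stochastic} referenced in the preceding discussion) ensures the substitution is unambiguous as a statement about the law of $\mX_t^{\leftarrow}$ under $\bbQ$.

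The main subtlety is the verification that $L_t$ is a genuine martingale rather than a strict local martingale; this is the role of the Novikov condition, which is imposed as a hypothesis. The remaining steps are standard manipulations with Itô calculus and Girsanov's change of measure, and no additional regularity on $h^{*}$ beyond what is implicit in Proposition~\ref{proposition:doob:transform} and~\eqref{eq:Novikov} is required.
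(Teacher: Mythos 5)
Your proposal is correct and follows essentially the same route as the paper's proof: both recognize $L_t$ as the Dol\'eans--Dade exponential of $\sqrt{2}\,\nabla\log h^{*}(t,\mX_t^{\leftarrow})\cdot\mB$ (you recover this by exponentiating the $\log L_t$ SDE, the paper reads it off directly from the integrated form in Proposition~\ref{proposition:doob:transform}), both verify the $\tfrac12\int\|\vu_s\|_2^2\d s$ exponent in Novikov's criterion matches~\eqref{eq:Novikov} and invoke it to upgrade $L_t$ to a true martingale, both apply Girsanov to conclude $\widetilde{\mB}$ is a $\bbQ$-Brownian motion, and both substitute $\d\mB_t = \d\widetilde{\mB}_t + \sqrt{2}\,\nabla\log h^{*}\dt$ into~\eqref{eq:base:reversal} to obtain~\eqref{eq:Xt:Q:brownian}. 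No material difference.
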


\par The proof is deferred to Appendix~\ref{appendix:section:method}. By the construction in~\eqref{eq:path:Q}, the law of $\mX_{T}^{\leftarrow}$ under $\bbQ$ coincides with the target tilted distribution $q_{0}$.

\begin{remark}[Novikov condition]
The condition in~\eqref{eq:Novikov} ensures that the exponential local martingale defined by the drift shift is a true martingale~\citep[Corollary 5.13]{Karatzas1998Brownian}, which is sufficient for the Radon-Nikodym derivative to be well-defined and for Girsanov's theorem to apply.
\end{remark}

\paragraph{Controllable diffusion process under the reference measure}

\par Although Proposition~\ref{proposition:controllable:sde:Q} constructs stochastic dynamics driven by Brownian motion under $\bbQ$ that achieve the desired terminal distribution $q_{0}$, practical implementation necessitates an SDE driven by Brownian motion under the reference measure $\bbP$.

\par To address this, we construct a surrogate process $\mZ_{t}^{\leftarrow}$ on the reference probability space $(\Omega,\calF,\bbF,\bbP)$ that adopts the drift derived for $\bbQ$:
\begin{equation}\label{eq:Zt:P:brownian}
\d\mZ_{t}^{\leftarrow}=\Big(\mZ_{t}^{\leftarrow}+2\underbrace{\nabla\log p_{T-t}(\mZ_{t}^{\leftarrow})}_{\text{base score}}+2\underbrace{\nabla\log h^{*}(t,\mZ_{t}^{\leftarrow})}_{\text{Doob's guidance}}\Big)\dt+\sqrt{2}\d\mB_{t}.
\end{equation} 
Since the process $\mZ_{t}^{\leftarrow}$ driven by the $\bbP$-Brownian motion~\eqref{eq:Zt:P:brownian} satisfies the same SDE as $\mX_{t}^{\leftarrow}$ driven by the $\bbQ$-Brownian motion~\eqref{eq:Xt:Q:brownian}, the weak uniqueness property of SDE solutions guarantees~\citep[Lemma 5.3.1]{Oksendal2003Stochastic} that the law of $\mZ_{t}^{\leftarrow}$ under $\bbP$ is identical to the law of $\mX_{t}^{\leftarrow}$ under $\bbQ$ for all $t\in[0,T]$. Thus, the law of $\mZ_{T}^{\leftarrow}$ under $\bbP$ coincides the target tilted distribution $q_{0}$. For a detailed formal statement, see~\citet[Theorem 8.6.8]{Oksendal2003Stochastic}.

\section{Variationally Stable Doob's Matching}
\label{section:doob:matching}

\par We have thus far established a controllable diffusion process~\eqref{eq:Zt:P:brownian} capable of generating samples from the target tilted distribution $q_{0}$. However, the Doob's $h$-guidance, required by~\eqref{eq:Zt:P:brownian}, remains intractable. This subsection proposes a variationally stable Doob's matching method to address the estimation of the Doob's guidance.

\subsection{Vanilla least-squares regression for Doob's matching}
\label{section:method:h:matching}

\par For any $t \in (0,T)$, the Doob's $h$-function $h_{t}^{*} \coloneq h^{*}(t,\cdot)$ defined as~\eqref{eq:h:function} is the unique minimizer of the following implicit Doob's matching objective:
\begin{equation}\label{eq:h:matching:implicit}
\begin{aligned}
\mathcal{J}_{t}(h_{t})
&=\mathbb{E}^{\mathbb{P}}\big[\|h_{t}(\mathbf{X}_{t}^{\leftarrow})-w(\mathbf{X}_{T}^{\leftarrow})\|_{2}^{2}\big] \\
&=\mathbb{E}_{\mathbf{X}_{0}\sim p_{0}}\mathbb{E}_{\boldsymbol{\epsilon}\sim\mathcal{N}(\mathbf{0}, \mathbf{I}_{d})}\big[\|h_{t}(\mu_{T-t}\mathbf{X}_{0}+\sigma_{T-t}\boldsymbol{\epsilon})-w(\mathbf{X}_{0})\|_{2}^{2}\big],
\end{aligned}
\end{equation}
where $w: \mathbb{R}^{d} \to \mathbb{R}_{\geq 0}$ is the known weight function in~\eqref{eq:weight}. The following proposition justifies the use of $\mathcal{J}_{t}$ as a surrogate for the explicit $L^2$-distance.

\begin{proposition}\label{proposition:regression:constant}
For every $t\in(0,T)$, the Doob's $h$-function $h_{t}^{*}$ in~\eqref{eq:h:function} minimizes the implicit Doob's matching objective~\eqref{eq:h:matching:implicit}. Further, 
\begin{equation*}
\calJ_{t}(h_{t})=\bbE^{\bbP}\big[\|h_{t}(\mX_{t}^{\leftarrow})-h_{t}^{*}(\mX_{t}^{\leftarrow})\|_{2}^{2}\big]+V_{t}^{2},
\end{equation*}
where $V_{t}^{2}\coloneq\bbE^{\bbP}[\var(w(\mX_{T}^{\leftarrow})|\mX_{t}^{\leftarrow})]$ is a constant independent of $h_{t}$. 
\end{proposition}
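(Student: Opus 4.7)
The plan is to prove the two claims in reverse order: first derive the decomposition $\mathcal{J}_t(h_t) = \mathbb{E}^{\mathbb{P}}[\|h_t(\mathbf{X}_t^{\leftarrow}) - h_t^*(\mathbf{X}_t^{\leftarrow})\|_2^2] + V_t^2$, and then observe that the minimization claim is an immediate consequence, since the first summand is nonnegative and equals zero when $h_t = h_t^*$ almost surely under $p_{T-t}$.

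For the decomposition, I would apply the classical bias--variance argument with respect to the conditional expectation. Inside the objective, add and subtract $h_t^*(\mathbf{X}_t^{\leftarrow})$ to get
\begin{equation*}
h_t(\mathbf{X}_t^{\leftarrow}) - w(\mathbf{X}_T^{\leftarrow}) = \bigl[h_t(\mathbf{X}_t^{\leftarrow}) - h_t^*(\mathbf{X}_t^{\leftarrow})\bigr] + \bigl[h_t^*(\mathbf{X}_t^{\leftarrow}) - w(\mathbf{X}_T^{\leftarrow})\bigr].
\end{equation*}
Expanding the squared norm yields a diagonal part and a cross term. The key step is to vanish the cross term by conditioning on $\mathbf{X}_t^{\leftarrow}$ and invoking the tower property: since $h_t(\mathbf{X}_t^{\leftarrow}) - h_t^*(\mathbf{X}_t^{\leftarrow})$ is $\sigma(\mathbf{X}_t^{\leftarrow})$-measurable, it factors out of the conditional expectation, and the remaining factor $\mathbb{E}^{\mathbb{P}}[h_t^*(\mathbf{X}_t^{\leftarrow}) - w(\mathbf{X}_T^{\leftarrow}) \mid \mathbf{X}_t^{\leftarrow}]$ is identically zero by the very definition~\eqref{eq:h:function} of $h_t^*$ as the conditional expectation $\mathbb{E}^{\mathbb{P}}[w(\mathbf{X}_T^{\leftarrow}) \mid \mathbf{X}_t^{\leftarrow}]$. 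The remaining diagonal term $\mathbb{E}^{\mathbb{P}}[\|h_t^*(\mathbf{X}_t^{\leftarrow}) - w(\mathbf{X}_T^{\leftarrow})\|_2^2]$ is precisely the iterated expectation of the conditional variance, which equals $V_t^2$ by definition.

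To finish, I also need to verify that the two expressions for $\mathcal{J}_t$ given in~\eqref{eq:h:matching:implicit} are indeed equal, since this identity is used implicitly in the statement. This follows from the fact that under the reference path measure $\bbP$, the marginal of $\mathbf{X}_T^{\leftarrow}$ is $p_0$ (it is the terminal of the time-reversal, which matches the initial of the forward process), and by the time-reversal property combined with the Gaussian transition~\eqref{eq:forward:solution}, the conditional distribution of $\mathbf{X}_t^{\leftarrow}$ given $\mathbf{X}_T^{\leftarrow} = \mathbf{x}_0$ is $\mathcal{N}(\mu_{T-t}\mathbf{x}_0, \sigma_{T-t}^2 \mathbf{I}_d)$. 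Writing this conditional draw as $\mathbf{X}_t^{\leftarrow} = \mu_{T-t}\mathbf{X}_0 + \sigma_{T-t}\boldsymbol{\epsilon}$ with $\boldsymbol{\epsilon} \sim \mathcal{N}(\mathbf{0}, \mathbf{I}_d)$ independent of $\mathbf{X}_0 \sim p_0$ matches the second line of~\eqref{eq:h:matching:implicit}.

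No step is conceptually hard: the entire argument is the standard $L^2$-projection identity for conditional expectations, and the only subtlety is being careful about which time index ($t$ versus $T-t$) the forward--backward correspondence swaps. I would therefore keep the writing tight, isolating the conditional expectation step and the change-of-variables step as the two bookkeeping points, and deducing both claims of the proposition in a single display.
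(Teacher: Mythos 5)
Your proof is correct and follows essentially the same route as the paper's: add and subtract $h_t^*(\mX_t^{\leftarrow})$, expand the square, kill the cross term via the tower property and the defining identity $h_t^*(\mX_t^{\leftarrow}) = \bbE^{\bbP}[w(\mX_T^{\leftarrow})\mid\mX_t^{\leftarrow}]$, and identify the residual $\bbE^{\bbP}[\|h_t^*(\mX_t^{\leftarrow})-w(\mX_T^{\leftarrow})\|_2^2]$ with $V_t^2$. The only difference is that you additionally verify the equivalence of the two forms of $\calJ_t$ in~\eqref{eq:h:matching:implicit} by noting $(\mX_t^{\leftarrow},\mX_T^{\leftarrow})\stackrel{d}{=}(\mX_{T-t},\mX_0)$, a bookkeeping point the paper leaves implicit.
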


\par The proof of Proposition~\ref{proposition:regression:constant} is provided in Appendix~\ref{appendix:doob:matching}. 

\subsection{Limitations of vanilla regression}

\par Crucially, computing the Doob's guidance $\nabla\log h_{t}^{*}$ in~\eqref{eq:Zt:P:brownian} requires estimating not only the function $h_{t}^{*}$ itself but also its gradient $\nabla h_{t}^{*}$, as the guidance is given by:
\begin{equation*}
\nabla\log h_{t}^{*}(\vx)=\frac{\nabla h_{t}^{*}(\vx)}{h_{t}^{*}(\vx)}, \quad \vx\in\bbR^{d}.
\end{equation*}

\par According to Proposition~\ref{proposition:regression:constant}, the objective $\mathcal{J}_{t}$ is only coercive with respect to the $L^{2}(p_{T-t})$-norm. Specifically, for any $h_{t}\in L^{2}(p_{T-t})$,
\begin{equation}\label{eq:stability:L2}
\calJ_{t}(h_{t})-\calJ_{t}(h_{t}^{*})
= \|h_{t}-h_{t}^{*}\|_{L^{2}(p_{T-t})}^{2}.
\end{equation} 
However, even if $h_{t}$ is close to $h_{t}^{*}$ in the $L^{2}$ sense, the gradient $\nabla h_{t}$ may remain highly oscillatory. This leads to an unstable plug-in estimator for the Doob's guidance, a difficulty noted in prior works such as~\citet[Section 3.2.1]{Tang2024stochastic} and~\citet[Section 3.2.2]{mou2025rlfinetuning}. Similar issues arise in related contexts, including classifier guidance~\citep{Dhariwal2021Diffusion}, Monte Carlo regression~\citep[Section 2.2]{uehara2025inference}, and~\citet{Ouyang2024Transfer}.

\par To illustrate this fundamental limitation, consider the sequence of functions $f_{n}:\calI\to\bbR$ defined by $x\mapsto n^{-1}\sin(nx)$, alongside the zero function $f_{0}\equiv 0$, where $\calI:=[0,2\pi]$. While $\|f_{n}-f_{0}\|_{L^{2}(\calI)}\to 0$ as $n\to\infty$, the distance between their derivatives does not vanish, i.e., $\lim_{n\to\infty}\|f_{n}^{\prime}-f_{0}^{\prime}\|_{L^{2}(\calI)}\neq 0$. As a result, the convergence of the plug-in Doob's guidance estimator derived from vanilla regression~\eqref{eq:h:matching:implicit}, as utilized in~\citet{uehara2025inference,Ouyang2024Transfer}, cannot be guaranteed in general.  

\par To mitigate this,~\citet{Tang2024stochastic} estimate Doob's $h$-function and its gradient separately via a martingale approach. In contrast, in the remainder of this work, we propose an approach to simultaneously estimate both the function and its gradient.

\subsection{Variationally stable Doob's matching} 

\par To simultaneously estimate the Doob's $h$-function and its gradient, we adopt a gradient-regularized regression~\citep{Drucker1991Double,Drucker1992Improving,ding2025Semi}. The population risk is defined by incorporating an additional Sobolev regularization to~\eqref{eq:h:matching:implicit}:
\begin{equation}\label{eq:h:matching:GP}
\begin{aligned}
h_{t}^{\lambda}=\argmin_{h_{t}:\bbR^{d}\to\bbR^{d}}\calJ_{t}^{\lambda}(h_{t})
&\coloneq\calJ_{t}(h_{t})+\overbrace{\lambda\bbE^{\bbP}\big[\|\nabla h_{t}(\mX_{t}^{\leftarrow})\|_{2}^{2}\big]}^{\text{gradient regularization}} \\
&=\calJ_{t}(h_{t})+\lambda\bbE_{\mX_{0}\sim p_{0}}\bbE_{\vepsilon\sim\calN(\bzero,\mI_{d})}\big[\|\nabla h_{t}(\mu_{T-t}\mX_{0}+\sigma_{T-t}\vepsilon)\|_{2}^{2}\big],
\end{aligned}
\end{equation}
where $\lambda>0$ is a regularization parameter. The following results characterize the regularization gap and the variational stability of this formulation.

\begin{proposition}[Regularization gap]\label{proposition:regularization:gap}
Let $\lambda>0$, $h_{t}^{*}$ be the Doob's $h$-function defined as~\eqref{eq:h:function}, and $h_{t}^{\lambda}$ be the minimizer of $\calJ_{t}^{\lambda}$ defined as~\eqref{eq:h:matching:GP}. Then $h_{t}^{*}\in H^{2}(p_{T-t})$, and 
\begin{align*}
\|h_{t}^{\lambda}-h_{t}^{*}\|_{L^{2}(p_{T-t})}^{2} 
&\leq \lambda^{2}\| \Delta h_{t}^{*}+\nabla h_{t}^{*}\cdot\nabla\log p_{T-t}\|_{L^{2}(p_{T-t})}^{2}, \\
\|\nabla h_{t}^{\lambda}-\nabla h_{t}^{*}\|_{L^{2}(p_{T-t})}^{2}&\leq \lambda\|\Delta h_{t}^{*}+\nabla h_{t}^{*}\cdot\nabla\log p_{T-t}\|_{L^{2}(p_{T-t})}^{2}.
\end{align*}
\end{proposition}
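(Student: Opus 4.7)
The plan is to convert the optimization problem into an elliptic PDE via the first-order optimality condition, and then obtain both estimates by testing that equation against the error $e\coloneq h_{t}^{\lambda}-h_{t}^{*}$, invoking the weighted integration-by-parts identity associated with the Ornstein--Uhlenbeck marginal $p_{T-t}$. Concretely, Proposition~\ref{proposition:regression:constant} gives
\begin{equation*}
\calJ_{t}^{\lambda}(h_{t})=\|h_{t}-h_{t}^{*}\|_{L^{2}(p_{T-t})}^{2}+\lambda\|\nabla h_{t}\|_{L^{2}(p_{T-t})}^{2}+V_{t}^{2},
\end{equation*}
so the functional is strictly convex and coercive on $H^{1}(p_{T-t})$, yielding existence and uniqueness of $h_{t}^{\lambda}$. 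For regularity, the explicit representation $h_{t}^{*}(\vx)=\int w(\vx_{0})\varphi_{d}(\vx;\mu_{T-t}\vx_{0},\sigma_{T-t}^{2}\mI_{d})p_{0}(\vx_{0})\d\vx_{0}/p_{T-t}(\vx)$ shows that $h_{t}^{*}$ is smooth with derivatives controlled by Hermite-polynomial moments of the Gaussian kernel, which places $h_{t}^{*}$ in $H^{2}(p_{T-t})$ under mild integrability of $w$.

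Next, I would compute the first variation: for any smooth, suitably decaying test function $\phi$,
\begin{equation*}
\int(h_{t}^{\lambda}-h_{t}^{*})\phi\,p_{T-t}\d\vx+\lambda\int\nabla h_{t}^{\lambda}\cdot\nabla\phi\,p_{T-t}\d\vx=0.
\end{equation*}
Applying the weighted integration-by-parts identity
\begin{equation*}
\int\nabla g\cdot\nabla\phi\,p_{T-t}\d\vx=-\int\phi\,\scrL[g]\,p_{T-t}\d\vx,\qquad \scrL[g]\coloneq\Delta g+\nabla g\cdot\nabla\log p_{T-t},
\end{equation*}
and using that $\phi$ is arbitrary, I obtain the Euler--Lagrange equation $h_{t}^{\lambda}-h_{t}^{*}=\lambda\scrL[h_{t}^{\lambda}]$ in $L^{2}(p_{T-t})$. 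Writing $h_{t}^{\lambda}=h_{t}^{*}+e$ and using linearity of $\scrL$ gives $e-\lambda\scrL[e]=\lambda\scrL[h_{t}^{*}]$. Taking the $L^{2}(p_{T-t})$-inner product with $e$ and re-applying the integration-by-parts identity to rewrite $-\langle e,\scrL[e]\rangle_{L^{2}(p_{T-t})}=\|\nabla e\|_{L^{2}(p_{T-t})}^{2}$ yields the master identity
\begin{equation*}
\|e\|_{L^{2}(p_{T-t})}^{2}+\lambda\|\nabla e\|_{L^{2}(p_{T-t})}^{2}=\lambda\,\langle e,\scrL[h_{t}^{*}]\rangle_{L^{2}(p_{T-t})}.
\end{equation*}
Cauchy--Schwarz on the right-hand side, combined with dropping the nonnegative gradient term on the left, gives $\|e\|_{L^{2}(p_{T-t})}\leq\lambda\|\scrL[h_{t}^{*}]\|_{L^{2}(p_{T-t})}$, which is exactly the first bound. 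Substituting this back into the master identity and rearranging yields $\|\nabla e\|_{L^{2}(p_{T-t})}^{2}\leq\lambda\|\scrL[h_{t}^{*}]\|_{L^{2}(p_{T-t})}^{2}$, which is the second.

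The main obstacle I expect is the technical justification of the weighted integration by parts and the density argument needed to apply it at the $H^{2}(p_{T-t})$-regular function $h_{t}^{*}$ (and at the less regular minimizer $h_{t}^{\lambda}$, which a priori lies only in $H^{1}(p_{T-t})$). One clean way to handle this is to test first with compactly supported smooth $\phi$ (for which boundary terms vanish), then extend by density using the Gaussian decay of $p_{T-t}$ and the polynomial growth bounds on $\nabla h_{t}^{*},\Delta h_{t}^{*}$ inherited from differentiating the Gaussian convolution; the elliptic regularity of the operator $I-\lambda\scrL$ (which is uniformly elliptic on $L^{2}(p_{T-t})$) then upgrades $h_{t}^{\lambda}$ to $H^{2}(p_{T-t})$ and legitimizes the pointwise Euler--Lagrange form. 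Once these technical steps are in place, the variational computation above goes through verbatim.
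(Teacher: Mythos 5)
Your proposal follows the same variational strategy as the paper: the first-variation condition plus the weighted integration-by-parts identity for $p_{T-t}$ produce the master identity $\|e\|_{L^{2}(p_{T-t})}^{2}+\lambda\|\nabla e\|_{L^{2}(p_{T-t})}^{2}=\lambda\langle e,\scrL[h_{t}^{*}]\rangle_{L^{2}(p_{T-t})}$ with $e=h_{t}^{\lambda}-h_{t}^{*}$, after which Cauchy--Schwarz and the bootstrap give both bounds exactly as you write. The one refinement worth noting: the paper never passes to the strong Euler--Lagrange equation $h_{t}^{\lambda}-h_{t}^{*}=\lambda\scrL[h_{t}^{\lambda}]$, which (as you correctly flag) requires upgrading $h_{t}^{\lambda}$ to $H^{2}(p_{T-t})$ via elliptic regularity and then re-applying integration by parts to $e$. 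Instead, the paper tests the weak first-variation identity directly against $v_{t}=h_{t}^{\lambda}-h_{t}^{*}$ and applies integration by parts only to the term $-\lambda\langle\nabla h_{t}^{*},\nabla v_{t}\rangle_{L^{2}(p_{T-t})}$, which needs nothing beyond the $H^{2}$-regularity of $h_{t}^{*}$ already established in Proposition~\ref{proposition:regularity:h:function}; this reaches your master identity without the round trip through the strong form, so the regularity obstacle you anticipate never arises.
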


\par The proof of Proposition~\ref{proposition:regularization:gap} is provided in Appendix~\ref{appendix:doob:matching}. This proposition demonstrates that as $\lambda\to 0$, the minimizer $h_{t}^{\lambda}$ of the regularized objective~\eqref{eq:h:matching:GP} converges to $h_{t}^{*}$ in $H^{1}$-norm. More importantly, the objective is variationally stable in the $H^{1}$ sense:

\begin{proposition}[Variational stability]\label{proposition:stability}
Let $\lambda>0$, and $h_{t}^{\lambda}$ be the minimizer of $\calJ_{t}^{\lambda}$ defined as~\eqref{eq:h:matching:GP}. Then for any $h_{t}\in H^{1}(p_{T-t})$, we have
\begin{equation*}
\frac{1}{\max\{\lambda,1\}}\big\{\calJ_{t}^{\lambda}(h_{t})-\calJ_{t}^{\lambda}(h_{t}^{\lambda})\big\}
\leq \|h_{t}-h_{t}^{\lambda}\|_{H^{1}(p_{T-t})}^{2} 
\leq \frac{1}{\min\{\lambda,1\}}\big\{\calJ_{t}^{\lambda}(h_{t})-\calJ_{t}^{\lambda}(h_{t}^{\lambda})\big\}.
\end{equation*}
\end{proposition}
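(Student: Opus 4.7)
The plan is to exploit the quadratic structure of $\calJ_t^\lambda$ on $H^1(p_{T-t})$, derive the first-order optimality (Euler--Lagrange) condition at the minimizer $h_t^\lambda$, and convert the resulting exact identity for the gap $\calJ_t^\lambda(h_t) - \calJ_t^\lambda(h_t^\lambda)$ into the claimed $H^1$-bounds via elementary scalar inequalities.

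Setting $\phi := h_t - h_t^\lambda \in H^1(p_{T-t})$, I would expand both quadratic terms of $\calJ_t^\lambda$ using the identity $\|a+b\|_2^2 - \|a\|_2^2 = 2\langle a,b\rangle + \|b\|_2^2$ to obtain
\[
\calJ_t^\lambda(h_t) - \calJ_t^\lambda(h_t^\lambda) = 2\,\Lambda(\phi) + \|\phi\|_{L^2(p_{T-t})}^2 + \lambda\,\|\nabla \phi\|_{L^2(p_{T-t})}^2,
\]
where
\[
\Lambda(\phi) := \bbE^\bbP\bigl[(h_t^\lambda(\mX_t^{\leftarrow}) - w(\mX_T^{\leftarrow}))\,\phi(\mX_t^{\leftarrow})\bigr] + \lambda\,\bbE^\bbP\bigl[\nabla h_t^\lambda(\mX_t^{\leftarrow}) \cdot \nabla\phi(\mX_t^{\leftarrow})\bigr].
\]
Since $h_t^\lambda$ minimizes $\calJ_t^\lambda$ over $H^1(p_{T-t})$, differentiating $\varepsilon \mapsto \calJ_t^\lambda(h_t^\lambda + \varepsilon \phi)$ at $\varepsilon = 0$ must yield zero, and this derivative equals $2\Lambda(\phi)$; hence $\Lambda(\phi) = 0$ for every admissible $\phi$, leaving the exact identity
\[
\calJ_t^\lambda(h_t) - \calJ_t^\lambda(h_t^\lambda) = \|h_t - h_t^\lambda\|_{L^2(p_{T-t})}^2 + \lambda\,\|\nabla h_t - \nabla h_t^\lambda\|_{L^2(p_{T-t})}^2.
\]

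To finish, I would set $a := \|h_t - h_t^\lambda\|_{L^2(p_{T-t})}^2$ and $b := \|\nabla h_t - \nabla h_t^\lambda\|_{L^2(p_{T-t})}^2$, so that $\|h_t - h_t^\lambda\|_{H^1(p_{T-t})}^2 = a + b$ while the gap equals $a + \lambda b$. The elementary inequalities $\min\{\lambda,1\}(a+b) \leq a + \lambda b \leq \max\{\lambda,1\}(a+b)$ then deliver both sides of the proposition after dividing.

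The hard part will be the rigorous justification of the Euler--Lagrange step: one must verify that $\calJ_t^\lambda$ is Gateaux differentiable on $H^1(p_{T-t})$ with the formal derivative $2\Lambda$, which requires $w(\mX_T^{\leftarrow}) \in L^2(\bbP)$ so that the cross term $\bbE^\bbP[(h_t^\lambda - w)\phi]$ is finite, together with the existence of $h_t^\lambda$ in $H^1(p_{T-t})$. The former is implicit in the finiteness of $V_t^2$ from Proposition~\ref{proposition:regression:constant}, and the latter is consistent with the regularity $h_t^* \in H^2(p_{T-t})$ established in Proposition~\ref{proposition:regularization:gap}; both reduce to standard dominated-convergence arguments inside the expectations.
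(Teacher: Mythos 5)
Your proposal is correct and follows essentially the same route as the paper: a quadratic expansion of $\calJ_t^\lambda$ around $h_t^\lambda$, the first-order optimality condition to annihilate the cross term, the resulting exact identity $\calJ_t^\lambda(h_t)-\calJ_t^\lambda(h_t^\lambda)=\|h_t-h_t^\lambda\|_{L^2(p_{T-t})}^2+\lambda\|\nabla h_t-\nabla h_t^\lambda\|_{L^2(p_{T-t})}^2$, and then the elementary scalar sandwich. The only cosmetic difference is that the paper first invokes Proposition~\ref{proposition:regression:constant} to rewrite the data-fit term in terms of $h_t^*$ before varying, while you work directly with $w(\mX_T^\leftarrow)$; your cross term $\bbE^\bbP[(h_t^\lambda-w)\phi]$ equals the paper's $\langle h_t^\lambda-h_t^*,\phi\rangle_{L^2(p_{T-t})}$ by the tower property, so the two computations coincide.
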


\par The proof of Proposition~\ref{proposition:stability} is provided in Appendix~\ref{appendix:doob:matching}. We refer to the regularized objective $\calJ_{t}^{\lambda}$ as variationally stable because the convergence in this objective functional necessitates simultaneous convergence in both the function values and their gradients, i.e., stability in the $H^{1}$ sense. Such variational stability ensures that any candidate function $h_{t}$ achieving a low objective value $\mathcal{J}_{t}^{\lambda}(h_{t})$ is guaranteed to be an approximation of the ground-truth Doob's $h$-function in both value and gradient.

\paragraph{Comparison between vanilla and gradient-regularized Doob's matching}

The distinction between vanilla Doob's matching~\eqref{eq:h:matching:implicit} and the proposed gradient-regularized Doob's matching~\eqref{eq:h:matching:GP} is fundamental for stable diffusion guidance estimation. While vanilla regression guarantees convergence in the $L^{2}$-norm; see~\eqref{eq:stability:L2}, it can be unstable in the $H^{1}$ sense. In contrast, the proposed objective $\calJ_{t}^{\lambda}$ is coercive with respect to the $H^{1}$ norm; that is, for a fixed $\lambda>0$,
\begin{equation*}
\calJ_{t}^{\lambda}(h_{t})-\calJ_{t}^{\lambda}(h_{t}^{\lambda})
\simeq \|h_{t}-h_{t}^{\lambda}\|_{H^{1}(p_{T-t})}^{2}.
\end{equation*}
This property ensures the simultaneous estimation of Doob's $h$-function and its gradient, thereby yielding a plug-in estimator for Doob's guidance with mathematical guarantees.

\subsection{Doob's guidance estimation}

Since the expectation in the population risk~\eqref{eq:h:matching:GP} is computationally intractable, we approximate it by empirical risk using independent and identically distributed samples:
\begin{equation}\label{eq:h:matching:GP:empirical}
\what{\calJ}_{t}^{\lambda}(h_{t})\coloneq\what{\calJ}_{t}(h_{t})+\frac{\lambda}{n}\sum_{i=1}^{n}\|\nabla h_{t}(\mu_{T-t}\mX_{0}^{i}+\sigma_{T-t}\vepsilon^{i})\|_{2}^{2},
\end{equation}
where the empirical least-squares risk is defined as
\begin{equation}\label{eq:erm}
\what{\calJ}_{t}(h_{t})\coloneq\frac{1}{n}\sum_{i=1}^{n}\|h_{t}(\mu_{T-t}\mX_{0}^{i}+\sigma_{T-t}\vepsilon^{i})-w(\mX_{0}^{i})\|_{2}^{2}.
\end{equation}
Here $\mX_{0}^{1},\ldots,\mX_{0}^{n}$ are independent and identically distributed random variables drawn from the reference distribution $p_{0}$, and $\vepsilon_{1},\ldots,\vepsilon_{n}$ are independent standard Gaussian random variables. Then one has a gradient-regularized empirical risk minimizer:
\begin{equation}\label{eq:erm:GP}
\what{h}_{t}^{\lambda}\in\argmin_{h_{t}\in\scrH_{t}}\what{\calJ}_{t}^{\lambda}(h_{t}),
\end{equation}
where $\scrH_{t}$ is a hypothesis class, which is chosen as a neural network class in this work. 

\par The Doob's matching with gradient regularization~\eqref{eq:erm:GP} yields a valid plug-in estimator of the Doob's guidance:
\begin{equation}\label{eq:regressor:guidance}
\what{\vg}_{t}^{\lambda}(\vz)\coloneq\nabla\log\what{h}_{t}^{\lambda}(\vz)=\frac{\nabla\what{h}_{t}^{\lambda}(\vz)}{\what{h}_{t}^{\lambda}(\vz)}\approx\frac{\nabla h_{t}^{*}(\vz)}{h_{t}^{*}(\vz)}=\nabla\log h_{t}^{*}(\vz), \quad \vz\in\bbR^{d}.
\end{equation}

\subsection{A summary of computing procedure}

\par By a similar argument as Section~\ref{section:preliminaries:inference}, we have the exponential integrator for the controllable diffusion model:
\begin{equation}\label{eq:Zt:P:brownian:estimation}
\begin{aligned}
\d\what{\mZ}_{t}^{\leftarrow}&=(\what{\mZ}_{t}^{\leftarrow}+2\what{\vs}(kh,\what{\mZ}_{kh}^{\leftarrow})+2\what{\vg}^{\lambda}(kh,\what{\mZ}_{kh}^{\leftarrow}))\dt+\sqrt{2}\d\mB_{t}, \quad t\in(kh,(k+1)h), \\
\what{\mZ}_{0}^{\leftarrow}&\sim\calN(\bzero,\mI_{d}), 
\end{aligned}
\end{equation} 
where $0\leq k\leq K-1$, the pre-trained reference score estimator $\what{\vs}$ is defined as~\eqref{eq:base:score:estimator}, and the Doob's guidance estimator $\what{\vg}^{\lambda}$ is defined as~\eqref{eq:regressor:guidance}.

\par We apply post-processing to the generated particle $\what{\mZ}_{T-T_{0}}^{\leftarrow}$ to ensure numerical stability and facilitate the theoretical analysis presented in Theorem~\ref{theorem:rate:controllable:diffusion}~\citep{Lee2023Convergence,Chen2023Improved}. First, we assume the target distribution $q_{0}$ is concentrated on a domain centered at the origin, such as a distribution with compact support (Assumption~\ref{assumption:bounded:support}) or with light tails. Consequently, we introduce a truncation operator to the particles obtained from the controllable diffusion model~\eqref{eq:Zt:P:brownian:estimation}. Second, because the controllable diffusion process is terminated at an early-stopping time $T_{0}$, there exists a mean shift between the target distribution $q_{0}$ and the early-stopping distribution $q_{T_{0}}\approx\what{q}_{T-T_{0}}$, as indicated by~\eqref{eq:forward:solution}. To mitigate this drift, we employ a scaling operator. Specifically, for $R>0$, we define a truncation operator $\calT_{R}:\vz\mapsto\vz\bbone_{B(\bzero,R)}(\vz)$ and a scaling operator $\calM:\vz\mapsto\mu_{T_{0}}^{-1}\vz$. The final processed particle is defined as
\begin{equation}\label{eq:trancation:scaling}
\calM\circ\calT_{R}(\what{\mZ}_{T-T_{0}}^{\leftarrow})=\mu_{T_{0}}^{-1}\what{\mZ}_{T-T_{0}}^{\leftarrow}\bbone_{B(\bzero,R)}(\what{\mZ}_{T-T_{0}}^{\leftarrow}),
\end{equation}
and we denote its density by $(\calM\circ\calT_{R})_{\sharp}\what{q}_{T-T_{0}}$.

\par A complete procedure is summarized in Algorithm~\ref{alg:controllable:diffusion}.

\begin{algorithm}[htbp]\label{alg:controllable:diffusion}
\caption{Inference-time alignment via variationally stable Doob's matching}
\KwIn{Reference score estimator $\what{\vs}$, the weight function $w$, the regularization parameter $\lambda$, the step size $h$, and the number of steps $K$.}
\KwOut{Particle $\calM\circ\calT_{R}(\what{\mZ}_{T-T_{0}}^{\leftarrow})$ follows the tilted distribution $q_{0}$ approximately.}
\texttt{\# Doob's matching} \\
Estimate Doob's $h$-function by $\what{h}_{t}^{\lambda}$ via gradient regularized Doob's matching~\eqref{eq:erm:GP}. \\
\texttt{\# Controllable generation} \\
Generate the initial particle $\what{\mZ}_{0}^{\leftarrow}\sim\calN(\bzero,\mI_{d})$. \\
\For{$k=0,\ldots,K-1$}{
Evaluate the reference score: $\what{\vs}_{k}\leftarrow \what{\vs}(kh,\what{\mZ}_{kh}^{\leftarrow})$. \\ 
Evaluate the Doob's guidance: $\what{\vg}_{k}\leftarrow\nabla\log\what{h}^{\lambda}(kh,\what{\mZ}_{kh}^{\leftarrow})$. \\    
Exponential integrator: $\what{\mZ}_{(k+1)h}^{\leftarrow}\sim\calN(\exp(h)\what{\mZ}_{kh}^{\leftarrow}+2\phi^{2}(h)(\what{\vs}_{k}+\what{\vg}_{k}),\phi^{2}(2h)\mI_{d})$.
}
\texttt{\# Truncation and scaling} \\
$\calM\circ\calT_{R}(\what{\mZ}_{T-T_{0}}^{\leftarrow})\leftarrow\mu_{T_{0}}^{-1}\what{\mZ}_{T-T_{0}}^{\leftarrow}\bbone_{B(\bzero,R)}(\what{\mZ}_{T-T_{0}}^{\leftarrow})$. \\
\Return{$\calM\circ\calT_{R}(\what{\mZ}_{T-T_{0}}^{\leftarrow})$}
\end{algorithm}


\section{Convergence Analysis}
\label{section:convergence}

\par In this section, we derive a non-asymptotic convergence rate for the variationally stable Doob's matching~\eqref{eq:erm:GP} and the induced controllable diffusion model~\eqref{eq:Zt:P:brownian:estimation}. Furthermore, we demonstrate that this convergence rate mitigates the curse of dimensionality under mild assumptions.

\subsection{Assumptions}\label{section:convergence:assumption}

\par We begin by outlining the essential technical assumptions required for our theoretical results.

\begin{assumption}[Bounded support]
\label{assumption:bounded:support}
The support of the target distribution $q_{0}$ is a compact set contained within the hypercube $\{\vx_{0}\in\bbR^{d}:\|\vx_{0}\|_{\infty}\leq 1\}$.
\end{assumption}

\par Assumption~\ref{assumption:bounded:support} is a standard condition imposed on the data distribution~\citep{Lee2023Convergence,Oko2023Diffusion,chang2025deep,beyler2025convergence}. This constraint is well-motivated by practical applications; for instance, image and video data consist of bounded pixel values, thereby satisfying this requirement.

\begin{assumption}[Bounded weight function]\label{assumption:bounded:weight}
The weight function $w$ defined in~\eqref{eq:weight} is bounded from above and bounded away from zero. Specifically, there exist constants $0<\underline{B}<1<\overline{B}<\infty$ such that 
\begin{equation*}
\underline{B}\leq w(\vx)\leq\bar{B}, \quad \text{for all}~\vx\in\supp(q_{0}).
\end{equation*}
\end{assumption}

\par Assumption~\ref{assumption:bounded:weight} implies that the reference distribution $p_{0}$ and the tilted distribution $q_{0}$ satisfy mutual absolute continuity, ensuring that their supports coincide (i.e., $\supp(p_{0})=\supp(q_{0})$). This condition is crucial for establishing the regularity of Doob's $h$-function. Furthermore, the ratio $\kappa\coloneq\bar{B}/\underline{B}$ serves as a condition number that characterizes the difficulty of the controllable diffusion task, as discussed in the context of posterior sampling by~\citet{purohit2024posterior,ding2025nonlinear,chang2025provable}.

\par Under Assumptions~\ref{assumption:bounded:support} and~\ref{assumption:bounded:weight}, we establish the regularity properties of Doob's $h$-function defined in~\eqref{eq:h:function}.

\begin{proposition}
\label{proposition:regularity:h:function}
Suppose Assumptions~\ref{assumption:bounded:support} and~\ref{assumption:bounded:weight} hold. Then for all $t\in(0,T)$ and $\vx\in\bbR^{d}$, the following bounds hold:
\begin{enumerate}[label=(\roman*)]
\item $\underline{B}\leq h_{t}^{*}(\vx)\leq\bar{B}$;
\item $\max_{1\leq k\leq d}|D_{k}h_{t}^{*}(\vx)|\leq 2\sigma_{T-t}^{-2}\bar{B}$; and
\item $\max_{1\leq k,\ell\leq d}|D_{k\ell}^{2}h_{t}^{*}(\vx)|\leq 6\sigma_{T-t}^{-4}\bar{B}$,
\end{enumerate}
where $D_{k}$ and $D_{k\ell}^{2}$ denote the first-order and second-order partial derivatives with respect to the input coordinates, respectively.
\end{proposition}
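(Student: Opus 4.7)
The plan is to exploit the conditional-expectation representation $h_t^*(\vx)=\bbE^{\bbP}[w(\mX_T^{\leftarrow})\mid\mX_t^{\leftarrow}=\vx]=\bbE[w(\mX_0)\mid\mX_{T-t}=\vx]$, the second equality following from the time-reversal identification. By Bayes' rule applied to the Gaussian transition~\eqref{eq:forward:solution},
\begin{equation*}
h_t^*(\vx)=\int w(\vx_0)\,\nu_\vx(\d\vx_0),\qquad \nu_\vx(\d\vx_0)\coloneq\frac{\varphi_d(\vx;\mu_{T-t}\vx_0,\sigma_{T-t}^2\mI_d)\,p_0(\vx_0)}{p_{T-t}(\vx)}\d\vx_0,
\end{equation*}
where $p_{T-t}(\vx)>0$ for every $\vx\in\bbR^d$, so $\nu_\vx$ is a well-defined probability measure. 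Under Assumptions~\ref{assumption:bounded:support} and~\ref{assumption:bounded:weight}, $\supp(p_0)=\supp(q_0)\subseteq[-1,1]^d$, so $\nu_\vx$ is supported in $[-1,1]^d$ and $\underline{B}\le w\le\bar{B}$ holds $\nu_\vx$-a.e. Part~(i) is then immediate from $h_t^*(\vx)=\int w\,\d\nu_\vx$.

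For part~(ii), I differentiate under the integral sign---justified by the Gaussian tail of $\varphi_d$ together with the compactness of $\supp(p_0)$---using $\partial_{x_k}\varphi_d(\vx;\mu_{T-t}\vx_0,\sigma_{T-t}^2\mI_d)=-\sigma_{T-t}^{-2}(x_k-\mu_{T-t}x_{0,k})\varphi_d$. Writing $h_t^*(\vx)=g(\vx)/p_{T-t}(\vx)$ with $g(\vx)=\int w(\vx_0)\varphi_d(\vx;\mu_{T-t}\vx_0,\sigma_{T-t}^2\mI_d)p_0(\vx_0)\d\vx_0$ and applying the quotient rule, the $x_k$ contributions cancel and I obtain the Tweedie-type identity
\begin{equation*}
D_k h_t^*(\vx)=\frac{\mu_{T-t}}{\sigma_{T-t}^2}\,\cov_{\nu_\vx}\!\big(w(\mX_0),X_{0,k}\big)=\frac{\mu_{T-t}}{\sigma_{T-t}^2}\bigl\{\bbE_{\nu_\vx}[w(\mX_0)X_{0,k}]-\bbE_{\nu_\vx}[w(\mX_0)]\bbE_{\nu_\vx}[X_{0,k}]\bigr\}.
\end{equation*}
Each of the two terms in braces is bounded in absolute value by $\bar{B}$ (since $|w|\le\bar{B}$ and $|X_{0,k}|\le 1$ on $\supp(\nu_\vx)$), and $\mu_{T-t}\le 1$, which yields $|D_k h_t^*(\vx)|\le 2\bar{B}/\sigma_{T-t}^2$.

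For part~(iii), I differentiate once more using the general identity $D_\ell\bbE_{\nu_\vx}[F(\mX_0)]=(\mu_{T-t}/\sigma_{T-t}^2)\cov_{\nu_\vx}(F(\mX_0),X_{0,\ell})$ (the same Gaussian score computation applied pointwise) for $F\in\{w,\,wX_{0,k},\,X_{0,k}\}$. Expanding $D_\ell D_k h_t^*(\vx)$ via the quotient rule produces five distinct posterior moments of the form $\bbE_{\nu_\vx}[wX_{0,k}X_{0,\ell}]$, $\bbE_{\nu_\vx}[wX_{0,i}]\bbE_{\nu_\vx}[X_{0,j}]$, $h_t^*(\vx)\bbE_{\nu_\vx}[X_{0,k}X_{0,\ell}]$, and $h_t^*(\vx)\bbE_{\nu_\vx}[X_{0,k}]\bbE_{\nu_\vx}[X_{0,\ell}]$ (the last carrying a factor of $2$), each bounded by $\bar{B}$ in absolute value (using $|w|\le\bar{B}$ and $|X_{0,i}X_{0,j}|\le 1$). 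Summing the absolute values gives $6\bar{B}$, and the prefactor $\mu_{T-t}^2/\sigma_{T-t}^4\le\sigma_{T-t}^{-4}$ produces the stated $6\bar{B}/\sigma_{T-t}^4$ bound. The main chore---and the only point at which the constant $6$ can slip---is the algebraic bookkeeping and sign-tracking in this second-derivative expansion; I would organize the computation as two applications of the quotient rule to $g/p_{T-t}$, so that the five moments and the doubled term appear naturally with the correct signs.
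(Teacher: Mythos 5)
Your proof is correct and follows essentially the same route as the paper: parts (i)--(iii) all flow from the posterior-expectation representation of $h_t^*$ under the Gaussian kernel, the key tool being the Tweedie-type identity $D_\ell\bbE_{\nu_\vx}[F]=\frac{\mu_{T-t}}{\sigma_{T-t}^2}\cov_{\nu_\vx}(F,X_{0,\ell})$ applied once for (ii) and twice for (iii), combined with $|w|\le\bar B$ and $\|\mX_0\|_\infty\le 1$ on the posterior support. The only cosmetic difference is in part (iii), where you fully expand into five raw posterior moments with signed coefficients summing to $6$ in absolute value, whereas the paper groups the same quantities into three conditional covariances each bounded by $2\bar B$ -- both yield the identical $6\bar B\mu_{T-t}^2/\sigma_{T-t}^4\le 6\bar B/\sigma_{T-t}^4$.
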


\par The proof of Proposition~\ref{proposition:regularity:h:function} is deferred to Appendix~\ref{appendix:convergence:assumption}. It is worth noting that Proposition~\ref{proposition:regularity:h:function} relies solely on the boundedness of the weight function $w$, without requiring the existence or smoothness of its gradients. Nevertheless, we establish that Doob's $h$-function admits bounded derivatives. This result stems from the definition of Doob's $h$-function as a posterior expectation under a Gaussian likelihood; the inherent smoothness of the Gaussian kernel endows the posterior expectation with strong regularity properties.

\begin{assumption}[Reference score estimation error]
\label{assump:base:score:error}
The reference score estimator $\what{\vs}$ defined in~\eqref{eq:base:score:estimator} satisfies the following error bound:
\begin{equation*}
\frac{1}{T}\sum_{k=0}^{K-1}h\bbE^{\bbP}\left[\|\what{\vs}(kh,\mX_{kh}^{\leftarrow})-\nabla\log p_{T-kh}(\mX_{kh}^{\leftarrow})\|_{2}^{2}\right]\leq\varepsilon_{\refer}^{2}.
\end{equation*}
\end{assumption}

\par Assumption~\ref{assump:base:score:error} requires the $L^{2}$-error of the reference score estimator $\what{\vs}$ to be bounded with respect to the reference path measure $\bbP$. In our setting, where numerous samples from the reference distribution $p_{0}$ are available, estimators satisfying this bound can be obtained via implicit score matching~\citep{hyvarinen2005Estimation}, sliced score matching~\citep{song2020Sliced}, or denoising score matching~\citep{vincent2011connection}. While one can derive explicit bounds of reference score matching as~\citet{Tang2024Adaptivity,Oko2023Diffusion,fu2024unveil,ding2025characteristic,Yakovlev2025Generalization,yakovlev2025implicit} using non-parametric regression theory for deep neural networks~\citep{bauer2019deep,schmidt2020nonparametric,kohler2021rate,Jiao2023deep}, we adopt this condition to maintain clarity of presentation, following the convention of~\citet{Lee2023Convergence,Chen2023Improved,beyler2025convergence,kremling2025nonasymptotic}.

\subsection{Error bounds for the Doob's guidance estimator}
\label{section:rate:guidance}

\par We begin by introducing the concept of Vapnik-Chervonenkis (VC) dimension~\citep{Vapnik1971Uniform,Anthony1999neural,Bartlett2019nearly}, which measures the complexity of a function class.

\begin{definition}[VC-dimension]
Let $\scrH$ be a class of functions mapping from $\calX$ to $\bbR$. For any num-negative integer $m$, the growth function of $\scrH$ is defined as 
\begin{equation*}
\Pi_{\scrH}(m) \coloneq \max_{x_{1},\ldots,x_{m}\in\calX}|\{(\mathrm{sgn}\,h(x_{1}),\ldots,\mathrm{sgn}\,h(x_{m})): h\in\scrH\}|.
\end{equation*} 
We say $\scrH$ shatters the set $\{x_{1},\ldots,x_{m}\}$, if
\begin{equation*}
|\{(\mathrm{sgn}\,h(x_{1}),\ldots,\mathrm{sgn}\,h(x_{m})): h\in\scrH\}| = 2^{m}.
\end{equation*}
The Vapnik-Chervonenkis dimension of $\scrH$, denoted by $\mathrm{VCdim}(\scrH)$, is the size of the largest shattered set, i.e., the largest $m$ such that $\Pi_{\scrH}(m)=2^{m}$.
\end{definition}

\par To simplify notation, we define the gradient classes and their associated VC-dimensions. For a differentiable hypothesis class $\scrH$ consisting of functions mapping from $\bbR^{d}$ to $\bbR$, the VC-dimension of the gradient hypothesis class is defined as
\begin{equation*}
\mathrm{VCdim}(\nabla\scrH) \coloneq \max_{1\leq k\leq d}\mathrm{VCdim}(D_{k}\scrH), \quad D_{k}\scrH \coloneq \{D_{k}h:h\in\scrH\},
\end{equation*}
where $D_{k}$ represents the derivative with respect to the $k$-th entry of the input.

\par The following lemma provides an oracle inequality for the variationally stable Doob's matching~\eqref{eq:erm:GP}.

\begin{lemma}[Oracle inequality]
\label{lemma:oracle:inequality:guidance}
Suppose Assumptions~\ref{assumption:bounded:support} and~\ref{assumption:bounded:weight} hold. Let $t\in(0,T)$ and let $\scrH_{t}$ be a hypothesis class. Let $\what{h}_{t}^{\lambda}$ be the gradient-regularized empirical risk minimizer defined as~\eqref{eq:erm:GP}, and let $h_{t}^{*}$ be the Doob's $h$-function defined as~\eqref{eq:h:function}. Then the following inequalities hold:
\begin{align*}
\bbE\Big[\|\what{h}_{t}^{\lambda}-h_{t}^{*}\|_{L^{2}(p_{T-t})}^{2}\Big]
&\lesssim \underbrace{\inf_{h_{t}\in\scrH_{t}}\Big\{\|h_{t}-h_{t}^{*}\|_{L^{2}(p_{T-t})}^{2}+\lambda\|\nabla h_{t}-\nabla h_{t}^{*}\|_{L^{2}(p_{T-t})}^{2}\Big\}}_{\text{(I)}} \\
&\quad +\underbrace{\bar{B}^{2}\Big(\frac{\mathrm{VCdim}(\scrH_{t})}{n\log^{-1}n}\Big)^{\frac{1}{2}}+\frac{\lambda d\bar{B}^{2}}{\sigma_{T-t}^{4}}\Big(\frac{\mathrm{VCdim}(\nabla\scrH_{t})}{n\log^{-1}n}\Big)^{\frac{1}{2}}}_{\text{(II)}}+\underbrace{\frac{\lambda^{2}d\bar{B}^{2}}{\sigma_{T-t}^{8}}}_{\text{(III)}}, \\
\bbE\Big[\|\nabla\what{h}_{t}^{\lambda}-\nabla h_{t}^{*}\|_{L^{2}(p_{T-t})}^{2}\Big]
&\lesssim \underbrace{\inf_{h_{t}\in\scrH_{t}}\Big\{\frac{1}{\lambda}\|h_{t}-h_{t}^{*}\|_{L^{2}(p_{T-t})}^{2}+\|\nabla h_{t}-\nabla h_{t}^{*}\|_{L^{2}(p_{T-t})}^{2}\Big\}}_{\text{(I))}} \\
&\quad +\underbrace{\frac{\bar{B}^{2}}{\lambda}\Big(\frac{\mathrm{VCdim}(\scrH_{t})}{n\log^{-1}n}\Big)^{\frac{1}{2}}+\frac{d\bar{B}^{2}}{\sigma_{T-t}^{4}}\Big(\frac{\mathrm{VCdim}(\nabla\scrH_{t})}{n\log^{-1}n}\Big)^{\frac{1}{2}}}_{\text{(II)}}+\underbrace{\frac{\lambda d\bar{B}^{2}}{\sigma_{T-t}^{8}}}_{\text{(III)}},
\end{align*} 
where the notation $\lesssim$ hides absolute constants.
\end{lemma}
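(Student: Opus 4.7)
The plan is to decompose the error via the triangle inequality
\begin{equation*}
\|\what{h}_{t}^{\lambda} - h_{t}^{*}\|_{L^{2}(p_{T-t})}^{2} \leq 2\|\what{h}_{t}^{\lambda} - h_{t}^{\lambda}\|_{L^{2}(p_{T-t})}^{2} + 2\|h_{t}^{\lambda} - h_{t}^{*}\|_{L^{2}(p_{T-t})}^{2},
\end{equation*}
and similarly for gradients, reducing the lemma to (a) a deterministic \emph{regularization bias} between $h_{t}^{\lambda}$ and $h_{t}^{*}$, and (b) a \emph{statistical estimation error} between $\what{h}_{t}^{\lambda}$ and $h_{t}^{\lambda}$. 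The regularization bias is exactly the content of Proposition~\ref{proposition:regularization:gap}; to estimate the residual $\|\Delta h_{t}^{*} + \nabla h_{t}^{*}\cdot\nabla\log p_{T-t}\|_{L^{2}(p_{T-t})}^{2}$ appearing there, I would invoke the pointwise derivative bounds of Proposition~\ref{proposition:regularity:h:function} together with a standard $L^{2}(p_{T-t})$ estimate on the reference score, which under Assumption~\ref{assumption:bounded:support} and the VP form of $p_{T-t}$ is of order $\sqrt{d}/\sigma_{T-t}$. Multiplying by the $\lambda^{2}$ and $\lambda$ prefactors from Proposition~\ref{proposition:regularization:gap} produces term (III) in both inequalities.

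For the statistical estimation error I would exploit the variational stability established in Proposition~\ref{proposition:stability}. Since $\calJ_{t}^{\lambda}$ is a strictly convex quadratic in $h$, its Taylor expansion around the population minimizer $h_{t}^{\lambda}$ gives the exact identity
\begin{equation*}
\calJ_{t}^{\lambda}(h)-\calJ_{t}^{\lambda}(h_{t}^{\lambda}) = \|h-h_{t}^{\lambda}\|_{L^{2}(p_{T-t})}^{2} + \lambda\|\nabla h-\nabla h_{t}^{\lambda}\|_{L^{2}(p_{T-t})}^{2}.
\end{equation*}
Applying this with $h=\what{h}_{t}^{\lambda}$ reduces both the $L^{2}$ and $H^{1}$ estimation errors to the excess risk $\calJ_{t}^{\lambda}(\what{h}_{t}^{\lambda}) - \calJ_{t}^{\lambda}(h_{t}^{\lambda})$. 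By the standard ERM decomposition, this excess risk is dominated by $\inf_{h\in\scrH_{t}}\{\calJ_{t}^{\lambda}(h)-\calJ_{t}^{\lambda}(h_{t}^{\lambda})\}$ plus twice the uniform deviation $\sup_{h\in\scrH_{t}}|\calJ_{t}^{\lambda}(h) - \what{\calJ}_{t}^{\lambda}(h)|$. The infimum, after applying the same quadratic identity and a triangle inequality against $h_{t}^{*}$, becomes term (I) up to an additive regularization-bias piece that is absorbed into (III) via Proposition~\ref{proposition:regularization:gap}.

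The core technical step, yielding (II), is the uniform deviation. I would split it along the two summands of $\calJ_{t}^{\lambda}$ and treat each via a VC-type uniform concentration inequality. For $|\calJ_{t}(h)-\what{\calJ}_{t}(h)|$, after clipping $\scrH_{t}$ to the range $[\underline{B},\bar{B}]$ of $h_{t}^{*}$ the integrand $(h(\mX_{t}^{\leftarrow})-w(\mX_{T}^{\leftarrow}))^{2}$ is bounded by $4\bar{B}^{2}$, and a growth-function bound for the squared-loss class yields the rate $\bar{B}^{2}\sqrt{\mathrm{VCdim}(\scrH_{t})\log n/n}$. For the gradient regularization, I would expand $\|\nabla h\|_{2}^{2}=\sum_{k=1}^{d}(D_{k}h)^{2}$ and apply a coordinate-wise VC bound to each induced class $\{(D_{k}h)^{2}:h\in\scrH_{t}\}$, using the envelope $\|D_{k}h\|_{\infty}\lesssim\bar{B}/\sigma_{T-t}^{2}$ from Proposition~\ref{proposition:regularity:h:function} (again enforceable by clipping) to control the range of the summands. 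Summing over $k$ and multiplying by $\lambda$ produces the $\lambda d\bar{B}^{2}\sigma_{T-t}^{-4}\sqrt{\mathrm{VCdim}(\nabla\scrH_{t})\log n/n}$ contribution in (II). Dividing the resulting $H^{1}$-type bound by $1$ recovers the $L^{2}$ inequality, while dividing by $\lambda$ (and applying a final triangle inequality against $\nabla h_{t}^{*}$) recovers the gradient inequality with the $\lambda^{-1}$ factors on (I) and (II) as stated.

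The principal obstacle is the uniform concentration of the gradient-regularization term. Since this term is quadratic in the derivative class $\nabla\scrH_{t}$ rather than in $\scrH_{t}$ itself, one must verify that the VC machinery transfers to the induced classes $\{(D_{k}h)^{2}:h\in\scrH_{t}\}$ with only logarithmic overhead in $n$, and that the pointwise derivative envelope implied by Proposition~\ref{proposition:regularity:h:function} can be imposed on hypotheses without degrading the approximation term (I). The natural device is a double clipping---first to $[\underline{B},\bar{B}]$ in value, then to the derivative envelope of $h_{t}^{*}$---which is harmless for (I) because $h_{t}^{*}$ itself satisfies both constraints.
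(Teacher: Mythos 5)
Your decomposition into regularization bias, approximation error, and uniform deviation---handled via Propositions~\ref{proposition:regularization:gap} and~\ref{proposition:stability}, the standard ERM excess-risk split, and coordinate-wise VC/Rademacher bounds with Ledoux--Talagrand contraction to square the envelope---is exactly the paper's route, including the triangle against $h_t^\lambda$ and the per-coordinate treatment of the gradient-regularization term. One small calibration: under Assumption~\ref{assumption:bounded:support} the score norm in Lemma~\ref{lemma:reg:gap:bound} is $\|\nabla\log p_{T-t}\|_{L^2(p_{T-t})}\lesssim\sqrt{d}/\sigma_{T-t}^{2}$ rather than $\sqrt{d}/\sigma_{T-t}$, but this leaves term (III) unchanged because the $\Delta h_t^*$ contribution, also of order $\sqrt{d}\bar{B}/\sigma_{T-t}^{4}$, dominates in any case.
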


\par The proof of Lemma~\ref{lemma:oracle:inequality:guidance} is deferred to Appendix~\ref{appendix:rate:guidance}. Both oracle inequalities for Doob's $h$-function and its gradient decompose the error into three components: approximation error, generalization error, and regularization gap.
\begin{enumerate}[label=(\Roman*)]
\item The \textbf{approximation error} is defined as the minimal $H^{1}$-distance between functions in the hypothesis class $\scrH_{t}$ and the ground-truth Doob's $h$-function $h_{t}^{*}$, measuring the approximation capability of $\scrH_{t}$.
\item The \textbf{generalization error} captures the error arising from finite-sample approximation, which vanishes as the number of samples approaches infinity.
\item The \textbf{regularization gap} is introduced by the gradient regularization in the objective functional, which causes the minimizer of the variationally stable objective~\eqref{eq:h:matching:GP} to deviate from the ground-truth Doob's $h$-function $h_{t}^{*}$~\eqref{eq:h:function}. This gap has been analyzed in Proposition~\ref{proposition:regularization:gap}.
\end{enumerate}

\paragraph{Comparison with oracle inequality of vanilla regression}

\par Lemma~\ref{lemma:oracle:inequality:guidance} is analogous to the oracle inequality found in regression problems. Let $\what{h}_{t}$ be the vanilla estimator estimated by minimizing~\eqref{eq:erm} over the hypothesis class $\scrH_{t}$. Informally, the following oracle inequality holds:
\begin{equation}\label{eq:h:oracle}
\bbE\Big[\|\what{h}_{t}-h_{t}^{*}\|_{L^{2}(p_{T-t})}^{2}\Big]\lesssim \underbrace{\inf_{h_{t}\in\scrH_{t}}\|h_{t}-h_{t}^{*}\|_{L^{2}(p_{T-t})}^{2}}_{\text{approximation}}+\underbrace{\bar{B}^{2}\Big(\frac{\mathrm{VCdim}(\scrH_{t})}{n\log^{-1}n}\Big)^{\frac{1}{2}}}_{\text{generalization}}.
\end{equation} 
Comparing~\eqref{eq:h:oracle} with Lemma~\ref{lemma:oracle:inequality:guidance} reveals several crucial differences:
\begin{enumerate}[label=(\roman*)]
\item The approximation error in Lemma~\ref{lemma:oracle:inequality:guidance} is measured in the $H^{1}$-norm, whereas in~\eqref{eq:h:oracle}, it is measured in the $L^{2}$-norm. This distinction is natural because we require the estimator to converge in the $H^{1}$-norm; thus, simultaneous approximation of the function and its derivatives is essential. Simultaneous approximation using neural networks has been investigated in various contexts~\citep{Li2019Better,Guhring2020Error,Guhring2021Approximation,duan2022convergence,duan2022deep,lu2022machine,Shen2022Approximation,shen2024differentiable,Belomestny2023Simultaneous,yakovlev2025simultaneous}.
\item In vanilla regression, the generalization error in~\eqref{eq:h:oracle} depends only on the complexity of the hypothesis class. In contrast, the generalization error bounds in Lemma~\ref{lemma:oracle:inequality:guidance} also depend on the complexity of the derivative classes $\nabla\scrH_{t}$. This occurs because the objective functional of the gradient-regularized Doob's matching~\eqref{eq:h:matching:GP} includes the gradient norm term. Consequently, the error from finite-sample approximation is influenced not only by the complexity of the hypothesis class but also by that of the derivative class.
\item The most significant difference lies in the regularization error. If we focus solely on the oracle inequality for $\what{h}_{t}^{\lambda}$, letting $\lambda$ go to zero reduces the expression to the vanilla regression oracle inequality~\eqref{eq:h:oracle}. However, the bound for the gradient $\nabla\what{h}_{t}^{\lambda}$ diverges as the regularization parameter $\lambda$ approaches zero. This highlights the key advantage of our gradient-regularized method: the gradient-regularized is essential for guaranteeing simultaneous convergence of both the estimator value and its gradient. Additionally, there exists a trade-off with respect to $\lambda$ in the oracle inequality for $\what{h}_{t}^{\lambda}$: a larger $\lambda$ leads to larger regularization error, while reduces the approximation and generalization errors. 
\end{enumerate}

\par Given the oracle inequality for a general hypothesis class $\scrH_{t}$, we consider the specifical case that $\scrH_{t}$ is chosen as a neural network class, with the aim of deriving non-asymptotic convergence rates. We begin by formally defining the neural network class.

\begin{definition}[Neural network class]
A function implemented by a neural network $h:\bbR^{N_{0}}\to\bbR^{N_{L+1}}$ is defined by
\begin{equation*}
h(\vx)=T_{L}(\varrho(T_{L-1}(\cdots\varrho(T_{0}(\vx))\cdots))),
\end{equation*}
where the activation function $\varrho$ is applied component-wise and $T_{\ell}(\vx):=\mA_{\ell}\vx+\vb_{\ell}$ is an affine transformation with $\mA_{\ell}\in\bbR^{N_{\ell+1}\times N_{\ell}}$ and $\vb_{\ell}\in\bbR^{N_{\ell}}$ for $\ell=0,\ldots,L$. In this paper, we consider the case where $N_{0}=d+1$ and $N_{L+1}=1$. The number $L$ is called the depth of neural networks. Additionally, $S:=\sum_{\ell=0}^{L}(\|\mA_{\ell}\|_{0}+\|\vb_{\ell}\|_{0})$ represents the total number of non-zero weights within the neural network. We denote by $N(L,S)$ the set of neural networks with depth at most $L$ and the number of non-zero weights at most $S$.
\end{definition}

\par The following theorem establishes the convergence rate of the estimated Doob's guidance given in~\eqref{eq:regressor:guidance}.

\begin{theorem}[Convergence rate of Doob's guidance]\label{theorem:rate:guidance}
Suppose Assumptions~\ref{assumption:bounded:support} and~\ref{assumption:bounded:weight} hold. Let $t\in(0,T)$. Set the hypothesis class $\scrH_{t}$ as
\begin{equation}\label{eq:theorem:hypothesis}
\scrH_{t}\coloneq
\left\{
h_{t}\in N(L,S):
\begin{aligned}
&\sup_{\vx\in\bbR^{d}}h_{t}(\vx)\leq\bar{B}, ~ \inf_{\vx\in\bbR^{d}}h_{t}(\vx)\geq\underline{B}, \\
&\max_{1\leq k\leq d}\sup_{\vx\in\bbR^{d}}|D_{k}h_{t}(\vx)|\leq 2\sigma_{T-t}^{-2}\bar{B}
\end{aligned}
\right\},
\end{equation}
where $L=\calO(\log n)$ and $S=\calO(n^{\frac{d}{d+8}})$. Let $\what{h}_{t}^{\lambda}$ be the gradient-regularized empirical risk minimizer defined as~\eqref{eq:erm:GP}, and let $h_{t}^{*}$ be the Doob's $h$-function defined as~\eqref{eq:h:function}. Then the following inequality holds:
\begin{equation*}
\bbE\Big[\|\nabla\log\what{h}_{t}^{\lambda}-\nabla\log h_{t}^{*}\|_{L^{2}(p_{T-t})}^{2}\Big]\leq C\sigma_{T-t}^{-8}n^{-\frac{2}{d+8}}\log^{4}n,
\end{equation*}
provided that the regularization parameter $\lambda$ is set as $\lambda=\calO(n^{-\frac{2}{d+8}})$, where $C$ is a constant depending only on $d$, $\bar{B}$, and $\underline{B}$.
\end{theorem}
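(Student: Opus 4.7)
The plan is to reduce the guidance error to the simultaneous $L^{2}$-estimation of $h_{t}^{*}$ and $\nabla h_{t}^{*}$, and then instantiate the oracle inequality of Lemma~\ref{lemma:oracle:inequality:guidance} on the neural-network hypothesis class $\scrH_{t}$. Since $\nabla\log h=\nabla h/h$, the algebraic identity $\nabla\log\what{h}_{t}^{\lambda}-\nabla\log h_{t}^{*}=(\nabla\what{h}_{t}^{\lambda}-\nabla h_{t}^{*})/\what{h}_{t}^{\lambda}+\nabla h_{t}^{*}(h_{t}^{*}-\what{h}_{t}^{\lambda})/(\what{h}_{t}^{\lambda}h_{t}^{*})$, together with the lower bound $\what{h}_{t}^{\lambda},h_{t}^{*}\geq\underline{B}$ (enforced respectively by~\eqref{eq:theorem:hypothesis} and supplied by Proposition~\ref{proposition:regularity:h:function}) and the gradient bound $\|\nabla h_{t}^{*}\|_{\infty}\leq 2\sigma_{T-t}^{-2}\bar{B}$ from the same proposition, gives
\begin{equation*}
\|\nabla\log\what{h}_{t}^{\lambda}-\nabla\log h_{t}^{*}\|_{L^{2}(p_{T-t})}^{2}
\lesssim \underline{B}^{-2}\,\|\nabla\what{h}_{t}^{\lambda}-\nabla h_{t}^{*}\|_{L^{2}(p_{T-t})}^{2}
+\sigma_{T-t}^{-4}\bar{B}^{2}\underline{B}^{-4}\,\|\what{h}_{t}^{\lambda}-h_{t}^{*}\|_{L^{2}(p_{T-t})}^{2}.
\end{equation*}
Thus it suffices to control both $L^{2}$-errors on the right-hand side via Lemma~\ref{lemma:oracle:inequality:guidance}.

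For each oracle bound I treat the three components in turn. For the approximation error (I), Proposition~\ref{proposition:regularity:h:function} gives $h_{t}^{*}\in C^{2}(\bbR^{d})$ with $C^{2}$-norm of order $\calO(\sigma_{T-t}^{-4}\bar{B})$, so a simultaneous $H^{1}$-approximation result for deep ReLU networks of depth $L=\calO(\log n)$ and size $S$ (e.g.,~\citet{Guhring2020Error,Guhring2021Approximation,duan2022convergence,Belomestny2023Simultaneous,yakovlev2025simultaneous}) supplies $\tilde h_{t}\in N(L,S)$ satisfying $\|\tilde h_{t}-h_{t}^{*}\|_{L^{2}(p_{T-t})}^{2}\lesssim\sigma_{T-t}^{-8}\bar{B}^{2}(S/\log S)^{-4/d}$ and $\|\nabla\tilde h_{t}-\nabla h_{t}^{*}\|_{L^{2}(p_{T-t})}^{2}\lesssim\sigma_{T-t}^{-8}\bar{B}^{2}(S/\log S)^{-2/d}$; since $h_{t}^{*}$ itself obeys the box defining $\scrH_{t}$ in~\eqref{eq:theorem:hypothesis}, a coordinate-wise clipping of $\tilde h_{t}$ enforces $\tilde h_{t}\in\scrH_{t}$ without degrading the rates. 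For the generalization error (II), the VC bound $\mathrm{VCdim}(\scrH_{t})\vee\mathrm{VCdim}(\nabla\scrH_{t})=\tilde\calO(SL)$ for piecewise-linear networks~\citep{Anthony1999neural,Bartlett2019nearly} yields contributions of order $\tilde\calO((SL/n)^{1/2})$ weighted by the prefactors $\bar{B}^{2}$, $\lambda$, $\lambda^{-1}$ or $\sigma_{T-t}^{-4}$ appearing in Lemma~\ref{lemma:oracle:inequality:guidance}. The regularization gap (III) is already quantified there as $\lambda^{2}d\bar{B}^{2}\sigma_{T-t}^{-8}$ and $\lambda d\bar{B}^{2}\sigma_{T-t}^{-8}$, respectively.

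Finally, choosing $S=\calO(n^{d/(d+8)})$ and $\lambda=\calO(n^{-2/(d+8)})$ balances all three components in both oracle inequalities at the common rate $\tilde\calO(\sigma_{T-t}^{-8}n^{-2/(d+8)})$: indeed, $(S/\log S)^{-2/d}\sim n^{-2/(d+8)}$ matches the reg-gap $\lambda$, while the worst generalization term $\lambda^{-1}(SL/n)^{1/2}\sim n^{2/(d+8)}\cdot n^{-4/(d+8)}$ also collapses to $n^{-2/(d+8)}$ up to logs; the resulting bound on $\|\what{h}_{t}^{\lambda}-h_{t}^{*}\|_{L^{2}}^{2}$ is the faster $\sigma_{T-t}^{-8}n^{-4/(d+8)}$, so that the $\|\nabla\what{h}_{t}^{\lambda}-\nabla h_{t}^{*}\|_{L^{2}}^{2}$ term dominates in the reduction and yields the stated rate $\sigma_{T-t}^{-8}n^{-2/(d+8)}\log^{4}n$ once the logarithmic factors from the approximation theorem, the VC bound, and the depth $L=\calO(\log n)$ are absorbed. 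The hard part will be the simultaneous $H^{1}$-approximation under the explicit box and gradient constraints defining $\scrH_{t}$ in~\eqref{eq:theorem:hypothesis}: while off-the-shelf ReLU approximation results give the rate $(S/\log S)^{-1/d}$ in $H^{1}$-norm for $C^{2}$ targets, one must verify that the $L^{\infty}$-clipping of both the output and its (weak) derivatives needed to land in $\scrH_{t}$ preserves the rate. Crucially, Proposition~\ref{proposition:regularity:h:function} shows that $h_{t}^{*}$ already satisfies precisely the constants prescribed in~\eqref{eq:theorem:hypothesis}, so the clipping acts as a local modification that costs only a constant factor.
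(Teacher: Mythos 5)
Your proposal is correct and follows essentially the same route as the paper's proof: decompose the $\nabla\log$ error algebraically using the lower bound $\what{h}_{t}^{\lambda},h_{t}^{*}\geq\underline{B}$ and $\|\nabla h_{t}^{*}\|_{\infty}\lesssim\sigma_{T-t}^{-2}\bar{B}$, then bound the two $L^{2}$-errors by instantiating the oracle inequality of Lemma~\ref{lemma:oracle:inequality:guidance} with the paper's own $H^{1}$-approximation bound (Lemma~\ref{lemma:approximation}), VC-dimension bounds (Lemma~\ref{lemma:generalization}), and the choices $S\asymp N^{d}\asymp n^{d/(d+8)}$, $\lambda\asymp n^{-2/(d+8)}$. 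Your reduction is in fact slightly more careful than the paper's displayed computation (which drops the $\sigma_{T-t}^{-4}$ prefactor on the second term), and you correctly flag the same technicality the paper handles implicitly — namely, that the $H^{1}$-approximant of Lemma~\ref{lemma:approximation:compact} must be verified (via clipping) to satisfy the box and gradient constraints defining $\scrH_{t}$ in~\eqref{eq:theorem:hypothesis}.
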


\par The proof of Theorem~\ref{theorem:rate:guidance} is deferred to Appendix~\ref{appendix:rate:guidance}. This theorem demonstrates that the $L^{2}$-error of the Doob's guidance estimator~\eqref{eq:regressor:guidance} converges to the exact Doob's guidance in~\eqref{eq:Zt:P:brownian} as the sample size increases, provided that the size of the neural network is appropriately chosen. However, since the prefactor $\sigma_{T-t}^{-8}$ diverges as $t$ approaches the terminal time $T$, early stopping in controllable diffusion models~\eqref{eq:Zt:P:brownian:estimation} is required to ensure the validity of the Doob's guidance estimator.

\begin{remark}[Comparisons with previous work]
Simultaneous estimation of a function and its gradient using deep neural network has been investigated by~\citet{ding2025Semi}. The most important distinction in our work lies in the elimination of the convexity assumption on the hypothesis class. Specifically,~\citet[Lemma 7]{ding2025Semi} propose an oracle inequality under the assumption that the hypothesis class is convex. Furthermore,~\citet[Theorem 3]{ding2025Semi} requires the estimator to be a minimizer of the gradient-regularized empirical risk over the convex hull of a neural network class, which is intractable in practice. In contrast, Lemma~\ref{lemma:oracle:inequality:guidance} eliminates the requirement of convexity for the hypothesis class, and Theorem~\ref{theorem:rate:guidance} removes the need for the convex hull of the neural network class. This aligns the theoretical analysis more closely with practical computing.
\end{remark}

\subsection{Error bounds for the controllable diffusion models}\label{section:rate:finetuning}

\par In this subsection, we establish a non-asymptotic convergence rate for the controllable diffusion models~\eqref{eq:Zt:P:brownian:estimation}. We begin by presenting an error decomposition for the KL-divergence between the early-stopping distribution $q_{T_{0}}$ and the distribution of $\what{\mZ}_{T-T_{0}}^{\leftarrow}$.

\begin{lemma}[Error decomposition]
\label{lemma:error:decomposition}
Suppose Assumptions~\ref{assumption:bounded:support},~\ref{assumption:bounded:weight}, and~\ref{assump:base:score:error} hold. Let $\what{q}_{T-T_{0}}$ be the marginal density of $\what{\mZ}_{T-T_{0}}^{\leftarrow}$ defined in~\eqref{eq:Zt:P:brownian:estimation}. Then it follows that 
\begin{align*}
\kl(q_{T_{0}}\|\what{q}_{T-T_{0}})
&\lesssim\underbrace{\frac{\bar{B}}{\underline{B}}\sum_{k=0}^{K-1}h\bbE^{\bbP}\left[\|\nabla\log\what{h}_{kh}(\mX_{kh}^{\leftarrow})-\nabla\log h_{kh}^{*}(\mX_{kh}^{\leftarrow})\|_{2}^{2}\right]}_{\text{(I)}} \\
&\quad+\underbrace{\frac{\bar{B}}{\underline{B}}T\varepsilon_{\refer}^{2}}_{\text{(II)}}+\underbrace{d\exp(-T)}_{\text{(III)}}+\underbrace{\frac{d^{2}T^{2}}{\sigma_{T_{0}}^{4}K}}_{\text{(IV)}},
\end{align*}
where the notation $\lesssim$ hides absolute constants.
\end{lemma}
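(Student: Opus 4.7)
My plan is to lift the bound on $\kl(q_{T_{0}}\|\what{q}_{T-T_{0}})$ to a path-space KL divergence and then apply Girsanov's theorem. Let $\bbQ^{\star}$ denote the path measure on $C([0,T-T_{0}],\bbR^{d})$ of the exact controllable reverse SDE~\eqref{eq:Zt:P:brownian} initialized at $q_{T}$, so that its marginal at the terminal time is $q_{T_{0}}$, and let $\what{\bbP}^{\star}$ denote the path measure of the exponential-integrator process~\eqref{eq:Zt:P:brownian:estimation} initialized at $\calN(\bzero,\mI_{d})$. The data-processing inequality gives $\kl(q_{T_{0}}\|\what{q}_{T-T_{0}})\leq\kl(\bbQ^{\star}\|\what{\bbP}^{\star})$, and the chain rule together with Girsanov's theorem~\citep[Theorem~8.6.8]{Oksendal2003Stochastic} yields
\begin{equation*}
\kl(\bbQ^{\star}\|\what{\bbP}^{\star})=\kl(q_{T}\|\calN(\bzero,\mI_{d}))+\frac{1}{4}\sum_{k=0}^{K-1}\int_{kh}^{(k+1)h}\bbE^{\bbQ^{\star}}\big[\|\vb(t,\mX_{t})-\what{\vb}(kh,\mX_{kh})\|_{2}^{2}\big]\dt,
\end{equation*}
where $\vb(t,\vx)\coloneq\vx+2\nabla\log p_{T-t}(\vx)+2\nabla\log h^{*}(t,\vx)$ and $\what{\vb}(kh,\vx)\coloneq\vx+2\what{\vs}(kh,\vx)+2\what{\vg}^{\lambda}(kh,\vx)$. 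The initialization contribution produces term~(III): since $q_{0}$ is compactly supported in $[-1,1]^{d}$ by Assumption~\ref{assumption:bounded:support}, the log-Sobolev/exponential-ergodicity estimate for the Ornstein--Uhlenbeck semigroup~\citep{Bakry2014Analysis,Vempala2019Rapid} gives $\kl(q_{T}\|\calN(\bzero,\mI_{d}))\lesssim d\exp(-T)$.

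\textbf{Decomposing the drift mismatch.} Using the triangle inequality and $(a+b)^{2}\leq 2a^{2}+2b^{2}$, I would split the integrand into four pieces: a score-discretization piece $\|\nabla\log p_{T-t}(\mX_{t})-\nabla\log p_{T-kh}(\mX_{kh})\|_{2}^{2}$, a guidance-discretization piece $\|\nabla\log h^{*}(t,\mX_{t})-\nabla\log h^{*}(kh,\mX_{kh})\|_{2}^{2}$, a score-estimation piece $\|\nabla\log p_{T-kh}(\mX_{kh})-\what{\vs}(kh,\mX_{kh})\|_{2}^{2}$, and a guidance-estimation piece $\|\nabla\log h^{*}(kh,\mX_{kh})-\what{\vg}^{\lambda}(kh,\mX_{kh})\|_{2}^{2}$. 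The two discretization pieces are controlled by expanding $\nabla\log p_{T-t}(\mX_{t})$ and $\nabla\log h^{*}(t,\mX_{t})$ via Itô's formula on $[kh,t]$; the resulting temporal and spatial derivatives are dominated by the Hessian bound $|D_{k\ell}^{2}h_{t}^{*}|\lesssim\sigma_{T-t}^{-4}\bar{B}$ from Proposition~\ref{proposition:regularity:h:function} together with its standard analogue for $\nabla\log p_{T-t}$ (which is the gradient log of a Gaussian convolution of a compactly supported density). After taking expectations this gives an $\calO(hd/\sigma_{T_{0}}^{4})$ per-step contribution, and summing $K$ subintervals of length $h$ produces the $d^{2}T^{2}/(\sigma_{T_{0}}^{4}K)$ term~(IV).

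\textbf{Change of measure and estimation pieces.} For the two estimation pieces I would convert expectations from $\bbQ^{\star}$ to the reference measure $\bbP$ via the Radon--Nikodym process $L_{t}=h^{*}(t,\mX_{t}^{\leftarrow})/\bbE^{\bbP}[w(\mX_{T}^{\leftarrow})]$ from Proposition~\ref{proposition:doob:transform}. Proposition~\ref{proposition:regularity:h:function}(i) yields $h^{*}\leq\bar{B}$ and Assumption~\ref{assumption:bounded:weight} yields $\bbE^{\bbP}[w(\mX_{T}^{\leftarrow})]\geq\underline{B}$, so $L_{t}\leq\bar{B}/\underline{B}$ uniformly; hence $\bbE^{\bbQ^{\star}}[\,\cdot\,]\leq(\bar{B}/\underline{B})\bbE^{\bbP}[\,\cdot\,]$ on non-negative integrands. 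This converts the guidance-estimation piece, summed with the length-$h$ weights, into term~(I); applying the same change of measure to the score-estimation piece and invoking Assumption~\ref{assump:base:score:error} produces term~(II). The most delicate step is the discretization analysis for the guidance: the $\sigma_{T-t}^{-4}$ prefactor from Proposition~\ref{proposition:regularity:h:function} threatens to blow up as $t\to T$, and the cap at $\sigma_{T_{0}}^{-4}$ is precisely why the early stopping at $T_{0}>0$ is indispensable. A secondary subtlety is verifying the Novikov-type integrability required for Girsanov's theorem, which follows from the uniform pointwise bounds imposed on $\what{\vs}$ and $\what{\vg}^{\lambda}$ via the hypothesis class~\eqref{eq:theorem:hypothesis}.
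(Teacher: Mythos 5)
Your proof takes essentially the same route as the paper: a path-space Girsanov/data-processing argument to decompose the KL into initialization, drift-estimation, and drift-discretization contributions, exponential OU ergodicity for the initialization term, and the density-ratio bound $q_{T-t}/p_{T-t}\leq\bar{B}/\underline{B}$ (from the Doob $h$-function bounds) to convert $\bbE^{\bbQ}$-expectations to $\bbE^{\bbP}$-expectations. The only difference is that the paper delegates the path-space KL decomposition and the discretization bound to~\citet[Propositions C.3 and D.1]{Chen2023Improved} rather than unpacking Girsanov and It\^o directly, and it treats the discretization error for the combined score $\nabla\log q_{T-t}=\nabla\log p_{T-t}+\nabla\log h^{*}$ in one step (via the Hessian bound $\|\nabla^{2}\log q_{T-t}\|_{\op}\lesssim d/\sigma_{T_{0}}^{4}$) rather than splitting it into separate reference-score and guidance pieces; one small inaccuracy in your sketch is the claim that Novikov integrability follows from uniform bounds on $\what{\vs}$ imposed by the hypothesis class — Assumption~\ref{assump:base:score:error} only supplies an $L^{2}$-type control on $\what{\vs}$, and the cited machinery of Chen et al.\ handles this technicality differently.
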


\par The proof of Lemma~\ref{lemma:error:decomposition} is deferred to Appendix~\ref{appendix:rate:finetuning}. Lemma~\ref{lemma:error:decomposition} decomposes the KL-divergence between the early-stopping distribution $q_{T_{0}}$ and the distribution of $\what{\mZ}_{T-T_{0}}^{\leftarrow}$ into four components: (I) Doob's guidance error, (II) the reference score error, (III) the initialization error, and (IV) the discretization error. Specifically, Doob's guidance error represents the average error of Doob's $h$-guidance estimator at each time point, which has been investigated in Theorem~\ref{theorem:rate:guidance}; the reference score error is the average error of the reference score estimator at each time point, which is discussed in Assumption~\ref{assump:base:score:error}; the initialization error arises from replacing the initial distribution $q_{T}=p_{T}$ with a Gaussian distribution in~\eqref{eq:Zt:P:brownian:estimation}; and the discretization error is induced by the exponential integrator.

\par While Lemma~\ref{lemma:error:decomposition} characterizes the error between the estimated distribution $\what{q}_{T-T_{0}}$ and the early-stopping distribution $q_{T_{0}}$, our primary interest lies in the discrepancy between $\what{q}_{T-T_{0}}$ and the target tilted distribution $q_{0}$ defined in~\eqref{eq:weight}. Since the KL-divergence does not satisfy the triangular inequality, we instead propose an error bound in 2-Wasserstein distance. The following theorem establishes the 2-Wasserstein distance between the scaled and truncated distribution $(\calM\circ\calT_{R})_{\sharp}\what{q}_{T-T_{0}}$ defined in~\eqref{eq:trancation:scaling} and the target tilted distribution $q_{0}$.

\begin{theorem}[Convergence rate of controllable diffusion models]
\label{theorem:rate:controllable:diffusion}
Suppose Assumptions~\ref{assumption:bounded:support},~\ref{assumption:bounded:weight}, and~\ref{assump:base:score:error} hold. Let $\varepsilon\in(0,1)$. Set the hypothesis classes $\{\scrH_{T-kh}\}_{k=0}^{K=1}$ as~\eqref{eq:theorem:hypothesis} with the same depth $L$ and number of non-zero parameters $S$ as Theorem~\ref{theorem:rate:guidance}. Let $\what{q}_{T-T_{0}}$ be the marginal density of $\what{\mZ}_{T-T_{0}}^{\leftarrow}$ defined in~\eqref{eq:Zt:P:brownian:estimation}, and let $(\calM\circ\calT_{R})_{\sharp}\what{q}_{T-T_{0}}$ defined as~\eqref{eq:trancation:scaling}. Then it follows that 
\begin{equation}
\bbE\Big[\calW_{2}^{2}(q_{0},(\calM\circ\calT_{R})_{\sharp}\what{q}_{T-T_{0}})\Big]\leq C\varepsilon\log^{3}\Big(\frac{1}{\varepsilon}\Big),
\end{equation} 
provided that the truncation radius $R$, the terminal time $T$, the step size $h$, the number of steps $K$, the error of reference score $\varepsilon_{\refer}$, the number of samples $n$ for Doob's matching, and the early-stopping time $T_{0}$ are set, respectively, as 
\begin{align*}
&R \asymp \log^{\frac{1}{2}}\Big(\frac{1}{\varepsilon}\Big), \quad 
T \asymp \log\Big(\frac{1}{\varepsilon^{2}}\Big), \quad 
K \gtrsim \frac{1}{\varepsilon^{4}}\log^{2}\Big(\frac{1}{\varepsilon^{2}}\Big), \quad 
h \lesssim \varepsilon^{4}\log^{-1}\Big(\frac{1}{\varepsilon^{2}}\Big) \\
&\varepsilon_{\refer}^{2} \lesssim \varepsilon^{2}\log^{-1}\Big(\frac{1}{\varepsilon^{2}}\Big), 
\quad n \gtrsim \frac{1}{\varepsilon^{3(d+8)}}\log^{\frac{d+8}{2}}\Big(\frac{1}{\varepsilon^{2}}\Big).
\end{align*}
Here $C$ is a constant depending only on $d$, $\bar{B}$, and $\underline{B}$. 
\end{theorem}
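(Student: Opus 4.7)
The plan is to control the squared $2$-Wasserstein distance by triangle inequality, splitting the error into an early-stopping gap, a truncation gap, and an estimation gap driven by Doob's guidance error and the reference score error:
\begin{align*}
\calW_{2}(q_{0}, (\calM\circ\calT_{R})_{\sharp}\what{q}_{T-T_{0}})
&\leq \calW_{2}(q_{0}, \calM_{\sharp} q_{T_{0}})
+ \calW_{2}(\calM_{\sharp} q_{T_{0}}, (\calM\circ\calT_{R})_{\sharp} q_{T_{0}}) \\
&\quad + \calW_{2}((\calM\circ\calT_{R})_{\sharp} q_{T_{0}}, (\calM\circ\calT_{R})_{\sharp}\what{q}_{T-T_{0}}).
\end{align*}
Each piece will be squared and bounded separately, with the parameters in the theorem statement calibrated so that every contribution is of order $\varepsilon\log^{3}(1/\varepsilon)$.

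For the early-stopping gap I would couple $q_{0}$ and $\calM_{\sharp} q_{T_{0}}$ through the forward OU flow~\eqref{eq:forward:solution}: with $\mX_{0}\sim q_{0}$ and $\mX_{T_{0}}=\mu_{T_{0}}\mX_{0}+\sigma_{T_{0}}\mZ$, the pair $(\mX_{0},\calM(\mX_{T_{0}}))$ is a valid coupling and gives $\calW_{2}^{2}(q_{0},\calM_{\sharp}q_{T_{0}})\leq \mu_{T_{0}}^{-2}\sigma_{T_{0}}^{2}d\lesssim T_{0}d$ for small $T_{0}$. For the truncation gap, the identity coupling reduces the problem to a tail estimate $\mu_{T_{0}}^{-2}\bbE_{\mX\sim q_{T_{0}}}[\|\mX\|_{2}^{2}\bbone(\|\mX\|_{2}>R)]$; because $q_{T_{0}}$ is the convolution of the compactly supported measure $q_{0}$ (Assumption~\ref{assumption:bounded:support}) with a Gaussian of variance $\sigma_{T_{0}}^{2}\ll 1$, standard sub-Gaussian tail bounds make this term super-polynomially small once $R\asymp\log^{1/2}(1/\varepsilon)$.

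For the estimation gap, the key observation is that $\calM\circ\calT_{R}$ takes values in the ball $B(\bzero,\mu_{T_{0}}^{-1}R)$, so the maximal coupling of $q_{T_{0}}$ and $\what{q}_{T-T_{0}}$ followed by Pinsker's inequality yields
\[
\calW_{2}^{2}((\calM\circ\calT_{R})_{\sharp}q_{T_{0}}, (\calM\circ\calT_{R})_{\sharp}\what{q}_{T-T_{0}})
\leq 4\mu_{T_{0}}^{-2}R^{2}\,\tv(q_{T_{0}},\what{q}_{T-T_{0}})
\leq 4\mu_{T_{0}}^{-2}R^{2}\sqrt{\kl(q_{T_{0}}\|\what{q}_{T-T_{0}})/2}.
\]
I would then invoke Lemma~\ref{lemma:error:decomposition} to split the KL term into guidance, reference-score, initialization, and discretization pieces, and apply Theorem~\ref{theorem:rate:guidance} at each discretization point $t_{k}=kh\leq T-T_{0}$ to obtain $\bbE\|\nabla\log\what{h}_{kh}^{\lambda}-\nabla\log h_{kh}^{*}\|_{L^{2}(p_{T-kh})}^{2}\lesssim \sigma_{T-kh}^{-8}n^{-2/(d+8)}\log^{4}n$. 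Since $\sigma_{T-kh}\geq\sigma_{T_{0}}$ for every admissible $k$, the $h$-weighted sum of guidance errors is dominated by the single factor $T\sigma_{T_{0}}^{-8}n^{-2/(d+8)}\log^{4}n$.

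The main obstacle is the parameter bookkeeping: verifying that the stated choices simultaneously drive all four pieces of the KL bound below the threshold $\varepsilon^{2}\log^{4}(1/\varepsilon)$ demanded by the $R^{2}\sqrt{\kl}$ structure of the estimation term. Concretely, I would take $T_{0}\asymp\varepsilon$, so that $\sigma_{T_{0}}^{2}\asymp\varepsilon$ and $\sigma_{T_{0}}^{-8}\asymp\varepsilon^{-4}$; the sample size $n\gtrsim\varepsilon^{-3(d+8)}\log^{(d+8)/2}(1/\varepsilon^{2})$ is then exactly what is needed to absorb the $T\sigma_{T_{0}}^{-8}$ prefactor in the guidance term, $T\asymp\log(1/\varepsilon^{2})$ produces exponential initialization decay of order $\varepsilon^{2}$, $\varepsilon_{\refer}^{2}\lesssim\varepsilon^{2}/\log(1/\varepsilon^{2})$ controls the reference-score piece, and $K\gtrsim\varepsilon^{-4}\log^{2}(1/\varepsilon^{2})$ with $h=(T-T_{0})/K$ keeps the discretization term $d^{2}T^{2}/(\sigma_{T_{0}}^{4}K)$ at order $\varepsilon^{2}$. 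The hard part is tracking the blowing-up prefactor $\sigma_{T_{0}}^{-8}$ through the guidance estimate while keeping the final rate clean; once this is done, collecting the three Wasserstein pieces produces the claimed $\varepsilon\log^{3}(1/\varepsilon)$ bound.
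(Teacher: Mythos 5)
Your proposal matches the paper's proof essentially step for step: the same three-term triangle decomposition, the same OU-flow coupling for the early-stopping piece (giving $d\sigma_{T_0}^2/\mu_{T_0}^2$), the same identity-coupling plus sub-Gaussian tail argument for the truncation piece, and the same reduction of the truncated Wasserstein term to $\lesssim R^2\mu_{T_0}^{-2}\tv(q_{T_0},\what q_{T-T_0})$ followed by Pinsker, Lemma~\ref{lemma:error:decomposition}, and Theorem~\ref{theorem:rate:guidance} with $\sigma_{T-kh}\geq\sigma_{T_0}$ absorbed into a single $\sigma_{T_0}^{-8}$ prefactor (this is the content of Corollary~\ref{corollary:error:kl}). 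The parameter calibration, including $T_0\asymp\varepsilon$ so that $\sigma_{T_0}^2\asymp\varepsilon$ and the choice of $n$ to absorb $\sigma_{T_0}^{-8}$, is also the one used in the paper.
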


\par The proof of Theorem~\ref{theorem:rate:controllable:diffusion} is deferred to Appendix~\ref{appendix:rate:finetuning}. This theorem establishes a non-asymptotic convergence rate for the controllable diffusion model~\eqref{eq:Zt:P:brownian:estimation} using the variationally stable Doob's matching~\eqref{eq:erm:GP}. Crucially, it provides theoretical guidance for selecting hyper-parameters, including the truncation radius $R$ in~\eqref{eq:trancation:scaling}, the step size $h$, the number of steps $K$ in~\eqref{eq:Zt:P:brownian:estimation}, the early stopping time $T_{0}$, the terminal time $T$, the reference score error $\varepsilon_{\refer}$ (Assumption~\ref{assump:base:score:error}), and the sample size $n$ for Doob's matching in~\eqref{eq:h:matching:GP:empirical}.

\par However, this rate suffers from the curse of dimensionality (CoD), implying that the required number of samples $n$ grows exponentially as the error tolerance $\varepsilon$ decays. We address this challenge in the remainder of this section under a low-dimensional subspace assumption.

\subsection{Adaptivity to low-dimensionality}\label{section:rate:low}

\par \par In this subsection, we demonstrate that the convergence rate mitigates the curse of dimensionality under a low-dimensional subspace assumption, a setting previously explored in~\citet[Section 3]{chen2023Score} and~\citet[Section 6]{Oko2023Diffusion}.

\begin{assumption}[Low-dimensional subspace]\label{assumption:intrinsic}
Let $ d^{*}\ll d$ be an integer, and $\mP\in\bbR^{d\times d^{*}}$ be a column orthogonal matrix, i.e., $\mP^{\top}\mP=\mI_{ d^{*}}$. Let $\bar{p}_{0}$ be a probability density with a compact support contained within a hypercube $\{\bar{\vx}_{0}\in\bbR^{ d^{*}}:\|\bar{\vx}_{0}\|_{\infty}\leq 1\}$. The reference density $p_{0}$ is a push-forward of $\bar{p}_{0}$ by the linear map $\mP$, i.e., $p_{0}\coloneq \mP_{\sharp}\bar{p}_{0}$.
\end{assumption}

\par Consequently, the reference density $p_{0}$ is supported on a linear subspace $\{\mP\bar{\vx}_{0}\in\bbR^{d}:\bar{\vx}_{0}\in\bbR^{d^{*}}\}$ with an ambient dimension $d$, and a much smaller intrinsic dimension $d^{*}\ll d$. 

\par Before proceeding, we define the forward and time-reversal process in the low-dimensional latent space. Analogously to~\eqref{eq:forward:base}, the forward process reads
\begin{equation}\label{eq:forward:base:low}
\d\bar{\mX}_{t}=-\bar{\mX}_{t}\dt+\sqrt{2}\d\bar{\mB}_{t}, \quad t\in(0,T),~\bar{\mX}_{0}\sim \bar{p}_{0},
\end{equation}
where $\bar{\mB}_{t}$ is a $d^{*}$-dimensional standard Brownian motion, and $T>0$ is the terminal time. The transition distribution of this forward process is given by:
\begin{equation}\label{eq:forward:solution:low}
(\bar{\mX}_{t}|\bar{\mX}_{0}=\bar{\vx}_{0})\sim \mathcal{N}(\mu_{t}\bar{\vx}_{0},\sigma_{t}^{2}\mI_{d^{*}}).
\end{equation}
Let $\bar{p}_{t}$ denote the marginal density of $\bar{\mX}_{t}$ for $t\in(0,T)$. The corresponding time-reversal process~\citep{Anderson1982Reverse} is defined as 
\begin{equation}\label{eq:base:reversal:low}
\begin{aligned}
\d\bar{\mX}_{t}^{\leftarrow}&=(\bar{\mX}_{t}^{\leftarrow}+2\nabla\log\bar{p}_{T-t}(\bar{\mX}_{t}^{\leftarrow}))\dt+\sqrt{2}\d\bar{\mB}_{t}, \quad t\in(0,T), \\
\bar{\mX}_{0}^{\leftarrow}&\sim \bar{p}_{T}.
\end{aligned}
\end{equation}
As shown by~\citet{Anderson1982Reverse}, the path measure of the time-reversal process $(\bar{\mX}_{t}^{\leftarrow})_{0\leq t\leq T}$ corresponds exactly to the reverse of the forward process $(\bar{\mX}_{t})_{0\leq t\leq T}$.

\par The following result establishes a relationship between the ground-truth Doob's $h$-function~\eqref{eq:h:function} and its analogue $\bar{h}_{t}^{*}:\bbR^{d^{*}}\to\bbR$ in the low-dimensional latent space. In other words, it provides a low-dimensional representation of the ground-truth Doob's $h$-function.

\begin{proposition}[Low-dimensional representation]\label{proposition:h:function:low}
Suppose Assumptions~\ref{assumption:intrinsic} and~\ref{assumption:bounded:weight} hold. Then for any $t\in(0,T)$ and $\vx\in\bbR^{d}$, we have
\begin{equation}\label{eq:doob:h:function:low}
h_{t}^{*}(\vx)=\bar{h}_{t}^{*}(\mP^{\top}\vx)\coloneq \bbE[w(\mP\bar{\mX}_{T}^{\leftarrow}) \mid \bar{\mX}_{t}^{\leftarrow}=\mP^{\top}\vx].
\end{equation}
\end{proposition}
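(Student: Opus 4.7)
The plan is to reduce $h_{t}^{*}(\vx)$ to an explicit Bayesian expression obtained from the reversal property, and then exploit the push-forward structure in Assumption~\ref{assumption:intrinsic} together with a Pythagorean decomposition of the Gaussian kernel to show the dependence on the ambient coordinates collapses to $\mP^{\top}\vx$.

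First I would invoke the equivalence of the path laws of $(\mX_{t}^{\leftarrow})_{0\le t\le T}$ and the time-reversal of $(\mX_{t})_{0\le t\le T}$ (used throughout Section~\ref{section:preliminaries}) to rewrite $h_{t}^{*}(\vx)=\bbE^{\bbP}[w(\mX_{T}^{\leftarrow})\mid\mX_{t}^{\leftarrow}=\vx]=\bbE[w(\mX_{0})\mid\mX_{T-t}=\vx]$. Combining this with Bayes' rule and the Gaussian transition~\eqref{eq:forward:solution} yields the explicit representation
\begin{equation*}
h_{t}^{*}(\vx)=\frac{\int w(\vx_{0})\,\varphi_{d}(\vx;\mu_{T-t}\vx_{0},\sigma_{T-t}^{2}\mI_{d})\,p_{0}(\vx_{0})\d\vx_{0}}{p_{T-t}(\vx)}.
\end{equation*}
Since $p_{0}=\mP_{\sharp}\bar{p}_{0}$, the numerator reduces to an integral over $\bar{\vx}_{0}\in\bbR^{d^{*}}$ against $\bar{p}_{0}$, with $\vx_{0}=\mP\bar{\vx}_{0}$ and integrand $w(\mP\bar{\vx}_{0})$.

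Next I would introduce an orthogonal complement $\mP_{\perp}\in\bbR^{d\times(d-d^{*})}$ so that $[\mP,\mP_{\perp}]$ is orthogonal, giving $\mP\mP^{\top}+\mP_{\perp}\mP_{\perp}^{\top}=\mI_{d}$. The Pythagorean identity $\|\vx-\mu_{T-t}\mP\bar{\vx}_{0}\|_{2}^{2}=\|\mP^{\top}\vx-\mu_{T-t}\bar{\vx}_{0}\|_{2}^{2}+\|\mP_{\perp}^{\top}\vx\|_{2}^{2}$ factorizes the Gaussian kernel as $\varphi_{d}(\vx;\mu_{T-t}\mP\bar{\vx}_{0},\sigma_{T-t}^{2}\mI_{d})=\varphi_{d^{*}}(\mP^{\top}\vx;\mu_{T-t}\bar{\vx}_{0},\sigma_{T-t}^{2}\mI_{d^{*}})\cdot\varphi_{d-d^{*}}(\mP_{\perp}^{\top}\vx;\bzero,\sigma_{T-t}^{2}\mI_{d-d^{*}})$. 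Applying the same factorization to~\eqref{eq:marginal:density} gives $p_{T-t}(\vx)=\bar{p}_{T-t}(\mP^{\top}\vx)\cdot\varphi_{d-d^{*}}(\mP_{\perp}^{\top}\vx;\bzero,\sigma_{T-t}^{2}\mI_{d-d^{*}})$, where $\bar{p}_{T-t}$ is the marginal of the latent forward process~\eqref{eq:forward:base:low} at time $T-t$.

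Plugging both factorizations into the ratio, the $\mP_{\perp}^{\top}\vx$ Gaussian cancels between numerator and denominator, and what remains is precisely the Bayesian representation of $\bar{h}_{t}^{*}(\mP^{\top}\vx)$ in the latent space given by~\eqref{eq:doob:h:function:low}. The main technical bookkeeping is verifying that $p_{T-t}$ indeed factors as claimed into a latent marginal times an isotropic Gaussian in the orthogonal complement; this is an immediate consequence of the linearity of the Ornstein--Uhlenbeck drift, which preserves independence between the range of $\mP$ (initialized from $\bar{p}_{0}$) and its orthogonal complement (initialized from a Dirac mass at the origin). Everything else is a direct computation, so no serious obstacle is expected.
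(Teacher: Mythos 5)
Your proof is correct and follows essentially the same route as the paper's: rewrite $h_t^*$ via time reversal and Bayes' rule, decompose the Gaussian transition kernel into a range-of-$\mP$ factor and an orthogonal-complement factor using the Pythagorean identity, observe the same factorization for $p_{T-t}$, and cancel the orthogonal-complement factor. Your write-up is in fact a bit cleaner than the paper's in that you explicitly carry the $(2\pi\sigma_{T-t}^2)^{-(d-d^*)/2}$ normalization inside $\varphi_{d-d^*}(\mP_\perp^\top\vx;\bzero,\sigma_{T-t}^2\mI_{d-d^*})$, whereas the paper's intermediate identity for $p_t$ quietly drops that constant before it cancels in the ratio.
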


\par The proof of Proposition~\ref{proposition:h:function:low} is provided in Appendix~\ref{appendix:rate:low}. Proposition~\ref{proposition:h:function:low} implies that estimating the ground-truth Doob's $h$-function reduces to estimating its low-dimensional counterpart $\bar{h}_{t}^{*}:\bbR^{d^{*}}\rightarrow\bbR$, thereby enabling the Doob's guidance estimator to adapt to low-dimensional structures.

\par Analogously to Proposition~\ref{proposition:regularity:h:function}, we can establish the regularity properties of the low-dimensional representation of Doob's $h$-function $\bar{h}_{t}^{*}:\bbR^{d^{*}}\to\bbR$ in~\eqref{eq:doob:h:function:low}.

\begin{proposition}
\label{proposition:regularity:h:function:low}
Suppose Assumptions~\ref{assumption:intrinsic} and~\ref{assumption:bounded:weight} hold. Then for all $t\in(0,T)$ and $\bar{\vx}\in\bbR^{d^{*}}$, the following bounds hold:
\begin{enumerate}[label=(\roman*)]
\item $\underline{B}\leq\bar{h}_{t}^{*}(\bar{\vx})\leq\bar{B}$;
\item $\max_{1\leq k\leq d}|D_{k}\bar{h}_{t}^{*}(\bar{\vx})|\leq 2\sigma_{T-t}^{-2}\bar{B}$; and
\item $\max_{1\leq k,\ell\leq d}|D_{k\ell}^{2}\bar{h}_{t}^{*}(\bar{\vx})|\leq 6\sigma_{T-t}^{-4}\bar{B}$,
\end{enumerate}
where $D_{k}$ and $D_{k\ell}^{2}$ denote the first-order and second-order partial derivatives with respect to the input coordinates, respectively.
\end{proposition}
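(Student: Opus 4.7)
The plan is to reduce Proposition~\ref{proposition:regularity:h:function:low} to a direct application of Proposition~\ref{proposition:regularity:h:function}, by observing that $\bar{h}_{t}^{*}$ is structurally the Doob's $h$-function of an auxiliary problem posed in the low-dimensional latent space $\bbR^{d^{*}}$, rather than re-doing the Gaussian integral differentiation from scratch.

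Concretely, I would first set up the auxiliary problem: take $\bar{p}_{0}$ as the reference density on $\bbR^{d^{*}}$ and define an effective weight function $\bar{w}:\bbR^{d^{*}}\to\bbR_{\geq 0}$ by $\bar{w}(\bar{\vx}_{0}):=w(\mP\bar{\vx}_{0})$. The forward process~\eqref{eq:forward:base:low} and its time-reversal~\eqref{eq:base:reversal:low} in this latent space play exactly the role that~\eqref{eq:forward:base} and~\eqref{eq:base:reversal} played originally. By construction~\eqref{eq:doob:h:function:low}, we have $\bar{h}_{t}^{*}(\bar{\vx})=\bbE[\bar{w}(\bar{\mX}_{T}^{\leftarrow})\mid\bar{\mX}_{t}^{\leftarrow}=\bar{\vx}]$, so $\bar{h}_{t}^{*}$ is precisely the Doob's $h$-function associated with this auxiliary problem.

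Next, I would verify that Assumptions~\ref{assumption:bounded:support} and~\ref{assumption:bounded:weight} are transferred to the auxiliary setup. Assumption~\ref{assumption:intrinsic} guarantees that $\supp(\bar{p}_{0})\subseteq\{\bar{\vx}\in\bbR^{d^{*}}:\|\bar{\vx}\|_{\infty}\leq 1\}$, which is exactly Assumption~\ref{assumption:bounded:support} in dimension $d^{*}$. For the weight bounds, since $p_{0}=\mP_{\sharp}\bar{p}_{0}$ yields $\supp(p_{0})=\mP(\supp(\bar{p}_{0}))$, for every $\bar{\vx}\in\supp(\bar{p}_{0})$ we have $\mP\bar{\vx}\in\supp(p_{0})=\supp(q_{0})$, hence by Assumption~\ref{assumption:bounded:weight} we get $\underline{B}\leq\bar{w}(\bar{\vx})=w(\mP\bar{\vx})\leq\bar{B}$, as required.

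Finally, I would invoke Proposition~\ref{proposition:regularity:h:function} on the auxiliary problem; this immediately yields all three bounds, since the constants $\underline{B},\bar{B}$ and the variance coefficient $\sigma_{T-t}$ are dimension-free. The expected main obstacle is essentially bookkeeping: being careful that the Gaussian transition kernel in~\eqref{eq:forward:solution:low} is the correct low-dimensional analogue of~\eqref{eq:forward:solution} and that the conditional expectation defining $\bar{h}_{t}^{*}$ genuinely matches the form of $h_{t}^{*}$ in~\eqref{eq:h:function} after replacing $(p_{0},w,d)$ by $(\bar{p}_{0},\bar{w},d^{*})$. Once this identification is made explicit, the proof requires no new analytic work beyond citing Proposition~\ref{proposition:regularity:h:function}.
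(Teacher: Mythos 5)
Your proposal is correct and takes essentially the same route as the paper. The paper's proof simply says ``by the similar argument as Lemmas~\ref{lemma:section:error:guidance:bound},~\ref{lemma:section:error:guidance:regularity:2}, and~\ref{lemma:section:error:guidance:regularity:3}'' (which are the lemmas that establish Proposition~\ref{proposition:regularity:h:function}), and your explicit reduction---recognizing $\bar{h}_t^*$ as the Doob's $h$-function for the auxiliary problem $(\bar p_0, \bar w, d^*)$ with $\bar w := w\circ\mP$, checking that Assumptions~\ref{assumption:bounded:support} and~\ref{assumption:bounded:weight} carry over, and noting the bounds are dimension-free---is just a cleaner formalization of that ``similar argument.''
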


\par The proof of Proposition~\ref{proposition:h:function:low} is provided in Appendix~\ref{appendix:rate:low}. Based on these results, we derive the convergence rates for the variationally stable Doob's matching under the assumption of low-dimensional subspace.

\begin{theorem}[Adaptivity to intrinsic dimension]\label{theorem:rate:guidance:low}
Suppose Assumptions~\ref{assumption:intrinsic} and~\ref{assumption:bounded:weight} hold. Let $t\in(0,T)$. Set the hypothesis class $\scrH_{t}$ as
\begin{equation}\label{eq:theorem:hypothesis:low}
\scrH_{t}\coloneq
\left\{
h_{t}\in N(L,S):
\begin{aligned}
&\sup_{\vx\in\bbR^{d}}h_{t}(\vx)\leq\bar{B}, ~ \inf_{\vx\in\bbR^{d}}h_{t}(\vx)\geq\underline{B}, \\
&\max_{1\leq k\leq d}\sup_{\vx\in\bbR^{d}}|D_{k}h_{t}(\vx)|\leq 2\sigma_{T-t}^{-2}\bar{B}
\end{aligned}
\right\},
\end{equation}
where $L=\calO(\log n)$ and $S=\calO(n^{\frac{d^{*}}{d^{*}+8}})$. Let $\what{h}_{t}^{\lambda}$ be the gradient-regularized empirical risk minimizer defined as~\eqref{eq:erm:GP}, and let $h_{t}^{*}$ be the Doob's $h$-function defined as~\eqref{eq:h:function}. Then the following inequality holds:
\begin{equation*}
\bbE\Big[\|\nabla\log\what{h}_{t}^{\lambda}-\nabla\log h_{t}^{*}\|_{L^{2}(p_{T-t})}^{2}\Big]\leq C\sigma_{T-t}^{-8}n^{-\frac{2}{d^{*}+8}}\log^{4}n,
\end{equation*}
provided that the regularization parameter $\lambda$ is set as $\lambda=\calO(n^{-\frac{2}{d^{*}+8}})$, where $C$ is a constant depending only on $d^{*}$, $\bar{B}$, and $\underline{B}$.
\end{theorem}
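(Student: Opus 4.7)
The plan is to closely mirror the proof of Theorem~\ref{theorem:rate:guidance}, substituting the ambient dimension $d$ by the intrinsic dimension $d^{*}$ everywhere the approximation error and neural network complexity enter. Because the hypothesis class~\eqref{eq:theorem:hypothesis:low} has the same structural form as~\eqref{eq:theorem:hypothesis}, the oracle inequality of Lemma~\ref{lemma:oracle:inequality:guidance} and the regularization bounds of Proposition~\ref{proposition:regularization:gap} apply verbatim. The task therefore reduces to (a) producing a neural network in $\scrH_{t}$ that approximates $h_{t}^{*}$ in $H^{1}(p_{T-t})$ at the improved $d^{*}$-dependent rate, and (b) converting the resulting $H^{1}$-bound for $\what{h}_{t}^{\lambda}$ into a bound for $\nabla\log\what{h}_{t}^{\lambda}$.

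For step (a), I would exploit Proposition~\ref{proposition:h:function:low} to factor $h_{t}^{*}(\vx)=\bar h_{t}^{*}(\mP^{\top}\vx)$ and search within $\scrH_{t}$ for networks of the form $h_{t}(\vx)=\tilde h_{t}(\mP^{\top}\vx)$, where $\tilde h_{t}\colon\bbR^{d^{*}}\to\bbR$ is a neural network in the latent space. The projection $\mP^{\top}$ is encoded exactly in the first affine layer using $\calO(d\,d^{*})$ non-zero weights, which is absorbed into the budget $S=\calO(n^{d^{*}/(d^{*}+8)})$ for $n$ sufficiently large. By Proposition~\ref{proposition:regularity:h:function:low}, $\bar h_{t}^{*}$ is a $C^{2}$ function with uniformly bounded values, gradients, and Hessians, so I would invoke existing simultaneous $H^{1}$-approximation results for deep networks~\citep{Guhring2020Error,Guhring2021Approximation,Belomestny2023Simultaneous} to obtain $\tilde h_{t}$ of depth $\calO(\log n)$ and size $\calO(n^{d^{*}/(d^{*}+8)})$ satisfying
\begin{equation*}
\|\tilde h_{t}-\bar h_{t}^{*}\|_{L^{2}(\bar p_{T-t})}^{2}\lesssim n^{-4/(d^{*}+8)},\qquad \|\nabla\tilde h_{t}-\nabla\bar h_{t}^{*}\|_{L^{2}(\bar p_{T-t})}^{2}\lesssim n^{-2/(d^{*}+8)}.
\end{equation*}
Two transfer identities move these latent-space bounds to the ambient space: the OU semigroup commutes with $\mP^{\top}$, so that $\mP^{\top}\vx\sim\bar p_{T-t}$ whenever $\vx\sim p_{T-t}$; and $\nabla_{\vx}h_{t}(\vx)=\mP\nabla\tilde h_{t}(\mP^{\top}\vx)$ has the same Euclidean norm as $\nabla\tilde h_{t}(\mP^{\top}\vx)$ by $\mP^{\top}\mP=\mI_{d^{*}}$. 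Together with Proposition~\ref{proposition:regularity:h:function}, these yield the constraints in~\eqref{eq:theorem:hypothesis:low} for this specific form up to a controlled clipping at negligible $H^{1}$-cost.

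Combining the approximation estimate with the standard VC bounds $\mathrm{VCdim}(\scrH_{t}),\,\mathrm{VCdim}(\nabla\scrH_{t})\lesssim SL\log S\lesssim n^{d^{*}/(d^{*}+8)}\log^{2}n$~\citep{Bartlett2019nearly}, and balancing the three contributions of the oracle inequality with $\lambda\asymp n^{-2/(d^{*}+8)}$, yields
\begin{equation*}
\bbE\Big[\|\nabla\what h_{t}^{\lambda}-\nabla h_{t}^{*}\|_{L^{2}(p_{T-t})}^{2}\Big]\lesssim\sigma_{T-t}^{-8}\,n^{-2/(d^{*}+8)}\log^{4}n,
\end{equation*}
together with the faster $L^{2}$-rate $\bbE\|\what h_{t}^{\lambda}-h_{t}^{*}\|_{L^{2}(p_{T-t})}^{2}\lesssim n^{-4/(d^{*}+8)}\log^{4}n$. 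To conclude, I would use the uniform lower bound $\underline{B}\leq\what h_{t}^{\lambda},\,h_{t}^{*}\leq\bar B$ (built into~\eqref{eq:theorem:hypothesis:low} and Propositions~\ref{proposition:regularity:h:function} and~\ref{proposition:regularity:h:function:low}) together with the gradient bound $\|\nabla h_{t}^{*}\|_{\infty}\lesssim\sigma_{T-t}^{-2}\bar B$ in the quotient identity
\begin{equation*}
\nabla\log\what h_{t}^{\lambda}-\nabla\log h_{t}^{*}=\frac{\nabla\what h_{t}^{\lambda}-\nabla h_{t}^{*}}{\what h_{t}^{\lambda}}+\frac{(h_{t}^{*}-\what h_{t}^{\lambda})\,\nabla h_{t}^{*}}{\what h_{t}^{\lambda}\,h_{t}^{*}},
\end{equation*}
so that the dominant $\sigma_{T-t}^{-8}n^{-2/(d^{*}+8)}$ term is driven by the gradient error.

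The step I expect to be the main obstacle is ensuring that the simultaneous $C^{2}$-approximator respects the per-coordinate pointwise constraints in~\eqref{eq:theorem:hypothesis:low} -- especially $|D_{k}h_{t}|\leq 2\sigma_{T-t}^{-2}\bar B$ -- without inflating the depth or sparsity budget, since standard simultaneous approximation theorems control $W^{1,\infty}$-error but not pointwise per-coordinate gradient norms. Remedying this requires either a smooth cutoff applied to $\bar h_{t}^{*}$ prior to approximation or a truncation layer that preserves the $H^{1}$ rate, mirroring the strategy already used in the ambient-dimension proof of Theorem~\ref{theorem:rate:guidance}.
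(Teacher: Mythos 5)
Your proposal matches the paper's approach. The paper's proof of Theorem~\ref{theorem:rate:guidance:low} is a single sentence (``Using the same arguments as the proof of Theorem~\ref{theorem:rate:guidance} and applying Lemma~\ref{lemma:approximation:low} completes the proof''), and your write-up is essentially a reconstruction of what that sentence delegates: Lemma~\ref{lemma:approximation:low} supplies the $d^{*}$-dependent approximation rates by exploiting Proposition~\ref{proposition:h:function:low}, exactly as you describe with the factorization $h_{t}^{*}=\bar h_{t}^{*}\circ\mP^{\top}$, the encoding of $\mP^{\top}$ in the first affine layer, and the push-forward identity $\mP^{\top}\vx\sim\bar p_{T-t}$; the oracle inequality, VC bounds, regularization-gap analysis, and the quotient bound from $H^{1}$-error to $\nabla\log$-error then go through verbatim from the ambient case. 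Your flagged obstacle — verifying the per-coordinate constraint $|D_{k}h_{t}|\leq 2\sigma_{T-t}^{-2}\bar B$ for the approximating network — is a legitimate subtlety, but it is equally present (and equally elided) in the paper's Lemma~\ref{lemma:approximation:compact} for the ambient case, so it does not distinguish your argument from the paper's.
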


\par The proof of Theorem~\ref{theorem:rate:guidance:low} is provided in Appendix~\ref{appendix:rate:low}. This result confirms that the convergence rate eliminates the exponential dependence on the ambient dimension $d$, scaling exponentially solely with the intrinsic dimension $d^{*}\ll d$. This effectively mitigates the curse of dimensionality in Theorem~\ref{theorem:rate:guidance}.

\par The following corollary is a direct consequence of Theorem~\ref{theorem:rate:guidance:low}, derived using arguments similar to those in Theorem~\ref{theorem:rate:controllable:diffusion}.

\begin{corollary}\label{corollary:rate:controllable:diffusion:low}
Suppose Assumptions~\ref{assumption:intrinsic},~\ref{assumption:bounded:weight}, and~\ref{assump:base:score:error} hold. Let $\varepsilon\in(0,1)$. Set the hypothesis classes $\{\scrH_{T-kh}\}_{k=0}^{K=1}$ as~\eqref{eq:theorem:hypothesis} with the same depth $L$ and number of non-zero parameters $S$ as Theorem~\ref{theorem:rate:guidance}. Let $\what{q}_{T-T_{0}}$ be the marginal density of $\what{\mZ}_{T-T_{0}}^{\leftarrow}$ defined in~\eqref{eq:Zt:P:brownian:estimation}, and let $(\calM\circ\calT_{R})_{\sharp}\what{q}_{T-T_{0}}$ defined as~\eqref{eq:trancation:scaling}. Then it follows that 
\begin{equation}
\bbE\Big[\calW_{2}^{2}(q_{0},(\calM\circ\calT_{R})_{\sharp}\what{q}_{T-T_{0}})\Big]\leq C\varepsilon\log^{3}\Big(\frac{1}{\varepsilon}\Big).
\end{equation} 
provided that the truncation radius $R$, the terminal time $T$, the step size $h$, the number of steps $K$, the error of reference score $\varepsilon_{\refer}$, the number of samples $n$ for Doob's matching, and the early-stopping time $T_{0}$ are set, respectively, as 
\begin{align*}
&R \asymp \log^{\frac{1}{2}}\Big(\frac{1}{\varepsilon}\Big), \quad 
T \asymp \log\Big(\frac{1}{\varepsilon^{2}}\Big), \quad 
K \gtrsim \frac{1}{\varepsilon^{4}}\log^{2}\Big(\frac{1}{\varepsilon^{2}}\Big), \quad 
h \lesssim \varepsilon^{4}\log^{-1}\Big(\frac{1}{\varepsilon^{2}}\Big) \\
&\varepsilon_{\refer}^{2} \lesssim \varepsilon^{2}\log^{-1}\Big(\frac{1}{\varepsilon^{2}}\Big), 
\quad n \gtrsim \frac{1}{\varepsilon^{3(d^{*}+8)}}\log^{\frac{d+8}{2}}\Big(\frac{1}{\varepsilon^{2}}\Big).
\end{align*}
Here $C$ is a constant depending only on $d^{*}$, $d$, $\bar{B}$, and $\underline{B}$. 
\end{corollary}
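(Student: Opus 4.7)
The plan is to repeat the proof of Theorem~\ref{theorem:rate:controllable:diffusion} almost verbatim, substituting only the guidance-error rate with its intrinsic-dimensional analogue from Theorem~\ref{theorem:rate:guidance:low}. Concretely, I would first apply Lemma~\ref{lemma:error:decomposition}, which bounds $\kl(q_{T_{0}}\|\what{q}_{T-T_{0}})$ by four summands: (I) the time-averaged squared $L^{2}(p_{T-kh})$ error of $\nabla\log\what{h}_{kh}^{\lambda}$; (II) the reference score error $(\bar{B}/\underline{B})T\varepsilon_{\refer}^{2}$; (III) the initialization error $d\exp(-T)$; and (IV) the exponential-integrator discretization $d^{2}T^{2}/(\sigma_{T_{0}}^{4}K)$. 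Terms (II)--(IV) are controlled exactly as in the proof of Theorem~\ref{theorem:rate:controllable:diffusion}, since the corollary picks $T$, $K$, $h$, $T_{0}$, and $\varepsilon_{\refer}$ identically; under those choices each is $\calO(\varepsilon)$ up to polylogarithmic factors. Note that none of these three terms has any hidden dependence on $d$ beyond what was already present in Theorem~\ref{theorem:rate:controllable:diffusion}.

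The only term that genuinely changes is (I). I would invoke Theorem~\ref{theorem:rate:guidance:low} at each grid time $t_{k}=kh$ with hypothesis class~\eqref{eq:theorem:hypothesis:low} and $\lambda=\calO(n^{-2/(d^{*}+8)})$, which delivers
\begin{equation*}
\bbE\big[\|\nabla\log\what{h}_{kh}^{\lambda}-\nabla\log h_{kh}^{*}\|_{L^{2}(p_{T-kh})}^{2}\big]\leq C\sigma_{T-kh}^{-8}n^{-\frac{2}{d^{*}+8}}\log^{4}n.
\end{equation*}
Because the controllable diffusion is terminated at $T_{0}$, the prefactor $\sigma_{T-kh}^{-8}$ is uniformly dominated by $\sigma_{T_{0}}^{-8}$ over $k=0,\dots,K-1$. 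Averaging these bounds against the weights $h$ and using $Kh\le T$, the Doob's guidance contribution is of order $T\sigma_{T_{0}}^{-8}n^{-2/(d^{*}+8)}\log^{4}n$; plugging in the stated $T\asymp\log(1/\varepsilon^{2})$, $T_{0}$, and $n\gtrsim\varepsilon^{-3(d^{*}+8)}\log^{(d+8)/2}(1/\varepsilon^{2})$ makes this contribution $\calO(\varepsilon\log^{3}(1/\varepsilon))$, matching the target rate. Combining with (II)--(IV) gives $\kl(q_{T_{0}}\|\what{q}_{T-T_{0}})\lesssim\varepsilon\log^{3}(1/\varepsilon)$.

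To pass from KL at the early-stopping time to the 2-Wasserstein distance between $q_{0}$ and $(\calM\circ\calT_{R})_{\sharp}\what{q}_{T-T_{0}}$, I would follow the same three-step chain employed in Theorem~\ref{theorem:rate:controllable:diffusion}. (a) Apply a Talagrand $T_{2}$ transportation-cost inequality to $q_{T_{0}}$, which is valid because $q_{T_{0}}$ is the VP-flow Gaussian convolution of the compactly supported $q_{0}$ and therefore satisfies a log-Sobolev inequality with constant depending on $T_{0}$, converting the KL bound into $\calW_{2}^{2}(q_{T_{0}},\what{q}_{T-T_{0}})\lesssim\varepsilon\log^{3}(1/\varepsilon)$. (b) Control the truncation bias induced by $\calT_{R}$ with $R\asymp\log^{1/2}(1/\varepsilon)$ using Assumption~\ref{assumption:intrinsic} and the Gaussian tails of $\what{q}_{T-T_{0}}$, making the truncation correction strictly lower order. (c) Remove the mean contraction of the VP forward process via the scaling $\calM:\vz\mapsto\mu_{T_{0}}^{-1}\vz$, for which $\calW_{2}^{2}(q_{0},\calM_{\sharp}q_{T_{0}})\lesssim T_{0}$. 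Assembling by the triangle inequality in $\calW_{2}$ yields the claimed bound.

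The argument is mechanical once Theorem~\ref{theorem:rate:guidance:low} is available, so the main obstacle is not analytic but bookkeeping: I need to check that the replacement of the ambient-dimension rate $n^{-2/(d+8)}$ by the intrinsic-dimension rate $n^{-2/(d^{*}+8)}$ in step (I) preserves the delicate balance against the discretization, initialization, and reference-score errors under the stated parameter scalings, and that the constants depending on $d^{*}$ (rather than $d$) entering through the approximation step of Theorem~\ref{theorem:rate:guidance:low} can be absorbed into the overall prefactor $C$ without increasing the polylogarithmic order of the final bound.
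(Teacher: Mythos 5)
Your outer skeleton is exactly the paper's intended route: reuse the error decomposition of Lemma~\ref{lemma:error:decomposition}, leave the reference-score, initialization, and discretization terms untouched, and replace the guidance term's rate from Theorem~\ref{theorem:rate:guidance} by the intrinsic-dimensional rate from Theorem~\ref{theorem:rate:guidance:low}. That is indeed all the paper does, and that part of your bookkeeping (the uniform $\sigma_{T-kh}^{-8}\le\sigma_{T_0}^{-8}$ domination, $Kh\le T$, and the balance among (I)--(IV) under the stated scalings) is fine.

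The gap is in your step (a), the passage from the KL bound to a $\calW_2$ bound, where you depart from the paper's argument and the departure does not work. Lemma~\ref{lemma:error:decomposition} controls $\kl(q_{T_0}\,\|\,\what{q}_{T-T_0})$, i.e.\ the \emph{estimated} density $\what{q}_{T-T_0}$ sits in the second slot of the KL. The Otto--Villani/Talagrand $T_2$ inequality reads $\calW_2^2(\nu,\mu)\lesssim \kl(\nu\,\|\,\mu)$ under a log-Sobolev inequality for $\mu$ -- the measure in the \emph{second} slot. So to run your argument you would need an LSI for $\what{q}_{T-T_0}$, the law of the discretized, estimated SDE, which is not available; you instead invoke an LSI for $q_{T_0}$, which is on the wrong side of the KL. The paper avoids this entirely: it applies the truncation $\calT_R$ precisely so that both measures are supported in a ball of radius $R$, uses Villani's bounded-support estimate $\calW_2^2\le 2R^2\,\|\cdot\|_{\tv}$ together with the data-processing inequality, and then Pinsker to pass from KL to TV (see Step~3 of the proof of Theorem~\ref{theorem:rate:controllable:diffusion} and Corollary~\ref{corollary:error:kl}). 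The $R^2\asymp\log(1/\varepsilon)$ prefactor from Villani's bound, together with the choice $\delta=\varepsilon^3$ in Corollary~\ref{corollary:error:kl} and $\sigma_{T_0}^2\asymp\varepsilon$, is exactly what produces the $\varepsilon\log^3(1/\varepsilon)$ rate. Your steps (b) and (c) (truncation tail, scaling/coupling) correspond to the paper's Steps~1--2 and are fine, but (a) needs to be replaced by the Villani--Pinsker chain, not a transportation inequality for $q_{T_0}$.
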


\par Crucially, the convergence rates in Corollary~\ref{corollary:rate:controllable:diffusion:low} depend only polynomially on the ambient dimension $d$, while the sample complexity depends exponentially solely on the intrinsic dimension $d^{*}\ll d$. This result significantly mitigates the curse of dimensionality.

\begin{remark}
Adaptivity to low dimensionality plays a pivotal role in the analysis of diffusion and flow-based models. One line of work studies the adaptivity of score or velocity estimator to low dimensionality under Assumption~\ref{assumption:intrinsic} or its variants; see e.g.,~\citet{chen2023Score,Oko2023Diffusion,Yakovlev2025Generalization,ding2025characteristic}. A second line of work focuses on the adaptivity of the sampling procedure; see e.g.,~\citet{li2024adapting,huang2024denoising,potaptchik2025linear}. These works provide valuable insights for extending Corollary~\ref{corollary:rate:controllable:diffusion:low} to achieve provable adaptivity across reference score estimation, sampling, and guidance estimation, thereby completely eliminating dependence on the ambient dimension. We leave this unified analysis, which is outside the scope of the current work, for future research. Importantly, even within the current framework, the dependence on the ambient dimension $d$ remains only polynomial.
\end{remark}


\section{Conclusions}
\label{section:conclusion}

\par In this work, we proposed variationally stable Doob's matching, a principled inference-time alignment framework for diffusion models grounded in the theory of Doob's $h$-transform. Our approach reformulates guidance as the gradient of the logarithm of an underlying Doob's $h$-function, providing a mathematically consistent mechanism for tilting a pre-trained diffusion model toward a target distribution without retraining the reference score network. By leveraging gradient-regularized regression, Doob's matching simultaneously estimates both the $h$-function and its gradient, thereby providing a consistent estimator for Doob's guidance.

\par From a theoretical perspective, we established non-asymptotic convergence rates for the proposed guidance estimator, showing that the estimated Doob's guidance converges to the true guidance under suitable choices of the hypothesis class and regularization parameter. Building on this result, we further derived non-asymptotic convergence guarantees for the induced controllable diffusion process, demonstrating that the generated distribution converges to the target distribution in the $2$-Wasserstein distance. These results provide an end-to-end theoretical guarantees for inference-time aligned diffusion models that explicitly account for guidance estimation error, reference score estimation error, initialization bias, and discretization error. Furthermore, we show that our convergence rates depend solely on the intrinsic dimension of the linear subspace rather than the ambient dimension. This highlights the estimator's adaptivity to low-dimensional structures, effectively mitigating the curse of dimensionality.

\bibliographystyle{plainnat}
\bibliography{reference}

\begin{thebibliography}{110}
\providecommand{\natexlab}[1]{#1}
\providecommand{\url}[1]{\texttt{#1}}
\expandafter\ifx\csname urlstyle\endcsname\relax
  \providecommand{\doi}[1]{doi: #1}\else
  \providecommand{\doi}{doi: \begingroup \urlstyle{rm}\Url}\fi

\bibitem[Anderson(1982)]{Anderson1982Reverse}
Brian~D.O. Anderson.
\newblock Reverse-time diffusion equation models.
\newblock \emph{Stochastic Processes and their Applications}, 12\penalty0 (3):\penalty0 313--326, 1982.

\bibitem[Anthony et~al.(1999)Anthony, Bartlett, Bartlett, et~al.]{Anthony1999neural}
Martin Anthony, Peter~L Bartlett, Peter~L Bartlett, et~al.
\newblock \emph{{Neural network learning: Theoretical foundations}}, volume~9.
\newblock Cambridge University Press, 1999.

\bibitem[Bahram et~al.(2026)Bahram, Shateri, and Granger]{bahram2026dogfit}
Yara Bahram, Mohammadhadi Shateri, and Eric Granger.
\newblock Dogfit: Domain-guided fine-tuning for efficient transfer learning of diffusion models, 2026.
\newblock arXiv:2508.05685.

\bibitem[Bakry et~al.(2014)Bakry, Gentil, and Ledoux]{Bakry2014Analysis}
Dominique Bakry, Ivan Gentil, and Michel Ledoux.
\newblock \emph{Analysis and Geometry of Markov Diffusion Operators}, volume 348 of \emph{Grundlehren der mathematischen Wissenschaften (GL)}.
\newblock Springer Cham, first edition, 2014.

\bibitem[Bao et~al.(2024)Bao, Zhang, and Zhang]{Bao2024Score}
Feng Bao, Zezhong Zhang, and Guannan Zhang.
\newblock A score-based filter for nonlinear data assimilation.
\newblock \emph{Journal of Computational Physics}, 514:\penalty0 113207, 2024.

\bibitem[Bartlett and Mendelson(2002)]{Bartlett2002Rademacher}
Peter~L. Bartlett and Shahar Mendelson.
\newblock Rademacher and gaussian complexities: Risk bounds and structural results.
\newblock \emph{Journal of Machine Learning Research}, 3:\penalty0 463--482, 2002.

\bibitem[Bartlett et~al.(2019)Bartlett, Harvey, Liaw, and Mehrabian]{Bartlett2019nearly}
Peter~L. Bartlett, Nick Harvey, Christopher Liaw, and Abbas Mehrabian.
\newblock {Nearly-tight VC-dimension and pseudodimension bounds for piecewise linear neural networks}.
\newblock \emph{Journal of Machine Learning Research}, 20\penalty0 (63):\penalty0 1--17, 2019.

\bibitem[Bauer and Kohler(2019)]{bauer2019deep}
Benedikt Bauer and Michael Kohler.
\newblock On deep learning as a remedy for the curse of dimensionality in nonparametric regression.
\newblock \emph{The Annals of Statistics}, 47\penalty0 (4):\penalty0 2261--2285, 2019.

\bibitem[Belomestny et~al.(2023)Belomestny, Naumov, Puchkin, and Samsonov]{Belomestny2023Simultaneous}
Denis Belomestny, Alexey Naumov, Nikita Puchkin, and Sergey Samsonov.
\newblock Simultaneous approximation of a smooth function and its derivatives by deep neural networks with piecewise-polynomial activations.
\newblock \emph{Neural Networks}, 161:\penalty0 242--253, 2023.

\bibitem[Beyler and Bach(2025)]{beyler2025convergence}
Eliot Beyler and Francis Bach.
\newblock Convergence of deterministic and stochastic diffusion-model samplers: A simple analysis in wasserstein distance, 2025.
\newblock arXiv:2508.03210.

\bibitem[Black et~al.(2024)Black, Janner, Du, Kostrikov, and Levine]{black2024training}
Kevin Black, Michael Janner, Yilun Du, Ilya Kostrikov, and Sergey Levine.
\newblock Training diffusion models with reinforcement learning.
\newblock In \emph{The Twelfth International Conference on Learning Representations}, 2024.

\bibitem[Brezis(2011)]{Brezis2011Functional}
Haim Brezis.
\newblock \emph{Functional Analysis, Sobolev Spaces and Partial Differential Equations}.
\newblock Universitext (UTX). Springer New York, NY, first edition, 2011.

\bibitem[Chang et~al.(2025{\natexlab{a}})Chang, Ding, Jiao, Li, and Yang]{chang2025deep}
Jinyuan Chang, Zhao Ding, Yuling Jiao, Ruoxuan Li, and Jerry~Zhijian Yang.
\newblock Deep conditional distribution learning via conditional f\"ollmer flow, 2025{\natexlab{a}}.
\newblock arXiv:2402.01460.

\bibitem[Chang et~al.(2025{\natexlab{b}})Chang, Duan, Jiao, Li, Yang, and Yuan]{chang2025provable}
Jinyuan Chang, Chenguang Duan, Yuling Jiao, Ruoxuan Li, Jerry~Zhijian Yang, and Cheng Yuan.
\newblock Provable diffusion posterior sampling for {Bayesian} inversion, 2025{\natexlab{b}}.
\newblock arXiv:2512.08022.

\bibitem[Chen et~al.(2025)Chen, Ren, Min, Ying, and Izzo]{chen2025solving}
Haoxuan Chen, Yinuo Ren, Martin~Renqiang Min, Lexing Ying, and Zachary Izzo.
\newblock Solving inverse problems via diffusion-based priors: An approximation-free ensemble sampling approach, 2025.
\newblock arXiv:2506.03979.

\bibitem[Chen et~al.(2023{\natexlab{a}})Chen, Lee, and Lu]{Chen2023Improved}
Hongrui Chen, Holden Lee, and Jianfeng Lu.
\newblock Improved analysis of score-based generative modeling: User-friendly bounds under minimal smoothness assumptions.
\newblock In \emph{Proceedings of the 40th International Conference on Machine Learning}, pages 4735--4763, 2023{\natexlab{a}}.

\bibitem[Chen et~al.(2023{\natexlab{b}})Chen, Huang, Zhao, and Wang]{chen2023Score}
Minshuo Chen, Kaixuan Huang, Tuo Zhao, and Mengdi Wang.
\newblock Score approximation, estimation and distribution recovery of diffusion models on low-dimensional data.
\newblock In \emph{Proceedings of the 40th International Conference on Machine Learning}, volume 202 of \emph{Proceedings of Machine Learning Research}, pages 4672--4712. PMLR, 23--29 Jul 2023{\natexlab{b}}.

\bibitem[Chewi(2025)]{Chewi2025log}
Sinho Chewi.
\newblock Log-concave sampling, 2025.
\newblock URL \url{https://chewisinho.github.io/main.pdf}.
\newblock unfinished draft.

\bibitem[Chung et~al.(2023)Chung, Kim, Mccann, Klasky, and Ye]{chung2023diffusion}
Hyungjin Chung, Jeongsol Kim, Michael~Thompson Mccann, Marc~Louis Klasky, and Jong~Chul Ye.
\newblock Diffusion posterior sampling for general noisy inverse problems.
\newblock In \emph{The Eleventh International Conference on Learning Representations}, 2023.

\bibitem[Clark et~al.(2024)Clark, Vicol, Swersky, and Fleet]{clark2024directly}
Kevin Clark, Paul Vicol, Kevin Swersky, and David~J. Fleet.
\newblock Directly fine-tuning diffusion models on differentiable rewards.
\newblock In \emph{The Twelfth International Conference on Learning Representations}, 2024.

\bibitem[Denker et~al.(2024)Denker, Vargas, Padhy, Didi, Mathis, Dutordoir, Barbano, Mathieu, Komorowska, and Lio]{Denker2024DEFT}
Alexander Denker, Francisco Vargas, Shreyas Padhy, Kieran Didi, Simon Mathis, Vincent Dutordoir, Riccardo Barbano, Emile Mathieu, Urszula~Julia Komorowska, and Pietro Lio.
\newblock Deft: Efficient fine-tuning of diffusion models by learning the generalised h-transform.
\newblock In \emph{Advances in Neural Information Processing Systems}, volume~37, pages 19636--19682. Curran Associates, Inc., 2024.

\bibitem[Denker et~al.(2025)Denker, Padhy, Vargas, and Hertrich]{denker2025iterative}
Alexander Denker, Shreyas Padhy, Francisco Vargas, and Johannes Hertrich.
\newblock Iterative importance fine-tuning of diffusion models, 2025.
\newblock arXiv:2502.04468.

\bibitem[Dhariwal and Nichol(2021)]{Dhariwal2021Diffusion}
Prafulla Dhariwal and Alexander Nichol.
\newblock Diffusion models beat {GANs} on image synthesis.
\newblock In \emph{Advances in Neural Information Processing Systems}, volume~34, pages 8780--8794. Curran Associates, Inc., 2021.

\bibitem[Ding et~al.(2024)Ding, Duan, Jiao, Yang, Yuan, and Zhang]{ding2025nonlinear}
Zhao Ding, Chenguang Duan, Yuling Jiao, Jerry~Zhijian Yang, Cheng Yuan, and Pingwen Zhang.
\newblock Nonlinear assimilation via score-based sequential {L}angevin sampling, 2024.
\newblock arXiv:2411.13443.

\bibitem[Ding et~al.(2025{\natexlab{a}})Ding, Duan, Jiao, Li, Yang, and Zhang]{ding2025characteristic}
Zhao Ding, Chenguang Duan, Yuling Jiao, Ruoxuan Li, Jerry~Zhijian Yang, and Pingwen Zhang.
\newblock Characteristic learning for provable one step generation, 2025{\natexlab{a}}.
\newblock arXiv:2405.05512.

\bibitem[Ding et~al.(2025{\natexlab{b}})Ding, Duan, Jiao, and Yang]{ding2025Semi}
Zhao Ding, Chenguang Duan, Yuling Jiao, and Jerry~Zhijian Yang.
\newblock Semi-supervised deep {S}obolev regression: {E}stimation and variable selection by {ReQU} neural network.
\newblock \emph{IEEE Transactions on Information Theory}, 2025{\natexlab{b}}.

\bibitem[Domingo-Enrich et~al.(2024)Domingo-Enrich, Han, Amos, Bruna, and Chen]{Domingo2024Stochastic}
Carles Domingo-Enrich, Jiequn Han, Brandon Amos, Joan Bruna, and Ricky T.~Q. Chen.
\newblock Stochastic optimal control matching.
\newblock In \emph{Advances in Neural Information Processing Systems}, volume~37, pages 112459--112504. Curran Associates, Inc., 2024.

\bibitem[Domingo-Enrich et~al.(2025)Domingo-Enrich, Drozdzal, Karrer, and Chen]{domingo2025adjoint}
Carles Domingo-Enrich, Michal Drozdzal, Brian Karrer, and Ricky T.~Q. Chen.
\newblock Adjoint matching: Fine-tuning flow and diffusion generative models with memoryless stochastic optimal control.
\newblock In \emph{The Thirteenth International Conference on Learning Representations}, 2025.

\bibitem[Drucker and Le~Cun(1991)]{Drucker1991Double}
H.~Drucker and Y.~Le~Cun.
\newblock Double backpropagation increasing generalization performance.
\newblock In \emph{IJCNN-91-Seattle International Joint Conference on Neural Networks}, volume~2, pages 145--150, 1991.

\bibitem[Drucker and Le~Cun(1992)]{Drucker1992Improving}
H.~Drucker and Y.~Le~Cun.
\newblock Improving generalization performance using double backpropagation.
\newblock \emph{IEEE Transactions on Neural Networks}, 3\penalty0 (6):\penalty0 991--997, 1992.

\bibitem[Duan et~al.(2022{\natexlab{a}})Duan, Jiao, Lai, Li, Lu, and Yang]{duan2022convergence}
Chenguang Duan, Yuling Jiao, Yanming Lai, Dingwei Li, Xiliang Lu, and Jerry~Zhijian Yang.
\newblock Convergence rate analysis for deep {Ritz} method.
\newblock \emph{Communications in Computational Physics}, 31\penalty0 (4):\penalty0 1020--1048, 2022{\natexlab{a}}.

\bibitem[Duan et~al.(2022{\natexlab{b}})Duan, Jiao, Lai, Lu, Quan, and Yang]{duan2022deep}
Chenguang Duan, Yuling Jiao, Yanming Lai, Xiliang Lu, Qimeng Quan, and Jerry~Zhijian Yang.
\newblock Deep {Ritz} methods for {Laplace} equations with {Dirichlet} boundary condition.
\newblock \emph{CSIAM Transactions on Applied Mathematics}, 3\penalty0 (4):\penalty0 761--791, 2022{\natexlab{b}}.

\bibitem[Fan et~al.(2023)Fan, Watkins, Du, Liu, Ryu, Boutilier, Abbeel, Ghavamzadeh, Lee, and Lee]{Fan2023DPOK}
Ying Fan, Olivia Watkins, Yuqing Du, Hao Liu, Moonkyung Ryu, Craig Boutilier, Pieter Abbeel, Mohammad Ghavamzadeh, Kangwook Lee, and Kimin Lee.
\newblock {DPOK}: {R}einforcement learning for fine-tuning text-to-image diffusion models.
\newblock In \emph{Advances in Neural Information Processing Systems}, volume~36, pages 79858--79885. Curran Associates, Inc., 2023.

\bibitem[Fu et~al.(2024)Fu, Yang, Wang, and Chen]{fu2024unveil}
Hengyu Fu, Zhuoran Yang, Mengdi Wang, and Minshuo Chen.
\newblock Unveil conditional diffusion models with classifier-free guidance: A sharp statistical theory, 2024.
\newblock arXiv:2403.11968.

\bibitem[Gao et~al.(2023)Gao, Schulman, and Hilton]{Gao2023Scaling}
Leo Gao, John Schulman, and Jacob Hilton.
\newblock Scaling laws for reward model overoptimization.
\newblock In \emph{Proceedings of the 40th International Conference on Machine Learning}, volume 202 of \emph{Proceedings of Machine Learning Research}, pages 10835--10866. PMLR, 23--29 Jul 2023.

\bibitem[G{\"u}hring and Raslan(2021)]{Guhring2021Approximation}
Ingo G{\"u}hring and Mones Raslan.
\newblock Approximation rates for neural networks with encodable weights in smoothness spaces.
\newblock \emph{Neural Networks}, 134:\penalty0 107--130, 2021.

\bibitem[G\"{u}hring et~al.(2020)G\"{u}hring, Kutyniok, and Petersen]{Guhring2020Error}
Ingo G\"{u}hring, Gitta Kutyniok, and Philipp Petersen.
\newblock Error bounds for approximations with deep relu neural networks in ws,p norms.
\newblock \emph{Analysis and Applications}, 18\penalty0 (05):\penalty0 803--859, 2020.

\bibitem[Han et~al.(2024)Han, Razaviyayn, and Xu]{han2024stochastic}
Yinbin Han, Meisam Razaviyayn, and Renyuan Xu.
\newblock Stochastic control for fine-tuning diffusion models: {O}ptimality, regularity, and convergence, 2024.
\newblock arXiv:2412.18164.

\bibitem[Heng et~al.(2024)Heng, Bortoli, and Doucet]{Heng2024Diffusion}
Jeremy Heng, Valentin~De Bortoli, and Arnaud Doucet.
\newblock Diffusion schr{\"o}dinger bridges for bayesian computation.
\newblock \emph{Statistical Science}, 39\penalty0 (1):\penalty0 90 -- 99, 2024.

\bibitem[Ho and Salimans(2021)]{ho2021classifierfree}
Jonathan Ho and Tim Salimans.
\newblock Classifier-free diffusion guidance.
\newblock In \emph{NeurIPS 2021 Workshop on Deep Generative Models and Downstream Applications}, 2021.

\bibitem[Ho et~al.(2020)Ho, Jain, and Abbeel]{Ho2020Denoising}
Jonathan Ho, Ajay Jain, and Pieter Abbeel.
\newblock Denoising diffusion probabilistic models.
\newblock In \emph{Advances in Neural Information Processing Systems}, volume~33, pages 6840--6851. Curran Associates, Inc., 2020.

\bibitem[Hochbruck and Ostermann(2005)]{Hochbruck2005Explicit}
Marlis Hochbruck and Alexander Ostermann.
\newblock Explicit exponential runge-kutta methods for semilinear parabolic problems.
\newblock \emph{SIAM Journal on Numerical Analysis}, 43\penalty0 (3):\penalty0 1069--1090, 2005.

\bibitem[Hochbruck and Ostermann(2010)]{Hochbruck2010Exponential}
Marlis Hochbruck and Alexander Ostermann.
\newblock Exponential integrators.
\newblock \emph{Acta Numerica}, 19:\penalty0 209--286, 2010.

\bibitem[Huang et~al.(2024)Huang, Wei, and Chen]{huang2024denoising}
Zhihan Huang, Yuting Wei, and Yuxin Chen.
\newblock Denoising diffusion probabilistic models are optimally adaptive to unknown low dimensionality, 2024.
\newblock arXiv:2410.18784.

\bibitem[Hyv{\"a}rinen(2005)]{hyvarinen2005Estimation}
Aapo Hyv{\"a}rinen.
\newblock Estimation of non-normalized statistical models by score matching.
\newblock \emph{Journal of Machine Learning Research}, 6\penalty0 (24):\penalty0 695--709, 2005.

\bibitem[Jiao et~al.(2025)Jiao, Chen, and Li]{jiao2025unified}
Yuchen Jiao, Yuxin Chen, and Gen Li.
\newblock Towards a unified framework for guided diffusion models, 2025.
\newblock arXiv:2512.04985.

\bibitem[Jiao et~al.(2023)Jiao, Shen, Lin, and Huang]{Jiao2023deep}
Yuling Jiao, Guohao Shen, Yuanyuan Lin, and Jian Huang.
\newblock Deep nonparametric regression on approximate manifolds: {N}onasymptotic error bounds with polynomial prefactors.
\newblock \emph{The Annals of Statistics}, 51\penalty0 (2):\penalty0 691 -- 716, 2023.

\bibitem[Kantas et~al.(2014)Kantas, Beskos, and Jasra]{Kantas2014Sequential}
Nikolas Kantas, Alexandros Beskos, and Ajay Jasra.
\newblock Sequential {M}onte {C}arlo methods for high-dimensional inverse problems: {A} case study for the {N}avier--{S}tokes equations.
\newblock \emph{SIAM/ASA Journal on Uncertainty Quantification}, 2\penalty0 (1):\penalty0 464--489, 2014.

\bibitem[Karatzas and Shreve(1998)]{Karatzas1998Brownian}
Ioannis Karatzas and Steven~E. Shreve.
\newblock \emph{Brownian Motion and Stochastic Calculus}, volume 113 of \emph{Graduate Texts in Mathematics (GTM)}.
\newblock Springer New York, NY, second edition, 1998.

\bibitem[Kim et~al.(2025)Kim, Kim, and Park]{kim2025testtime}
Sunwoo Kim, Minkyu Kim, and Dongmin Park.
\newblock Test-time alignment of diffusion models without reward over-optimization.
\newblock In \emph{The Thirteenth International Conference on Learning Representations}, 2025.

\bibitem[Kohler and Langer(2021)]{kohler2021rate}
Michael Kohler and Sophie Langer.
\newblock On the rate of convergence of fully connected deep neural network regression estimates.
\newblock \emph{The Annals of Statistics}, 49\penalty0 (4):\penalty0 2231--2249, 2021.

\bibitem[Kremling et~al.(2025)Kremling, Iafrate, Taheri, and Lederer]{kremling2025nonasymptotic}
Gitte Kremling, Francesco Iafrate, Mahsa Taheri, and Johannes Lederer.
\newblock Non-asymptotic error bounds for probability flow odes under weak log-concavity, 2025.
\newblock arXiv:2510.17608.

\bibitem[Lee et~al.(2023{\natexlab{a}})Lee, Lu, and Tan]{Lee2023Convergence}
Holden Lee, Jianfeng Lu, and Yixin Tan.
\newblock Convergence of score-based generative modeling for general data distributions.
\newblock In \emph{Proceedings of The 34th International Conference on Algorithmic Learning Theory}, volume 201 of \emph{Proceedings of Machine Learning Research}, pages 946--985. PMLR, 20 Feb--23 Feb 2023{\natexlab{a}}.

\bibitem[Lee et~al.(2023{\natexlab{b}})Lee, Liu, Ryu, Watkins, Du, Boutilier, Abbeel, Ghavamzadeh, and Gu]{lee2023aligning}
Kimin Lee, Hao Liu, Moonkyung Ryu, Olivia Watkins, Yuqing Du, Craig Boutilier, Pieter Abbeel, Mohammad Ghavamzadeh, and Shixiang~Shane Gu.
\newblock Aligning text-to-image models using human feedback, 2023{\natexlab{b}}.
\newblock arXiv:2302.12192.

\bibitem[Li et~al.(2019)Li, Tang, and Yu]{Li2019Better}
Bo~Li, Shanshan Tang, and Haijun Yu.
\newblock Better approximations of high dimensional smooth functions by deep neural networks with rectified power units.
\newblock \emph{Communications in Computational Physics}, 27\penalty0 (2):\penalty0 379--411, 2019.

\bibitem[Li and Yan(2024)]{li2024adapting}
Gen Li and Yuling Yan.
\newblock Adapting to unknown low-dimensional structures in score-based diffusion models.
\newblock In \emph{The Thirty-eighth Annual Conference on Neural Information Processing Systems}, 2024.

\bibitem[Li et~al.(2025)Li, Dong, and Zhang]{Li2025State}
Zhuoyuan Li, Bin Dong, and Pingwen Zhang.
\newblock State-observation augmented diffusion model for nonlinear assimilation with unknown dynamics.
\newblock \emph{Journal of Computational Physics}, 539:\penalty0 114240, 2025.

\bibitem[Lu et~al.(2022{\natexlab{a}})Lu, Zhou, Bao, Chen, LI, and Zhu]{Lu2022DPMv1}
Cheng Lu, Yuhao Zhou, Fan Bao, Jianfei Chen, Chongxuan LI, and Jun Zhu.
\newblock Dpm-solver: A fast ode solver for diffusion probabilistic model sampling in around 10 steps.
\newblock In \emph{Advances in Neural Information Processing Systems}, volume~35, pages 5775--5787. Curran Associates, Inc., 2022{\natexlab{a}}.

\bibitem[Lu et~al.(2022{\natexlab{b}})Lu, Chen, Lu, Ying, and Blanchet]{lu2022machine}
Yiping Lu, Haoxuan Chen, Jianfeng Lu, Lexing Ying, and Jose Blanchet.
\newblock Machine learning for elliptic {PDE}s: Fast rate generalization bound, neural scaling law and minimax optimality.
\newblock In \emph{International Conference on Learning Representations}, 2022{\natexlab{b}}.

\bibitem[Mardani et~al.(2024)Mardani, Song, Kautz, and Vahdat]{mardani2024a}
Morteza Mardani, Jiaming Song, Jan Kautz, and Arash Vahdat.
\newblock A variational perspective on solving inverse problems with diffusion models.
\newblock In \emph{The Twelfth International Conference on Learning Representations}, 2024.

\bibitem[Martin et~al.(2025)Martin, Gagneux, Hagemann, and Steidl]{martin2025pnpflow}
S{\'e}gol{\`e}ne~Tiffany Martin, Anne Gagneux, Paul Hagemann, and Gabriele Steidl.
\newblock Pnp-flow: Plug-and-play image restoration with flow matching.
\newblock In \emph{The Thirteenth International Conference on Learning Representations}, 2025.

\bibitem[Mohri et~al.(2018)Mohri, Rostamizadeh, and Talwalkar]{mohri2018foundations}
Mehryar Mohri, Afshin Rostamizadeh, and Ameet Talwalkar.
\newblock \emph{{Foundations of Machine Learning}}.
\newblock MIT press, {Second} edition, 2018.

\bibitem[Mou(2025)]{mou2025rlfinetuning}
Wenlong Mou.
\newblock Is rl fine-tuning harder than regression? a pde learning approach for diffusion models, 2025.
\newblock arXiv:2509.02528.

\bibitem[Oko et~al.(2023)Oko, Akiyama, and Suzuki]{Oko2023Diffusion}
Kazusato Oko, Shunta Akiyama, and Taiji Suzuki.
\newblock Diffusion models are minimax optimal distribution estimators.
\newblock In \emph{Proceedings of the 40th International Conference on Machine Learning}, volume 202 of \emph{Proceedings of Machine Learning Research}, pages 26517--26582. PMLR, 23--29 Jul 2023.

\bibitem[{\O}ksendal(2003)]{Oksendal2003Stochastic}
Bernt {\O}ksendal.
\newblock \emph{Stochastic Differential Equations: An Introduction with Applications}.
\newblock Universitext (UTX). Springer Berlin, Heidelberg, sixth edition, 2003.

\bibitem[Ouyang et~al.(2022)Ouyang, Wu, Jiang, Almeida, Wainwright, Mishkin, Zhang, Agarwal, Slama, Ray, Schulman, Hilton, Kelton, Miller, Simens, Askell, Welinder, Christiano, Leike, and Lowe]{ouyang2022training}
Long Ouyang, Jeffrey Wu, Xu~Jiang, Diogo Almeida, Carroll Wainwright, Pamela Mishkin, Chong Zhang, Sandhini Agarwal, Katarina Slama, Alex Ray, John Schulman, Jacob Hilton, Fraser Kelton, Luke Miller, Maddie Simens, Amanda Askell, Peter Welinder, Paul~F Christiano, Jan Leike, and Ryan Lowe.
\newblock Training language models to follow instructions with human feedback.
\newblock In \emph{Advances in Neural Information Processing Systems}, volume~35, pages 27730--27744. Curran Associates, Inc., 2022.

\bibitem[Ouyang et~al.(2024)Ouyang, Xie, Zha, and Cheng]{Ouyang2024Transfer}
Yidong Ouyang, Liyan Xie, Hongyuan Zha, and Guang Cheng.
\newblock Transfer learning for diffusion models.
\newblock In \emph{Advances in Neural Information Processing Systems}, volume~37, pages 136962--136989. Curran Associates, Inc., 2024.

\bibitem[Pachebat et~al.(2025)Pachebat, Conforti, Durmus, and Janati]{pachebat2025iterative}
Jean Pachebat, Giovanni Conforti, Alain Durmus, and Yazid Janati.
\newblock Iterative tilting for diffusion fine-tuning, 2025.
\newblock arXiv:2512.03234.

\bibitem[Potaptchik et~al.(2025)Potaptchik, Azangulov, and Deligiannidis]{potaptchik2025linear}
Peter Potaptchik, Iskander Azangulov, and George Deligiannidis.
\newblock Linear convergence of diffusion models under the manifold hypothesis, 2025.
\newblock arXiv:2410.09046.

\bibitem[Purohit et~al.(2025)Purohit, Repasky, Lu, Qiu, Xie, and Cheng]{purohit2024posterior}
Vishal Purohit, Matthew Repasky, Jianfeng Lu, Qiang Qiu, Yao Xie, and Xiuyuan Cheng.
\newblock Consistency posterior sampling for diverse image synthesis.
\newblock In \emph{The IEEE/CVF Conference on Computer Vision and Pattern Recognition (CVPR)}, 2025.

\bibitem[Rafailov et~al.(2023)Rafailov, Sharma, Mitchell, Manning, Ermon, and Finn]{Rafailov2023Direct}
Rafael Rafailov, Archit Sharma, Eric Mitchell, Christopher~D Manning, Stefano Ermon, and Chelsea Finn.
\newblock Direct preference optimization: {Y}our language model is secretly a reward model.
\newblock In \emph{Advances in Neural Information Processing Systems}, volume~36, pages 53728--53741. Curran Associates, Inc., 2023.

\bibitem[Rafailov et~al.(2024)Rafailov, Chittepu, Park, Sikchi, Hejna, Knox, Finn, and Niekum]{Rafailov2024Scaling}
Rafael Rafailov, Yaswanth Chittepu, Ryan Park, Harshit~Sushil Sikchi, Joey Hejna, Brad Knox, Chelsea Finn, and Scott Niekum.
\newblock Scaling laws for reward model overoptimization in direct alignment algorithms.
\newblock In \emph{Advances in Neural Information Processing Systems}, volume~37, pages 126207--126242. Curran Associates, Inc., 2024.

\bibitem[Ramesh et~al.(2021)Ramesh, Pavlov, Goh, Gray, Voss, Radford, Chen, and Sutskever]{Ramesh2021Zero}
Aditya Ramesh, Mikhail Pavlov, Gabriel Goh, Scott Gray, Chelsea Voss, Alec Radford, Mark Chen, and Ilya Sutskever.
\newblock Zero-shot text-to-image generation.
\newblock In \emph{Proceedings of the 38th International Conference on Machine Learning}, volume 139 of \emph{Proceedings of Machine Learning Research}, pages 8821--8831. PMLR, 18--24 Jul 2021.

\bibitem[Ren et~al.(2025)Ren, Gao, Ying, Rotskoff, and Han]{ren2025drift}
Yinuo Ren, Wenhao Gao, Lexing Ying, Grant~M. Rotskoff, and Jiequn Han.
\newblock Driftlite: Lightweight drift control for inference-time scaling of diffusion models, 2025.
\newblock arXiv:2509.21655.

\bibitem[Rogers and Williams(2000)]{Rogers2000Diffusions}
L.~C.~G. Rogers and David Williams.
\newblock \emph{Diffusions, Markov Processes and Martingales}, volume 2: It{\^o} Calculus of \emph{Cambridge Mathematical Library}.
\newblock Cambridge University Press, 2 edition, 2000.

\bibitem[Sabour et~al.(2025)Sabour, Albergo, Domingo-Enrich, Boffi, Fidler, Kreis, and Vanden-Eijnden]{sabour2025test}
Amirmojtaba Sabour, Michael~S. Albergo, Carles Domingo-Enrich, Nicholas~M. Boffi, Sanja Fidler, Karsten Kreis, and Eric Vanden-Eijnden.
\newblock Test-time scaling of diffusions with flow maps, 2025.
\newblock arXiv:2511.22688.

\bibitem[S{\"a}rkk{\"a} and Solin(2019)]{Sarkka2019applied}
Simo S{\"a}rkk{\"a} and Arno Solin.
\newblock \emph{Applied Stochastic Differential Equations}.
\newblock Institute of Mathematical Statistics Textbooks. Cambridge University Press, 2019.

\bibitem[Schmidt-Hieber(2020)]{schmidt2020nonparametric}
Johannes Schmidt-Hieber.
\newblock Nonparametric regression using deep neural networks with {ReLU} activation function.
\newblock \emph{The Annals of Statistics}, 48\penalty0 (4):\penalty0 1875--1897, 2020.

\bibitem[Shen et~al.(2022)Shen, Jiao, Lin, and Huang]{Shen2022Approximation}
Guohao Shen, Yuling Jiao, Yuanyuan Lin, and Jian Huang.
\newblock Approximation with cnns in sobolev space: with applications to classification.
\newblock In \emph{Advances in Neural Information Processing Systems}, volume~35, pages 2876--2888. Curran Associates, Inc., 2022.

\bibitem[Shen et~al.(2024)Shen, Jiao, Lin, and Huang]{shen2024differentiable}
Guohao Shen, Yuling Jiao, Yuanyuan Lin, and Jian Huang.
\newblock Differentiable neural networks with repu activation: with applications to score estimation and isotonic regression, 2024.
\newblock arXiv:2305.00608.

\bibitem[Si and Chen(2025)]{si2025latentensf}
Phillip Si and Peng Chen.
\newblock Latent-en{SF}: A latent ensemble score filter for high-dimensional data assimilation with sparse observation data.
\newblock In \emph{The Thirteenth International Conference on Learning Representations}, 2025.

\bibitem[Sohl-Dickstein et~al.(2015)Sohl-Dickstein, Weiss, Maheswaranathan, and Ganguli]{Sohl2015Deep}
Jascha Sohl-Dickstein, Eric Weiss, Niru Maheswaranathan, and Surya Ganguli.
\newblock Deep unsupervised learning using nonequilibrium thermodynamics.
\newblock In \emph{Proceedings of the 32nd International Conference on Machine Learning}, volume~37 of \emph{Proceedings of Machine Learning Research}, pages 2256--2265, Lille, France, 07--09 Jul 2015. PMLR.

\bibitem[Song et~al.(2023)Song, Zhang, Yin, Mardani, Liu, Kautz, Chen, and Vahdat]{song2023Loss}
Jiaming Song, Qinsheng Zhang, Hongxu Yin, Morteza Mardani, Ming-Yu Liu, Jan Kautz, Yongxin Chen, and Arash Vahdat.
\newblock Loss-guided diffusion models for plug-and-play controllable generation.
\newblock In \emph{Proceedings of the 40th International Conference on Machine Learning}, volume 202 of \emph{Proceedings of Machine Learning Research}, pages 32483--32498. PMLR, 23--29 Jul 2023.

\bibitem[Song and Ermon(2019)]{song2019Generative}
Yang Song and Stefano Ermon.
\newblock Generative modeling by estimating gradients of the data distribution.
\newblock In \emph{Advances in Neural Information Processing Systems}, volume~32. Curran Associates, Inc., 2019.

\bibitem[Song et~al.(2020)Song, Garg, Shi, and Ermon]{song2020Sliced}
Yang Song, Sahaj Garg, Jiaxin Shi, and Stefano Ermon.
\newblock Sliced score matching: {A} scalable approach to density and score estimation.
\newblock In \emph{Proceedings of The 35th Uncertainty in Artificial Intelligence Conference}, volume 115 of \emph{Proceedings of Machine Learning Research}, pages 574--584. PMLR, 22--25 Jul 2020.

\bibitem[Song et~al.(2021)Song, Sohl-Dickstein, Kingma, Kumar, Ermon, and Poole]{song2021scorebased}
Yang Song, Jascha Sohl-Dickstein, Diederik~P Kingma, Abhishek Kumar, Stefano Ermon, and Ben Poole.
\newblock Score-based generative modeling through stochastic differential equations.
\newblock In \emph{International Conference on Learning Representations}, 2021.

\bibitem[Song et~al.(2022)Song, Shen, Xing, and Ermon]{song2022solving}
Yang Song, Liyue Shen, Lei Xing, and Stefano Ermon.
\newblock Solving inverse problems in medical imaging with score-based generative models.
\newblock In \emph{International Conference on Learning Representations}, 2022.

\bibitem[Stiennon et~al.(2020)Stiennon, Ouyang, Wu, Ziegler, Lowe, Voss, Radford, Amodei, and Christiano]{Stiennon2020Learning}
Nisan Stiennon, Long Ouyang, Jeffrey Wu, Daniel Ziegler, Ryan Lowe, Chelsea Voss, Alec Radford, Dario Amodei, and Paul~F Christiano.
\newblock Learning to summarize with human feedback.
\newblock In \emph{Advances in Neural Information Processing Systems}, volume~33, pages 3008--3021. Curran Associates, Inc., 2020.

\bibitem[Stuart(2010)]{Stuart2010Inverse}
Andrew~M Stuart.
\newblock Inverse problems: A {Bayesian} perspective.
\newblock \emph{Acta Numerica}, 19:\penalty0 451--559, 2010.

\bibitem[Tang and Yang(2024)]{Tang2024Adaptivity}
Rong Tang and Yun Yang.
\newblock Adaptivity of diffusion models to manifold structures.
\newblock In \emph{Proceedings of The 27th International Conference on Artificial Intelligence and Statistics}, volume 238 of \emph{Proceedings of Machine Learning Research}, pages 1648--1656. PMLR, 02--04 May 2024.

\bibitem[Tang and Zhou(2025)]{tang2025finetuning}
Wenpin Tang and Fuzhong Zhou.
\newblock Fine-tuning of diffusion models via stochastic control: entropy regularization and beyond, 2025.
\newblock arXiv:2403.06279.

\bibitem[Tang and Xu(2024)]{Tang2024stochastic}
Wenping Tang and Renyuan Xu.
\newblock A stochastic analysis approach to conditional diffusion guidance, 2024.
\newblock Columbia University Preprint.

\bibitem[Uehara et~al.(2024{\natexlab{a}})Uehara, Zhao, Black, Hajiramezanali, Scalia, Diamant, Tseng, Biancalani, and Levine]{uehara2024fine}
Masatoshi Uehara, Yulai Zhao, Kevin Black, Ehsan Hajiramezanali, Gabriele Scalia, Nathaniel~Lee Diamant, Alex~M Tseng, Tommaso Biancalani, and Sergey Levine.
\newblock Fine-tuning of continuous-time diffusion models as entropy-regularized control, 2024{\natexlab{a}}.
\newblock arXiv:2402.15194.

\bibitem[Uehara et~al.(2024{\natexlab{b}})Uehara, Zhao, Black, Hajiramezanali, Scalia, Diamant, Tseng, Biancalani, and Levine]{uehara2024finetuning}
Masatoshi Uehara, Yulai Zhao, Kevin Black, Ehsan Hajiramezanali, Gabriele Scalia, Nathaniel~Lee Diamant, Alex~M Tseng, Tommaso Biancalani, and Sergey Levine.
\newblock Fine-tuning of continuous-time diffusion models as entropy-regularized control, 2024{\natexlab{b}}.
\newblock arXiv:2402.15194.

\bibitem[Uehara et~al.(2024{\natexlab{c}})Uehara, Zhao, Black, Hajiramezanali, Scalia, Diamant, Tseng, Levine, and Biancalani]{Uehara2024Feedback}
Masatoshi Uehara, Yulai Zhao, Kevin Black, Ehsan Hajiramezanali, Gabriele Scalia, Nathaniel~Lee Diamant, Alex~M Tseng, Sergey Levine, and Tommaso Biancalani.
\newblock Feedback efficient online fine-tuning of diffusion models.
\newblock In \emph{Proceedings of the 41st International Conference on Machine Learning}, volume 235 of \emph{Proceedings of Machine Learning Research}, pages 48892--48918. PMLR, 21--27 Jul 2024{\natexlab{c}}.

\bibitem[Uehara et~al.(2025{\natexlab{a}})Uehara, Su, Zhao, Li, Regev, Ji, Levine, and Biancalani]{uehara2025reward}
Masatoshi Uehara, Xingyu Su, Yulai Zhao, Xiner Li, Aviv Regev, Shuiwang Ji, Sergey Levine, and Tommaso Biancalani.
\newblock Reward-guided iterative refinement in diffusion models at test-time with applications to protein and {DNA} design, 2025{\natexlab{a}}.
\newblock arXiv:2502.14944.

\bibitem[Uehara et~al.(2025{\natexlab{b}})Uehara, Zhao, Wang, Li, Regev, Levine, and Biancalani]{uehara2025inference}
Masatoshi Uehara, Yulai Zhao, Chenyu Wang, Xiner Li, Aviv Regev, Sergey Levine, and Tommaso Biancalani.
\newblock Inference-time alignment in diffusion models with reward-guided generation: {T}utorial and review, 2025{\natexlab{b}}.
\newblock arXiv:2501.09685.

\bibitem[Vapnik and Chervonenkis(1971)]{Vapnik1971Uniform}
V.~N. Vapnik and A.~Ya. Chervonenkis.
\newblock On the uniform convergence of relative frequencies of events to their probabilities.
\newblock \emph{Theory of Probability \& Its Applications}, 16\penalty0 (2):\penalty0 264--280, 1971.

\bibitem[Vempala and Wibisono(2019)]{Vempala2019Rapid}
Santosh Vempala and Andre Wibisono.
\newblock Rapid convergence of the unadjusted {L}angevin algorithm: isoperimetry suffices.
\newblock In \emph{Advances in Neural Information Processing Systems}, volume~32. Curran Associates, Inc., 2019.

\bibitem[Villani(2009)]{Villani2009Optimal}
C{\'e}dric Villani.
\newblock \emph{Optimal Transport: Old and New}, volume 338 of \emph{Grundlehren der mathematischen Wissenschaften (GL)}.
\newblock Springer Berlin, Heidelberg, first edition, 2009.

\bibitem[Vincent(2011)]{vincent2011connection}
Pascal Vincent.
\newblock A connection between score matching and denoising autoencoders.
\newblock \emph{Neural Computation}, 23\penalty0 (7):\penalty0 1661--1674, 2011.

\bibitem[Wang et~al.(2024)Wang, Lin, Liu, Chen, and Xu]{wang2024Bridging}
Xiyu Wang, Baijiong Lin, Daochang Liu, Ying-Cong Chen, and Chang Xu.
\newblock Bridging data gaps in diffusion models with adversarial noise-based transfer learning.
\newblock In \emph{Proceedings of the 41st International Conference on Machine Learning}, volume 235 of \emph{Proceedings of Machine Learning Research}, pages 50944--50959. PMLR, 21--27 Jul 2024.

\bibitem[Xu et~al.(2023)Xu, Liu, Wu, Tong, Li, Ding, Tang, and Dong]{Xu2023ImageReward}
Jiazheng Xu, Xiao Liu, Yuchen Wu, Yuxuan Tong, Qinkai Li, Ming Ding, Jie Tang, and Yuxiao Dong.
\newblock {ImageReward}: {L}earning and evaluating human preferences for text-to-image generation.
\newblock In \emph{Advances in Neural Information Processing Systems}, volume~36, pages 15903--15935. Curran Associates, Inc., 2023.

\bibitem[Yakovlev and Puchkin(2025{\natexlab{a}})]{Yakovlev2025Generalization}
Konstantin Yakovlev and Nikita Puchkin.
\newblock Generalization error bound for denoising score matching under relaxed manifold assumption.
\newblock In \emph{Proceedings of Thirty Eighth Conference on Learning Theory}, volume 291 of \emph{Proceedings of Machine Learning Research}, pages 5824--5891. PMLR, 30 Jun--04 Jul 2025{\natexlab{a}}.

\bibitem[Yakovlev and Puchkin(2025{\natexlab{b}})]{yakovlev2025simultaneous}
Konstantin Yakovlev and Nikita Puchkin.
\newblock Simultaneous approximation of the score function and its derivatives by deep neural networks, 2025{\natexlab{b}}.
\newblock arXiv:2512.23643.

\bibitem[Yakovlev et~al.(2025)Yakovlev, Markovich, and Puchkin]{yakovlev2025implicit}
Konstantin Yakovlev, Anna Markovich, and Nikita Puchkin.
\newblock Implicit score matching meets denoising score matching: improved rates of convergence and log-density hessian estimation, 2025.
\newblock arXiv:2512.24378.

\bibitem[Yuan et~al.(2023)Yuan, Huang, Ni, Chen, and Wang]{yuan2023rewarddirected}
Hui Yuan, Kaixuan Huang, Chengzhuo Ni, Minshuo Chen, and Mengdi Wang.
\newblock Reward-directed conditional diffusion: {P}rovable distribution estimation and reward improvement.
\newblock In \emph{Thirty-seventh Conference on Neural Information Processing Systems}, 2023.

\bibitem[Zhang et~al.(2023)Zhang, Rao, and Agrawala]{Zhang2023Adding}
Lvmin Zhang, Anyi Rao, and Maneesh Agrawala.
\newblock Adding conditional control to text-to-image diffusion models.
\newblock In \emph{Proceedings of the IEEE/CVF International Conference on Computer Vision (ICCV)}, pages 3836--3847, October 2023.

\bibitem[Zhang and Chen(2023)]{zhang2023fast}
Qinsheng Zhang and Yongxin Chen.
\newblock Fast sampling of diffusion models with exponential integrator.
\newblock In \emph{The Eleventh International Conference on Learning Representations}, 2023.

\bibitem[Zhong et~al.(2025)Zhong, Zhang, Wang, and Long]{zhong2025domain}
Jincheng Zhong, Xiangcheng Zhang, Jianmin Wang, and Mingsheng Long.
\newblock Domain guidance: A simple transfer approach for a pre-trained diffusion model, 2025.
\newblock arXiv:2504.01521.

\end{thebibliography}

\appendix

\renewcommand{\contentsname}{Outline of Appendices}
\tableofcontents

\section{Derivations in Section~\ref{section:method}}
\label{appendix:section:method}

\begin{lemma}\label{lemma:h:martingale}
The Doob's $h$-transform $h^{*}(t,\mX_{t}^{\leftarrow})$ defined as~\eqref{eq:h:function} is a martingale, and satisfies the following SDE:
\begin{equation*}
\d h^{*}(t,\mX_{t}^{\leftarrow})=\nabla h^{*}(t,\mX_{t}^{\leftarrow})^{\top}\sqrt{2}\d\mB_{t}.
\end{equation*}
\end{lemma}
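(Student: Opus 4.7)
The plan is to combine two standard facts: (i) the conditional expectation $h^{*}(t,\mX_{t}^{\leftarrow})=\bbE^{\bbP}[w(\mX_{T}^{\leftarrow})\mid\calF_{t}]$ is a $\bbP$-martingale by construction, and (ii) Itô's formula applied to $h^{*}(t,\mX_{t}^{\leftarrow})$ under the SDE~\eqref{eq:base:reversal} produces an explicit semi-martingale decomposition whose bounded-variation part must vanish by uniqueness.

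First, I would verify the martingale property. Because the reference process $\mX_{t}^{\leftarrow}$ is Markovian under $\bbP$, conditioning on $\calF_{t}$ is equivalent to conditioning on $\mX_{t}^{\leftarrow}$, so
\begin{equation*}
\bbE^{\bbP}\big[w(\mX_{T}^{\leftarrow})\,\big|\,\calF_{t}\big]
=\bbE^{\bbP}\big[w(\mX_{T}^{\leftarrow})\,\big|\,\mX_{t}^{\leftarrow}\big]
=h^{*}(t,\mX_{t}^{\leftarrow}).
\end{equation*}
Since $w$ is bounded by Assumption~\ref{assumption:bounded:weight}, it is integrable, and the tower property yields the $\bbF$-martingale property of $h^{*}(t,\mX_{t}^{\leftarrow})$ under $\bbP$.

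Next, I would apply Itô's formula to $h^{*}(t,\mX_{t}^{\leftarrow})$. By Proposition~\ref{proposition:regularity:h:function}, $h^{*}$ admits bounded first- and second-order spatial derivatives on $(0,T)\times\bbR^{d}$; smoothness in $t$ follows from the Gaussian-kernel representation induced by~\eqref{eq:forward:solution} and~\eqref{eq:h:function}. Hence Itô's formula applied to the SDE~\eqref{eq:base:reversal} yields
\begin{equation*}
\d h^{*}(t,\mX_{t}^{\leftarrow}) = \big[\partial_{t}h^{*}+\calL_{t}h^{*}\big](t,\mX_{t}^{\leftarrow})\dt + \nabla h^{*}(t,\mX_{t}^{\leftarrow})^{\top}\sqrt{2}\d\mB_{t},
\end{equation*}
where $\calL_{t}h^{*}(t,\vx)\coloneq(\vx+2\nabla\log p_{T-t}(\vx))^{\top}\nabla h^{*}(t,\vx)+\Delta h^{*}(t,\vx)$ is the generator of the time-reversal SDE.

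Finally, the martingale property combined with the uniqueness of the semi-martingale decomposition forces the finite-variation part $\int_{0}^{t}[\partial_{s}h^{*}+\calL_{s}h^{*}](s,\mX_{s}^{\leftarrow})\ds$ to vanish almost surely. Since the marginal $p_{t}$ has full support on $\bbR^{d}$ for every $t\in(0,T)$ (see~\eqref{eq:marginal:density}), this lifts to the pointwise backward Kolmogorov equation $\partial_{t}h^{*}+\calL_{t}h^{*}\equiv 0$, and substituting back leaves only the diffusion term, establishing the claimed SDE. The main technical obstacle is justifying sufficient joint $(t,\vx)$-regularity of $h^{*}$ to apply Itô's formula and to kill the drift pointwise; this is precisely what the Gaussian-smoothing structure embedded in the Feynman--Kac representation~\eqref{eq:h:function} and Proposition~\ref{proposition:regularity:h:function} are designed to supply.
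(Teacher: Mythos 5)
Your proposal is correct and follows essentially the same route as the paper: establish the martingale property of $h^{*}(t,\mX_{t}^{\leftarrow})$ via the Markov and tower properties, then apply It\^o's formula and kill the drift using the fact that a martingale has no finite-variation part. You are in fact slightly more careful than the paper — you explicitly invoke Proposition~\ref{proposition:regularity:h:function} and the Gaussian-smoothing structure to justify the $C^{1,2}$ regularity needed for It\^o's formula, whereas the paper applies It\^o silently — and that extra care is a genuine improvement. The only mild inefficiency is the detour through the pointwise backward Kolmogorov equation: once the finite-variation part vanishes almost surely along the trajectory, you already have the claimed SDE, so the lift to the pointwise PDE (and the subsequent substitution back) is an unnecessary round trip, though not an error.
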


\begin{proof}[Proof of Lemma~\ref{lemma:h:martingale}]
This proof is divided into two parts.

\par\noindent{\em Part 1. Martingale.} 
Due to the Markov property of the diffusion process $\mX_{t}^{\leftarrow}$~\citep[Theorem 7.1.2]{Oksendal2003Stochastic}, using~\eqref{eq:h:function} implies 
\begin{equation*}
M_{t}\coloneq h^{*}(t,\mX_{t}^{\leftarrow})=\bbE^{\bbP}[w(\mX_{T}^{\leftarrow})|\calF_{t}].
\end{equation*}
It is apparent that $M_{t}$ is $\calF_{t}$-measurable for each $t\in(0,T)$, thus $M_{t}$ is adapted to $\bbF$. Then we show that $M_{t}$ is integrable under $\bbP$. Indeed,
\begin{equation*}
\bbE^{\bbP}[|M_{t}|]\leq\bbE^{\bbP}[\bbE^{\bbP}[|w(\mX_{T}^{\leftarrow})||\calF_{t}]]=\bbE^{\bbP}[w(\mX_{T}^{\leftarrow})]=Z<\infty,
\end{equation*} 
where the first inequality holds from Jensen's inequality, and the first equality used the law of total expectation and the fact that $w(\mX_{T}^{\leftarrow})$. We next show the martingale property. For each $0\leq s\leq t\leq T$, 
\begin{equation*}
\bbE^{\bbP}[M_{t}|\calF_{s}]=\bbE^{\bbP}[\bbE^{\bbP}[w(\mX_{T}^{\leftarrow})|\calF_{t}]|\calF_{s}]=\bbE^{\bbP}[w(\mX_{T}^{\leftarrow})|\calF_{s}]=M_{s},
\end{equation*}
where the second equality involves the tower property of conditional expectation, and the fact that $\calF_{s}\subseteq\calF_{t}$. Therefore, $M_{t}$ is a martingale.

\par\noindent{\em Part 2. Stochastic dynamics.} 
Applying It{\^o}'s formula to $h^{*}(t,\mX_{t}^{\leftarrow})$ yields
\begin{align*}
\d h^{*}(t,\mX_{t}^{\leftarrow})
&=\partial_{t}h^{*}(t,\mX_{t}^{\leftarrow})\dt+\nabla h^{*}(t,\mX_{t}^{\leftarrow})^{\top}\d\mX_{t}^{\leftarrow}+\frac{1}{2}(\d\mX_{t}^{\leftarrow})^{\top}\nabla^{2}h^{*}(t,\mX_{t}^{\leftarrow})\d\mX_{t}^{\leftarrow} \\
&=\nabla h^{*}(t,\mX_{t}^{\leftarrow})^{\top}\sqrt{2}\d\mB_{t},
\end{align*}
where the last equality holds from the fact that martingale has zero drift. This completes the proof.
\end{proof}

\begin{customproposition}{\ref{proposition:doob:transform}}
The Radon-Nikodym derivative process $L_t$, as defined in~\eqref{eq:time:dependent:likelihood}, admits the following representation:
\begin{equation*}
L_{t}=\bbE^{\bbP}\Big[\frac{\d\bbQ}{\d\bbP}\Big|\calF_{t}\Big]=\frac{h^{*}(t,\mX_{t}^{\leftarrow})}{\bbE^{\bbP}[w(\mX_{T}^{\leftarrow})]},
\end{equation*}
where $h^{*}:[0,T]\times\bbR^{d}\rightarrow\bbR$, referred to as Doob's $h$-function, is defined as the conditional expectation of the terminal weight:
\begin{equation}
h^{*}(t,\vx) \coloneq \bbE^{\bbP}[w(\mX_{T}^{\leftarrow}) \mid \mX_{t}^{\leftarrow}=\vx].
\end{equation}
Furthermore, the log-likelihood ratio satisfies the following SDE:
\begin{equation*}
\d(\log L_{t}) = \nabla\log h^{*}(t,\mX_{t}^{\leftarrow})^{\top}\sqrt{2}\d\mB_{t} - \|\nabla\log h^{*}(t,\mX_{t}^{\leftarrow})\|_{2}^{2}\dt.
\end{equation*}
\end{customproposition}

\begin{proof}[Proof of Proposition~\ref{proposition:doob:transform}]
The proof is divided into three parts.

\par\noindent{\em Part 1. The equivalent definition of $L_{t}$.} 
In this part, we aim to show the conditional expectation in Proposition~\ref{proposition:doob:transform} is identical to the likelihood $L_{t}$ defined as~\eqref{eq:time:dependent:likelihood}. Define 
\begin{equation*}
\bar{L}_{t}=\bbE^{\bbP}\Big[\frac{\d\bbQ}{\d\bbP}\Big|\calF_{t}\Big].
\end{equation*}
For each event $A\in\calF_{t}\subseteq\calF_{T}$, we have 
\begin{equation*}
\bbQ(A)=\int_{A}\frac{\d\bbQ}{\d\bbP}\d\bbP=\int_{A}\bbE\Big[\frac{\d\bbQ}{\d\bbP}\Big|\calF_{t}\Big]\d\bbP=\int_{A}\bar{L}_{t}\d\bbP,
\end{equation*}
where the second equality is due to $A\in\calF_{t}$. This means $\bar{L}_{t}$ acts as the density for the measure $\bbQ$ with respect to $\bbP$ when restricting to the $\sigma$-algebra $\calF_{t}$. Thus $L_{t}\equiv\bar{L}_{t}$ for each $t\in(0,T)$.

\par\noindent{\em Part 2. The expression of Doob's $h$-transform.} 
It is straightforward that 
\begin{align*}
L_{t}=\bbE^{\bbP}\Big[\frac{\d\bbQ}{\d\bbP}\Big|\calF_{t}\Big]=\frac{\bbE^{\bbP}[w(\mX_{T}^{\leftarrow})|\calF_{t}]}{\bbE^{\bbP}[w(\mX_{T}^{\leftarrow})]}=\frac{\bbE^{\bbP}[w(\mX_{T}^{\leftarrow})|\mX_{t}^{\leftarrow}]}{\bbE^{\bbP}[w(\mX_{T}^{\leftarrow})]}=\frac{h^{*}(t,\mX_{t}^{\leftarrow})}{\bbE^{\bbP}[w(\mX_{T}^{\leftarrow})]},
\end{align*}
where the second equality follows from~\eqref{eq:path:Q}, the third equality involves the Markovity of the diffusion process $\mX_{t}^{\leftarrow}$, and the last equality is due to the definition of the $h$-function~\eqref{eq:h:function}. 

\par\noindent{\em Part 3. The stochastic dynamics of log-likelihood.} 
We can now derive the dynamics of the likelihood $L_{t}$. Letting $Z=\bbE^{\bbP}[w(\mX_{T}^{\leftarrow})]$, we have:
\begin{align}
\d L_{t}
&=\frac{1}{Z}\d h^{*}(t,\mX_{t}^{\leftarrow}) \nonumber \\
&=\frac{1}{Z}(\nabla h^{*}(t,\mX_{t}^{\leftarrow}))^{\top}\sqrt{2}\d\mB_{t} \nonumber \\
&= \frac{h^{*}(t,\mX_{t}^{\leftarrow})}{Z} \frac{\nabla h^{*}(t,\mX_{t}^{\leftarrow})^{\top}}{h^{*}(t,\mX_{t}^{\leftarrow})}\sqrt{2}\d\mB_{t} \nonumber \\
&=L_{t}(\nabla\log h^{*}(t,\mX_{t}^{\leftarrow}))^{\top}\sqrt{2}\d\mB_{t}, \label{eq:proposition:doob:transform:2}
\end{align}
where the second equality is due to Lemma~\ref{lemma:h:martingale}. Using It{\^o}'s formula for log-likelihood yields
\begin{align*}
\d(\log L_{t}) 
&= \frac{\d L_{t}}{L_{t}}-\frac{1}{2}\Big\langle\frac{\d L_{t}}{L_{t}},\frac{\d L_{t}}{L_{t}}\Big\rangle \\
&= \sqrt{2}\nabla\log h^{*}(t,\mX_{t}^{\leftarrow})^{\top}\d\mB_{t}-\frac{1}{2}\|\sqrt{2}\nabla\log h^{*}(t,\mX_{t}^{\leftarrow})\|_{2}^{2}\dt \\
&= \sqrt{2}\nabla\log h^{*}(t,\mX_{t}^{\leftarrow})^{\top}\d\mB_{t}-\|\nabla\log h^{*}(t,\mX_{t}^{\leftarrow})\|_{2}^{2}\dt, 
\end{align*}
where the second equality is due to~\eqref{eq:proposition:doob:transform:2}. This completes the proof.
\end{proof}

\begin{customproposition}{\ref{proposition:controllable:sde:Q}}
Let the reference process $\mX_t^{\leftarrow}$ satisfy the SDE~\eqref{eq:base:reversal} under the path measure $\bbP$, and let $\bbQ$ be the target path measure defined by~\eqref{eq:path:Q}. Assume that the Novikov condition holds:
\begin{equation}
\bbE^{\bbP}\Big[\exp\Big(\int_{0}^{T}\|\nabla\log h^{*}(s,\mX_{s}^{\leftarrow})\|_{2}^{2}\d s\Big)\Big]<\infty.
\end{equation}
Define a process $(\widetilde{\mB}_{t})_{1\leq t\leq T}$ by
\begin{equation}
\d\widetilde{\mB}_{t}=\d\mB_{t}-\sqrt{2}\nabla\log h^{*}(t,\mX_{t}^{\leftarrow})\dt,
\end{equation}
where $\mB_{t}$ is a Brownian motion under $\bbP$. Then $\widetilde{\mB}_{t}$ is a standard Brownian motion under $\bbQ$. Further, under the path measure $\bbQ$, the reference process $\mX_t^{\leftarrow}$ in~\eqref{eq:base:reversal} evolves according to:
\begin{equation}
\d\mX_{t}^{\leftarrow}=(\mX_{t}^{\leftarrow}+2\nabla\log p_{T-t}(\mX_{t}^{\leftarrow})+2\nabla\log h^{*}(t,\mX_{t}^{\leftarrow}))\dt+\sqrt{2}\d\widetilde{\mB}_{t}.
\end{equation}
\end{customproposition}

\begin{proof}[Proof of Proposition~\ref{proposition:controllable:sde:Q}]
The derivation proceeds in two steps.

\par\noindent{\em Step 1. The Brownian motion under the target path measure $\bbQ$.} 
Using Proposition~\ref{proposition:doob:transform} and noting that $L_{0}\equiv 1$ as~\eqref{eq:path:Q}, we have 
\begin{equation*}
L_{t}=\exp\Big(\int_{0}^{t}\nabla\log h^{*}(s,\mX_{s}^{\leftarrow})^{\top}\sqrt{2}\d\mB_{s}-\|\nabla\log h^{*}(s,\mX_{s}^{\leftarrow})\|_{2}^{2}\ds\Big).
\end{equation*}
According to~\citet[Corollary 5.13]{Karatzas1998Brownian}, the Novikov condition~\eqref{eq:Novikov} implies that $L_{t}$ is a martingale. Then applying Girsanov's theorem~\citep[Theorem 8.6.6]{Oksendal2003Stochastic} yields that $\widetilde{\mB}_{t}$ is a Brownian motion under $\bbQ$.

\par\noindent{\em Step 2. Dynamics under the target path measure.} 
Recall that under the reference path measure $\bbP$, the process evolves as
\begin{equation*}
\d\mX_{t}^{\leftarrow}=(\mX_{t}^{\leftarrow}+2\nabla\log p_{T-t}(\mX_{t}^{\leftarrow}))\dt+\sqrt{2}\d\mB_{t},
\end{equation*}
where $\mB_{t}$ is a $\bbP$-Brownian motion. Substituting the relationship~\eqref{eq:brownian:P:Q} into this SDE implies 
\begin{align*}
\d\mX_{t}^{\leftarrow}
&=(\mX_{t}^{\leftarrow}+2\nabla\log p_{T-t}(\mX_{t}^{\leftarrow}))\dt+\sqrt{2}(\d\widetilde{\mB}_{t}+\sqrt{2}\nabla\log h^{*}(t,\mX_{t}^{\leftarrow})\dt) \\
&=(\mX_{t}^{\leftarrow}+2\nabla\log p_{T-t}(\mX_{t}^{\leftarrow})+2\nabla\log h^{*}(t,\mX_{t}^{\leftarrow}))\dt+\sqrt{2}\d\widetilde{\mB}_{t},
\end{align*}
which is the dynamics of $\mX_{t}^{\leftarrow}$ under the target path measure $\bbQ$. This completes the proof.
\end{proof}

\section{Derivations in Section~\ref{section:doob:matching}}\label{appendix:doob:matching}

\begin{customproposition}{\ref{proposition:regression:constant}}
For every $t\in(0,T)$, the Doob's $h$-function $h_{t}^{*}$ in~\eqref{eq:h:function} minimizes the implicit Doob's matching loss~\eqref{eq:h:matching:implicit}. Further, 
\begin{equation*}
\calJ_{t}(h_{t})=\bbE^{\bbP}\big[\|h_{t}(\mX_{t}^{\leftarrow})-h_{t}^{*}(\mX_{t}^{\leftarrow})\|_{2}^{2}\big]+V_{t}^{2},
\end{equation*}
where $V_{t}^{2}\coloneq\bbE^{\bbP}[\var(w(\mX_{T}^{\leftarrow})|\mX_{t}^{\leftarrow})]$ is a constant independent of $h_{t}$. 
\end{customproposition}

\begin{proof}[Proof of Proposition~\ref{proposition:regression:constant}]
By a direct calculation, we have 
\begin{equation}\label{eq:proposition:regression:constant:1}
\begin{aligned}
\calJ_{t}(h_{t})
&=\bbE^{\bbP}\big[\|h_{t}(\mX_{t}^{\leftarrow})-w(\mX_{T}^{\leftarrow})\|_{2}^{2}\big] \\
&=\bbE^{\bbP}\big[\|h_{t}(\mX_{t}^{\leftarrow})-h_{t}^{*}(\mX_{t}^{\leftarrow})+h_{t}^{*}(\mX_{t}^{\leftarrow})-w(\mX_{T}^{\leftarrow})\|_{2}^{2}\big] \\
&=\bbE^{\bbP}\big[\|h_{t}(\mX_{t}^{\leftarrow})-h_{t}^{*}(\mX_{t}^{\leftarrow})\|_{2}^{2}]+\bbE^{\bbP}\big[\|h_{t}^{*}(\mX_{t}^{\leftarrow})-w(\mX_{T}^{\leftarrow})\|_{2}^{2}\big] \\
&\quad+2\bbE^{\bbP}\big[\langle h_{t}(\mX_{t}^{\leftarrow})-h_{t}^{*}(\mX_{t}^{\leftarrow}),h_{t}^{*}(\mX_{t}^{\leftarrow})-w(\mX_{T}^{\leftarrow})\rangle\big],
\end{aligned}
\end{equation}
where $h_{t}^{*}$ is defined as~\eqref{eq:h:function}. For the second term in~\eqref{eq:proposition:regression:constant:1}, we have 
\begin{align}
&\bbE^{\bbP}\big[\|h_{t}^{*}(\mX_{t}^{\leftarrow})-w(\mX_{T}^{\leftarrow})\|_{2}^{2}\big] \nonumber \\
&=\bbE^{\bbP}\big[\|\bbE^{\bbP}[w(\mX_{T}^{\leftarrow})|\mX_{t}^{\leftarrow}]-w(\mX_{T}^{\leftarrow})\|_{2}^{2}\big] \nonumber \\
&=\bbE^{\bbP}\bbE^{\bbP}\big[\|\bbE^{\bbP}[w(\mX_{T}^{\leftarrow})|\mX_{t}^{\leftarrow}]-w(\mX_{T}^{\leftarrow})\|_{2}^{2}|\mX_{t}^{\leftarrow}\big] \nonumber \\
&=\bbE^{\bbP}[\var(w(\mX_{T}^{\leftarrow})|\mX_{t}^{\leftarrow})], \label{eq:proposition:regression:constant:2}
\end{align}
where the first equality holds from the definition of $h_{t}^{*}$~\eqref{eq:h:function}, and the second equality is due to the law of the total expectation. For the third term in~\eqref{eq:proposition:regression:constant:1}, we find 
\begin{align}
&\bbE^{\bbP}\big[\langle h_{t}(\mX_{t}^{\leftarrow})-h_{t}^{*}(\mX_{t}^{\leftarrow}),h_{t}^{*}(\mX_{t}^{\leftarrow})-w(\mX_{T}^{\leftarrow})\rangle\big] \nonumber \\
&=\bbE^{\bbP}\big[\langle h_{t}(\mX_{t}^{\leftarrow})-\bbE^{\bbP}[w(\mX_{T}^{\leftarrow})|\mX_{t}^{\leftarrow}],\bbE^{\bbP}[w(\mX_{T}^{\leftarrow})|\mX_{t}^{\leftarrow}]-w(\mX_{T}^{\leftarrow})\rangle\big] \nonumber \\
&=\bbE^{\bbP}\big[\langle h_{t}(\mX_{t}^{\leftarrow})-\bbE^{\bbP}[w(\mX_{T}^{\leftarrow})|\mX_{t}^{\leftarrow}],\bbE^{\bbP}[w(\mX_{T}^{\leftarrow})|\mX_{t}^{\leftarrow}]-\bbE^{\bbP}[w(\mX_{T}^{\leftarrow})|\mX_{t}^{\leftarrow}]\rangle\big] \nonumber \\
&=0, \label{eq:proposition:regression:constant:3}
\end{align}
where the first equality holds from the definition of $h_{t}^{*}$~\eqref{eq:h:function}, and the second equality is due to the law of the total expectation. Substituting~\eqref{eq:proposition:regression:constant:2} and~\eqref{eq:proposition:regression:constant:3} into~\eqref{eq:proposition:regression:constant:1} completes the proof.
\end{proof}

\begin{lemma}\label{lemma:doob:h:function:green}
Suppose Assumptions~\ref{assumption:bounded:support} and~\ref{assumption:bounded:weight} hold. Assume that $v_{t}\in H^{1}(p_{T-t})$. Then
\begin{equation*}
-(\nabla h_{t}^{*},\nabla v_{t})_{L^{2}(p_{T-t})}=(\Delta h_{t}^{*}+\nabla h_{t}^{*}\cdot\nabla\log p_{T-t},v_{t})_{L^{2}(p_{T-t})}.
\end{equation*}
\end{lemma}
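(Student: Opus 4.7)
The plan is to establish the identity through weighted integration by parts against the smooth density $p_{T-t}$, first on a dense subset of test functions and then by continuous extension to all of $H^{1}(p_{T-t})$.

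First I would record the regularity ingredients. By Proposition~\ref{proposition:regularity:h:function}, under Assumptions~\ref{assumption:bounded:support} and~\ref{assumption:bounded:weight} the Doob $h$-function $h_{t}^{*}$ is $C^{2}(\bbR^{d})$ with $\|\nabla h_{t}^{*}\|_{\infty}\lesssim\sigma_{T-t}^{-2}\bar{B}$ and $\|D^{2}h_{t}^{*}\|_{\infty}\lesssim\sigma_{T-t}^{-4}\bar{B}$. The marginal $p_{T-t}$ from~\eqref{eq:marginal:density} is a Gaussian convolution of the compactly supported $p_{0}$, so it is $C^{\infty}$, strictly positive, and has Gaussian tails. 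Differentiating under the integral in~\eqref{eq:marginal:density} gives the standard Tweedie-type formula
\begin{equation*}
\nabla\log p_{T-t}(\vx)=\sigma_{T-t}^{-2}\bigl(\mu_{T-t}\bbE[\mX_{0}\mid\mX_{T-t}=\vx]-\vx\bigr),
\end{equation*}
so $\|\nabla\log p_{T-t}(\vx)\|_{2}\lesssim\sigma_{T-t}^{-2}(1+\|\vx\|_{2})$; in particular $\nabla\log p_{T-t}\in L^{2}(p_{T-t})$ (finite Fisher information for the OU marginal).

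Next I would verify the identity on the dense subspace $C_{c}^{\infty}(\bbR^{d})$. For any $v_{t}\in C_{c}^{\infty}$, coordinatewise integration by parts (with no boundary contribution) yields
\begin{equation*}
\int p_{T-t}\,\partial_{i}h_{t}^{*}\,\partial_{i}v_{t}\d\vx=-\int v_{t}\,\partial_{i}\bigl(p_{T-t}\partial_{i}h_{t}^{*}\bigr)\d\vx.
\end{equation*}
Summing over $i$ and using $\nabla p_{T-t}=p_{T-t}\nabla\log p_{T-t}$ gives the stated equality for compactly supported test functions.

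Finally I would extend by density: since $p_{T-t}$ is smooth with Gaussian tails, $C_{c}^{\infty}(\bbR^{d})$ is dense in $H^{1}(p_{T-t})$. Pick $v_{t}^{(n)}\to v_{t}$ in $H^{1}(p_{T-t})$. The left-hand side is continuous in $v_{t}$ since $|(\nabla h_{t}^{*},\nabla v_{t})_{L^{2}(p_{T-t})}|\leq\|\nabla h_{t}^{*}\|_{\infty}\|\nabla v_{t}\|_{L^{2}(p_{T-t})}$. The right-hand side is continuous provided $\Delta h_{t}^{*}+\nabla h_{t}^{*}\cdot\nabla\log p_{T-t}\in L^{2}(p_{T-t})$: the first term is bounded and the cross term is controlled by $\|\nabla h_{t}^{*}\|_{\infty}\|\nabla\log p_{T-t}\|_{L^{2}(p_{T-t})}<\infty$ from the previous step. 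Passing to the limit yields the identity for general $v_{t}\in H^{1}(p_{T-t})$.

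The main obstacle is ensuring the right-hand side makes sense as an $L^{2}(p_{T-t})$ inner product, which reduces precisely to verifying finite Fisher information of the OU marginal $p_{T-t}$; this is handled by the explicit conditional-expectation formula above combined with the compact support in Assumption~\ref{assumption:bounded:support}. Once that is in place, the proof is essentially the weighted Green's identity together with a routine density argument.
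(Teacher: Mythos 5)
Your proof is correct and proceeds by the same underlying mechanism as the paper's --- a weighted Green's identity verified first on compactly supported functions, then transferred to general $v_{t}\in H^{1}(p_{T-t})$ --- but you package the approximation step differently. The paper truncates the \emph{given} $v_{t}$ by an explicit sequence of cut-off functions $\psi_{k}$ with $\|\nabla\psi_{k}\|_{\infty}\lesssim k^{-1}$, establishes the identity for $\psi_{k}v_{t}\in H_{0}^{1}(B(\bzero,k))$, shows the extra cut-off cross term is $\calO(k^{-1})$ by Cauchy--Schwarz, and passes to the limit by dominated convergence. You instead invoke density of $C_{c}^{\infty}(\bbR^{d})$ in the weighted space $H^{1}(p_{T-t})$ together with $H^{1}$-continuity of both bilinear forms. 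Your route is cleaner to state, and your explicit verification that $\nabla\log p_{T-t}\in L^{2}(p_{T-t})$ via Tweedie's formula (which the paper records separately in Lemma~\ref{lemma:reg:gap:bound}) is exactly the right ingredient for continuity of the right-hand side. The one thing I would flag is that density of $C_{c}^{\infty}$ in a weighted Sobolev space is not automatic --- it holds for smooth, strictly positive weights with suitable decay, but it needs a citation or a sketch, and the standard proof of that density result is precisely the cut-off-plus-mollification argument the paper is carrying out by hand. So the paper's version is more self-contained; yours is modular and a little more abstract, but it defers the cut-off work into a claimed-but-unproved density fact. Either is acceptable, provided you either cite a weighted-Sobolev density theorem or remark that the truncation argument establishing it is the same one the paper uses.
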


\begin{proof}[Proof of Lemma~\ref{lemma:doob:h:function:green}]
We first construct a sequence of cut-off functions $\{\psi_{k}\}_{k=1}^{\infty}\subseteq C_{0}^{\infty}(\bbR^{d})$, satisfying
\begin{enumerate}[label=(\roman*)]
\item $\psi_{k}(\vx)=1$ for $\|\vx\|_{2}\leq k$,
\item $\psi_{k}(\vx)=0$ for $\|\vx\|_{2}\geq 2k$,
\item $\psi_{k}(\vx)\in(0,1)$ for $\vx\in\bbR^{d}$, and 
\item $\|\nabla\psi_{k}(\vx)\|_{2}\leq Ck^{-1}$ for some constant $C$ independent of $\vx$ and $k$.
\end{enumerate}
See~\citet[Theorem 8.7]{Brezis2011Functional} for a detailed construction of such cut-off functions. Then we focus on the compactly supported approximations $\{\psi_{k}v_{t}\}_{k=1}^{\infty}$:
\begin{align}
&-\langle\nabla h_{t}^{*},\nabla(\psi_{k}v_{t})\rangle_{L^{2}(p_{T-t})} \nonumber \\
&=-\int_{\bbR^{d}}\langle\nabla h_{t}^{*}(\vx),\nabla(\psi_{k}v_{t})(\vx)\rangle p_{T-t}(\vx)\d\vx \nonumber \\
&=-\int_{\bbR^{d}}\nabla\cdot(\nabla h_{t}^{*}(\vx)p_{T-t}(\vx)(\psi_{k}v_{t})(\vx))\d\vx+\int_{\bbR^{d}}\nabla\cdot(\nabla h_{t}^{*}(\vx)p_{T-t}(\vx))(\psi_{k}v_{t})(\vx)\d\vx \\
&=\int_{\bbR^{d}}\nabla\cdot(\nabla h_{t}^{*}(\vx)p_{T-t}(\vx))(\psi_{k}v_{t})(\vx)\d\vx, \label{eq:lemma:doob:h:function:green:1}
\end{align}
where we used the Gauss's divergence theorem and the fact that $\psi_{k}v_{t}\in H_{0}^{1}(B(\bzero,k))$ with $B(\bzero,k)\coloneq\{\vx:\|\vx\|_{2}\leq k\}$. For the left-hand side of~\eqref{eq:lemma:doob:h:function:green:1}, we have 
\begin{align}
&\int_{\bbR^{d}}\langle\nabla h_{t}^{*}(\vx),\nabla(\psi_{k}v_{t})(\vx)\rangle p_{T-t}(\vx)\d\vx \nonumber \\
&=\int_{\bbR^{d}} \psi_{k}(\vx)\langle\nabla h_{t}^{*}(\vx),\nabla v_{t}(\vx)\rangle p_{T-t}(\vx)\d\vx+\int_{\bbR^{d}}\langle\nabla h_{t}^{*}(\vx),\nabla\psi_{k}(\vx)\rangle v_{t}(\vx)p_{T-t}(\vx)\d\vx. \label{eq:lemma:doob:h:function:green:2}
\end{align}
For the second term in~\eqref{eq:lemma:doob:h:function:green:2}, we have 
\begin{align*}
&\Big|\int_{\bbR^{d}}\langle\nabla h_{t}^{*}(\vx),\nabla\psi_{k}(\vx)\rangle v_{t}(\vx)p_{T-t}(\vx)\d\vx\Big| \\
&\leq \int_{\bbR^{d}}\|\nabla h_{t}^{*}(\vx)\|_{2}\|\nabla\psi_{k}(\vx)\|_{2}|v_{t}(\vx)| p_{T-t}(\vx)\d\vx \\
&\leq \frac{C}{k}\int_{\bbR^{d}}\|\nabla h_{t}^{*}(\vx)\|_{2}|v_{t}(\vx)| p_{T-t}(\vx)\d\vx \\
&\leq \frac{C}{k}\|\nabla h_{t}^{*}\|_{L^{2}(p_{T-t})}\|v_{t}\|_{L^{2}(p_{T-t})},
\end{align*}
where the second inequality holds from the definition of the cut-off function, and the last inequality is due to Cauchy-Schwarz inequality and the fact that $h_{t}^{*}\in H^{1}(p_{T-t})$, which is a direct conclusion of Lemmas~\ref{lemma:section:error:guidance:bound} and~\ref{lemma:section:error:guidance:regularity:2}. Taking limitation with respect to $k\to\infty$ and using Lebesgue's dominated convergence theorem yields 
\begin{equation}\label{eq:lemma:doob:h:function:green:3}
\lim_{k\to\infty}\int_{\bbR^{d}}\langle\nabla h_{t}^{*}(\vx),\nabla\psi_{k}(\vx)\rangle v_{t}(\vx)p_{T-t}(\vx)\d\vx=0.
\end{equation}
Combining~\eqref{eq:lemma:doob:h:function:green:1},~\eqref{eq:lemma:doob:h:function:green:2}, and~\eqref{eq:lemma:doob:h:function:green:3} and taking limitation with respect to $k\to\infty$ completes the proof.
\end{proof}

\begin{customproposition}{\ref{proposition:regularization:gap}}
Let $\lambda>0$, $h_{t}^{*}$ be the Doob's $h$-function defined as~\eqref{eq:h:function}, and $h_{t}^{\lambda}$ be the minimizer of $\calJ_{t}^{\lambda}$ defined as~\eqref{eq:h:matching:GP}. Then $h_{t}^{*}\in H^{2}(p_{T-t})$, and 
\begin{align*}
\|h_{t}^{\lambda}-h_{t}^{*}\|_{L^{2}(p_{T-t})}^{2} 
&\leq \lambda^{2}\| \Delta h_{t}^{*}+\nabla h_{t}^{*}\cdot\nabla\log p_{T-t}\|_{L^{2}(p_{T-t})}^{2}, \\
\|\nabla h_{t}^{\lambda}-\nabla h_{t}^{*}\|_{L^{2}(p_{T-t})}^{2}&\leq \lambda\|\Delta h_{t}^{*}+\nabla h_{t}^{*}\cdot\nabla\log p_{T-t}\|_{L^{2}(p_{T-t})}^{2}.
\end{align*}
\end{customproposition}

\begin{proof}[Proof of Proposition~\ref{proposition:regularization:gap}]
First, $h_{t}^{*}\in H^{2}(p_{T-t})$ is a direct conclusion of Lemmas~\ref{lemma:section:error:guidance:bound},~\ref{lemma:section:error:guidance:regularity:2}, and~\ref{lemma:section:error:guidance:regularity:3}. It remains to prove two inequalities. Using Proposition~\ref{proposition:regression:constant} and~\eqref{eq:h:matching:GP}, we have 
\begin{equation*}
\calJ_{t}^{\lambda}(h_{t})=\|h_{t}-h_{t}^{*}\|_{L^{2}(p_{T-t})}^{2}+V_{t}^{2}+\lambda\|\nabla h_{t}\|_{L^{2}(p_{T-t})}^{2}
\end{equation*}
Since $h_{t}^{\lambda}$ is the minimizer of $\calJ_{t}^{\lambda}$, the methods of variation imply that for any $v_{t}\in H^{1}(p_{T-t})$, 
\begin{equation*}
\delta\calJ_{t}^{\lambda}(h_{t}^{\lambda},v_{t})=\langle h_{t}^{\lambda}-h_{t}^{*},v_{t} \rangle_{L^{2}(p_{T-t})}+\lambda \langle \nabla h_{t}^{\lambda},\nabla v_{t} \rangle_{L^{2}(p_{T-t})}=0,
\end{equation*}
which implies 
\begin{align*}
&\langle h_{t}^{\lambda}-h_{t}^{*},v_{t} \rangle_{L^{2}(p_{T-t})}+\lambda \langle \nabla h_{t}^{\lambda}-\nabla h_{t}^{*},\nabla v_{t} \rangle_{L^{2}(p_{T-t})} \\
&=-\lambda \langle \nabla h_{t}^{*},\nabla v_{t} \rangle_{L^{2}(p_{T-t})} \\
&=\lambda\langle \Delta h_{t}^{*}+\nabla h_{t}^{*}\cdot\nabla\log p_{T-t},v_{t}\rangle_{L^{2}(p_{T-t})},
\end{align*}
where the last equality invokes Lemma~\ref{lemma:doob:h:function:green}. Substituting $v_{t}\coloneq h_{t}^{\lambda}-h_{t}^{*}$ yields 
\begin{align}
&\|h_{t}^{\lambda}-h_{t}^{*}\|_{L^{2}(p_{T-t})}^{2}+\lambda\|\nabla h_{t}^{\lambda}-\nabla h_{t}^{*}\|_{L^{2}(p_{T-t})}^{2} \nonumber \\
&=\lambda\langle \Delta h_{t}^{*}+\nabla h_{t}^{*}\cdot\nabla\log p_{T-t},h_{t}^{\lambda}-h_{t}^{*}\rangle_{L^{2}(p_{T-t})} \nonumber \\
&\leq\lambda\| \Delta h_{t}^{*}+\nabla h_{t}^{*}\cdot\nabla\log p_{T-t}\|_{L^{2}(p_{T-t})}\|h_{t}^{\lambda}-h_{t}^{*}\|_{L^{2}(p_{T-t})}, \label{eq:proposition:regularization:gap:1}
\end{align}
where the last inequality is due to Cauchy-Schwarz inequality. A direct conclusion is 
\begin{equation*}
\|h_{t}^{\lambda}-h_{t}^{*}\|_{L^{2}(p_{T-t})}^{2} \leq \lambda^{2}\| \Delta h_{t}^{*}+\nabla h_{t}^{*}\cdot\nabla\log p_{T-t}\|_{L^{2}(p_{T-t})}^{2}.
\end{equation*}
Then plugging this equality into~\eqref{eq:proposition:regularization:gap:1} yields 
\begin{equation*}
\|\nabla h_{t}^{\lambda}-\nabla h_{t}^{*}\|_{L^{2}(p_{T-t})}^{2}\leq\lambda\|\Delta h_{t}^{*}+\nabla h_{t}^{*}\cdot\nabla\log p_{T-t}\|_{L^{2}(p_{T-t})}^{2},
\end{equation*}
which completes the proof.
\end{proof}

\begin{customproposition}{\ref{proposition:stability}}
Let $\lambda>0$, and $h_{t}^{\lambda}$ be the minimizer of $\calJ_{t}^{\lambda}$ defined as~\eqref{eq:h:matching:GP}. Then for any $h_{t}\in H^{1}(p_{T-t})$, we have
\begin{equation*}
\frac{1}{\max\{\lambda,1\}}\big\{\calJ_{t}^{\lambda}(h_{t})-\calJ_{t}^{\lambda}(h_{t}^{\lambda})\big\}
\leq \|h_{t}-h_{t}^{\lambda}\|_{H^{1}(p_{T-t})}^{2} 
\leq \frac{1}{\min\{\lambda,1\}}\big\{\calJ_{t}^{\lambda}(h_{t})-\calJ_{t}^{\lambda}(h_{t}^{\lambda})\big\}.
\end{equation*}
\end{customproposition}

\begin{proof}[Proof of Proposition~\ref{proposition:stability}]
Using Proposition~\ref{proposition:regression:constant} and~\eqref{eq:h:matching:GP}, we have 
\begin{equation*}
\calJ_{t}^{\lambda}(h_{t})=\|h_{t}-h_{t}^{*}\|_{L^{2}(p_{T-t})}^{2}+V_{t}^{2}+\lambda\|\nabla h_{t}\|_{L^{2}(p_{T-t})}^{2}.
\end{equation*}
Since $h_{t}^{\lambda}$ is the minimizer of $\calJ_{t}^{\lambda}$, the methods of variation imply
\begin{equation}\label{eq:proposition:stability:1}
\delta\calJ_{t}^{\lambda}(h_{t}^{\lambda},v_{t})=\langle h_{t}^{\lambda}-h_{t}^{*},v_{t} \rangle_{L^{2}(p_{T-t})}+\lambda \langle \nabla h_{t}^{\lambda},\nabla v_{t} \rangle_{L^{2}(p_{T-t})}=0,
\end{equation}
for any $v_{t}\in H^{1}(p_{T-t})$. A direct calculation yields
\begin{align*}
&\calJ_{t}^{\lambda}(h_{t})=\calJ_{t}^{\lambda}(h_{t}-h_{t}^{\lambda}+h_{t}^{\lambda}) \\
&=\|h_{t}-h_{t}^{\lambda}+h_{t}^{\lambda}-h_{t}^{*}\|_{L^{2}(p_{T-t})}^{2}+V_{t}^{2}+\lambda\|\nabla h_{t}-\nabla h_{t}^{\lambda}+\nabla h_{t}^{\lambda}\|_{L^{2}(p_{T-t})}^{2} \\
&=\calJ_{t}^{\lambda}(h_{t}^{\lambda})+\|h_{t}-h_{t}^{\lambda}\|_{L^{2}(p_{T-t})}^{2}+\lambda\|\nabla h_{t}-\nabla h_{t}^{\lambda}\|_{L^{2}(p_{T-t})}^{2} \\
&\quad +2\langle h_{t}-h_{t}^{\lambda},h_{t}^{\lambda}-h_{t}^{*}\rangle_{L^{2}(p_{T-t})}+2\lambda\langle \nabla h_{t}-\nabla h_{t}^{\lambda},\nabla h_{t}^{\lambda}\rangle_{L^{2}(p_{T-t})} \\
&=\calJ_{t}^{\lambda}(h_{t}^{\lambda})+\|h_{t}-h_{t}^{\lambda}\|_{L^{2}(p_{T-t})}^{2}+\lambda\|\nabla h_{t}-\nabla h_{t}^{\lambda}\|_{L^{2}(p_{T-t})}^{2},
\end{align*}
where the last equality holds from~\eqref{eq:proposition:stability:1}. This completes the proof.
\end{proof}

\section{Derivations in Section~\ref{section:convergence:assumption}}\label{appendix:convergence:assumption}

\begin{lemma}\label{lemma:section:error:guidance:bound}
Suppose Assumption~\ref{assumption:bounded:weight} holds. Then for all $t\in(0,T)$,
\begin{equation*}
\underline{B}\leq h_{t}^{*}(\vx)\leq\bar{B}, \quad \vx\in\bbR^{d}.
\end{equation*}
\end{lemma}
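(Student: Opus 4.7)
The plan is to write $h_t^*$ as an explicit integral via Bayes' rule, and then bound it pointwise using Assumption~\ref{assumption:bounded:weight}. First I would exploit the fact that, under $\bbP$, the time-reversal $(\mX_t^{\leftarrow})_{0\leq t\leq T}$ has the same law as $(\mX_{T-t})_{0\leq t\leq T}$. In particular, the joint law of $(\mX_t^{\leftarrow},\mX_T^{\leftarrow})$ coincides with the joint law of $(\mX_{T-t},\mX_0)$, so computing $h_t^*(\vx)=\bbE^{\bbP}[w(\mX_T^{\leftarrow})\mid\mX_t^{\leftarrow}=\vx]$ reduces to evaluating the posterior expectation of $w(\mX_0)$ under the forward OU process given $\mX_{T-t}=\vx$.

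Using the Gaussian transition~\eqref{eq:forward:solution} and Bayes' rule together with the marginal formula~\eqref{eq:marginal:density}, I would then write, for every $\vx\in\bbR^d$,
\begin{equation*}
h_t^*(\vx)=\frac{\int_{\bbR^d} w(\vx_0)\,\varphi_d(\vx;\mu_{T-t}\vx_0,\sigma_{T-t}^2\mI_d)\,p_0(\vx_0)\d\vx_0}{p_{T-t}(\vx)}.
\end{equation*}
By Assumption~\ref{assumption:bounded:weight} and the mutual absolute continuity of $p_0$ and $q_0$ (so that $\supp(p_0)=\supp(q_0)$), the integrand in the numerator is supported on $\supp(p_0)$, where $\underline{B}\leq w(\vx_0)\leq\bar{B}$. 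Bounding $w$ above and below and pulling the constants out of the integral yields
\begin{equation*}
\underline{B}\,p_{T-t}(\vx)\leq \int w(\vx_0)\,\varphi_d(\vx;\mu_{T-t}\vx_0,\sigma_{T-t}^2\mI_d)\,p_0(\vx_0)\d\vx_0 \leq \bar{B}\,p_{T-t}(\vx).
\end{equation*}

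Finally, I would observe that $p_{T-t}(\vx)$ is strictly positive on all of $\bbR^d$ because it is the convolution of $p_0$ with a strictly positive Gaussian kernel, so division by $p_{T-t}(\vx)$ is legitimate and yields the desired pointwise bounds $\underline{B}\leq h_t^*(\vx)\leq\bar{B}$ for every $\vx\in\bbR^d$. There is no genuine obstacle here; the only mildly delicate point is justifying that the identification of conditional expectation with the Bayes posterior holds pointwise (rather than merely almost everywhere), which is handled by taking the explicit Gaussian-posterior version as the canonical representative of $h_t^*$.
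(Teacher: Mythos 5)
Your proof is correct and follows the same underlying idea as the paper's one-line proof (a conditional expectation of a function bounded in $[\underline{B},\bar{B}]$ inherits those bounds), simply spelled out via the explicit Bayes-posterior / Gaussian-kernel representation and with the support and positivity-of-the-denominator points made explicit. The extra care about $\supp(p_0)=\supp(q_0)$ and the pointwise-vs-a.e.\ issue is welcome and matches the remark the paper makes after Assumption~\ref{assumption:bounded:weight}.
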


\begin{proof}[Proof of Lemma~\ref{lemma:section:error:guidance:bound}]
A direct conclusion of Assumption~\ref{assumption:bounded:weight} and the definition of Doob's $h$-function $h_{t}^{*}$~\eqref{eq:h:function}.
\end{proof}

\begin{lemma}[Tweedie's formula]
\label{lemma:tweedie}
Let $t\in(0,T)$, and let $\mX_{t}$ be defined as~\eqref{eq:forward:base}. Then 
\begin{equation*}
\nabla\log p_{t}(\vx)+\frac{\vx}{\sigma_{t}^{2}}=\frac{\mu_{t}}{\sigma_{t}^{2}}\bbE[\mX_{0}|\mX_{t}=\vx], \quad \vx\in\bbR^{d}.
\end{equation*}
\end{lemma}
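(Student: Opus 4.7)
The plan is to prove Tweedie's formula by direct computation, differentiating under the integral sign in the marginal density formula~\eqref{eq:marginal:density} and recognizing the resulting expression as a posterior expectation under the Bayesian update induced by the Gaussian transition kernel~\eqref{eq:forward:solution}.

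First, I would start from the explicit representation
\begin{equation*}
p_{t}(\vx) = \int \varphi_{d}(\vx;\mu_{t}\vx_{0},\sigma_{t}^{2}\mI_{d})\,p_{0}(\vx_{0})\d\vx_{0},
\end{equation*}
and compute the spatial gradient of the Gaussian kernel by direct differentiation,
\begin{equation*}
\nabla_{\vx}\varphi_{d}(\vx;\mu_{t}\vx_{0},\sigma_{t}^{2}\mI_{d}) = -\frac{\vx-\mu_{t}\vx_{0}}{\sigma_{t}^{2}}\,\varphi_{d}(\vx;\mu_{t}\vx_{0},\sigma_{t}^{2}\mI_{d}).
\end{equation*}
The interchange of gradient and integral is justified by the rapid Gaussian decay and the dominated convergence theorem; under Assumption~\ref{assumption:bounded:support} this is especially painless because the integrand is supported on a bounded set in $\vx_{0}$.

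Next, I would divide by $p_{t}(\vx)$ to obtain $\nabla\log p_{t}(\vx)$, splitting the integrand into the two terms $-\vx/\sigma_{t}^{2}$ (which pulls out of the integral since $p_{t}$ is precisely the remaining normalizing factor) and $\mu_{t}\vx_{0}/\sigma_{t}^{2}$. For the second piece, I would invoke Bayes' rule: the conditional density of $\mX_{0}$ given $\mX_{t}=\vx$ is exactly
\begin{equation*}
p_{0|t}(\vx_{0}\mid\vx) = \frac{\varphi_{d}(\vx;\mu_{t}\vx_{0},\sigma_{t}^{2}\mI_{d})\,p_{0}(\vx_{0})}{p_{t}(\vx)},
\end{equation*}
so that
\begin{equation*}
\frac{\mu_{t}}{\sigma_{t}^{2}}\int \vx_{0}\,p_{0|t}(\vx_{0}\mid\vx)\d\vx_{0} = \frac{\mu_{t}}{\sigma_{t}^{2}}\,\bbE[\mX_{0}\mid\mX_{t}=\vx].
\end{equation*}
Rearranging the identity gives the claim.

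There is no substantive obstacle here: the lemma is a classical identity and the only technical point is justifying differentiation under the integral sign, which is immediate under Assumption~\ref{assumption:bounded:support} or, more generally, for any $p_{0}$ with sufficiently light tails to make $\vx\mapsto p_{t}(\vx)$ smooth (a standard fact for the Ornstein--Uhlenbeck semigroup). The proof is thus essentially two lines of calculus plus recognition of the posterior density.
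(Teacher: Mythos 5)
Your proposal is correct and follows essentially the same route as the paper's proof: differentiate $p_t(\vx)=\int\varphi_d(\vx;\mu_t\vx_0,\sigma_t^2\mI_d)p_0(\vx_0)\d\vx_0$ under the integral sign, split the Gaussian gradient $-(\vx-\mu_t\vx_0)/\sigma_t^2$ into the two pieces, and recognize the $\vx_0$-weighted integral as the posterior mean via Bayes' rule. Your extra remark on the legitimacy of differentiating under the integral is a reasonable addition; the paper takes this for granted.
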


\begin{proof}[Proof of Lemma~\ref{lemma:tweedie}]
It is straightforward that 
\begin{align*}
\nabla\log p_{t}(\vx)
&=\frac{\nabla p_{t}(\vx)}{p_{t}(\vx)} \\
&=\frac{1}{p_{t}(\vx)}\int\nabla_{\vx}\varphi_{d}(\vx;\mu_{t}\vx_{0},\sigma_{t}^{2}\mI_{d})p_{0}(\vx_{0})\d\vx_{0} \\
&=-\frac{1}{p_{t}(\vx)}\int\Big(\frac{\vx-\mu_{t}\vx_{0}}{\sigma_{t}^{2}}\Big)\varphi_{d}(\vx;\mu_{t}\vx_{0},\sigma_{t}^{2}\mI_{d})p_{0}(\vx_{0})\d\vx_{0} \\
&=-\frac{1}{p_{t}(\vx)}\frac{\vx}{\sigma_{t}^{2}}\int\varphi_{d}(\vx;\mu_{t}\vx_{0},\sigma_{t}^{2}\mI_{d})p_{0}(\vx_{0})\d\vx_{0} \\
&\quad+\frac{\mu_{t}}{\sigma_{t}^{2}}\int\vx_{0}\varphi_{d}(\vx;\mu_{t}\vx_{0},\sigma_{t}^{2}\mI_{d})\frac{p_{0}(\vx_{0})}{p_{t}(\vx)}\d\vx_{0} \\
&=-\frac{\vx}{\sigma_{t}^{2}}+\frac{\mu_{t}}{\sigma_{t}^{2}}\bbE[\mX_{0}|\mX_{t}=\vx],
\end{align*}
where the second equality is due to~\eqref{eq:forward:solution}, and last equality invokes the Bayes' rule. This completes the proof.
\end{proof}

\begin{lemma}\label{lemma:grad:condition:expectation}
Let $g:\bbR^{d}\rightarrow\bbR^{d}$ be an integrable function. Let $t\in(0,T)$, and let $\mX_{t}$ be defined as~\eqref{eq:forward:base}. Then for each $\vx\in\bbR^{d}$,
\begin{equation*}
\nabla_{\vx}\bbE[g(\mX_{0})|\mX_{t}=\vx]=\frac{\mu_{t}}{\sigma_{t}^{2}}\cov(\mX_{0},g(\mX_{0})|\mX_{t}=\vx),
\end{equation*}
where the $k$-th entry of $\cov(\mX_{0},g(\mX_{0})|\mX_{t}=\vx)$ is defined as $\cov(X_{0,k},g(\mX_{0})|\mX_{t}=\vx)$ with $\mX_{0}=(X_{0,1},\ldots,X_{0,d})$.
\end{lemma}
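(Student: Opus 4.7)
The approach is a direct differentiation under the integral sign, followed by an application of Tweedie's formula (Lemma~\ref{lemma:tweedie}) to cancel unwanted terms. Writing the conditional expectation via Bayes' rule as
\begin{equation*}
m(\vx)\coloneq\bbE[g(\mX_{0})|\mX_{t}=\vx]=\frac{1}{p_{t}(\vx)}\int g(\vx_{0})\,\varphi_{d}(\vx;\mu_{t}\vx_{0},\sigma_{t}^{2}\mI_{d})\,p_{0}(\vx_{0})\d\vx_{0},
\end{equation*}
the plan is to differentiate the numerator and denominator separately in $\vx$. The key identity driving the computation is the Gaussian log-derivative
\begin{equation*}
\nabla_{\vx}\varphi_{d}(\vx;\mu_{t}\vx_{0},\sigma_{t}^{2}\mI_{d})=-\frac{\vx-\mu_{t}\vx_{0}}{\sigma_{t}^{2}}\,\varphi_{d}(\vx;\mu_{t}\vx_{0},\sigma_{t}^{2}\mI_{d}),
\end{equation*}
which mirrors the computation already carried out in Lemma~\ref{lemma:tweedie}.

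Applying the quotient rule coordinate-by-coordinate, I expect to obtain, for each $k\in\{1,\ldots,d\}$,
\begin{equation*}
\partial_{x_{k}}m(\vx)=\frac{\mu_{t}}{\sigma_{t}^{2}}\bbE[X_{0,k}\,g(\mX_{0})|\mX_{t}=\vx]-\frac{x_{k}}{\sigma_{t}^{2}}m(\vx)-\partial_{x_{k}}\log p_{t}(\vx)\,m(\vx),
\end{equation*}
where the first two terms come from differentiating the numerator and the third comes from differentiating $1/p_{t}(\vx)$. Then I would substitute Tweedie's formula in the form $\partial_{x_{k}}\log p_{t}(\vx)=-x_{k}/\sigma_{t}^{2}+(\mu_{t}/\sigma_{t}^{2})\bbE[X_{0,k}|\mX_{t}=\vx]$. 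The $x_{k}/\sigma_{t}^{2}$ contributions cancel exactly, and the remainder regroups into
\begin{equation*}
\partial_{x_{k}}m(\vx)=\frac{\mu_{t}}{\sigma_{t}^{2}}\bigl(\bbE[X_{0,k}g(\mX_{0})|\mX_{t}=\vx]-\bbE[X_{0,k}|\mX_{t}=\vx]\bbE[g(\mX_{0})|\mX_{t}=\vx]\bigr)=\frac{\mu_{t}}{\sigma_{t}^{2}}\cov(X_{0,k},g(\mX_{0})|\mX_{t}=\vx),
\end{equation*}
which is precisely the $k$-th column of the claimed identity.

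The only non-routine step is the justification of differentiation under the integral sign for the numerator, since $g$ is merely integrable. This I would handle via the dominated convergence theorem: the pointwise gradient of the integrand is bounded by $\sigma_{t}^{-2}\|\vx-\mu_{t}\vx_{0}\|_{2}\,|g(\vx_{0})|\,\varphi_{d}(\vx;\mu_{t}\vx_{0},\sigma_{t}^{2}\mI_{d})\,p_{0}(\vx_{0})$, and the rapid Gaussian decay in $\vx_{0}$ provides a uniform $L^{1}$ dominator on any bounded neighborhood of $\vx$. I do not anticipate any substantive obstacle beyond this standard measure-theoretic check, so the proof reduces essentially to the algebraic identity sketched above.
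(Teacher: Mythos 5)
Your proof is correct and follows essentially the same route as the paper's own argument: express the conditional expectation via Bayes' rule, differentiate under the integral sign using the Gaussian log-derivative identity, and substitute Tweedie's formula (Lemma~\ref{lemma:tweedie}) so that the $\vx/\sigma_t^2$ terms cancel and the remainder assembles into the conditional covariance. Your additional remark on justifying differentiation under the integral sign via dominated convergence is a correct and slightly more careful account of a step the paper takes for granted.
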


\begin{proof}[Proof of Lemma~\ref{lemma:grad:condition:expectation}]
According to Bayes' rule, we have
\begin{equation*}
\bbE[g(\mX_{0})|\mX_{t}=\vx]=\frac{1}{p_{t}(\vx)}\int g(\vx_{0})\varphi_{d}(\vx;\mu_{t}\vx_{0},\sigma_{t}^{2}\mI_{d})p_{0}(\vx_{0})\d\vx_{0}.
\end{equation*}
Taking gradient with respect to $\vx$ on both sides of the equality yields
\begin{align*}
\nabla_{\vx}\bbE[g(\mX_{0})|\mX_{t}=\vx]
&=\frac{1}{p_{t}(\vx)}\int g(\vx_{0})\nabla_{\vx}\varphi_{d}(\vx;\mu_{t}\vx_{0},\sigma_{t}^{2}\mI_{d})p_{0}(\vx_{0})\d\vx_{0} \\
&\quad-\frac{\nabla p_{t}(\vx)}{p_{t}^{2}(\vx)}\int g(\vx_{0})\varphi_{d}(\vx;\mu_{t}\vx_{0},\sigma_{t}^{2}\mI_{d})p_{0}(\vx_{0})\d\vx_{0}  \\
&=-\frac{1}{p_{t}(\vx)}\int g(\vx_{0})\Big(\frac{\vx-\mu_{t}\vx_{0}}{\sigma_{t}^{2}}\Big)\varphi_{d}(\vx;\mu_{t}\vx_{0},\sigma_{t}^{2}\mI_{d})p_{0}(\vx_{0})\d\vx_{0} \\
&\quad-\frac{\nabla\log p_{t}(\vx)}{p_{t}(\vx)}\int g(\vx_{0})\varphi_{d}(\vx;\mu_{t}\vx_{0},\sigma_{t}^{2}\mI_{d})p_{0}(\vx_{0})\d\vx_{0} \\
&=-\Big(\nabla\log p_{t}(\vx)+\frac{\vx}{\sigma_{t}^{2}}\Big)\int g(\vx_{0})\varphi_{d}(\vx;\mu_{t}\vx_{0},\sigma_{t}^{2}\mI_{d})\frac{p_{0}(\vx_{0})}{p_{t}(\vx)}\d\vx_{0}  \\
&\quad+\frac{\mu_{t}}{\sigma_{t}^{2}}\int \vx_{0}g(\vx_{0})\varphi_{d}(\vx;\mu_{t}\vx_{0},\sigma_{t}^{2}\mI_{d})\frac{p_{0}(\vx_{0})}{p_{t}(\vx)}\d\vx_{0}  \\
&=\frac{\mu_{t}}{\sigma_{t}^{2}}(-\bbE[\mX_{0}|\mX_{t}=\vx]\bbE[g(\mX_{0})|\mX_{t}=\vx]+\bbE[\mX_{0}g(\mX_{0})|\mX_{t}=\vx])  \\
&=\frac{\mu_{t}}{\sigma_{t}^{2}}\cov(\mX_{0},g(\mX_{0})|\mX_{t}=\vx),
\end{align*}
where the fourth equality follows from Lemma~\ref{lemma:tweedie} and the Bayes' rule. Here the $k$-th entry of the conditional covariance $\cov(\mX_{0},g(\mX_{0})|\mX_{t}=\vx)$ is defined as $\cov(X_{0,k},g(\mX_{0})|\mX_{t}=\vx)$, where $\mX_{0}=(X_{0,1},\ldots,X_{0,d})$. This completes the proof.
\end{proof}

\begin{lemma}\label{lemma:section:error:guidance:regularity:2}
Suppose Assumptions~\ref{assumption:bounded:support} and~\ref{assumption:bounded:weight} hold. Then for each $\vx\in\bbR^{d}$ and $t>0$,
\begin{equation*}
\max_{1\leq k\leq d}|D_{k}h_{t}^{*}(\vx)|\leq\frac{2\bar{B}}{\sigma_{T-t}^{2}},
\end{equation*}
where $D_{k}$ denote the differential operator with respect to the $k$-th entry of $\vx$.
\end{lemma}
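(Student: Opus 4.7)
The plan is to rewrite Doob's $h$-function in forward-process terms, apply Lemma~\ref{lemma:grad:condition:expectation} to obtain a closed-form expression for the gradient as a conditional covariance, and then bound that covariance using the compact support of $\mX_{0}$ and the boundedness of $w$.

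First, I would observe that because the path measure $\bbP$ of the time-reversal process coincides with the reverse of the forward process, the joint law of $(\mX_{t}^{\leftarrow},\mX_{T}^{\leftarrow})$ under $\bbP$ equals the joint law of $(\mX_{T-t},\mX_{0})$ under the forward OU dynamics~\eqref{eq:forward:base}. Consequently,
\begin{equation*}
h_{t}^{*}(\vx)=\bbE^{\bbP}[w(\mX_{T}^{\leftarrow})\mid\mX_{t}^{\leftarrow}=\vx]=\bbE[w(\mX_{0})\mid\mX_{T-t}=\vx].
\end{equation*}
Applying Lemma~\ref{lemma:grad:condition:expectation} with $g=w$ at time $T-t$ yields
\begin{equation*}
\nabla h_{t}^{*}(\vx)=\frac{\mu_{T-t}}{\sigma_{T-t}^{2}}\cov(\mX_{0},w(\mX_{0})\mid\mX_{T-t}=\vx),
\end{equation*}
whose $k$-th coordinate is $\frac{\mu_{T-t}}{\sigma_{T-t}^{2}}\cov(X_{0,k},w(\mX_{0})\mid\mX_{T-t}=\vx)$.

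Next I would bound this scalar conditional covariance. By Assumption~\ref{assumption:bounded:weight}, the density $w$ is bounded away from zero on $\supp(q_{0})$, so $p_{0}$ and $q_{0}$ are mutually absolutely continuous; hence $\supp(p_{0})=\supp(q_{0})\subseteq\{\|\vx_{0}\|_{\infty}\leq 1\}$ by Assumption~\ref{assumption:bounded:support}. Therefore $|X_{0,k}|\leq 1$ almost surely under the conditional law, and $w(\mX_{0})\leq\bar{B}$ almost surely. A direct expansion of the covariance together with the triangle inequality gives
\begin{equation*}
|\cov(X_{0,k},w(\mX_{0})\mid\mX_{T-t}=\vx)|\leq\bbE[|X_{0,k}w(\mX_{0})|\mid\cdot]+|\bbE[X_{0,k}\mid\cdot]|\,|\bbE[w(\mX_{0})\mid\cdot]|\leq 2\bar{B}.
\end{equation*}
Combining with $\mu_{T-t}=e^{-(T-t)}\leq 1$ produces the desired bound $|D_{k}h_{t}^{*}(\vx)|\leq 2\bar{B}/\sigma_{T-t}^{2}$.

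There is no serious obstacle here; the only subtle point is recognizing that Assumption~\ref{assumption:bounded:support} applies a priori only to the target $q_{0}$, but mutual absolute continuity via Assumption~\ref{assumption:bounded:weight} transports the bounded support property to $p_{0}$ and hence to the posterior $\mX_{0}\mid\mX_{T-t}$. Everything else is a routine combination of Tweedie's identity (Lemma~\ref{lemma:tweedie}), the covariance formula of Lemma~\ref{lemma:grad:condition:expectation}, and elementary bounds on bounded random variables.
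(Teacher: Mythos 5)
Your proof is correct and follows essentially the same route as the paper's: rewrite $h_t^*$ in forward-process form, invoke Lemma~\ref{lemma:grad:condition:expectation} to express $\nabla h_t^*$ as a scaled conditional covariance, bound the covariance by $2\bar{B}$ via the boundedness of $X_{0,k}$ and $w$, and use $\mu_{T-t}\leq 1$. The subtle point you flag — that bounded support is stated for $q_0$ and must be transported to $p_0$ via the mutual absolute continuity implied by Assumption~\ref{assumption:bounded:weight} — is indeed needed and is what the paper records in its discussion of the assumptions.
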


\begin{proof}[Proof of Lemma~\ref{lemma:section:error:guidance:regularity:2}]
According to the definition of Doob's $h$-function $h_{t}^{*}$~\eqref{eq:h:function} and the property of the time-reversal process~\eqref{eq:base:reversal}, we have 
\begin{equation*}
h^{*}(t,\vx)\coloneq\bbE^{\bbP}[w(\mX_{T}^{\leftarrow})|\mX_{t}^{\leftarrow}=\vx]=\bbE[w(\mX_{0})|\mX_{T-t}=\vx],
\end{equation*}
where the second expectation is with respect to the path measure of the forward process~\eqref{eq:forward:base}. Then it follows from Lemma~\ref{lemma:grad:condition:expectation} that 
\begin{align}\label{eq:lemma:section:error:guidance:regularity:2:2}
\nabla h_{t}^{*}(\vx)=\frac{\mu_{T-t}}{\sigma_{T-t}^{2}}\cov(\mX_{0},w(\mX_{0})|\mX_{T-t}=\vx).
\end{align}
Then it follows from Assumptions~\ref{assumption:bounded:support} and~\ref{assumption:bounded:weight} that for each $\vx\in\bbR^{d}$,
\begin{equation}\label{eq:lemma:section:error:guidance:regularity:2:3}
\|\cov(\mX_{0},w(\mX_{0})|\mX_{T-t}=\vx)\|_{\infty}=\max_{1\leq k\leq d}\cov(X_{0,k},w(\mX_{0})|\mX_{T-t}=\vx)\leq 2\bar{B}.
\end{equation}
Substituting~\eqref{eq:lemma:section:error:guidance:regularity:2:3} into~\eqref{eq:lemma:section:error:guidance:regularity:2:2} yields 
\begin{equation*}
\|\nabla h_{t}^{*}(\vx)\|_{\infty}=\frac{\mu_{t}}{\sigma_{t}^{2}}\|\cov(\mX_{0},w(\mX_{0})|\mX_{T-t}=\vx)\|_{\infty}\leq\frac{2\bar{B}}{\sigma_{T-t}^{2}},
\end{equation*}
where we used the fact that $\mu_{t}=\exp(-t)<1$. This completes the proof.
\end{proof}

\begin{lemma}\label{lemma:section:error:guidance:regularity:3}
Suppose Assumptions~\ref{assumption:bounded:support} and~\ref{assumption:bounded:weight} hold. Then for each $\vx\in\bbR^{d}$ and $t>0$,
\begin{equation*}
|D_{k\ell}^{2}h_{t}^{*}(\vx)|\leq\frac{6\bar{B}}{\sigma_{T-t}^{4}},
\end{equation*}
where $D_{k\ell}^{2}$ denote the second-order differential operator with respect to $k$-th and $\ell$-th entry.
\end{lemma}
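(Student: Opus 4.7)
The plan is to iterate the conditional-covariance identity from Lemma~\ref{lemma:grad:condition:expectation} one more time, starting from the first-order formula established in Lemma~\ref{lemma:section:error:guidance:regularity:2}. Recall that
\begin{equation*}
D_{k}h_{t}^{*}(\vx)=\frac{\mu_{T-t}}{\sigma_{T-t}^{2}}\cov(X_{0,k},w(\mX_{0})\mid \mX_{T-t}=\vx)=\frac{\mu_{T-t}}{\sigma_{T-t}^{2}}\bigl\{\bbE[X_{0,k}w(\mX_{0})\mid\mX_{T-t}=\vx]-\bbE[X_{0,k}\mid\mX_{T-t}=\vx]\,h_{t}^{*}(\vx)\bigr\}.
\end{equation*}
First I would differentiate each of the three conditional expectations in this expression with respect to the $\ell$-th coordinate, reapplying Lemma~\ref{lemma:grad:condition:expectation} with the choices $g(\mX_{0})=X_{0,k}w(\mX_{0})$, $g(\mX_{0})=X_{0,k}$, and $g(\mX_{0})=w(\mX_{0})$ respectively. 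This yields a closed-form expression for $D_{k\ell}^{2}h_{t}^{*}(\vx)$ as a sum of three terms, each of the form $\mu_{T-t}^{2}\sigma_{T-t}^{-4}$ times a product of bounded conditional covariances.

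Next I would bound each of these covariance factors using Assumptions~\ref{assumption:bounded:support} and~\ref{assumption:bounded:weight} together with the elementary inequality $|\cov(A,B\mid\cdot)|\leq 2\|A\|_{\infty}\|B\|_{\infty}$. Concretely, $|X_{0,k}|\leq 1$, $|w(\mX_{0})|\leq\bar{B}$, and by Lemma~\ref{lemma:section:error:guidance:bound} we have $|h_{t}^{*}(\vx)|\leq\bar{B}$ and $|\bbE[X_{0,k}\mid\mX_{T-t}=\vx]|\leq 1$. Each of the three terms is therefore bounded by $2\bar{B}\mu_{T-t}^{2}\sigma_{T-t}^{-4}$.

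Summing the three contributions and using $\mu_{T-t}^{2}=\exp(-2(T-t))\leq 1$ yields
\begin{equation*}
|D_{k\ell}^{2}h_{t}^{*}(\vx)|\leq \frac{\mu_{T-t}^{2}}{\sigma_{T-t}^{4}}\cdot 6\bar{B}\leq \frac{6\bar{B}}{\sigma_{T-t}^{4}},
\end{equation*}
which is the claimed bound. The only potentially subtle step is the justification of differentiation under the conditional expectation; however, this is already encapsulated in Lemma~\ref{lemma:grad:condition:expectation} and follows from dominated convergence with the Gaussian kernel $\varphi_{d}(\vx;\mu_{T-t}\vx_{0},\sigma_{T-t}^{2}\mI_{d})$ providing the required domination on the compact support of $p_{0}$. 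The rest is a bookkeeping exercise in enumerating the product-rule terms and applying the covariance bound uniformly.
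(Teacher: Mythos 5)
Your proof is correct and follows essentially the same route as the paper: differentiate the first-order conditional-covariance representation of $D_{k}h_{t}^{*}$ by reapplying Lemma~\ref{lemma:grad:condition:expectation}, obtain three terms scaling as $\mu_{T-t}^{2}\sigma_{T-t}^{-4}$, and bound each by $2\bar{B}$ using Assumptions~\ref{assumption:bounded:support} and~\ref{assumption:bounded:weight}. The only slip is describing the three terms as ``products of bounded conditional covariances'' --- two of them are a covariance times a conditional expectation ($h_{t}^{*}(\vx)$ and $\bbE[X_{0,k}\mid\mX_{T-t}=\vx]$ respectively) --- but your subsequent bounding handles exactly those factors, so the argument goes through unchanged.
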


\begin{proof}[Proof of Lemma~\ref{lemma:section:error:guidance:regularity:3}]
Taking derivative with respect to the $\ell$-th entry of $\vx$ on both sides of~\eqref{eq:lemma:section:error:guidance:regularity:2:2} implies
\begin{equation}\label{eq:lemma:section:error:guidance:regularity:3:1}
D_{k\ell}^{2}h_{t}^{*}(\vx)=\frac{\mu_{T-t}}{\sigma_{T-t}^{2}}D_{\ell}\cov(X_{0,k},w(\mX_0)|\mX_{T-t}=\vx).
\end{equation}
It remains to estimate the derivative of the conditional covariance. Indeed,
\begin{align*}
&D_{\ell}\cov(X_{0,k},w(\mX_0)|\mX_{T-t}=\vx) \\
&=D_{\ell}\bbE[X_{0,k}|\mX_{T-t}=\vx]\bbE[w(\mX_{0})|\mX_{T-t}=\vx] \\
&\quad+\bbE[X_{0,k}|\mX_{T-t}=\vx]D_{\ell}\bbE[w(\mX_{0})|\mX_{T-t}=\vx] \\
&\quad-D_{\ell}\bbE[X_{0,k}w(\mX_{0})|\mX_{T-t}=\vx] \\
&=\frac{\mu_{T-t}}{\sigma_{T-t}^{2}}\cov(X_{0,\ell},X_{0,k}|\mX_t = \vx)\bbE[w(\mX_{0})|\mX_{T-t}=\vx] \\
&\quad+\bbE[X_{0,k}|\mX_{T-t}=\vx]\frac{\mu_{T-t}}{\sigma_{T-t}^{2}}\cov(w(\mX_0),X_{0,\ell}|\mX_{T-t}=\vx) \\
&\quad-\frac{\mu_{T-t}}{\sigma_{T-t}^{2}}\cov(X_{0,k}w(\mX_0),X_{0,\ell}|\mX_{T-t}=\vx),
\end{align*}
where the last equality holds from Lemma~\ref{lemma:grad:condition:expectation}. Concequently, for each $\vx\in\bbR^{d}$ and $t>0$,
\begin{equation}\label{eq:lemma:section:error:guidance:regularity:3:2}
|D_{\ell}\cov(X_{0,k},w(\mX_0)|\mX_{T-t}=\vx)|\leq\frac{6\mu_{T-t}\bar{B}}{\sigma_{T-t}^{2}},
\end{equation}
where we used Assumptions~\ref{assumption:bounded:support} and~\ref{assumption:bounded:weight}. Substituting~\eqref{eq:lemma:section:error:guidance:regularity:3:2} into~\eqref{eq:lemma:section:error:guidance:regularity:3:1} completes the proof.
\end{proof}

\begin{customproposition}{\ref{proposition:regularity:h:function}}
Suppose Assumptions~\ref{assumption:bounded:support} and~\ref{assumption:bounded:weight} hold. Then for all $t\in(0,T)$ and $\vx\in\bbR^{d}$, the following bounds hold:
\begin{enumerate}[label=(\roman*)]
\item $\underline{B}\leq h_{t}^{*}(\vx)\leq\bar{B}$;
\item $\max_{1\leq k\leq d}|D_{k}h_{t}^{*}(\vx)|\leq 2\sigma_{T-t}^{-2}\bar{B}$; and
\item $\max_{1\leq k,\ell\leq d}|D_{k\ell}^{2}h_{t}^{*}(\vx)|\leq 6\sigma_{T-t}^{-4}\bar{B}$,
\end{enumerate}
where $D_{k}$ and $D_{k\ell}^{2}$ denote the first-order and second-order partial derivatives with respect to the input coordinates, respectively.
\end{customproposition}

\begin{proof}[Proof of Proposition~\ref{proposition:regularity:h:function}]
A direct conclusion of Lemmas~\ref{lemma:section:error:guidance:bound},~\ref{lemma:section:error:guidance:regularity:2}, and~\ref{lemma:section:error:guidance:regularity:3}. 
\end{proof}

\section{Derivations in Section~\ref{section:rate:guidance}}\label{appendix:rate:guidance}

\subsection{Oracle inequality of variationally stable Doob's matching}
\label{section:error:guidance:oracle}

\begin{customlemma}{\ref{lemma:oracle:inequality:guidance}}
Suppose Assumptions~\ref{assumption:bounded:support} and~\ref{assumption:bounded:weight} hold. Let $t\in(0,T)$ and let $\scrH_{t}$ be a hypothesis class. Let $\what{h}_{t}^{\lambda}$ be the gradient-regularized empirical risk minimizer defined as~\eqref{eq:erm:GP}, and let $h_{t}^{*}$ be the Doob's $h$-function defined as~\eqref{eq:h:function}. Then the following inequalities hold:
\begin{align*}
\bbE\Big[\|\what{h}_{t}^{\lambda}-h_{t}^{*}\|_{L^{2}(p_{T-t})}^{2}\Big]
&\lesssim \inf_{h_{t}\in\scrH_{t}}\Big\{\|h_{t}-h_{t}^{*}\|_{L^{2}(p_{T-t})}^{2}+\lambda\|\nabla h_{t}-\nabla h_{t}^{*}\|_{L^{2}(p_{T-t})}^{2}\Big\} \\
&\quad +\bar{B}^{2}\Big(\frac{\mathrm{VCdim}(\scrH_{t})}{n\log^{-1}n}\Big)^{\frac{1}{2}}+\frac{\lambda d\bar{B}^{2}}{\sigma_{T-t}^{4}}\Big(\frac{\mathrm{VCdim}(\nabla\scrH_{t})}{n\log^{-1}n}\Big)^{\frac{1}{2}}+\frac{\lambda^{2}d\bar{B}^{2}}{\sigma_{T-t}^{8}}, \\
\bbE\Big[\|\nabla\what{h}_{t}^{\lambda}-\nabla h_{t}^{*}\|_{L^{2}(p_{T-t})}^{2}\Big]
&\lesssim \inf_{h_{t}\in\scrH_{t}}\Big\{\frac{1}{\lambda}\|h_{t}-h_{t}^{*}\|_{L^{2}(p_{T-t})}^{2}+\|\nabla h_{t}-\nabla h_{t}^{*}\|_{L^{2}(p_{T-t})}^{2}\Big\} \\
&\quad +\frac{\bar{B}^{2}}{\lambda}\Big(\frac{\mathrm{VCdim}(\scrH_{t})}{n\log^{-1}n}\Big)^{\frac{1}{2}}+\frac{d\bar{B}^{2}}{\sigma_{T-t}^{4}}\Big(\frac{\mathrm{VCdim}(\nabla\scrH_{t})}{n\log^{-1}n}\Big)^{\frac{1}{2}}+\frac{\lambda d\bar{B}^{2}}{\sigma_{T-t}^{8}},
\end{align*} 
where the notation $\lesssim$ hides absolute constants.
\end{customlemma}

\begin{proof}[Proof of Lemma~\ref{lemma:oracle:inequality:guidance}]
It follows from Proposition~\ref{proposition:stability} and Lemma~\ref{lemma:oracle:risk} that 
\begin{align*}
&\bbE\Big[\|\what{h}_{t}^{\lambda}-h_{t}^{\lambda}\|_{L^{2}(p_{T-t})}^{2}\Big]+\lambda\bbE\Big[\|\nabla\what{h}_{t}^{\lambda}-\nabla h_{t}^{\lambda}\|_{L^{2}(p_{T-t})}^{2}\Big] \\
&= \bbE\Big[\calJ_{t}^{\lambda}(\what{h}_{t}^{\lambda})-\calJ_{t}^{\lambda}(h_{t}^{\lambda})\Big] \\
&\leq \inf_{h_{t}\in\scrH_{t}}\Big\{\|h_{t}-h_{t}^{\lambda}\|_{L^{2}(p_{T-t})}^{2}+\lambda\|\nabla h_{t}-\nabla h_{t}^{\lambda}\|_{L^{2}(p_{T-t})}^{2}\Big\} \\
&\quad +80\bar{B}^{2}\Big(\frac{\mathrm{VCdim}(\scrH_{t})}{n\log^{-1}n}\Big)^{\frac{1}{2}}+8\frac{\lambda d\bar{B}^{2}}{\sigma_{T-t}^{4}}\Big(\frac{\mathrm{VCdim}(\nabla\scrH_{t})}{n\log^{-1}n}\Big)^{\frac{1}{2}} \\
&\leq \inf_{h_{t}\in\scrH_{t}}\Big\{2\|h_{t}-h_{t}^{*}\|_{L^{2}(p_{T-t})}^{2}+2\lambda\|\nabla h_{t}-\nabla h_{t}^{*}\|_{L^{2}(p_{T-t})}^{2}\Big\} \\
&\quad +80\bar{B}^{2}\Big(\frac{\mathrm{VCdim}(\scrH_{t})}{n\log^{-1}n}\Big)^{\frac{1}{2}}+8\frac{\lambda d\bar{B}^{2}}{\sigma_{T-t}^{4}}\Big(\frac{\mathrm{VCdim}(\nabla\scrH_{t})}{n\log^{-1}n}\Big)^{\frac{1}{2}} \\
&\quad +2\|h_{t}^{*}-h_{t}^{\lambda}\|_{L^{2}(p_{T-t})}^{2}+2\lambda\|\nabla h_{t}^{*}-\nabla h_{t}^{\lambda}\|_{L^{2}(p_{T-t})}^{2},
\end{align*}
where the last inequality holds from the triangular inequality. Using the triangular inequality again, we have
\begin{equation}\label{eq:lemma:oracle:inequality:guidance:1}
\begin{aligned}
&\bbE\Big[\|\what{h}_{t}^{\lambda}-h_{t}^{*}\|_{L^{2}(p_{T-t})}^{2}\Big]+\lambda\bbE\Big[\|\nabla\what{h}_{t}^{\lambda}-\nabla h_{t}^{*}\|_{L^{2}(p_{T-t})}^{2}\Big] \\
&\leq 2\bbE\Big[\|\what{h}_{t}^{\lambda}-h_{t}^{\lambda}\|_{L^{2}(p_{T-t})}^{2}\Big]+2\lambda\bbE\Big[\|\nabla\what{h}_{t}^{\lambda}-\nabla h_{t}^{\lambda}\|_{L^{2}(p_{T-t})}^{2}\Big] \\
&\quad +2\|h_{t}^{*}-h_{t}^{\lambda}\|_{L^{2}(p_{T-t})}^{2}+2\lambda\|\nabla h_{t}^{*}-\nabla h_{t}^{\lambda}\|_{L^{2}(p_{T-t})}^{2} \\
&\leq \underbrace{\inf_{h_{t}\in\scrH_{t}}\Big\{4\|h_{t}-h_{t}^{*}\|_{L^{2}(p_{T-t})}^{2}+4\lambda\|\nabla h_{t}-\nabla h_{t}^{*}\|_{L^{2}(p_{T-t})}^{2}\Big\}}_{\text{approximation error}} \\
&\quad +\underbrace{160\bar{B}^{2}\Big(\frac{\mathrm{VCdim}(\scrH_{t})}{n\log^{-1}n}\Big)^{\frac{1}{2}}+16\frac{\lambda d\bar{B}^{2}}{\sigma_{T-t}^{4}}\Big(\frac{\mathrm{VCdim}(\nabla\scrH_{t})}{n\log^{-1}n}\Big)^{\frac{1}{2}}}_{\text{generalization error}} \\
&\quad +\underbrace{6\|h_{t}^{*}-h_{t}^{\lambda}\|_{L^{2}(p_{T-t})}^{2}+6\lambda\|\nabla h_{t}^{*}-\nabla h_{t}^{\lambda}\|_{L^{2}(p_{T-t})}^{2}}_{\text{regularization gap}}.
\end{aligned}
\end{equation}
Combining Proposition~\ref{proposition:regularization:gap} and Lemma~\ref{lemma:reg:gap:bound} yields
\begin{equation}\label{eq:lemma:oracle:inequality:guidance:2}
\begin{aligned}
\|h_{t}^{*}-h_{t}^{\lambda}\|_{L^{2}(p_{T-t})}^{2}
& \leq 144\lambda^{2}\frac{d\bar{B}^{2}}{\sigma_{T-t}^{8}} ,\\
\|\nabla h_{t}^{*}-\nabla h_{t}^{\lambda}\|_{L^{2}(p_{T-t})}^{2}
& \leq 144\lambda\frac{d\bar{B}^{2}}{\sigma_{T-t}^{8}}.
\end{aligned}  
\end{equation}
Substituting~\eqref{eq:lemma:oracle:inequality:guidance:2} into~\eqref{eq:lemma:oracle:inequality:guidance:1} completes the proof.
\end{proof}

\begin{lemma}\label{lemma:oracle:risk}
Suppose Assumptions~\ref{assumption:bounded:support} and~\ref{assumption:bounded:weight} hold. Let $t\in(0,T)$ and let $\scrH_{t}$ be a hypothesis class. Let $\what{h}_{t}^{\lambda}$ be the gradient-regularized empirical risk minimizer defined as~\eqref{eq:erm:GP}. Then we have
\begin{align*}
\bbE\big[\calJ_{t}^{\lambda}(\what{h}_{t}^{\lambda})-\calJ_{t}^{\lambda}(h_{t}^{\lambda})\big]
&\leq \inf_{h_{t}\in\scrH_{t}}\Big\{\|h_{t}-h_{t}^{\lambda}\|_{L^{2}(p_{T-t})}^{2}+\lambda\|\nabla h_{t}-\nabla h_{t}^{\lambda}\|_{L^{2}(p_{T-t})}^{2}\Big\} \\
&\quad +80\bar{B}^{2}\Big(\frac{\mathrm{VCdim}(\scrH_{t})}{n\log^{-1}n}\Big)^{\frac{1}{2}}+8\frac{\lambda d\bar{B}^{2}}{\sigma_{T-t}^{4}}\Big(\frac{\mathrm{VCdim}(\nabla\scrH_{t})}{n\log^{-1}n}\Big)^{\frac{1}{2}},
\end{align*}
where the notation $\lesssim$ hides absolute constants.
\end{lemma}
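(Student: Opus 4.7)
The plan is to combine a standard empirical-risk-minimization decomposition with the exact variational identity derived inside the proof of Proposition~\ref{proposition:stability}, and then to reduce the resulting empirical-process error to two uniform deviations that are controlled by the VC-type complexity of $\scrH_{t}$ and $\nabla\scrH_{t}$ respectively.

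First, for any fixed $h_{t}\in\scrH_{t}$, the ERM optimality $\what{\calJ}_{t}^{\lambda}(\what{h}_{t}^{\lambda})\leq\what{\calJ}_{t}^{\lambda}(h_{t})$ yields, after adding and subtracting $\what{\calJ}_{t}^{\lambda}(\what{h}_{t}^{\lambda})$ and $\calJ_{t}^{\lambda}(h_{t})$,
\begin{equation*}
\calJ_{t}^{\lambda}(\what{h}_{t}^{\lambda})-\calJ_{t}^{\lambda}(h_{t}^{\lambda})
\leq \bigl(\calJ_{t}^{\lambda}-\what{\calJ}_{t}^{\lambda}\bigr)(\what{h}_{t}^{\lambda})
+\bigl(\what{\calJ}_{t}^{\lambda}-\calJ_{t}^{\lambda}\bigr)(h_{t})
+\bigl(\calJ_{t}^{\lambda}(h_{t})-\calJ_{t}^{\lambda}(h_{t}^{\lambda})\bigr).
\end{equation*}
The middle summand has zero expectation (since $h_{t}$ is deterministic), while the third summand equals exactly $\|h_{t}-h_{t}^{\lambda}\|_{L^{2}(p_{T-t})}^{2}+\lambda\|\nabla h_{t}-\nabla h_{t}^{\lambda}\|_{L^{2}(p_{T-t})}^{2}$ by the expansion already used to establish Proposition~\ref{proposition:stability}. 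Passing to the infimum over $h_{t}\in\scrH_{t}$ recovers the approximation term stated in the lemma.

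Next I would upper-bound the remaining first summand by $\bbE[\sup_{g\in\scrH_{t}}|\calJ_{t}^{\lambda}(g)-\what{\calJ}_{t}^{\lambda}(g)|]$ and split its integrand additively into the regression part $(g-w)^{2}$ and the penalty part $\lambda\|\nabla g\|_{2}^{2}=\lambda\sum_{k=1}^{d}(D_{k}g)^{2}$. Under the boundedness built into the hypothesis class (consistent with~\eqref{eq:theorem:hypothesis}: $|g|\leq\bar{B}$ and $|D_{k}g|\leq 2\sigma_{T-t}^{-2}\bar{B}$) together with Assumption~\ref{assumption:bounded:weight}, the regression integrand lies in $[0,4\bar{B}^{2}]$ and each $(D_{k}g)^{2}$ lies in $[0,4\sigma_{T-t}^{-4}\bar{B}^{2}]$. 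Standard uniform convergence for bounded VC-subgraph classes, via Haussler's covering-number estimate combined with a chaining/symmetrization argument, then yields
\begin{align*}
\bbE\Bigl[\sup_{g\in\scrH_{t}}|\calJ_{t}(g)-\what{\calJ}_{t}(g)|\Bigr]
&\lesssim \bar{B}^{2}\Bigl(\tfrac{\mathrm{VCdim}(\scrH_{t})\log n}{n}\Bigr)^{1/2}, \\
\bbE\Bigl[\sup_{g\in\scrH_{t}}\Bigl|\bbE\|\nabla g\|_{2}^{2}-\tfrac{1}{n}\sum_{i=1}^{n}\|\nabla g(\mu_{T-t}\mX_{0}^{i}+\sigma_{T-t}\vepsilon^{i})\|_{2}^{2}\Bigr|\Bigr]
&\lesssim \tfrac{d\bar{B}^{2}}{\sigma_{T-t}^{4}}\Bigl(\tfrac{\mathrm{VCdim}(\nabla\scrH_{t})\log n}{n}\Bigr)^{1/2},
\end{align*}
where the second display follows from a union bound over the $d$ coordinates using the monotonicity $\mathrm{VCdim}(D_{k}\scrH_{t})\leq\mathrm{VCdim}(\nabla\scrH_{t})$. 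Multiplying the gradient display by $\lambda$ and summing with the regression display matches the generalization part claimed in the lemma.

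The technically delicate step is transferring the VC-dimension control of $\scrH_{t}$ and $\nabla\scrH_{t}$ to the quadratically transformed loss classes $\{(g-w)^{2}:g\in\scrH_{t}\}$ and $\{(D_{k}g)^{2}:g\in\scrH_{t}\}$, because squaring inflates the Lipschitz constant by factors $2\bar{B}$ and $4\sigma_{T-t}^{-2}\bar{B}$; this is handled by the Lipschitz contraction principle (or equivalently by direct pseudo-dimension bookkeeping on subgraph classes) and is exactly the source of the multiplicative prefactors $\bar{B}^{2}$ and $\sigma_{T-t}^{-4}\bar{B}^{2}$ in the two deviation bounds. The $\log n$ factor arises from the familiar chaining integral over covers of VC-subgraph classes. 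Combining this with the decomposition in the first two paragraphs yields the lemma; no ingredient beyond classical empirical process machinery is required, so the proof reduces to careful bookkeeping of the boundedness constants.
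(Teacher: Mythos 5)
Your argument is correct and follows essentially the same route as the paper: the same ERM decomposition into generalization plus approximation terms, the same exact identity $\calJ_t^{\lambda}(h_t)-\calJ_t^{\lambda}(h_t^{\lambda})=\|h_t-h_t^{\lambda}\|_{L^2}^2+\lambda\|\nabla h_t-\nabla h_t^{\lambda}\|_{L^2}^2$ for the approximation part, the same additive split of the supremum deviation into a regression piece and a $\lambda$-weighted gradient piece, and the same VC/Rademacher machinery (symmetrization, contraction, Massart/Haussler covering bounds) for each.

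The one place you diverge is the regression piece. You apply the Lipschitz contraction principle directly to the class $\{(\vx,y)\mapsto(g(\vx)-y)^2\}$ with $y=w(\mX_0)$ treated as data, using the uniform bound $|g(\vx)-y|\le 2\bar B$ to get Lipschitz constant $O(\bar B)$ and hence the $\bar B^2$ prefactor. The paper instead first inserts $h_t^*$: it expands $(h_t-w)^2=(h_t-h_t^*)^2-2(h_t-h_t^*)\xi+\xi^2$ with $\xi=w(\mX_0)-h_t^*(\mX_{T-t})$, applies contraction to the noise-free squared class $\{(h_t-h_t^*)^2\}$ (Lemma~\ref{lemma:generalization:weight}), handles the cross term $\frac{1}{n}\sum h_t(\mX^i_{T-t})\xi_i$ through a bounded-noise symmetrization (Lemmas~\ref{lemma:bounded:complexity:rademacher} and~\ref{lemma:complexity:weight}), and lets the $\xi^2$ term cancel in expectation against the variance constant $V_t^2$ from Proposition~\ref{proposition:regression:constant}. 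The two approaches land on the same $\bar B^2(\mathrm{VCdim}(\scrH_t)\log n/n)^{1/2}$ shape; yours is a bit more streamlined, while the paper's three-way split keeps the noise and the estimation error cleanly separated and makes it easy to track explicit constants ($64\bar B^2+16\bar B^2=80\bar B^2$). Both arguments implicitly rely on the sup and derivative bounds in the hypothesis-class constraints of~\eqref{eq:theorem:hypothesis} even though the lemma statement leaves $\scrH_t$ generic; you note this explicitly, which is a fair reading of the paper's intent.
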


\begin{proof}[Proof of Lemma~\ref{lemma:oracle:risk}]
For any $h_{t}\in\scrH_{t}$, we have 
\begin{align*}
&\calJ_{t}^{\lambda}(\what{h}_{t}^{\lambda})-\calJ_{t}^{\lambda}(h_{t}^{\lambda}) \\
&=\calJ_{t}^{\lambda}(\what{h}_{t}^{\lambda})-\what{\calJ}_{t}^{\lambda}(\what{h}_{t}^{\lambda})+\what{\calJ}_{t}^{\lambda}(\what{h}_{t}^{\lambda})-\what{\calJ}_{t}^{\lambda}(h_{t})+\what{\calJ}_{t}^{\lambda}(h_{t})-\calJ_{t}^{\lambda}(h_{t})+\calJ_{t}^{\lambda}(h_{t})-\calJ_{t}^{\lambda}(h_{t}^{\lambda}) \\
&\leq \calJ_{t}^{\lambda}(\what{h}_{t}^{\lambda})-\what{\calJ}_{t}^{\lambda}(\what{h}_{t}^{\lambda})+\what{\calJ}_{t}^{\lambda}(h_{t})-\calJ_{t}^{\lambda}(h_{t})+\calJ_{t}^{\lambda}(h_{t})-\calJ_{t}^{\lambda}(h_{t}^{\lambda}),
\end{align*}
where the inequality holds from that fact that $\what{h}_{t}^{\lambda}$ is the minimizer of $\what{\calJ}_{t}^{\lambda}$ over the hypothesis class $\scrH_{t}$. Taking expectation on both sides of the inequality yields
\begin{align*}
\bbE\big[\calJ_{t}^{\lambda}(\what{h}_{t}^{\lambda})-\calJ_{t}^{\lambda}(h_{t}^{\lambda})\big]
&= \bbE\big[\calJ_{t}^{\lambda}(\what{h}_{t}^{\lambda})-\what{\calJ}_{t}^{\lambda}(\what{h}_{t}^{\lambda})\big]+\calJ_{t}^{\lambda}(h_{t})-\calJ_{t}^{\lambda}(h_{t}^{\lambda}) \\
&\leq \bbE\Bigg[\sup_{h_{t}\in\scrH_{t}}\calJ_{t}^{\lambda}(h_{t})-\what{\calJ}_{t}^{\lambda}(h_{t})\Bigg]+\calJ_{t}^{\lambda}(h_{t})-\calJ_{t}^{\lambda}(h_{t}^{\lambda}),
\end{align*}
where the equality holds from $\bbE[\what{\calJ}_{t}(h_{t})]=\calJ_{t}(h_{t})$ for each fixed $h_{t}$, and the inequality is due to $h_{t}\in\scrH_{t}$. By taking infimum on both sides of the inequality with respect to $h_{t}\in\scrH_{t}$, we have 
\begin{equation}\label{eq:lemma:oracle:risk:1}
\bbE\big[\calJ_{t}^{\lambda}(\what{h}_{t}^{\lambda})-\calJ_{t}^{\lambda}(h_{t}^{\lambda})\big]\leq \underbrace{\bbE\Bigg[\sup_{h_{t}\in\scrH_{t}}\calJ_{t}^{\lambda}(h_{t})-\what{\calJ}_{t}^{\lambda}(h_{t})\Bigg]}_{\text{generalization error}}+\underbrace{\inf_{h_{t}\in\scrH_{t}}\Big\{\calJ_{t}^{\lambda}(h_{t})-\calJ_{t}^{\lambda}(h_{t}^{\lambda})\Big\}}_{\text{approximation error}}.
\end{equation}
The rest of the proof is divided into three steps. 

\par\noindent\emph{Step 1. Generalization error in~\eqref{eq:lemma:oracle:risk:1}.}

\par For the generalization error in~\eqref{eq:lemma:oracle:risk:1}, we have the following decomposition:
\begin{equation}\label{eq:lemma:oracle:risk:G1}
\begin{aligned}
&\bbE\Bigg[\sup_{h_{t}\in\scrH_{t}}\calJ_{t}^{\lambda}(h_{t})-\what{\calJ}_{t}^{\lambda}(h_{t})\Bigg] \\
&=\underbrace{\bbE\Bigg[\sup_{h_{t}\in\scrH_{t}}\bbE\big[(h_{t}(\mX_{T-t})-w(\mX_{0}))^{2}\big]-\frac{1}{n}\sum_{i=1}^{n}(h_{t}(\mX_{T-t}^{i})-w(\mX_{0}^{i}))^{2}\Bigg]}_{\text{(G1)}} \\
&\quad +\lambda\underbrace{\bbE\Bigg[\sup_{h_{t}\in\scrH_{t}}\bbE\big[\|\nabla h_{t}(\mX_{T-t})\|_{2}^{2}\big]-\frac{1}{n}\sum_{i=1}^{n}\|\nabla h_{t}(\mX_{T-t}^{i})\|_{2}^{2}\Bigg]}_{\text{(G2)}}.
\end{aligned}
\end{equation}
We start from the term (G1) in~\eqref{eq:lemma:oracle:risk:G1}. First, recall Proposition~\ref{proposition:regression:constant}: 
\begin{equation}\label{eq:lemma:oracle:risk:G2}
\bbE\big[(h_{t}(\mX_{T-t})-w(\mX_{0}))^{2}\big]=\bbE\big[(h_{t}(\mX_{T-t})-h_{t}^{*}(\mX_{T-t}))^{2}\big]+V_{t}.
\end{equation}
For the empirical counterpart, it is straightforward that 
\begin{align*}
&-\frac{1}{n}\sum_{i=1}^{n}(h_{t}(\mX_{T-t}^{i})-w(\mX_{0}^{i}))^{2} \\
&=-\frac{1}{n}\sum_{i=1}^{n}(h_{t}(\mX_{T-t}^{i})-h_{t}^{*}(\mX_{T-t}^{i})+\bbE[w(\mX_{0}^{i})|\mX_{T-t}^{i}]-w(\mX_{0}^{i}))^{2} \\
&=-\frac{1}{n}\sum_{i=1}^{n}(h_{t}(\mX_{T-t}^{i})-h_{t}^{*}(\mX_{T-t}^{i}))^{2}-\frac{1}{n}\sum_{i=1}^{n}(\bbE[w(\mX_{0}^{i})|\mX_{T-t}^{i}]-w(\mX_{0}^{i}))^{2} \\
&\quad -\frac{2}{n}\sum_{i=1}^{n}(h_{t}(\mX_{T-t}^{i})-h_{t}^{*}(\mX_{T-t}^{i}))(\bbE[w(\mX_{0}^{i})|\mX_{T-t}^{i}]-w(\mX_{0}^{i})),
\end{align*}
where the first equality invokes the definition of the Doob's $h$-function $h_{t}^{*}$ in~\eqref{eq:h:function}. Taking expectation with respect to $\{(\mX_{0}^{i},\mX_{T-t}^{i})\}_{i=1}^{n}$ yields
\begin{align}
&\bbE\Bigg[\sup_{h_{t}\in\scrH_{t}}\bbE\big[(h_{t}(\mX_{T-t})-w(\mX_{0}))^{2}\big]-\frac{1}{n}\sum_{i=1}^{n}(h_{t}(\mX_{T-t}^{i})-w(\mX_{0}^{i}))^{2}\Bigg] \nonumber \\
&=\bbE\Bigg[\sup_{h_{t}\in\scrH_{t}}\bbE\big[(h_{t}(\mX_{T-t})-h_{t}^{*}(\mX_{T-t}))^{2}\big]-\frac{1}{n}\sum_{i=1}^{n}(h_{t}(\mX_{T-t}^{i})-h_{t}^{*}(\mX_{T-t}^{i}))^{2}\Bigg] \nonumber \\
&\quad+2\bbE\Bigg[\frac{1}{n}\sum_{i=1}^{n}h_{t}(\mX_{T-t}^{i})(\bbE[w(\mX_{0}^{i})|\mX_{T-t}^{i}]-w(\mX_{0}^{i}))\Bigg] \nonumber \\
&\leq 64\bar{B}^{2}\Big(\frac{\mathrm{VCdim}(\scrH_{t})}{n\log^{-1}n}\Big)^{\frac{1}{2}}+16\bar{B}^{2}\Big(\frac{\mathrm{VCdim}(\scrH_{t})}{n\log^{-1}n}\Big)^{\frac{1}{2}}=80\bar{B}^{2}\Big(\frac{\mathrm{VCdim}(\scrH_{t})}{n\log^{-1}n}\Big)^{\frac{1}{2}}, \label{eq:lemma:oracle:risk:G3}
\end{align}
where the equality invokes~\eqref{eq:lemma:oracle:risk:G2}, and the inequality holds from Lemmas~\ref{lemma:generalization:weight} and~\ref{lemma:complexity:weight}. For the term (G2) in~\eqref{eq:lemma:oracle:risk:G1}, using Lemma~\ref{lemma:generalization:weight:grad} implies 
\begin{equation}\label{eq:lemma:oracle:risk:G4}
\bbE\Bigg[\sup_{h_{t}\in\scrH_{t}}\bbE\big[\|\nabla h_{t}(\mX_{T-t})\|_{2}^{2}\big]-\frac{1}{n}\sum_{i=1}^{n}\|\nabla h_{t}(\mX_{T-t}^{i})\|_{2}^{2}\Bigg] \leq \frac{8d\bar{B}^{2}}{\sigma_{T-t}^{4}}\Big(\frac{\mathrm{VCdim}(\nabla\scrH_{t})}{n\log^{-1}n}\Big)^{\frac{1}{2}}.
\end{equation}
Substituting~\eqref{eq:lemma:oracle:risk:G3} and~\eqref{eq:lemma:oracle:risk:G4} into~\eqref{eq:lemma:oracle:risk:G1} yields a generalization error bound:
\begin{equation}\label{eq:lemma:oracle:risk:G5}
\bbE\Bigg[\sup_{h_{t}\in\scrH_{t}}\calJ_{t}^{\lambda}(h_{t})-\what{\calJ}_{t}^{\lambda}(h_{t})\Bigg] \leq 80\bar{B}^{2}\Big(\frac{\mathrm{VCdim}(\scrH_{t})}{n\log^{-1}n}\Big)^{\frac{1}{2}}+\frac{8d\bar{B}^{2}}{\sigma_{T-t}^{4}}\Big(\frac{\mathrm{VCdim}(\nabla\scrH_{t})}{n\log^{-1}n}\Big)^{\frac{1}{2}}.
\end{equation}

\par\noindent\emph{Step 2. Approximation error in~\eqref{eq:lemma:oracle:risk:1}.}
According to the proof of Proposition~\ref{proposition:stability}, we have 
\begin{equation}\label{eq:lemma:oracle:risk:A1}
\calJ_{t}^{\lambda}(h_{t})-\calJ_{t}^{\lambda}(h_{t}^{\lambda}) = \|h_{t}-h_{t}^{\lambda}\|_{L^{2}(p_{T-t})}^{2}+\lambda\|\nabla h_{t}-\nabla h_{t}^{\lambda}\|_{L^{2}(p_{T-t})}^{2}.
\end{equation}

\par\noindent\emph{Step 3. Conclusion.}
Substituting~\eqref{eq:lemma:oracle:risk:G5} and~\eqref{eq:lemma:oracle:risk:A1} into~\eqref{eq:lemma:oracle:risk:1} completes the proof.
\end{proof}

\subsection{Auxilary lemmas for the oracle inequality}

\par According to the standard techniques of symmetrization~\citep[Theorem 3.3]{mohri2018foundations}, we have the following generalization bounds. We introduce the concept of Rademacher complexity~\citep{Bartlett2002Rademacher,mohri2018foundations}, which is crucial for analyzing the generalization error.

\begin{definition}[Rademacher complexity]
Let $\scrH$ be a function class, and let $\mX^{1:n}\coloneq(\mX^{1},\ldots,\mX^{n})$ be a set of samples. The empirical Rademacher complexity of $\scrH$ with respect to $\mX_{1:n}$ is defined as
\begin{equation*}
\frakR(\scrH|\mX^{1:n})\coloneq\bbE\Big[\sup_{h\in\scrH}\frac{1}{n}\sum_{i=1}^{n}\varepsilon_{i}h(\mX^{i})\Big|\mX^{1:n}\Big],
\end{equation*} 
where $\varepsilon_{1},\ldots,\varepsilon_{n}$ are i.i.d. Rademacher random variables. The Rademacher complexity of $\scrH$ is the expectation of empirical Rademacher complexity with respect to the distribution of $\mX^{1:n}$ defined as 
\begin{equation*}
\frakR_{n}(\scrH)\coloneq\bbE[\what{\frakR}(\scrH|\mX^{1:n})]=\bbE\Big[\sup_{h\in\scrH}\frac{1}{n}\sum_{i=1}^{n}\varepsilon_{i}h(\mX^{i})\Big].
\end{equation*}
\end{definition}

\begin{lemma}\label{lemma:generalization:weight}
Suppose Assumptions~\ref{assumption:bounded:support} and~\ref{assumption:bounded:weight} hold. Then 
\begin{equation*}
\bbE\Bigg[\sup_{h_{t}\in\scrH_{t}}\|h_{t}-h_{t}^{*}\|_{L^{2}(p_{T-t})}^{2}-\frac{1}{n}\sum_{i=1}^{n}(h_{t}(\mX_{T-t}^{i})-h_{t}^{*}(\mX_{T-t}^{i}))^{2}\Bigg]\leq 64\bar{B}^{2}\Big(\frac{\mathrm{VCdim}(\scrH_{t})}{n\log^{-1}n}\Big)^{\frac{1}{2}},
\end{equation*}
where the expectation is taken with respect to $\mX_{T-t}^{1},\ldots,\mX_{T-t}^{n}\sim^{\iid}p_{T-t}$.
\end{lemma}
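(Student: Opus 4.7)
The plan is to deduce this generalization bound from three standard ingredients: symmetrization, Talagrand's contraction inequality, and a VC-type bound on the Rademacher complexity of $\scrH_t$. All functions involved are uniformly bounded: by construction every $h_t\in\scrH_t$ satisfies $\underline{B}\le h_t\le \bar{B}$, and by Proposition~\ref{proposition:regularity:h:function} the Doob's $h$-function satisfies $\underline{B}\le h_t^*\le \bar{B}$, so $|h_t(\vx)-h_t^*(\vx)|\le 2\bar{B}$ and $(h_t-h_t^*)^2\le 4\bar{B}^2$ pointwise. These uniform bounds will drive every step of the argument.

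First I will apply the standard symmetrization lemma (see e.g.\ \citet{mohri2018foundations}, Theorem~3.3) to the function class $\calG\coloneq\{(h_t-h_t^*)^2:h_t\in\scrH_t\}$. This yields
\begin{equation*}
\bbE\Bigg[\sup_{h_t\in\scrH_t}\|h_t-h_t^*\|_{L^2(p_{T-t})}^2-\frac{1}{n}\sum_{i=1}^{n}(h_t(\mX_{T-t}^i)-h_t^*(\mX_{T-t}^i))^2\Bigg]\le 2\frakR_n(\calG).
\end{equation*}
Next, since the map $u\mapsto u^2$ is $4\bar{B}$-Lipschitz on $[-2\bar{B},2\bar{B}]$, Talagrand's contraction principle gives $\frakR_n(\calG)\le 4\bar{B}\,\frakR_n(\calG_0)$, where $\calG_0\coloneq\{h_t-h_t^*:h_t\in\scrH_t\}$. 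Because translating a function class by the fixed function $-h_t^*$ leaves the Rademacher average invariant (the extra term $\frac{1}{n}\sum_i\varepsilon_i h_t^*(\mX_{T-t}^i)$ has zero expectation), we conclude $\frakR_n(\calG_0)=\frakR_n(\scrH_t)$.

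Finally, I will bound $\frakR_n(\scrH_t)$ by its VC dimension. Since $\scrH_t$ is uniformly bounded by $\bar{B}$, a standard chaining bound (Dudley's entropy integral combined with the Haussler-type covering estimate for VC-subgraph classes) yields
\begin{equation*}
\frakR_n(\scrH_t)\le 8\bar{B}\Big(\frac{\mathrm{VCdim}(\scrH_t)\log n}{n}\Big)^{\frac{1}{2}}.
\end{equation*}
Chaining together the three estimates produces the bound $2\cdot 4\bar{B}\cdot 8\bar{B}\sqrt{\mathrm{VCdim}(\scrH_t)\log n/n}=64\bar{B}^2\sqrt{\mathrm{VCdim}(\scrH_t)\log n/n}$, exactly the claim.

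The only step requiring any care is the VC-to-Rademacher conversion: we need the explicit constant $8$ (so the final constant $64=2\cdot 4\cdot 8$ matches), which must be tracked through Dudley's integral using the covering number estimate $\calN(\varepsilon,\scrH_t,L^2(\nu))\lesssim (\bar{B}/\varepsilon)^{c\cdot\mathrm{VCdim}(\scrH_t)}$. Everything else (symmetrization, contraction, translation invariance) is a direct invocation of textbook results and the uniform boundedness established in Proposition~\ref{proposition:regularity:h:function}.
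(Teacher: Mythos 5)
Your proof follows the same skeleton as the paper's: symmetrization to twice the Rademacher complexity of the squared-loss class, Ledoux--Talagrand contraction with Lipschitz constant $4\bar{B}$ (plus translation invariance to drop the fixed offset $h_t^*$), and then a VC-to-Rademacher conversion. The only real difference is how that last conversion is carried out. You propose Dudley's entropy integral together with a Haussler-type $L^2$ covering estimate; this route actually yields the sharper $\frakR_n(\scrH_t)\lesssim\bar{B}\sqrt{\mathrm{VCdim}(\scrH_t)/n}$ with no $\log n$ factor at all, so it certainly implies the displayed bound, but the explicit constant $8$ and the $\log n$ would then have to be inserted by hand to match the statement. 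The paper instead covers $\scrH_t$ in $L^\infty$ at the single scale $\delta=\bar{B}/\sqrt{n}$, applies Massart's finite-class lemma, and controls the covering number via Anthony--Bartlett Theorem~12.2; the $\log n$ factor and the constant $8$ then fall directly out of the covering number evaluated at resolution $1/\sqrt{n}$, with no chaining. Both routes are valid: the paper's single-scale argument is more elementary and makes the constant and the $\log n$ dependence transparent, while yours is asymptotically tighter but, as you concede, needs genuine bookkeeping to recover the advertised form.
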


\begin{proof}[Proof of Lemma~\ref{lemma:generalization:weight}]
Let $\mX_{T-t}^{1,\prime},\ldots,\mX_{T-t}^{n,\prime}$ be independent copies of $\mX_{T-t}^{1},\ldots,\mX_{T-t}^{n}$. Let $\varepsilon_{1},\ldots,\varepsilon_{n}$ be a set of i.i.d. Rademacher variables, which are independent of $\mX_{T-t}^{1,\prime},\ldots,\mX_{T-t}^{n,\prime}$ and $\mX_{T-t}^{1},\ldots,\mX_{T-t}^{n}$. It follows that 
\begin{align}
&\bbE\Bigg[\sup_{h_{t}\in\scrH_{t}}\|h_{t}-h_{t}^{*}\|_{L^2(p_{T-t})}^{2}-\frac{1}{n}\sum_{i=1}^{n}(h_{t}(\mX_{T-t}^{i})-h_{t}^{*}(\mX_{T-t}^{i}))^{2}\Bigg] \nonumber \\
&\leq\bbE\Bigg[\sup_{h_{t}\in\scrH_{t}}\bbE\big[(h_{t}(\mX_{T-t})-h_{t}^{*}(\mX_{T-t}))^{2}\big]-\frac{1}{n}\sum_{i=1}^{n}(h_{t}(\mX_{T-t}^{i})-h_{t}^{*}(\mX_{T-t}^{i}))^{2}\Bigg] \nonumber \\
&=\bbE\Bigg[\sup_{h_{t}\in\scrH_{t}}\bbE\Bigg[\frac{1}{n}\sum_{i=1}^{n}(h_{t}(\mX_{T-t}^{i,\prime})-h_{t}^{*}(\mX_{T-t}^{i,\prime}))^{2}\Bigg]-\frac{1}{n}\sum_{i=1}^{n}(h_{t}(\mX_{T-t}^{i})-h_{t}^{*}(\mX_{T-t}^{i}))^{2}\Bigg] \nonumber \\
&\leq\bbE\Bigg[\sup_{h_{t}\in\scrH_{t}}\frac{1}{n}\sum_{i=1}^{n}\Big\{(h_{t}(\mX_{T-t}^{i,\prime})-h_{t}^{*}(\mX_{T-t}^{i,\prime}))^{2}-(h_{t}(\mX_t^{i})-h_{t}^{*}(\mX_t^{i}))^{2}\Big\}\Bigg] \nonumber \\
&=\bbE\Bigg[\sup_{h_{t}\in\scrH_{t}}\frac{1}{n}\sum_{i=1}^{n}\varepsilon_{i}\Big\{(h_{t}(\mX_{T-t}^{i,\prime})-h_{t}^{*}(\mX_{T-t}^{i,\prime}))^{2}-(h_{t}(\mX_{T-t}^{i})-h_{t}^{*}(\mX_{T-t}^{i}))^{2}\Big\}\Bigg] \nonumber \\
&=2\bbE\Bigg[\sup_{h_{t}\in\scrH_{t}}\frac{1}{n}\sum_{i=1}^{n}\varepsilon_{i}(h_{t}(\mX_{T-t}^{i})-h_{t}^{*}(\mX_{T-t}^{i}))^{2}\Bigg]\leq 8\bar{B}\frakR_{n}(\scrH_{t}), \label{eq:lemma:generalization:weight:1}
\end{align}
where the second inequality holds from Jensen's inequality, and last inequality is due to Ledoux-Talagrand contraction inequality~\citep[Lemma 5.7]{mohri2018foundations} and Lemma~\ref{lemma:section:error:guidance:bound}. 

\par It remains to bound the Rademacher complexity $\frakR_{n}(\scrH_{t})$ in~\eqref{eq:lemma:generalization:weight:1}. Let $\delta>0$ and $\scrH_{t}^{\delta}$ be an $L^{\infty}(\mX_{T-t}^{1:n})$ $\delta$-cover of $\scrH_{t}$ satisfying $|\scrH_{t}^{\delta}|=N(\delta,\scrH_{t},L^{\infty}(\mX_{T-t}^{1:n}))$. Then for any $h_{t}\in\scrH_{t}$, there exists $h_{t}^{\delta}\in\scrH_{t}^{\delta}$ such that 
\begin{equation*}
\frac{1}{n}\sum_{i=1}^{n}\varepsilon_{i}h_{t}(\mX_{t}^{i})-\frac{1}{n}\sum_{i=1}^{n}\varepsilon_{i}h_{t}^{\delta}(\mX_{t}^{i})\leq\delta.
\end{equation*}
As a consequence, 
\begin{align*}
\frakR(\scrH_{t} \mid \mX_{T-t}^{1:n})
&=\bbE\Bigg[\sup_{h_{t}\in\scrH_{t}}\frac{1}{n}\sum_{i=1}^{n}\varepsilon_{i}h_{t}(\mX_t^{i}) \mid \mX_{T-t}^{1:n}\Bigg] \\
&\leq\bbE\Bigg[\sup_{h_{t}^{\delta}\in\scrH_{t}^{\delta}}\frac{1}{n}\sum_{i=1}^{n}\varepsilon_{i}h_{t}^{\delta}(\mX_t^{i}) \mid \mX_{T-t}^{1:n}\Bigg]+\delta  \\
&\leq\bar{B}\Big(\frac{2\log|\scrH_{t}^{\delta}|}{n}\Big)^\frac{1}{2}+\delta  \\
&=\bar{B}\Big(\frac{2\log N(\delta,\scrH_{t},L^{\infty}(\mX_{T-t}^{1:n}))}{n}\Big)^\frac{1}{2}+\delta,  
\end{align*}
where $\varepsilon_{1},\ldots,\varepsilon_{n}$ are a sequence of i.i.d. Rademacher variables, the second inequality follows from Massart's lemma~\citep[Theorem 3.7]{mohri2018foundations}, and the equality is due to the definition of $\scrH_{t}^{\delta}$. Then setting $\delta=\bar{B}/\sqrt{n}$ yields
\begin{equation}\label{eq:lemma:generalization:weight:2}
\frakR(\scrH_{t} \mid \mX_{T-t}^{1:n})\leq\bar{B}\Big(\frac{2\log N(B/\sqrt{n},\scrH_{t},L^{\infty}(\mX_{T-t}^{1:n}))}{n}\Big)^{\frac{1}{2}}\leq 8\bar{B}\Big(\frac{\mathrm{VCdim}(\scrH_{t})}{n\log^{-1}n}\Big)^{\frac{1}{2}},
\end{equation}
where the last inequality holds from~\citet[Theorem 12.2]{Anthony1999neural}. Substituting~\eqref{eq:lemma:generalization:weight:2} into~\eqref{eq:lemma:generalization:weight:1} completes the proof.
\end{proof}

\par By a similar argument as Lemma~\ref{lemma:generalization:weight}, we have the following generalization bounds for the gradient term.

\begin{lemma}\label{lemma:generalization:weight:grad}
Suppose Assumptions~\ref{assumption:bounded:support} and~\ref{assumption:bounded:weight} hold. Then
\begin{align*}
\bbE\Bigg[\sup_{h_{t}\in\scrH_{t}}\|\nabla h_{t}\|_{L^{2}(p_{T-t})}^{2}-\frac{1}{n}\sum_{i=1}^{n}\|\nabla h_{t}(\mX_{T-t}^{i})\|_{2}^{2}\Bigg] \leq \frac{8d\bar{B}^{2}}{\sigma_{T-t}^{4}}\Big(\frac{\mathrm{VCdim}(\nabla\scrH_{t})}{n\log^{-1}n}\Big)^{\frac{1}{2}},
\end{align*}
where the expectation is taken with respect to $\mX_{T-t}^{1},\ldots,\mX_{T-t}^{i}\sim^{\iid}p_{T-t}$.
\end{lemma}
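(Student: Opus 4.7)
The plan is to adapt the structure used in the proof of Lemma~\ref{lemma:generalization:weight} to the squared gradient-norm functional. First, I would apply standard symmetrization: introduce independent ghost samples $\mX_{T-t}^{1,\prime},\ldots,\mX_{T-t}^{n,\prime}$ and independent Rademacher variables $\varepsilon_{1},\ldots,\varepsilon_{n}$, and use Jensen's inequality to exchange the supremum with the inner expectation. This reduces the left-hand side to at most $2\,\frakR_{n}(\{\vx\mapsto\|\nabla h_{t}(\vx)\|_{2}^{2}:h_{t}\in\scrH_{t}\})$.

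Second, I would decompose the squared gradient coordinate-wise as $\|\nabla h_{t}(\vx)\|_{2}^{2}=\sum_{k=1}^{d}(D_{k}h_{t}(\vx))^{2}$. Subadditivity of the supremum dominates the Rademacher complexity by $\sum_{k=1}^{d}\frakR_{n}(\{(D_{k}h_{t})^{2}:h_{t}\in\scrH_{t}\})$. Since the hypothesis class under consideration enforces $|D_{k}h_{t}(\vx)|\leq 2\bar{B}/\sigma_{T-t}^{2}$ (mirroring the sharp bound on $h_{t}^{*}$ from Proposition~\ref{proposition:regularity:h:function} and the explicit constraint in~\eqref{eq:theorem:hypothesis}), the scalar map $z\mapsto z^{2}$ restricted to the relevant range is Lipschitz with constant $4\bar{B}/\sigma_{T-t}^{2}$. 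The Ledoux-Talagrand contraction inequality \citep[Lemma~5.7]{mohri2018foundations} then reduces each summand to $\frakR_{n}(D_{k}\scrH_{t})$ at the cost of a prefactor of order $\bar{B}/\sigma_{T-t}^{2}$.

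Third, I would bound each $\frakR_{n}(D_{k}\scrH_{t})$ using the exact same chaining template employed at the end of Lemma~\ref{lemma:generalization:weight}: pass to an $L^{\infty}(\mX_{T-t}^{1:n})$ $\delta$-cover of $D_{k}\scrH_{t}$, invoke Massart's lemma with the uniform bound $2\bar{B}/\sigma_{T-t}^{2}$, set $\delta\asymp (\bar{B}/\sigma_{T-t}^{2})n^{-1/2}$, and convert the covering number into VC dimension via~\citet[Theorem 12.2]{Anthony1999neural}. This yields $\frakR_{n}(D_{k}\scrH_{t})\lesssim(\bar{B}/\sigma_{T-t}^{2})(\mathrm{VCdim}(D_{k}\scrH_{t})/(n\log^{-1}n))^{1/2}$. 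Summing the $d$ coordinate contributions and invoking the definition $\mathrm{VCdim}(\nabla\scrH_{t})\coloneq\max_{1\leq k\leq d}\mathrm{VCdim}(D_{k}\scrH_{t})$ gives the advertised bound, with the constant $8d\bar{B}^{2}/\sigma_{T-t}^{4}$ assembled from a $\sigma_{T-t}^{-2}$ contributed by the contraction step and another $\sigma_{T-t}^{-2}$ contributed by the sup-norm bound inside the covering argument.

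Conceptually, no step is new: every ingredient has already appeared in Lemma~\ref{lemma:generalization:weight}. The only point demanding care is bookkeeping the $\sigma_{T-t}^{-4}$ prefactor, which compounds two separate $\sigma_{T-t}^{-2}$ losses -- one from Lipschitzness of $z\mapsto z^{2}$ on the gradient range, one from the uniform envelope of the gradient class -- both of which hinge on the explicit derivative bound $|D_{k}h_{t}|\leq 2\bar{B}/\sigma_{T-t}^{2}$ built into the hypothesis class. I expect this tracking of constants to be the main (mild) obstacle; once it is handled, the argument is a direct coordinate-wise copy of the earlier proof.
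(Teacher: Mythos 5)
Your proposal is correct and follows essentially the same route as the paper's proof: decompose $\|\nabla h_t\|_2^2$ coordinate-wise into $\sum_k (D_k h_t)^2$, then for each coordinate repeat the symmetrization--contraction--Massart--VC chain from Lemma~\ref{lemma:generalization:weight} with the envelope $|D_k h_t|\le 2\bar B/\sigma_{T-t}^{2}$ replacing the envelope $\bar B$. The only cosmetic difference is ordering: the paper splits into coordinates \emph{before} symmetrizing (via subadditivity of the supremum applied inside the expectation), whereas you symmetrize the full squared-gradient functional first and then split the Rademacher complexity coordinate-wise; both are valid. You are also right that the lemma's stated bound silently relies on the derivative constraint built into $\scrH_t$ in~\eqref{eq:theorem:hypothesis} (the paper invokes ``a similar argument as Lemma~\ref{lemma:generalization:weight}'' without making this explicit), and your bookkeeping correctly identifies that the $\sigma_{T-t}^{-4}$ factor compounds one $\sigma_{T-t}^{-2}$ from the contraction Lipschitz constant and one from the Massart envelope.
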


\begin{proof}[Proof of Lemma~\ref{lemma:generalization:weight:grad}]
It is straightforward that 
\begin{align*}
&\bbE\Bigg[\sup_{h_{t}\in\scrH_{t}}\|\nabla h_{t}\|_{L^{2}(p_{T-t})}^{2}-\frac{1}{n}\sum_{i=1}^{n}\|\nabla h_{t}(\mX_{T-t}^{i})\|_{2}^{2}\Bigg] \\
&\leq \sum_{k=1}^{d}\bbE\Bigg[\sup_{h_{t}\in\scrH_{t}}\|D_{k}h_{t}\|_{L^{2}(p_{T-t})}^{2}-\frac{1}{n}\sum_{i=1}^{n}(D_{k}h_{t}(\mX_{T-t}^{i}))^{2}\Bigg] \\
&\leq \sum_{k=1}^{d}\frac{8\bar{B}^{2}}{\sigma_{T-t}^{4}}\Big(\frac{\mathrm{VCdim}(D_{k}\scrH_{t})}{n\log^{-1}n}\Big)^{\frac{1}{2}} \leq \frac{8d\bar{B}^{2}}{\sigma_{T-t}^{4}}\Big(\frac{\mathrm{VCdim}(\nabla\scrH_{t})}{n\log^{-1}n}\Big)^{\frac{1}{2}},
\end{align*}
where the first inequality holds from the convexity of supremum and Jensen's inequality, the second inequality invokes a similar argument as Lemma~\ref{lemma:generalization:weight}, and the last inequality holds from the definition of $\mathrm{VCdim}(\nabla\scrH_{t})$. This completes the proof.
\end{proof}

\par The following lemma is an extension of~\citet[Lemma 4]{Bartlett2002Rademacher}.

\begin{lemma}\label{lemma:bounded:complexity:rademacher}
Let $\vz=(z_{1},\ldots,z_{n})\in\calZ\subseteq\bbR^{n}$. Let $\xi_{1},\ldots,\xi_{n}$ be a sequence of i.i.d. random variables with $|\xi_{i}|<K$ and $\bbE[\xi_{i}]=0$ for each $1\leq i\leq n$. Then it follows that
\begin{equation*}
\bbE\Bigg[\sup_{\vz\in\calZ}\frac{1}{n}\sum_{i=1}^{n}\xi_{i}z_{i}\Bigg]\leq 2K\bbE\Bigg[\sup_{\vz\in\calZ}\frac{1}{n}\sum_{i=1}^{n}\varepsilon_{i}z_{i}\Bigg],
\end{equation*}
where $\varepsilon_{1},\ldots,\varepsilon_{n}$ is a sequence of i.i.d. Rademacher variables.
\end{lemma}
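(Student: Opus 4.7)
The plan is a two-step reduction: first a standard symmetrization by an independent copy to introduce Rademacher signs, then a contraction argument to remove the random weights $\xi_i$ at the cost of a factor $K$.

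For the first step, I would introduce $\xi_{1}',\ldots,\xi_{n}'$ as independent copies of $\xi_{1},\ldots,\xi_{n}$. Since $\bbE[\xi_{i}']=0$, I can rewrite $\xi_{i}z_{i}=\xi_{i}z_{i}-\bbE_{\xi'}[\xi_{i}' z_{i}]$ inside the supremum, and then pull the inner expectation out through the supremum using Jensen's inequality (the map $u\mapsto\sup_{\vz\in\calZ}u(\vz)$ is convex). This gives
\begin{equation*}
\bbE\Bigg[\sup_{\vz\in\calZ}\frac{1}{n}\sum_{i=1}^{n}\xi_{i}z_{i}\Bigg]
\leq \bbE\Bigg[\sup_{\vz\in\calZ}\frac{1}{n}\sum_{i=1}^{n}(\xi_{i}-\xi_{i}')z_{i}\Bigg].
\end{equation*}
Because $\xi_{i}-\xi_{i}'$ is symmetric, its joint distribution coincides with that of $\varepsilon_{i}(\xi_{i}-\xi_{i}')$ for an independent Rademacher sequence $\varepsilon_{1},\ldots,\varepsilon_{n}$. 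Applying subadditivity of the supremum and the fact that $-\varepsilon_{i}\xi_{i}'$ has the same law as $\varepsilon_{i}\xi_{i}$, I obtain
\begin{equation*}
\bbE\Bigg[\sup_{\vz\in\calZ}\frac{1}{n}\sum_{i=1}^{n}\xi_{i}z_{i}\Bigg]
\leq 2\,\bbE\Bigg[\sup_{\vz\in\calZ}\frac{1}{n}\sum_{i=1}^{n}\varepsilon_{i}\xi_{i}z_{i}\Bigg].
\end{equation*}

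For the second step, I would condition on the sample $(\xi_{1},\ldots,\xi_{n})$, so that the $\xi_{i}$ act as deterministic coefficients with $|\xi_{i}/K|\leq 1$. Each map $\phi_{i}:u\mapsto(\xi_{i}/K)u$ is then a $1$-Lipschitz function with $\phi_{i}(0)=0$, so the Ledoux--Talagrand contraction principle (same comparison as used in the proof of Lemma~\ref{lemma:generalization:weight}) gives
\begin{equation*}
\bbE_{\varepsilon}\Bigg[\sup_{\vz\in\calZ}\frac{1}{n}\sum_{i=1}^{n}\varepsilon_{i}\frac{\xi_{i}}{K}z_{i}\Bigg]
\leq \bbE_{\varepsilon}\Bigg[\sup_{\vz\in\calZ}\frac{1}{n}\sum_{i=1}^{n}\varepsilon_{i}z_{i}\Bigg].
\end{equation*}
Multiplying through by $K$ and then taking expectation over $(\xi_{1},\ldots,\xi_{n})$ (the right-hand side does not depend on $\xi$) combines with the symmetrization bound to yield the stated $2K$ factor.

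The only subtle point is the contraction step, since the usual statement assumes deterministic Lipschitz maps; I handle this by conditioning on $\xi$ first, which is legitimate because the randomness of $\xi$ and $\varepsilon$ is independent. The rest is routine symmetrization.
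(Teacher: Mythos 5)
Your proposal is correct and follows essentially the same route as the paper's proof: symmetrize with an independent copy of $\xi$ via Jensen, introduce Rademacher signs by exploiting the symmetry of $\xi_i-\xi_i'$, split into two identically-distributed terms, and finish with Ledoux--Talagrand contraction using $|\xi_i|\le K$. Your extra care in conditioning on $\xi$ before applying the contraction principle (so that the $\xi_i/K$ are treated as deterministic $1$-Lipschitz coefficients) makes a step explicit that the paper compresses into a single citation, but the underlying argument is the same.
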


\begin{proof}[Proof of Lemma~\ref{lemma:bounded:complexity:rademacher}]
The proof relies on the symmetrization technique. Let $\xi_{1}^{\prime},\ldots,\xi_{n}^{\prime}$ be independent copies of $\xi_{1},\ldots,\xi_{n}$. It follows that 
\begin{align}
\bbE\Bigg[\sup_{\vz\in\calZ}\frac{1}{n}\sum_{i=1}^{n}\xi_{i}z_{i}\Bigg]
&=\bbE\Bigg[\sup_{\vz\in\calZ}\bbE\Bigg[\frac{1}{n}\sum_{i=1}^{n}(\xi_{i}-\xi_{i}^{\prime})z_{i}\Bigg|\xi_{1},\ldots,\xi_{n}\Bigg]\Bigg] \nonumber \\
&\leq\bbE\Bigg[\sup_{\vz\in\calZ}\frac{1}{n}\sum_{i=1}^{n}(\xi_{i}-\xi_{i}^{\prime})z_{i}\Bigg]=\bbE\Bigg[\sup_{\vz\in\calZ}\frac{1}{n}\sum_{i=1}^{n}\varepsilon_{i}(\xi_{i}-\xi_{i}^{\prime})z_{i}\Bigg] \nonumber \\
&\leq 2\bbE\Bigg[\sup_{\vz\in\calZ}\frac{1}{n}\sum_{i=1}^{n}\varepsilon_{i}\xi_{i}z_{i}\Bigg]\leq 2K\bbE\Bigg[\sup_{\vz\in\calZ}\frac{1}{n}\sum_{i=1}^{n}\varepsilon_{i}z_{i}\Bigg], \nonumber
\end{align}
where the first equality due to $\bbE[\xi_{i}^{\prime}]=0$, the first inequality holds from Jensen's inequality, and the second equality follows from the fact that distribution of $(\xi_{i}-\xi_{i}^{\prime})$ is symmetric around zero, so it has the same distribution as $\varepsilon_{i}(\xi_{i}-\xi_{i}^{\prime})$. The second inequality comes from the triangular inequality for the supremum, and we used the fact that $\xi_{i}$ and $\xi_{i}^{\prime}$ are identically distributed. The last inequality invokes Ledoux-Talagrand contraction inequality~\citep[Lemma 5.7]{mohri2018foundations} and $\max_{1\leq i\leq n}|\xi_{i}|\leq K$. This completes the proof.
\end{proof}

\begin{lemma}\label{lemma:complexity:weight}
Suppose Assumptions~\ref{assumption:bounded:support} and~\ref{assumption:bounded:weight} hold. Then
\begin{align*}
\bbE\Bigg[\sup_{h_{t}\in\scrH_{t}}\frac{1}{n}\sum_{i=1}^{n}h_{t}(\mX_{T-t}^{i})(w(\mX_{0}^{i})-\bbE[w(\mX_{0}^{i}) \mid \mX_{T-t}^{i}]) \mid \mX_{T-t}^{1:n}\Bigg] \leq  16\bar{B}^{2}\Big(\frac{\mathrm{VCdim}(\scrH_{t})}{n\log^{-1}n}\Big)^{\frac{1}{2}},
\end{align*}
where the expectation is taken with respect to $\mX_{T-t}^{1},\ldots,\mX_{T-t}^{i}\sim^{\iid}p_{T-t}$.
\end{lemma}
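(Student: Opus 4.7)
The plan is to recognize the sum as a bounded, zero-mean multiplier applied to the function-class values, and then invoke the symmetrization bound of Lemma~\ref{lemma:bounded:complexity:rademacher} followed by the same VC-covering estimate that already appears in the proof of Lemma~\ref{lemma:generalization:weight}. This is the natural companion to Lemma~\ref{lemma:generalization:weight}: instead of a squared-difference ``loss'', the integrand is linear in $h_t$ and multiplied by a conditionally zero-mean noise, which is precisely the setting of Lemma~\ref{lemma:bounded:complexity:rademacher}.

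First I would introduce the per-sample residual
\begin{equation*}
\xi_i \coloneq w(\mX_0^i) - \bbE[w(\mX_0^i)\mid \mX_{T-t}^i], \qquad i=1,\ldots,n,
\end{equation*}
and verify three facts, all conditional on $\mX_{T-t}^{1:n}$: (i) independence of the $\xi_i$'s, which is inherited from the i.i.d.\ structure of the joint pairs $(\mX_0^i,\mX_{T-t}^i)$; (ii) $\bbE[\xi_i\mid \mX_{T-t}^{1:n}] = 0$ by the tower property; and (iii) the uniform bound $|\xi_i|\leq \bar{B}$, obtained from Assumption~\ref{assumption:bounded:weight} since both $w$ and its conditional expectation lie in $[\underline{B},\bar{B}]$. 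Viewing $z_i \coloneq h_t(\mX_{T-t}^i)$ as the coordinates of a vector in the index set $\calZ\coloneq\{(h_t(\mX_{T-t}^1),\ldots,h_t(\mX_{T-t}^n)):h_t\in\scrH_t\}$ and applying Lemma~\ref{lemma:bounded:complexity:rademacher} conditionally with $K=\bar{B}$ yields
\begin{equation*}
\bbE\Bigg[\sup_{h_t\in\scrH_t}\frac{1}{n}\sum_{i=1}^n\xi_i h_t(\mX_{T-t}^i)\,\Big|\,\mX_{T-t}^{1:n}\Bigg]\leq 2\bar{B}\,\frakR(\scrH_t\mid\mX_{T-t}^{1:n}).
\end{equation*}

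Next I would quote the Rademacher-complexity bound already derived in \eqref{eq:lemma:generalization:weight:2} inside the proof of Lemma~\ref{lemma:generalization:weight}, namely
\begin{equation*}
\frakR(\scrH_t\mid\mX_{T-t}^{1:n})\leq 8\bar{B}\Big(\frac{\mathrm{VCdim}(\scrH_t)}{n\log^{-1}n}\Big)^{\!\frac{1}{2}},
\end{equation*}
which follows from an $L^\infty$-covering argument, Massart's lemma, and the standard VC covering-number inequality of~\citet[Theorem 12.2]{Anthony1999neural}; the derivation is purely a property of $\scrH_t$ at the sample points and does not interact with the noise variables $\xi_i$. Multiplying the two factors produces the constant $2\bar{B}\cdot 8\bar{B}=16\bar{B}^2$, matching the claim.

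There is no real obstacle here: the only point that requires any care is checking that $\xi_i$ is genuinely conditionally independent and uniformly bounded so that Lemma~\ref{lemma:bounded:complexity:rademacher} applies with a clean constant $K=\bar{B}$ (rather than the looser $2\bar{B}$ from a range bound), since this is what saves the factor needed to land on $16\bar{B}^2$. Once that verification is made, the proof collapses into a two-line composition of the symmetrization lemma and the covering estimate already established for Lemma~\ref{lemma:generalization:weight}.
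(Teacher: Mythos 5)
Your proposal is correct and follows essentially the same route as the paper's own proof: define the conditionally centered residuals $\xi_i = w(\mX_0^i) - \bbE[w(\mX_0^i)\mid\mX_{T-t}^i]$, verify $\bbE[\xi_i\mid\mX_{T-t}^i]=0$ and $|\xi_i|\leq\bar B$ from Assumption~\ref{assumption:bounded:weight}, invoke Lemma~\ref{lemma:bounded:complexity:rademacher} with $K=\bar B$ to pass to the conditional Rademacher complexity, and then reuse the bound~\eqref{eq:lemma:generalization:weight:2}. Your explicit note that the sharp bound $|\xi_i|\leq\bar B$ (rather than $2\bar B$) is what produces the constant $16\bar B^2$ is a correct and useful observation that the paper leaves implicit.
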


\begin{proof}[Proof of Lemma~\ref{lemma:complexity:weight}]
Define a sequence of auxilary random variables 
\begin{equation*}
\xi_{i} \coloneq w(\mX_{0}^{i})-\bbE[w(\mX_{0}^{i}) \mid \mX_{T-t}^{i}].
\end{equation*}
It is apparent that $\bbE[\xi_{i} \mid \mX_{T-t}^{i}]=0$, and $|\xi_{i}|\leq\bar{B}$. Using Lemma~\ref{lemma:bounded:complexity:rademacher} yields 
\begin{align*}
\bbE\Bigg[\sup_{h_{t}\in\scrH_{t}}\frac{1}{n}\sum_{i=1}^{n}\xi_{i}h_{t}(\mX_{T-t}^{i}) \mid \mX_{T-t}^{1:n}\Bigg]
&\leq 2\bar{B}\bbE\Bigg[\sup_{h_{t}\in\scrH_{t}}\frac{1}{n}\sum_{i=1}^{n}\varepsilon_{i}h_{t}(\mX_{T-t}^{i}) \mid \mX_{T-t}^{1:n}\Bigg] \\
&=2\bar{B}\frakR(\scrH_{t} \mid \mX_{T-t}^{1:n}),
\end{align*}
where $\varepsilon_{1},\ldots,\varepsilon_{n}$ is a sequence of i.i.d. Rademacher variables. Here the first inequality follows from Lemma~\ref{lemma:bounded:complexity:rademacher}, and the second inequality is due to the fact that $\what{h}_{t}^{\lambda}\in\scrH_{t}$, the second inequality holds from Lemma~\ref{lemma:bounded:complexity:rademacher}. Finally, using~\eqref{eq:lemma:generalization:weight:2} completes the proof.
\end{proof}

\begin{lemma}\label{lemma:reg:gap:bound}
Suppose Assumptions~\ref{assumption:bounded:support} and~\ref{assumption:bounded:weight} hold. Then
\begin{equation}
\|\Delta h_{t}^{*}+\nabla h_{t}^{*}\cdot\nabla\log p_{T-t}\|_{L^{2}(p_{T-t})} \leq \frac{12\sqrt{d}\bar{B}}{\sigma_{T-t}^{4}}.
\end{equation}  
\end{lemma}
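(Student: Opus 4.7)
The plan is to split the quantity by the triangle inequality into the Laplacian term $\Delta h_{t}^{*}$ and the cross term $\nabla h_{t}^{*}\cdot\nabla\log p_{T-t}$, and then to bound each in $L^{2}(p_{T-t})$ using the pointwise regularity estimates of Proposition~\ref{proposition:regularity:h:function} together with Tweedie's identity (Lemma~\ref{lemma:tweedie}). A naive termwise summation of the $\ell^{\infty}$ bounds in Proposition~\ref{proposition:regularity:h:function} yields $|\Delta h_{t}^{*}(x)|\leq 6d\bar{B}/\sigma_{T-t}^{4}$, which would only produce a linear-in-$d$ bound; to recover the $\sqrt{d}$ scaling, a structural identity is needed.

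First, I would derive the identity
\begin{equation*}
\Delta h_{t}^{*}(x)+2\nabla\log p_{T-t}(x)\cdot\nabla h_{t}^{*}(x)=\sigma_{T-t}^{-2}\,\mathrm{Cov}\bigl(w(\mathbf{X}_{0}),\|\boldsymbol{\epsilon}\|_{2}^{2}\bigm|\mathbf{X}_{T-t}=x\bigr),
\end{equation*}
obtained by applying $\Delta_{x}$ to the Gaussian-kernel representation $p_{T-t}(x)h_{t}^{*}(x)=\int w(x_{0})\varphi_{d}(x;\mu_{T-t}x_{0},\sigma_{T-t}^{2}\mI_{d})p_{0}(x_{0})\dd x_{0}$, using $\Delta_{x}\varphi_{d}(x;\mu_{T-t}x_{0},\sigma_{T-t}^{2}\mI_{d})=\sigma_{T-t}^{-2}(\|\boldsymbol{\epsilon}\|_{2}^{2}-d)\varphi_{d}$ with $\boldsymbol{\epsilon}=(x-\mu_{T-t}x_{0})/\sigma_{T-t}$, expanding $\Delta(p_{T-t}h_{t}^{*})$ via the product rule, and subtracting $h_{t}^{*}\Delta p_{T-t}/p_{T-t}$. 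The $L^{2}(p_{T-t})$-norm of the right-hand side is then controlled by the conditional Cauchy--Schwarz inequality for covariances: $\mathrm{Var}(w\mid\mathbf{X}_{T-t})\leq\bar{B}^{2}$ by Assumption~\ref{assumption:bounded:weight}, while the law of total variance gives $\bbE^{\bbP}[\mathrm{Var}(\|\boldsymbol{\epsilon}\|_{2}^{2}\mid\mathbf{X}_{T-t})]\leq\mathrm{Var}(\|\boldsymbol{\epsilon}\|_{2}^{2})=2d$. This yields a bound of order $\sqrt{d}\,\bar{B}/\sigma_{T-t}^{2}$.

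Second, to convert $\Delta h_{t}^{*}+2\nabla\log p_{T-t}\cdot\nabla h_{t}^{*}$ into $\Delta h_{t}^{*}+\nabla\log p_{T-t}\cdot\nabla h_{t}^{*}$, I would subtract one copy of the cross term and bound it separately. Here I would use the dual representation $\nabla h_{t}^{*}(x)=-\sigma_{T-t}^{-1}\mathrm{Cov}(\boldsymbol{\epsilon},w(\mathbf{X}_{0})\mid\mathbf{X}_{T-t}=x)$ obtained from Lemma~\ref{lemma:grad:condition:expectation} after changing variables $X_{0,k}=(x_{k}-\sigma_{T-t}\epsilon_{k})/\mu_{T-t}$, together with the Tweedie-style rewriting $\nabla\log p_{T-t}(x)=-\sigma_{T-t}^{-1}\bbE[\boldsymbol{\epsilon}\mid\mathbf{X}_{T-t}=x]$ of Lemma~\ref{lemma:tweedie}. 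The product then equals $\sigma_{T-t}^{-2}\bbE[(\boldsymbol{\epsilon}\cdot\bbE[\boldsymbol{\epsilon}\mid x])(w(\mathbf{X}_{0})-h_{t}^{*}(x))\mid\mathbf{X}_{T-t}=x]$, and conditional Cauchy--Schwarz with $\mathrm{Var}(w\mid x)\leq\bar{B}^{2}$ together with the total-variance identities $\bbE^{\bbP}[\|\bbE[\boldsymbol{\epsilon}\mid\mathbf{X}_{T-t}]\|_{2}^{2}]\leq d$ gives a second $L^{2}(p_{T-t})$ bound of the same order $\sqrt{d}\,\bar{B}/\sigma_{T-t}^{2}$.

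Finally, I would combine the two bounds via the triangle inequality and invoke $\sigma_{T-t}\leq 1$ (which holds since $\sigma_{T-t}^{2}=1-e^{-2(T-t)}<1$) to absorb every denominator into $\sigma_{T-t}^{-4}$, producing the stated constant. The main obstacle is avoiding the naive $\ell^{\infty}$-to-$\ell^{2}$ summation that would spuriously inflate the bound by a factor of $\sqrt{d}$; the covariance identity for the Witten Laplacian $\Delta h_{t}^{*}+2\nabla\log p_{T-t}\cdot\nabla h_{t}^{*}$ is the key ingredient that replaces the ambient dimension $d$ by the dimension-free quantity $\mathrm{Var}(\|\boldsymbol{\epsilon}\|_{2}^{2})=2d$, effectively trading second derivatives for a conditional expectation where the $\sqrt{d}$ scaling of sums of independent coordinates emerges naturally.
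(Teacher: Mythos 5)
Your route is genuinely different from the paper's, which simply splits by the triangle inequality into $\|\Delta h_{t}^{*}\|_{L^{2}(p_{T-t})}+\|\nabla h_{t}^{*}\cdot\nabla\log p_{T-t}\|_{L^{2}(p_{T-t})}$ and then invokes the entrywise $\ell^{\infty}$ bounds of Lemmas~\ref{lemma:section:error:guidance:regularity:2} and~\ref{lemma:section:error:guidance:regularity:3} together with $\|\nabla\log p_{T-t}\|_{L^{2}(p_{T-t})}\leq\sqrt{6d}/\sigma_{T-t}^{2}$. You are right to be suspicious of that route: the cited entrywise bound $|D_{k\ell}^{2}h_{t}^{*}|\leq 6\bar{B}/\sigma_{T-t}^{4}$ only gives $|\Delta h_{t}^{*}|\leq 6d\bar{B}/\sigma_{T-t}^{4}$ pointwise, and $\max_{k}|D_{k}h_{t}^{*}|\leq 2\bar{B}/\sigma_{T-t}^{2}$ only gives $\|\nabla h_{t}^{*}\|_{2}\leq 2\sqrt{d}\bar{B}/\sigma_{T-t}^{2}$, so the paper's displayed intermediate inequality (with $6\sqrt{d}$ where $6d$ would appear, and without the $\sqrt{d}$ on the cross term) does not follow directly from the cited lemmas. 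The $\sqrt{d}$ saving you are worried about is the same one the paper implicitly uses without justification.

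Your first step is correct and is a real improvement. The identity $\Delta h_{t}^{*}+2\nabla\log p_{T-t}\cdot\nabla h_{t}^{*}=\sigma_{T-t}^{-2}\mathrm{Cov}\bigl(w(\mX_{0}),\|\vepsilon\|_{2}^{2}\mid\mX_{T-t}\bigr)$ is right, and conditional Cauchy--Schwarz combined with the law of total variance ($\bbE[\mathrm{Var}(\|\vepsilon\|_{2}^{2}\mid\mX_{T-t})]\leq\mathrm{Var}(\|\vepsilon\|_{2}^{2})=2d$ and $\mathrm{Var}(w\mid\mX_{T-t})\leq\bar{B}^{2}$) does deliver the clean bound $\sqrt{2d}\,\bar{B}/\sigma_{T-t}^{2}$ for this Witten-Laplacian quantity.

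The gap is in the cross-term step. Writing $\nabla\log p_{T-t}\cdot\nabla h_{t}^{*}=\sigma_{T-t}^{-2}\mathrm{Cov}\bigl(\vepsilon\cdot\va,\,w\mid\mX_{T-t}\bigr)$ with $\va:=\bbE[\vepsilon\mid\mX_{T-t}]$ and applying conditional Cauchy--Schwarz controls the $L^{2}(p_{T-t})$ norm by $\sigma_{T-t}^{-2}\bar{B}\,\bigl(\bbE[\mathrm{Var}(\vepsilon\cdot\va\mid\mX_{T-t})]\bigr)^{1/2}$, and $\mathrm{Var}(\vepsilon\cdot\va\mid\vx)=\va^{\top}\mathrm{Cov}(\vepsilon\mid\vx)\,\va$. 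This is \emph{not} controlled by the identity $\bbE[\|\va\|_{2}^{2}]\leq d$ that you invoke: the conditional covariance $\mathrm{Cov}(\vepsilon\mid\vx)$ has no dimension-free operator-norm bound (for instance, if the coordinates of $\mX_{0}$ are perfectly correlated, $\mathrm{Cov}(\mX_{0}\mid\vx)$, hence $\mathrm{Cov}(\vepsilon\mid\vx)$, has operator norm of order $d$), so $\bbE[\va^{\top}\mathrm{Cov}(\vepsilon\mid\mX_{T-t})\va]$ is generically of order $d^{2}$ and the cross term of order $d\bar{B}/\sigma_{T-t}^{2}$ in $L^{2}(p_{T-t})$, not $\sqrt{d}$. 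As written, the proposal therefore only delivers an $O(d)$ bound, and the final $12\sqrt{d}\,\bar{B}/\sigma_{T-t}^{4}$ does not follow from the stated ingredients; supplying a genuine $\sqrt{d}$ bound for the cross term would require additional structural input (for example a dimension-free bound on $\|\mathrm{Cov}(\mX_{0}\mid\vx)\|_{\op}$) that the current assumptions do not provide.
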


\begin{proof}[Proof of Lemma~\ref{lemma:reg:gap:bound}]
By applying the triangular inequality, we have
\begin{align}
&\|\Delta h_{t}^{*}+\nabla h_{t}^{*}\cdot\nabla\log p_{T-t}\|_{L^{2}(p_{T-t})} \nonumber \\
&\leq \|\Delta h_{t}^{*}\|_{L^{2}(p_{T-t})}+\|\nabla h_{t}^{*}\cdot\nabla\log p_{T-t}\|_{L^{2}(p_{T-t})} \nonumber \\
&\leq \frac{6\sqrt{d}\bar{B}}{\sigma_{T-t}^{4}}+\frac{2\bar{B}}{\sigma_{T-t}^{2}}\|\nabla\log p_{T-t}\|_{L^{2}(p_{T-t})}, \label{eq:lemma:reg:gap:bound:1}
\end{align}  
where the last inequality holds from Lemmas~\ref{lemma:section:error:guidance:regularity:2} and~\ref{lemma:section:error:guidance:regularity:3}. It remains to estimate the $L^{2}(p_{T-t})$-norm of the score $\nabla\log p_{T-t}$ in~\eqref{eq:lemma:reg:gap:bound:1}. Indeed, 
\begin{align}
&\|\nabla\log p_{T-t}\|_{L^{2}(p_{T-t})}^{2} \nonumber \\
&=\int\|\frac{\vx}{\sigma_{T-t}^{2}}-\frac{\mu_{T-t}}{\sigma_{T-t}^{2}}\bbE[\mX_{0}|\mX_{T-t}=\vx]\|_{2}^{2}p_{T-t}(\vx)\d\vx \nonumber \\
&\leq\frac{2}{\sigma_{T-t}^{4}}\bbE\big[\|\mX_{T-t}\|_{2}^{2}\big]+\frac{2d\mu_{T-t}^{2}}{\sigma_{T-t}^{4}} \nonumber \\
&=\frac{2}{\sigma_{T-t}^{4}}\Big\{\mu_{T-t}^{2}\bbE\big[\|\mX_{0}\|_{2}^{2}\big]+2\mu_{T-t}\sigma_{T-t}\bbE\big[\langle\mX_{0},\vepsilon\rangle\big]+\sigma_{T-t}^{2}\bbE\big[\|\vepsilon\|_{2}^{2}\big]\Big\}+\frac{2d\mu_{T-t}^{2}}{\sigma_{T-t}^{4}} \nonumber \\
&=\frac{2}{\sigma_{T-t}^{4}}\Big\{\mu_{T-t}^{2}\bbE\big[\|\mX_{0}\|_{2}^{2}\big]+\sigma_{T-t}^{2}\bbE\big[\|\vepsilon\|_{2}^{2}\big]\Big\}+\frac{2d\mu_{T-t}^{2}}{\sigma_{T-t}^{4}}\leq\frac{6d}{\sigma_{T-t}^{4}}, \label{eq:lemma:reg:gap:bound:2}
\end{align}
where the first equality is owing to Lemma~\ref{lemma:tweedie}, the first inequality used Assumption~\ref{assumption:bounded:support}, and the second and third equalities hold from $\mX_{T-t}\stackrel{\d}{=}\mu_{T-t}\mX_{0}+\sigma_{T-t}\vepsilon$ where $\mX_{0}$ is independent $\vepsilon\sim\calN(\bzero,\mI_{d})$. The last inequality also uses Assumption~\ref{assumption:bounded:support}. Substituting~\eqref{eq:lemma:reg:gap:bound:2} into~\eqref{eq:lemma:reg:gap:bound:1} completes the proof.
\end{proof}

\subsection{Convergence rate of the Doob's guidance estimation}
\label{section:error:guidance:rate}

\begin{lemma}\label{lemma:approximation:compact}
Suppose Assumptions~\ref{assumption:bounded:support} and~\ref{assumption:bounded:weight} hold. Let $R\geq 1$, and let the hypothesis class $\scrH_{t}$ be defined as~\eqref{eq:theorem:hypothesis} with $L\leq C\log N$ and $S\leq N^{d}$, then there exists $h_{t}\in\scrH_{t}$ such that
\begin{align*}
\|h_{t}-h_{t}^{*}\|_{L^{\infty}(B(\bzero,R))}&\leq\frac{C\bar{B}R^{2}}{\sigma_{T-t}^{4}N^{2}}, \\
\|\nabla h_{t}-\nabla h_{t}^{*}\|_{L^{\infty}(B(\bzero,R))}&\leq\frac{C\bar{B}R}{\sigma_{T-t}^{4}N},
\end{align*}
where $C$ is a constant only depending on $d$.
\end{lemma}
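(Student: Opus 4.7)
The plan is to combine the $C^2$-regularity of the Doob $h$-function (Proposition~\ref{proposition:regularity:h:function}) with a simultaneous neural-network approximation theorem in the $W^{1,\infty}$-norm. Since Proposition~\ref{proposition:regularity:h:function} gives pointwise bounds on $h_t^*$ and its first two derivatives in terms of $\bar B$ and $\sigma_{T-t}^{-2}$, the target function lies in a bounded $C^2(B(\bzero,R))$ ball, with $\|h_t^*\|_{C^2(B(\bzero,R))}\lesssim \sigma_{T-t}^{-4}\bar B$. On such a class, neural networks with depth $\calO(\log N)$ and $\calO(N^d)$ non-zero weights are known to approximate both the function and its gradient; the relevant results are collected in the references the paper already cites (\citet{Guhring2020Error,Guhring2021Approximation,Shen2022Approximation,shen2024differentiable,Belomestny2023Simultaneous}). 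The two resulting rates, $N^{-2}$ for values and $N^{-1}$ for gradients, are exactly what a $C^2$ target allows, and they match the claimed bounds once the $C^2$-norm and the radius $R$ are pulled outside.

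The first step I would carry out is a change of variables to reduce to the unit cube. Rescaling by $R$, set $\tilde h_t^*(\vy)\coloneq h_t^*(R\vy)$ on $[-1,1]^d$. By Proposition~\ref{proposition:regularity:h:function}, $\tilde h_t^*$ obeys $\|\tilde h_t^*\|_\infty\le\bar B$, $\|\nabla\tilde h_t^*\|_\infty\le 2R\sigma_{T-t}^{-2}\bar B$, and $\|\nabla^2\tilde h_t^*\|_\infty\le 6R^2\sigma_{T-t}^{-4}\bar B$. The second step is to apply the chosen simultaneous approximation theorem to produce a ReLU network $\tilde g_N$ of depth $\calO(\log N)$ and size $\calO(N^d)$ with
\begin{align*}
\|\tilde g_N-\tilde h_t^*\|_{L^\infty([-1,1]^d)}&\le C R^2\sigma_{T-t}^{-4}\bar B\, N^{-2},\\
\|\nabla\tilde g_N-\nabla\tilde h_t^*\|_{L^\infty([-1,1]^d)}&\le C R^2\sigma_{T-t}^{-4}\bar B\, N^{-1}.
\end{align*}
Undoing the rescaling gives a network $h_t(\vx)=\tilde g_N(\vx/R)$ whose gradient carries an additional factor $R^{-1}$, and this exactly yields the asserted $R^2 N^{-2}$ and $R N^{-1}$ rates. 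The depth and size absorb the $1/R$ affine rescaling at no cost.

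The third, and most delicate, step is to enforce the hypothesis-class constraints in~\eqref{eq:theorem:hypothesis}: the pointwise bounds $\underline B\le h_t\le\bar B$ and $\max_k\|D_k h_t\|_\infty\le 2\sigma_{T-t}^{-2}\bar B$. I would compose the raw approximant with a fixed smooth clipping network $\mathrm{clip}_{[\underline B,\bar B]}$, which is exactly implementable by a ReLU subnetwork of bounded depth and width. Since $h_t^*$ itself satisfies these bounds, clipping only \emph{decreases} the $L^\infty$ error on $B(\bzero,R)$ and, on the set where the clip is inactive (which is everything if $N$ is large enough so the pre-clip approximant lies in $[\underline B,\bar B]$), the gradient is untouched. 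A parallel argument handles the derivative bound: once the gradient error is $\ll \sigma_{T-t}^{-2}\bar B$, the derivative-constraint is automatic, and for small $N$ one can use a saturating affine cap on the network's output layer as in~\citet{Guhring2021Approximation}. Throughout, the depth and weight counts remain $\calO(\log N)$ and $\calO(N^d)$.

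The main obstacle I anticipate is precisely this last step: importing an off-the-shelf simultaneous approximation theorem and checking that the clipping/capping postprocessing needed to land inside $\scrH_t$ is (i) realizable by a neural network of the same asymptotic depth and size, and (ii) does not inflate either the $L^\infty$ or the gradient error beyond constants that depend only on $d$. All other ingredients are either already in the paper (the regularity of $h_t^*$) or standard in the cited simultaneous-approximation literature.
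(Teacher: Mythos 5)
Your proposal takes essentially the same route as the paper: rescale $h_t^*$ to the unit ball, invoke the $C^2$-regularity bounds of Proposition~\ref{proposition:regularity:h:function} to control $\|h_t^*\|_{C^2}$, apply a simultaneous $W^{1,\infty}$ neural-network approximation result to obtain rates $N^{-2}$ (values) and $N^{-1}$ (gradients), and then undo the rescaling to collect the factors of $R$ (an extra $R^{-1}$ on the gradient). The paper does exactly this, using the specific simultaneous-approximation bound of \citet[Lemma~6]{ding2025Semi} rather than the broader list of references you cite, and the bookkeeping of constants agrees with yours.

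The one point where you go beyond the paper is your third step: you explicitly address how to force the raw approximant into the constrained class $\scrH_t$ of~\eqref{eq:theorem:hypothesis} by composing with a bounded-depth clipping subnetwork and observing that clipping cannot increase the $L^\infty$ error while, for $N$ large enough, it is inactive so the gradient is unchanged. The paper's proof simply sets $h_t\coloneq g_t(R^{-1}\cdot)$ and invokes the regularity lemmas without verifying membership in $\scrH_t$; your discussion fills that in, and correctly identifies it as the only delicate step. Two small caveats worth recording if you were to write this up fully: (i) when the approximant exceeds $[\underline B,\bar B]$ by at most the $L^\infty$ error, clipping is active on a small set and there the gradient becomes zero, so one needs either the ``margin'' argument (true for $N$ large once the error is strictly below $\underline B$) or the affine saturation you mention, and (ii) the derivative constraint $\max_k\|D_k h_t\|_\infty\le 2\sigma_{T-t}^{-2}\bar B$ requires a differentiable activation — if one genuinely used ReLU the class $\scrH_t$ would only make sense almost everywhere, so the approximation theorem should be chosen accordingly (as \citet{ding2025Semi} does).
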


\begin{proof}[Proof of Lemma~\ref{lemma:approximation:compact}]
We first rescale the target function $h_{t}^{*}$ to $B(\bzero,1)$ by $g_{t}^{*}(\vz)\coloneq h_{t}^{*}(R\vz)$. According to~\citet[Lemma 6]{ding2025Semi}, there exists $g_{t}\in N(L,S)$ such that 
\begin{align*}
\|g_{t}-g_{t}^{*}\|_{L^{\infty}(B(\bzero,1))}&\leq\frac{C^{\prime}}{N^{2}}\|g_{t}^{*}\|_{C^{2}(\bbR^{d})}, \\
\|\nabla g_{t}-\nabla g_{t}^{*}\|_{L^{\infty}(B(\bzero,1))}&\leq\frac{C^{\prime}}{N}\|g_{t}^{*}\|_{C^{2}(\bbR^{d})},
\end{align*}
where $C^{\prime}$ is a constant only depending on $d$. Note that $D_{k}g_{t}^{*}(\vz)=RD_{k}h_{t}^{*}(R\vz)$ for each $1\leq k\leq d$, and $D_{k\ell}^{2}g_{t}^{*}(\vz)=R^{2}D_{k\ell}^{2}h_{t}^{*}(R\vz)$ for each $1\leq k,\ell\leq d$. Thus 
\begin{align*}
\|g_{t}(R^{-1}\cdot)-h_{t}^{*}\|_{L^{\infty}(B(\bzero,R))}&=\|g_{t}(R^{-1}\cdot)-g_{t}^{*}(R^{-1}\cdot)\|_{L^{\infty}(B(\bzero,1))}\leq\frac{C^{\prime}R^{2}}{N^{2}}\|h_{t}^{*}\|_{C^{2}(\bbR^{d})}, \\
\|\nabla g_{t}(R^{-1}\cdot)-\nabla h_{t}^{*}\|_{L^{\infty}(B(\bzero,R))}&=\frac{1}{R}\|\nabla g_{t}(R^{-1}\cdot)-\nabla g_{t}^{*}(R^{-1}\cdot)\|_{L^{\infty}(B(\bzero,1))}\leq\frac{C^{\prime}R}{N}\|h_{t}^{*}\|_{C^{2}(\bbR^{d})}.
\end{align*}
Setting $h_{t}\coloneq g_{t}(R^{-1}\cdot)$, and using Lemmas~\ref{lemma:section:error:guidance:bound},~\ref{lemma:section:error:guidance:regularity:2}, and~\ref{lemma:section:error:guidance:regularity:3} complete the proof.
\end{proof}

\begin{lemma}[Approximation error]\label{lemma:approximation}
Suppose Assumptions~\ref{assumption:bounded:support} and~\ref{assumption:bounded:weight} hold. Let $R\geq 1$, and let the hypothesis class $\scrH_{t}$ be defined as~\eqref{eq:theorem:hypothesis} with $L\leq C\log N$ and $S\leq N^{d}$, then 
\begin{align*}
\|h_{t}-h_{t}^{*}\|_{L^{2}(p_{T-t})}^{2} 
&\leq C\frac{\bar{B}^{2}\log^{4}N}{\sigma_{T-t}^{8}N^{4}}, \\
\|\nabla h_{t}-\nabla h_{t}^{*}\|_{L^{2}(p_{T-t})}^{2} 
&\leq C\frac{\bar{B}^{2}\log^{2}N}{\sigma_{T-t}^{8}N^{2}}.
\end{align*}
provided that $R^{2}=(4d\mu_{t}^{2}+8\sigma_{t}^{2})\log N^{4}$, where $C$ is a constant only depending on $d$.
\end{lemma}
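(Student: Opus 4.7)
The plan is to turn the uniform approximation bound of Lemma~\ref{lemma:approximation:compact} on the ball $B(\bzero,R)$ into a weighted $L^{2}(p_{T-t})$ bound by splitting the integral at the boundary $\partial B(\bzero,R)$ and controlling the tail contribution using Gaussian concentration. Concretely, for $f\in\{h_{t}-h_{t}^{*}\}$ (and analogously for $\nabla h_{t}-\nabla h_{t}^{*}$), I would write
\begin{equation*}
\|f\|_{L^{2}(p_{T-t})}^{2}
=\int_{B(\bzero,R)}|f(\vx)|^{2}p_{T-t}(\vx)\d\vx+\int_{B(\bzero,R)^{c}}|f(\vx)|^{2}p_{T-t}(\vx)\d\vx.
\end{equation*}
The first integral is bounded by $\|f\|_{L^{\infty}(B(\bzero,R))}^{2}$ (since $p_{T-t}$ integrates to $1$), which is controlled directly via Lemma~\ref{lemma:approximation:compact}.

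For the tail, both candidates are pointwise bounded: by the definition of the hypothesis class~\eqref{eq:theorem:hypothesis} and Proposition~\ref{proposition:regularity:h:function}, we have $|h_{t}-h_{t}^{*}|\leq 2\bar{B}$ and $\|\nabla h_{t}-\nabla h_{t}^{*}\|_{2}\leq 4\sqrt{d}\bar{B}/\sigma_{T-t}^{2}$ uniformly on $\bbR^{d}$. Hence the tail contributions reduce to bounding $\Pr(\|\mX_{T-t}\|_{2}\geq R)$ under the forward marginal $p_{T-t}$. Using the decomposition $\mX_{T-t}\stackrel{d}{=}\mu_{T-t}\mX_{0}+\sigma_{T-t}\vepsilon$ with $\|\mX_{0}\|_{2}\leq\sqrt{d}$ (by Assumption~\ref{assumption:bounded:support}) and $\vepsilon\sim\calN(\bzero,\mI_{d})$, I would apply the chain of inequalities $\|\mX_{T-t}\|_{2}^{2}\leq 2\mu_{T-t}^{2}d+2\sigma_{T-t}^{2}\|\vepsilon\|_{2}^{2}$ together with Laurent–Massart's tail bound for chi-squared random variables. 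The chosen radius $R^{2}=(4d\mu_{T-t}^{2}+8\sigma_{T-t}^{2})\log N^{4}$ is calibrated precisely so that $\Pr(\|\mX_{T-t}\|_{2}\geq R)\lesssim N^{-4}$, which makes the tail contribution negligible compared with the interior term.

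Plugging everything in, the interior term for the function dominates and contributes $\|h_{t}-h_{t}^{*}\|_{L^{\infty}(B(\bzero,R))}^{2}\leq C\bar{B}^{2}R^{4}/(\sigma_{T-t}^{8}N^{4})$ by Lemma~\ref{lemma:approximation:compact}, with $R^{4}\lesssim d^{2}\log^{2}N$ absorbed into the stated $\log^{4}N$ prefactor (slack). The gradient case is completely analogous, yielding $C\bar{B}^{2}R^{2}/(\sigma_{T-t}^{8}N^{2})\lesssim C\bar{B}^{2}\log^{2}N/(\sigma_{T-t}^{8}N^{2})$. The tail terms, $4\bar{B}^{2}\Pr(\|\mX_{T-t}\|_{2}\geq R)$ and $(16d\bar{B}^{2}/\sigma_{T-t}^{4})\Pr(\|\mX_{T-t}\|_{2}\geq R)$, are both of order $N^{-4}$ up to polynomial-in-$d$ prefactors and are dominated by the interior term since $\sigma_{T-t}\leq 1$.

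The main technical step, and the one I expect to require the most care, is the calibration of the Gaussian tail bound to match the prescribed form of $R^{2}$: one must verify that the choice $R^{2}=(4d\mu_{T-t}^{2}+8\sigma_{T-t}^{2})\log N^{4}$ yields exactly the correct scaling so that (i) the tail probability is at most $\calO(N^{-4})$ uniformly in $t\in(0,T)$, and (ii) the remaining $R^{4}/N^{4}$ and $R^{2}/N^{2}$ factors combine cleanly with the prefactors $\bar{B}^{2}/\sigma_{T-t}^{8}$ to produce the claimed rates. Beyond this calibration, the remaining computations are routine bookkeeping that absorb dimension-dependent constants into $C$.
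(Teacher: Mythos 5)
Your proposal is correct and follows essentially the same route as the paper: split the $L^{2}(p_{T-t})$ integral at $\partial B(\bzero,R)$, bound the interior piece by the $L^{\infty}(B(\bzero,R))$ estimate from Lemma~\ref{lemma:approximation:compact}, bound the tail piece by the uniform bound on $h_{t}-h_{t}^{*}$ (resp. $\nabla h_{t}-\nabla h_{t}^{*}$) from the hypothesis-class constraints and Proposition~\ref{proposition:regularity:h:function} times $\Pr(\|\mX_{T-t}\|_{2}>R)$, and then calibrate $R$ so the tail is $\calO(N^{-4})$. The one place you diverge is the choice of tail bound: you reach for the Laurent--Massart chi-squared inequality, whereas the paper derives a bespoke Chernoff/MGF bound (Lemma~\ref{lemma:tail:proba:t}) giving $\Pr(\|\mX_{T-t}\|_{2}\ge\xi)\le 2^{d+1}\exp(-\xi^{2}/(4d\mu_{T-t}^{2}+8\sigma_{T-t}^{2}))$ -- that particular MGF split through $\|\mu_{T-t}\mX_{0}\|_{2}^{2}/(2d\mu_{T-t}^{2})$ and $\|\sigma_{T-t}\vepsilon\|_{2}^{2}/(4\sigma_{T-t}^{2})$ is precisely what makes the prescribed $R^{2}=(4d\mu_{t}^{2}+8\sigma_{t}^{2})\log N^{4}$ come out in closed form; Laurent--Massart would deliver the same order of decay but would not hit that exact expression without further massaging, which is indeed the delicate calibration step you correctly flagged.
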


\begin{proof}[Proof of Lemma~\ref{lemma:approximation}]
It is straightforward that for each $R\geq 1$,
\begin{equation}\label{eq:lemma:approximation:1:1}
\begin{aligned}
\|h_{t}-h_{t}^{*}\|_{L^{2}(p_{T-t})}^{2}
&=\underbrace{\int(h_{t}(\vx)-h_{t}^{*}(\vx))^{2}\bbone\{\|\vx\|_{2}\leq R\}p_{T-t}(\vx)\d\vx}_{\text{(i)}} \\
&\quad +\underbrace{\int(h_{t}(\vx)-h_{t}^{*}(\vx))^{2}\bbone\{\|\vx\|_{2}>R\}p_{T-t}(\vx)\d\vx}_{\text{(ii)}}.
\end{aligned}
\end{equation}
For term (i) in~\eqref{eq:lemma:approximation:1:1}, we have 
\begin{align}
&\int(h_{t}(\vx)-h_{t}^{*}(\vx))^{2}\bbone\{\|\vx\|_{2}\leq R\}p_{T-t}(\vx)\d\vx \nonumber \\
&\leq\sup_{\|\vx\|_{2}\leq R}(h_{t}(\vx)-h_{t}^{*}(\vx))^{2} 
\leq \frac{C^{2}\bar{B}^{2}R^{4}}{\sigma_{T-t}^{8}N^{4}}, \label{eq:lemma:approximation:1:2}
\end{align}
where the second inequality holds from Lemma~\ref{lemma:approximation:compact}. For term (ii) in~\eqref{eq:lemma:approximation:1:1}, we have 
\begin{align}
&\int\|h_{t}(\vx)-h_{t}^{*}(\vx)\|_{2}^{2}\bbone\{\|\vx\|_{2}>R\}p_{T-t}(\vx)\d\vx \nonumber \\
&\leq 4\bar{B}^{2}\pr\{\|\mX_{T-t}\|_{2}>R\} \leq 2^{d+3}\bar{B}^{2}\exp\Big(-\frac{R^{2}}{4d\mu_{t}^{2}+8\sigma_{t}^{2}}\Big), \label{eq:lemma:approximation:1:3}
\end{align}
where the first inequality holds from Lemma~\ref{lemma:section:error:guidance:bound}, and the second inequality is due to Lemma~\ref{lemma:tail:proba:t}. Substituting~\eqref{eq:lemma:approximation:1:2} and~\eqref{eq:lemma:approximation:1:3} into~\eqref{eq:lemma:approximation:1:1} yields
\begin{equation}\label{eq:lemma:approximation:1:4}
\|h_{t}-h_{t}^{*}\|_{L^{2}(p_{T-t})}^{2} \leq \frac{C^{2}\bar{B}^{2}R^{4}}{\sigma_{T-t}^{8}N^{4}}+2^{d+3}\bar{B}^{2}\exp\Big(-\frac{R^{2}}{4d\mu_{t}^{2}+8\sigma_{t}^{2}}\Big).
\end{equation}
Similarly, for the gradient term, we have 
\begin{equation}\label{eq:lemma:approximation:2:1}
\begin{aligned}
\|\nabla h_{t}-\nabla h_{t}^{*}\|_{L^{2}(p_{T-t})}^{2}
&=\underbrace{\int\|\nabla h_{t}(\vx)-\nabla h_{t}^{*}(\vx)\|_{2}^{2}\bbone\{\|\vx\|_{2}\leq R\}p_{T-t}(\vx)\d\vx}_{\text{(i)}} \\
&\quad +\underbrace{\int\|\nabla h_{t}(\vx)-\nabla h_{t}^{*}(\vx)\|_{2}^{2}\bbone\{\|\vx\|_{2}>R\}p_{T-t}(\vx)\d\vx}_{\text{(ii)}}.
\end{aligned}
\end{equation}
For term (i) in~\eqref{eq:lemma:approximation:2:1}, we have 
\begin{align}
&\int\|\nabla h_{t}(\vx)-\nabla h_{t}^{*}(\vx)\|_{2}^{2}\bbone\{\|\vx\|_{2}\leq R\}p_{T-t}(\vx)\d\vx \nonumber \\
&\leq\sup_{\|\vx\|_{2}\leq R}\|\nabla h_{t}(\vx)-\nabla h_{t}^{*}(\vx)\|_{2}^{2} 
\leq \frac{C^{2}\bar{B}^{2}R^{2}}{\sigma_{T-t}^{8}N^{2}}, \label{eq:lemma:approximation:2:2}
\end{align}
where the second inequality holds from Lemma~\ref{lemma:approximation:compact}. For term (ii) in~\eqref{eq:lemma:approximation:2:1}, we have 
\begin{align}
&\int\|\nabla h_{t}(\vx)-\nabla h_{t}^{*}(\vx)\|_{2}^{2}\bbone\{\|\vx\|_{2}>R\}p_{T-t}(\vx)\d\vx \nonumber \\
&\leq \frac{16\bar{B}^{2}}{\sigma_{T-t}^{4}}\pr\{\|\mX_{T-t}\|_{2}>R\} \leq 2^{d+5}\frac{\bar{B}^{2}}{\sigma_{T-t}^{4}}\exp\Big(-\frac{R^{2}}{4d\mu_{t}^{2}+8\sigma_{t}^{2}}\Big), \label{eq:lemma:approximation:2:3}
\end{align}
where the first inequality holds from Lemma~\ref{lemma:section:error:guidance:regularity:2}, and the second inequality is due to Lemma~\ref{lemma:tail:proba:t}. Substituting~\eqref{eq:lemma:approximation:2:2} and~\eqref{eq:lemma:approximation:2:3} into~\eqref{eq:lemma:approximation:2:1} yields
\begin{equation}\label{eq:lemma:approximation:2:4}
\|\nabla h_{t}-\nabla h_{t}^{*}\|_{L^{2}(p_{T-t})}^{2} \leq \frac{C^{2}\bar{B}^{2}R^{2}}{\sigma_{T-t}^{8}N^{2}}+2^{d+5}\frac{\bar{B}^{2}}{\sigma_{T-t}^{4}}\exp\Big(-\frac{R^{2}}{4d\mu_{t}^{2}+8\sigma_{t}^{2}}\Big).
\end{equation}
Setting $R^{2}=(4d\mu_{t}^{2}+8\sigma_{t}^{2})\log N^{4}$ in~\eqref{eq:lemma:approximation:1:4} and~\eqref{eq:lemma:approximation:2:4} completes the proof.
\end{proof}

\begin{lemma}[Generalization error]
\label{lemma:generalization}
Suppose Assumptions~\ref{assumption:bounded:support} and~\ref{assumption:bounded:weight} hold. Let the hypothesis class $\scrH_{t}$ be defined as~\eqref{eq:theorem:hypothesis}, then 
\begin{align*}
\mathrm{VCdim}(\scrH_{t})&\leq c LS\log(S), \\
\mathrm{VCdim}(\nabla\scrH_{t})&\leq c L^{2}S\log(LS),
\end{align*}
where $c$ is an absolute constant.
\end{lemma}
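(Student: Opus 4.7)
The plan is to reduce both bounds to known VC-dimension estimates for piecewise-linear neural networks, most naturally the near-tight result of Bartlett, Harvey, Liaw, and Mehrabian~\citep{Bartlett2019nearly}, which asserts $\mathrm{VCdim}(N(L,S)) \leq c\, L S \log S$ for ReLU networks with depth $L$ and sparsity $S$.

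For the first bound, I would observe that the pointwise constraints in \eqref{eq:theorem:hypothesis} (boundedness of $h_t$, of $D_k h_t$, and the lower bound $\underline{B}$) only shrink the hypothesis class: $\scrH_t \subseteq N(L,S)$. Since VC-dimension is monotone under inclusion, the bound $\mathrm{VCdim}(\scrH_t) \leq c\, L S \log S$ follows immediately from applying the Bartlett--Harvey--Liaw--Mehrabian estimate to the ambient class $N(L,S)$.

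For the second bound, the key step is to show that each coordinate gradient class $D_k \scrH_t$ can itself be realized inside a piecewise-linear neural-network class of depth $\tilde{L} = O(L)$ and sparsity $\tilde{S} = O(L S)$. For any $h_t \in N(L,S)$ written as $h_t = T_L \circ \varrho \circ T_{L-1} \circ \cdots \circ \varrho \circ T_0$, the chain rule along each linear region gives
\begin{equation*}
D_k h_t(\vx) \;=\; \mA_L \, \mD_{L-1}(\vx)\, \mA_{L-1}\, \mD_{L-2}(\vx) \cdots \mD_0(\vx)\, \mA_0\, \ve_k,
\end{equation*}
where each diagonal matrix $\mD_\ell(\vx)$ encodes the ReLU activation pattern at layer $\ell$. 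Each gated product $\mathbbm{1}\{z>0\}\cdot y$ admits an exact piecewise-linear realisation using a constant-size ReLU gadget (e.g.\ $\sigma(z+y)-\sigma(z)-\sigma(-y)$ on the regions where $z \neq 0$, handled uniformly on each linear cell), so the entire backpropagation graph is a piecewise-linear network that reuses the original affine parameters in both the forward pass (to produce the gates $\mD_\ell$) and the backward cascade (to multiply the resulting matrices). This bookkeeping gives $\tilde{L}=O(L)$ and $\tilde{S} = O(LS)$. Applying the Bartlett--Harvey--Liaw--Mehrabian bound to the augmented class then yields $\mathrm{VCdim}(D_k \scrH_t) \leq c\, \tilde{L}\tilde{S}\log\tilde{S} \leq c'\, L^2 S \log(LS)$, and taking the maximum over $k$ produces the stated bound on $\mathrm{VCdim}(\nabla\scrH_t)$.

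The main obstacle is the construction in the second step: writing the gradient $D_k h_t$ as a piecewise-linear network while honestly accounting for the depth and sparsity blow-up. The delicate point is representing ReLU indicators within the ReLU network formalism, which requires a careful gating gadget, and ensuring that shared original parameters are not double-counted beyond the advertised $O(LS)$ factor. A clean alternative, should the explicit construction become unwieldy, is to bound the sign pattern complexity of $D_k h_t(\vx)$ directly via Warren's theorem and the Goldberg--Jerrum framework for parameterized sign-algebraic function classes, since on each linear cell of the original network $D_k h_t$ is a constant that depends polynomially on the parameters; both routes produce the same $L^2 S \log(LS)$ scaling.
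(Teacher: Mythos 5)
Your approach matches the paper's exactly: both bounds are obtained by embedding the relevant class in a neural-network class of controlled depth and sparsity, then invoking the Bartlett--Harvey--Liaw--Mehrabian VC-dimension estimate. The first bound is handled identically ($\scrH_t \subseteq N(L,S)$ plus monotonicity of VC-dimension). For the second bound the paper simply cites a lemma (Lemma~13 of \citet{ding2025Semi}) asserting $\nabla\scrH_t \subseteq N(c_2 L, c_3 LS)$, and you propose to reprove that inclusion directly; that is the only place you add your own content.

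One concrete flaw in that added content: the gadget $\sigma(z+y)-\sigma(z)-\sigma(-y)$ does not compute $\mathbbm{1}\{z>0\}\cdot y$. Checking cases: for $z>0$, $y<0$, $z+y>0$ it returns $2y$; for $z<0$, $y>0$, $z+y>0$ it returns $z+y$ rather than $0$. More fundamentally, $\mathbbm{1}\{z>0\}\cdot y$ is discontinuous across $z=0$ for $y\neq 0$, so it cannot be represented exactly by any finite continuous piecewise-linear (ReLU) network. If the activation is ReLU, the correct route is the one you flag as the fallback --- bounding the sign-pattern complexity of $D_k h_t(\vx)$ directly via Warren/Goldberg--Jerrum, exploiting that on each linear cell of the network $D_k h_t$ is a fixed polynomial in the parameters --- not an explicit network realization. (If instead the paper's $N(L,S)$ uses a smooth activation, as its cited simultaneous-approximation lemma suggests, then the gradient is genuinely realizable inside a larger network but the gadget is a different one; either way the one you wrote is wrong.) The end result and the overall argument structure are nevertheless the same as the paper's.
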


\begin{proof}[Proof of Lemma~\ref{lemma:generalization}]
Since $\scrH_{t}\subseteq N(L,S)$, using~\citet[Theorem 7]{Bartlett2019nearly} implies
\begin{equation}\label{eq:lemma:generalization:4}
\mathrm{VCdim}(\scrH_{t})\leq\mathrm{VCdim}(N(L,S))\leq c_{1}LS\log(S),
\end{equation}
where $c_{1}$ is an absolute constant. According to~\citet[Lemma 13]{ding2025Semi}, we have $\nabla \scrH_{t}\subseteq N(c_{2}L,c_{3}LS)$, where $c_{2}$ and $c_{3}$ are absolute constants. Using~\citet[Theorem 7]{Bartlett2019nearly} again implies
\begin{equation}\label{eq:lemma:generalization:6}
\mathrm{VCdim}(\nabla\scrH_{t})\leq c_{4}L^{2}S\log(LS),
\end{equation}
where $c_{4}$ is an absolute constant. This completes the proof.
\end{proof}

\begin{customtheorem}{\ref{theorem:rate:guidance}}
Suppose Assumptions~\ref{assumption:bounded:support} and~\ref{assumption:bounded:weight} hold. Let $t\in(0,T)$. Set the hypothesis class $\scrH_{t}$ as
\begin{equation*}
\scrH_{t}\coloneq
\left\{
h_{t}\in N(L,S):
\begin{aligned}
&\sup_{\vx\in\bbR^{d}}h_{t}(\vx)\leq\bar{B}, ~ \inf_{\vx\in\bbR^{d}}h_{t}(\vx)\geq\underline{B}, \\
&\max_{1\leq k\leq d}\sup_{\vx\in\bbR^{d}}|D_{k}h_{t}(\vx)|\leq 2\sigma_{T-t}^{-2}\bar{B}
\end{aligned}
\right\},
\end{equation*}
where $L=\calO(\log n)$ and $S=\calO(n^{\frac{d}{d+8}})$. Let $\what{h}_{t}^{\lambda}$ be the gradient-regularized empirical risk minimizer defined as~\eqref{eq:erm:GP}, and let $h_{t}^{*}$ be the Doob's $h$-function defined as~\eqref{eq:h:function}. Then the following inequality holds:
\begin{equation*}
\bbE\Big[\|\nabla\log\what{h}_{t}^{\lambda}-\nabla\log h_{t}^{*}\|_{L^{2}(p_{T-t})}^{2}\Big]\leq C\sigma_{T-t}^{-8}n^{-\frac{2}{d+8}},
\end{equation*}
provided that the regularization parameter $\lambda$ is set as $\lambda=\calO(n^{-\frac{2}{d+8}})$, where $C$ is a constant depending only on $d$, $\bar{B}$, and $\underline{B}$.
\end{customtheorem}

\begin{proof}[Proof of Theorem~\ref{theorem:rate:guidance}]
Substituting Lemmas~\ref{lemma:approximation}, and~\ref{lemma:generalization} into Lemma~\ref{lemma:oracle:inequality:guidance} yields
\begin{align*}
&\bbE\Big[\|\what{h}_{t}^{\lambda}-h_{t}^{*}\|_{L^{2}(p_{T-t})}^{2}\Big] \\
&\leq C\frac{\log^{4}N}{\sigma_{T-t}^{8}N^{4}}+C\lambda\frac{\log^{2}N}{\sigma_{T-t}^{8}N^{2}}+C\Big(\frac{N^{d}\log^{2}N}{n\log^{-1}n}\Big)^{\frac{1}{2}}+C\frac{\lambda}{\sigma_{T-t}^{4}}\Big(\frac{N^{d}\log^{4}N}{n\log^{-1}n}\Big)^{\frac{1}{2}}+C\frac{\lambda^{2}}{\sigma_{T-t}^{8}},
\end{align*}
where $C$ is a constant only depending on $d$ and $\bar{B}$, and we used the fact $L\leq C^{\prime}\log N$ and $S\leq N^{d}$ in Lemma~\ref{lemma:approximation}. Similarly, 
\begin{align*}
&\bbE\big[\|\nabla\what{h}_{t}^{\lambda}-\nabla h_{t}^{*}\|_{L^{2}(p_{T-t})}^{2}\big] \\
&\leq \frac{C}{\lambda}\frac{\log^{4}N}{\sigma_{T-t}^{8}N^{4}}+C\frac{\log^{2}N}{\sigma_{T-t}^{8}N^{2}}+\frac{C}{\lambda}\Big(\frac{N^{d}\log^{2}N}{n\log^{-1}n}\Big)^{\frac{1}{2}}+\frac{C}{\sigma_{T-t}^{4}}\Big(\frac{N^{d}\log^{4}N}{n\log^{-1}n}\Big)^{\frac{1}{2}}+\frac{C\lambda}{\sigma_{T-t}^{8}}.
\end{align*}
By setting $N=\calO(n^{\frac{1}{d+8}})$ and $\lambda=\calO(n^{-\frac{2}{d+8}})$, we have 
\begin{equation}\label{eq:theorem:rate:guidance:1}
\begin{aligned}
\bbE\big[\|\what{h}_{t}^{\lambda}-h_{t}^{*}\|_{L^{2}(p_{T-t})}^{2}\big]&\lesssim \frac{1}{\sigma_{T-t}^{8}}n^{-\frac{4}{d+8}}\log^{4}n, \\
\bbE\big[\|\nabla\what{h}_{t}^{\lambda}-\nabla h_{t}^{*}\|_{L^{2}(p_{T-t})}^{2}\big]&\lesssim \frac{1}{\sigma_{T-t}^{8}}n^{-\frac{2}{d+8}}\log^{4}n.
\end{aligned}
\end{equation}
Consequently, 
\begin{align*}
&\|\nabla\log\what{h}_{t}^{\lambda}-\nabla\log h_{t}^{*}\|_{L^{2}(p_{T-t})}^{2} \\
&=\Big\|\frac{\nabla\what{h}_{t}^{\lambda}}{\what{h}_{t}^{\lambda}}-\frac{\nabla h_{t}^{*}}{\what{h}_{t}^{\lambda}}+\frac{\nabla h_{t}^{*}}{\what{h}_{t}^{\lambda}}-\frac{\nabla h_{t}^{*}}{h_{t}^{*}}\Big\|_{L^{2}(p_{T-t})}^{2} \\
&\leq 2\Big\|\frac{\nabla\what{h}_{t}^{\lambda}}{\what{h}_{t}^{\lambda}}-\frac{\nabla h_{t}^{*}}{\what{h}_{t}^{\lambda}}\Big\|_{L^{2}(p_{T-t})}^{2}+2\Big\|\frac{\nabla h_{t}^{*}}{\what{h}_{t}^{\lambda}}-\frac{\nabla h_{t}^{*}}{h_{t}^{*}}\Big\|_{L^{2}(p_{T-t})}^{2} \\
&\leq \frac{2}{\underline{B}^{2}}\|\nabla\what{h}_{t}^{\lambda}-\nabla h_{t}^{*}\|_{L^{2}(p_{T-t})}^{2}+2\frac{\bar{B}^{2}}{\underline{B}^{4}}\|\what{h}_{t}^{\lambda}-h_{t}^{*}\|_{L^{2}(p_{T-t})}^{2} \\
&\leq C^{\prime}\frac{1}{\sigma_{T-t}^{8}}n^{-\frac{2}{d+8}}\log^{4}n, 
\end{align*}
where the second inequality is owing to Lemmas~\ref{lemma:section:error:guidance:bound} and~\ref{lemma:section:error:guidance:regularity:2}, and the last inequality holds from~\eqref{eq:theorem:rate:guidance:1}. This completes the proof.
\end{proof}

\section{Derivations in Section~\ref{section:rate:finetuning}}\label{appendix:rate:finetuning}

\subsection{Error decomposition of the controllable diffusion models}
\label{section:proof:error:decomposition}

\begin{customlemma}{\ref{lemma:error:decomposition}}
Suppose Assumptions~\ref{assumption:bounded:support},~\ref{assumption:bounded:weight}, and~\ref{assump:base:score:error} hold. Let $\what{q}_{T-T_{0}}$ be the marginal density of $\what{\mZ}_{T-T_{0}}^{\leftarrow}$ defined in~\eqref{eq:Zt:P:brownian:estimation}. Then it follows that 
\begin{align*}
\kl(q_{T_{0}}\|\what{q}_{T-T_{0}})
&\lesssim \frac{\bar{B}}{\underline{B}}\sum_{k=0}^{K-1}h\bbE^{\bbP}\left[\|\nabla\log\what{h}_{kh}(\mX_{kh}^{\leftarrow})-\nabla\log h_{kh}^{*}(\mX_{kh}^{\leftarrow})\|_{2}^{2}\right] \\
&\quad +\frac{\bar{B}}{\underline{B}}T\varepsilon_{\refer}^{2}+d\exp(-T)+\frac{d^{2}T^{2}}{\sigma_{T_{0}}^{4}K},
\end{align*}
where the notation $\lesssim$ hides absolute constants.
\end{customlemma}

\begin{proof}[Proof of Lemma~\ref{lemma:error:decomposition}]
According to~\citet[Proposition C.3]{Chen2023Improved}, we have 
\begin{equation}\label{eq:lemma:error:kl:0}
\begin{aligned}
\kl(q_{T_{0}}\|\what{q}_{T-T_{0}})
&\lesssim\underbrace{\kl(q_{T}\|\gamma_{d})}_{\text{initial error}}+\underbrace{\sum_{k=0}^{K-1}h\bbE^{\bbQ}\left[\|\what{\vs}(kh,\mZ_{kh}^{\leftarrow})-\nabla\log p_{T-kh}(\mZ_{kh}^{\leftarrow})\|_{2}^{2}\right]}_{\text{base score error}} \\
&\quad+\underbrace{\sum_{k=0}^{K-1}h\bbE^{\bbQ}\left[\|\nabla\log\what{h}_{kh}(\mZ_{kh}^{\leftarrow})-\nabla\log h_{kh}^{*}(\mZ_{kh}^{\leftarrow})\|_{2}^{2}\right]}_{\text{Doob's guidance error}} \\
&\quad+\underbrace{\sum_{k=0}^{K-1}\int_{kh}^{(k+1)h}\bbE^{\bbQ}\left[\|\nabla\log q_{T-t}(\mZ_{t}^{\leftarrow})-\nabla\log q_{T-kh}(\mZ_{kh}^{\leftarrow})\|_{2}^{2}\right]\dt}_{\text{discretization error}},
\end{aligned}
\end{equation}
where $\gamma_{d}$ is the density of a standard Gaussian distribution $\calN(\bzero,\mI_{d})$.

\par\noindent{\em Step 1. Initial error in~\eqref{eq:lemma:error:kl:0}.}
Using~\citet[Lemma C.4]{Chen2023Improved}, we have 
\begin{equation}\label{eq:lemma:error:kl:1}
\kl(q_{T}\|\gamma_{d})\lesssim d\exp(-T).
\end{equation}

\par\noindent{\em Step 2. Reference score error and Doob's guidance error in~\eqref{eq:lemma:error:kl:0}.}
Under Assumption~\ref{assumption:bounded:weight}, it is apparent that $\bbE^{\bbP}[w(\mX_{T}^{\leftarrow})]\geq\underline{B}$ and $\sup_{\vx\in\bbR^{d}}h^{*}(t,\vx)\leq\bar{B}$ for each $t\in(T_{0},T)$. Hence, the density ratio is uniformly bounded
\begin{equation}\label{eq:lemma:error:kl:2}
\sup_{\vx\in\bbR^{d}}\frac{q_{T-t}(\vx)}{p_{T-t}(\vx)}=\frac{h^{*}(t,\vx)}{\bbE^{\bbP}[w(\mX_{T}^{\leftarrow})]}\leq\frac{\bar{B}}{\underline{B}}.
\end{equation}
For the reference score error term in~\eqref{eq:lemma:error:kl:0}, it follows for each $0\leq k\leq K-1$ that 
\begin{align*}
&\bbE^{\bbQ}\left[\|\what{\vs}(kh,\mZ_{kh}^{\leftarrow})-\nabla\log p_{T-kh}(\mZ_{kh}^{\leftarrow})\|_{2}^{2}\right] \\
&=\int\|\what{\vs}(kh,\vz)-\nabla\log p_{T-kh}(\vz)\|_{2}^{2}q_{T-kh}(\vz)\d\vz \\
&=\int\|\what{\vs}(kh,\vx)-\nabla\log p_{T-kh}(\vx)\|_{2}^{2}\frac{q_{T-kh}(\vx)}{p_{T-kh}(\vx)}p_{T-kh}(\vx)\d\vx \\
&\leq\frac{\bar{B}}{\underline{B}}\int\|\what{\vs}(kh,\vx)-\nabla\log p_{T-kh}(\vx)\|_{2}^{2}p_{T-kh}(\vx)\d\vx \\
&=\frac{\bar{B}}{\underline{B}}\bbE^{\bbP}\left[\|\what{\vs}(kh,\mX_{kh}^{\leftarrow})-\nabla\log p_{T-kh}(\mX_{kh}^{\leftarrow})\|_{2}^{2}\right],
\end{align*}
where the inequality holds from~\eqref{eq:lemma:error:kl:2} and H{\"o}lder's inequality. Consequently, 
\begin{align}
&\sum_{k=0}^{K-1}h\bbE^{\bbQ}\left[\|\what{\vs}(kh,\mZ_{kh}^{\leftarrow})-\nabla\log p_{T-kh}(\mZ_{kh}^{\leftarrow})\|_{2}^{2}\right] \nonumber \\
&\leq\frac{\bar{B}}{\underline{B}}\sum_{k=0}^{K-1}h\bbE^{\bbP}\left[\|\what{\vs}(kh,\mX_{kh}^{\leftarrow})-\nabla\log p_{T-kh}(\mX_{kh}^{\leftarrow})\|_{2}^{2}\right]\leq T\frac{\bar{B}}{\underline{B}}\varepsilon_{\refer}^{2}, \label{eq:lemma:error:kl:3}
\end{align}
where the last inequality is owing to Assumption~\ref{assump:base:score:error}. By a similar argument, we have 
\begin{align}
&\sum_{k=0}^{K-1}h\bbE^{\bbQ}\left[\|\nabla\log\what{h}_{kh}(\mZ_{kh}^{\leftarrow})-\nabla\log h_{kh}^{*}(\mZ_{kh}^{\leftarrow})\|_{2}^{2}\right] \nonumber \\
&\leq\frac{\bar{B}}{\underline{B}}\sum_{k=0}^{K-1}h\bbE^{\bbP}\left[\|\nabla\log\what{h}_{kh}(\mX_{kh}^{\leftarrow})-\nabla\log h_{kh}^{*}(\mX_{kh}^{\leftarrow})\|_{2}^{2}\right]. \label{eq:lemma:error:kl:4}
\end{align}

\par\noindent{\em Step 3. Discretization error in~\eqref{eq:lemma:error:kl:0}.}
According to~\citet[Lemma D.1]{Chen2023Improved}, we have 
\begin{equation}\label{eq:lemma:error:kl:5}
\bbE^{\bbQ}\big[\|\nabla\log q_{T-t}(\mZ_{t}^{\leftarrow})-\nabla\log q_{T-kh}(\mZ_{kh}^{\leftarrow})\|_{2}^{2}\big]\lesssim\frac{dG_{k}T}{K},
\end{equation}
for any $t\in(kh,(k+1)h)$, provided that $\nabla\log q_{T-t}$ is $G$-Lipschitz for any $t\in(kh,(k+1)h)$. Then it remains to estimate the Lipschitz constant $G_{k}$. Using Lemmas~\ref{lemma:tweedie} and~\ref{lemma:grad:condition:expectation} yields
\begin{align*}
\nabla^{2}\log q_{T-t}(\vz)
&=-\frac{1}{\sigma_{T-t}^{2}}\mI_{d}+\frac{\mu_{T-t}}{\sigma_{T-t}^{2}}\nabla\bbE[\mZ_{0}|\mZ_{t}=\vz] \\
&=-\frac{1}{\sigma_{T-t}^{2}}\mI_{d}+\frac{\mu_{T-t}^{2}}{\sigma_{T-t}^{4}}\cov(\mZ_{0}|\mZ_{t}=\vz).
\end{align*}
As a consequence, for each $0\leq k\leq K-1$,
\begin{align}
G_{k}
&\leq\sup_{t\in(T_{0},T)}\sup_{\vz\in\bbR^{d}}\|\nabla^{2}\log q_{T-t}(\vz)\|_{\op} \nonumber \\
&\leq\sup_{t\in(T_{0},T)}\frac{1}{\sigma_{T-t}^{2}}+\frac{\mu_{T-t}^{2}}{\sigma_{T-t}^{4}}\|\cov(\mZ_{0}|\mZ_{t}=\vz)\|_{\op} \lesssim\frac{d}{\sigma_{T_{0}}^{4}}, \label{eq:lemma:error:kl:6}
\end{align}
where the first inequality holds from the triangular inequality, and the second inequality is due to the boundedness of $\mZ_{0}$ under Assumptions~\ref{assumption:bounded:support} and~\ref{assumption:bounded:weight}. Combining~\eqref{eq:lemma:error:kl:5} and~\eqref{eq:lemma:error:kl:6} implies 
\begin{equation}\label{eq:lemma:error:kl:7}
\sum_{k=0}^{K-1}\int_{kh}^{(k+1)h}\bbE^{\bbQ}\left[\|\nabla\log q_{T-t}(\mZ_{t}^{\leftarrow})-\nabla\log q_{T-kh}(\mZ_{kh}^{\leftarrow})\|_{2}^{2}\right]\dt\lesssim\frac{d^{2}T^{2}}{\sigma_{T_{0}}^{4}K}.
\end{equation}

\par\noindent{\em Step 4. Conclusions.}
Substituting~\eqref{eq:lemma:error:kl:1},~\eqref{eq:lemma:error:kl:3},~\eqref{eq:lemma:error:kl:4},~\eqref{eq:lemma:error:kl:7} into~\eqref{eq:lemma:error:kl:0} completes the proof.
\end{proof}

\begin{corollary}\label{corollary:error:kl}
Suppose Assumptions~\ref{assumption:bounded:support},~\ref{assumption:bounded:weight}, and~\ref{assump:base:score:error} hold. Let $\delta\in(0,1)$. Set the hypothesis classes $\{\scrH_{T-kh}\}_{k=0}^{K=1}$ as~\eqref{eq:theorem:hypothesis} with the same depth $L$ and number of non-zero parameters $S$. Let $\what{q}_{T-T_{0}}$ be the marginal density of $\what{\mZ}_{T-T_{0}}^{\leftarrow}$ defined in~\eqref{eq:Zt:P:brownian:estimation}. Then it follows that 
\begin{equation*}
\|q_{T_{0}}-\what{q}_{T-T_{0}}\|_{\tv}^{2} \leq \frac{C\delta^{2}}{\sigma_{T_{0}}^{8}}\log\Big(\frac{1}{\delta}\log\Big(\frac{\sigma_{T_{0}}^{8}}{\delta^{2}}\Big)\Big),
\end{equation*}
where $C$ is a constant depending only on $d$, $\bar{B}$, and $\underline{B}$, and 
\begin{align*}
&T \asymp \log\Big(\frac{\sigma_{T_{0}}^{8}}{\delta^{2}}\Big), \quad K \gtrsim \frac{\sigma_{T_{0}}^{4}}{\delta^{2}}\log^{2}\Big(\frac{\sigma_{T_{0}}^{8}}{\delta^{2}}\Big), \quad h \lesssim \frac{\delta^{2}}{\sigma_{T_{0}}^{4}}\log^{-1}\Big(\frac{\sigma_{T_{0}}^{8}}{\delta^{2}}\Big) \\
&\varepsilon_{\refer}^{2} \lesssim \frac{\delta^{2}}{\sigma_{T_{0}}^{8}}\log^{-1}\Big(\frac{\sigma_{T_{0}}^{8}}{\delta^{2}}\Big), 
\quad n \gtrsim \frac{1}{\delta^{d+8}}\log^{\frac{d+8}{2}}\Big(\frac{\sigma_{T_{0}}^{8}}{\delta^{2}}\Big).
\end{align*}
\end{corollary}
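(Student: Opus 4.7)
The plan is to combine the error decomposition of Lemma~\ref{lemma:error:decomposition} with the guidance-estimation rate of Theorem~\ref{theorem:rate:guidance}, convert KL into total variation via Pinsker's inequality, and then tune each of the four error sources to the common scale $\delta^{2}/\sigma_{T_{0}}^{8}$.

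First I would invoke Lemma~\ref{lemma:error:decomposition} to obtain
\[
\kl(q_{T_{0}}\|\what{q}_{T-T_{0}})\lesssim \mathrm{(I)}+\mathrm{(II)}+\mathrm{(III)}+\mathrm{(IV)},
\]
where (I) is the summed guidance error, $\mathrm{(II)}=(\bar{B}/\underline{B})T\varepsilon_{\refer}^{2}$, $\mathrm{(III)}=d\exp(-T)$, and $\mathrm{(IV)}=d^{2}T^{2}/(\sigma_{T_{0}}^{4}K)$. Taking expectations over the training data and applying Theorem~\ref{theorem:rate:guidance} time-pointwise yields, for every $0\leq k\leq K-1$,
\[
\bbE\bigl[\|\nabla\log\what{h}_{kh}^{\lambda}-\nabla\log h_{kh}^{*}\|_{L^{2}(p_{T-kh})}^{2}\bigr]\leq C\sigma_{T-kh}^{-8}\,n^{-\frac{2}{d+8}}\log^{4}n.
\]
Since $s\mapsto\sigma_{s}$ is increasing and the reverse-time index satisfies $kh\leq T-T_{0}$, the minimum of $\sigma_{T-kh}$ over admissible $k$ is $\sigma_{T_{0}}$, so a uniform bound on the prefactor delivers
\[
\mathrm{(I)}\lesssim \frac{\bar{B}}{\underline{B}}\cdot\frac{T}{\sigma_{T_{0}}^{8}}\,n^{-\frac{2}{d+8}}\log^{4}n.
\]

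Next I would apply Pinsker's inequality, $\|q_{T_{0}}-\what{q}_{T-T_{0}}\|_{\tv}^{2}\leq \tfrac{1}{2}\kl(q_{T_{0}}\|\what{q}_{T-T_{0}})$, and equate each of the four controlled terms to the target scale $\delta^{2}/\sigma_{T_{0}}^{8}$. The initialization term forces $T\asymp \log(\sigma_{T_{0}}^{8}/\delta^{2})$; the reference-score term forces $\varepsilon_{\refer}^{2}\lesssim \delta^{2}/(T\sigma_{T_{0}}^{8})$, matching the stated bound after substituting the chosen $T$; the discretization term forces $K\gtrsim T^{2}\sigma_{T_{0}}^{4}/\delta^{2}$, equivalently $h=(T-T_{0})/K\lesssim \delta^{2}/(\sigma_{T_{0}}^{4}T)$; and the guidance term demands $n^{-2/(d+8)}\log^{4}n\lesssim \delta^{2}/T$, which is solved by $n\gtrsim \delta^{-(d+8)}\log^{(d+8)/2}(\sigma_{T_{0}}^{8}/\delta^{2})$ up to further $\log n$ factors.

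The main obstacle will be reconciling the self-referential $\log^{4}n$ factor inherited from Theorem~\ref{theorem:rate:guidance} with the prescribed sample complexity: at $n\asymp \delta^{-(d+8)}\log^{(d+8)/2}(\sigma_{T_{0}}^{8}/\delta^{2})$ one has $\log n\asymp \log(1/\delta)+\log\log(\sigma_{T_{0}}^{8}/\delta^{2})$, and propagating this estimate back into the guidance bound produces exactly the residual factor $\log(\delta^{-1}\log(\sigma_{T_{0}}^{8}/\delta^{2}))$ that appears in the statement of the corollary. Once this poly-logarithmic bookkeeping is carried out, summing the four tuned bounds and invoking Pinsker yields the claimed TV inequality.
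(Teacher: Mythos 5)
Your proposal is correct and takes essentially the same route as the paper: apply Lemma~\ref{lemma:error:decomposition}, bound the guidance term via Theorem~\ref{theorem:rate:guidance} using $\sigma_{T-kh}^{-8}\leq\sigma_{T_0}^{-8}$, tune $T,K,\varepsilon_{\refer},n$ so that each of the four contributions matches $\delta^2/\sigma_{T_0}^8$, and finish with Pinsker's inequality. Your explicit note that one must take expectation over the training data before invoking Theorem~\ref{theorem:rate:guidance} is a welcome clarification that the paper's terse proof leaves implicit.
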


\begin{proof}[Proof of Corollary~\ref{corollary:error:kl}]
Combining Theorem~\ref{theorem:rate:guidance} and Lemma~\ref{lemma:error:decomposition} yields
\begin{align*}
\kl(q_{T_{0}}\|\what{q}_{T-T_{0}})
&\leq \frac{C}{\sigma_{T_{0}}^{8}}\Big\{\underbrace{Tn^{-\frac{2}{d+8}}\log^{4}n}_{\text{(i)}}+\underbrace{T\sigma_{T_{0}}^{8}\varepsilon_{\refer}^{2}}_{\text{(ii)}}+\underbrace{\sigma_{T_{0}}^{8}\exp(-T)}_{\text{(iii)}}+\underbrace{T^{2}\sigma_{T_{0}}^{4}\frac{1}{K}}_{\text{(iv)}}\Big\},
\end{align*}
where $C$ is a constant depending only on $d$, $\bar{B}$, and $\underline{B}$. By setting  
\begin{align*}
&T \asymp \log\Big(\frac{\sigma_{T_{0}}^{8}}{\delta^{2}}\Big), \quad K \gtrsim \frac{\sigma_{T_{0}}^{4}}{\delta^{2}}\log^{2}\Big(\frac{\sigma_{T_{0}}^{8}}{\delta^{2}}\Big), \\
&\varepsilon_{\refer}^{2} \lesssim \frac{\delta^{2}}{\sigma_{T_{0}}^{8}}\log^{-1}\Big(\frac{\sigma_{T_{0}}^{8}}{\delta^{2}}\Big), 
\quad n \gtrsim \frac{1}{\delta^{d+8}}\log^{\frac{d+8}{2}}\Big(\frac{\sigma_{T_{0}}^{8}}{\delta^{2}}\Big),
\end{align*}
we find 
\begin{equation*}
\kl(q_{T_{0}}\|\what{q}_{T-T_{0}}) \leq \frac{C\delta^{2}}{\sigma_{T_{0}}^{8}}\log\Big(\frac{1}{\delta}\log\Big(\frac{\sigma_{T_{0}}^{8}}{\delta^{2}}\Big)\Big).
\end{equation*}
Finally, using Pinker's inequality completes the proof.
\end{proof}

\subsection{Convergence rate of the controllable diffusion models}
\label{section:proof:rate}

\begin{customtheorem}{\ref{theorem:rate:controllable:diffusion}}
Suppose Assumptions~\ref{assumption:bounded:support},~\ref{assumption:bounded:weight}, and~\ref{assump:base:score:error} hold. Let $\varepsilon\in(0,1)$. Set the hypothesis classes $\{\scrH_{T-kh}\}_{k=0}^{K=1}$ as~\eqref{eq:theorem:hypothesis} with the same depth $L$ and number of non-zero parameters $S$ as Theorem~\ref{theorem:rate:guidance}. Let $\what{q}_{T-T_{0}}$ be the marginal density of $\what{\mZ}_{T-T_{0}}^{\leftarrow}$ defined in~\eqref{eq:Zt:P:brownian:estimation}, and let $(\calM\circ\calT_{R})_{\sharp}\what{q}_{T-T_{0}}$ defined as~\eqref{eq:trancation:scaling}. Then it follows that 
\begin{equation}
\bbE\Big[\calW_{2}^{2}(q_{0},(\calM\circ\calT_{R})_{\sharp}\what{q}_{T-T_{0}})\Big]\leq C\varepsilon\log^{3}\Big(\frac{1}{\varepsilon}\Big).
\end{equation} 
provided that the truncation radius $R$, the terminal time $T$, the step size $h$, the number of steps $K$, the error of reference score $\varepsilon_{\refer}$, the number of samples $n$ for Doob's matching, and the early-stopping time $T_{0}$ are set, respectively, as 
\begin{align*}
&R \asymp \log^{\frac{1}{2}}\Big(\frac{1}{\varepsilon}\Big), \quad 
T \asymp \log\Big(\frac{1}{\varepsilon^{2}}\Big), \quad 
K \gtrsim \frac{1}{\varepsilon^{4}}\log^{2}\Big(\frac{1}{\varepsilon^{2}}\Big), \quad 
h \lesssim \varepsilon^{4}\log^{-1}\Big(\frac{1}{\varepsilon^{2}}\Big) \\
&\varepsilon_{\refer}^{2} \lesssim \varepsilon^{2}\log^{-1}\Big(\frac{1}{\varepsilon^{2}}\Big), 
\quad n \gtrsim \frac{1}{\varepsilon^{3(d+8)}}\log^{\frac{d+8}{2}}\Big(\frac{1}{\varepsilon^{2}}\Big).
\end{align*}
Here $C$ is a constant depending only on $d$, $\bar{B}$, and $\underline{B}$. 
\end{customtheorem} 

\begin{proof}[Proof of Theorem~\ref{theorem:rate:controllable:diffusion}]
According to the triangular inequality, we have 
\begin{equation}\label{eq:theorem:rate:controllable:diffusion:1}
\begin{aligned}
\calW_{2}^{2}(q_{0},(\calM\circ\calT_{R})_{\sharp}\what{q}_{T-T_{0}})
&=3\underbrace{\calW_{2}^{2}(q_{0},\calM_{\sharp}q_{T_{0}})}_{\text{(i)}}+3\underbrace{\calW_{2}^{2}(\calM_{\sharp}q_{T_{0}},(\calM\circ\calT_{R})_{\sharp}q_{T_{0}})}_{\text{(ii)}} \\
&\quad+3\underbrace{\calW_{2}^{2}((\calM\circ\calT_{R})_{\sharp}q_{T_{0}},(\calM\circ\calT_{R})_{\sharp}\what{q}_{T-T_{0}})}_{\text{(iii)}}. 
\end{aligned}
\end{equation}
Here the term (i) represents the early-stopping error, the term (ii) represents the truncation error, while the term (iii) represents the error of controllable diffusion models~\eqref{eq:Zt:P:brownian:estimation}. In the rest of the proof, we bound these three errors, respectively. 

\par\noindent{\em Step 1. Bound the term (i) in~\eqref{eq:theorem:rate:controllable:diffusion:1}.}
To estimate the 2-Wasserstein distance between the target distribution $q_{0}$ and the scaled early-stopping distribution $\calM_{\sharp}q_{T_{0}}$, we begin by producing a coupling of them. Let $\mZ_{0}\sim q_{0}$, and let $\vepsilon\sim\calN(\bzero,\mI_{d})$ be independent of $\mZ_{0}$. Define $\widetilde{\mZ}_{T_{0}}\coloneq\mZ_{0}+\sigma_{T_{0}}\mu_{T_{0}}^{-1}\vepsilon$. It is apparent that $\widetilde{\mZ}_{T_{0}}\sim\calM_{\sharp}q_{T_{0}}$. Then
\begin{equation}\label{eq:theorem:rate:controllable:diffusion:2}
\calW_{2}^{2}(q_{0},\calM_{\sharp}q_{T_{0}})\leq\bbE\big[\|\mZ_{0}-\widetilde{\mZ}_{T_{0}}\|_{2}^{2}\big]=\frac{\sigma_{T_{0}}^{2}}{\mu_{T_{0}}^{2}}\bbE\big[\|\vepsilon\|_{2}^{2}\big]=\frac{d\sigma_{T_{0}}^{2}}{\mu_{T_{0}}^{2}}.
\end{equation}

\par\noindent{\em Step 2. Bound the term (ii) in~\eqref{eq:theorem:rate:controllable:diffusion:1}.}
Let $\mZ_{T_{0}}\sim q_{T_{0}}$. According to the definition of the truncation operator $\calT_{R}$, the joint law of $(\mu_{T_{0}}^{-1}\mZ_{T_{0}},\mu_{T_{0}}^{-1}\mZ_{T_{0}}\bbone_{B(\bzero,R)}(\mZ_{T_{0}}))$ is a coupling of $\calM_{\sharp}q_{T_{0}}$ and $(\calM\circ\calT_{R})_{\sharp}q_{T_{0}}$. Therefore, 
\begin{align*}
\calW_{2}^{2}(\calM_{\sharp}q_{T_{0}},(\calM\circ\calT_{R})_{\sharp}q_{T_{0}}) 
&\leq\bbE\big[\|\mu_{T_{0}}^{-1}\mZ_{T_{0}}-\mu_{T_{0}}^{-1}\mZ_{T_{0}}\bbone_{B(\bzero,R)}(\mZ_{T_{0}})\|_{2}^{2}\big] \\
&=\frac{1}{\mu_{T_{0}}^{2}}\int\|\vz-\vz\bbone_{B(\bzero,R)}(\vz)\|_{2}^{2}q_{T_{0}}(\vz)\d\vz \\
&=\frac{1}{\mu_{T_{0}}^{2}}\int\|\vz\|_{2}^{2}\bbone_{\bbR^{d}-B(\bzero,R)}(\vz)q_{T_{0}}(\vz)\d\vz \\
&\leq\frac{1}{\mu_{T_{0}}^{2}}\bbE^{\frac{1}{2}}\big[\|\mZ_{T_{0}}\|_{2}^{4}\big]\pr^{\frac{1}{2}}\{\|\mZ_{T_{0}}\|_{2}> R\} \\
&\lesssim \frac{1}{\mu_{T_{0}}^{2}}d 2^{d+1}\exp\Big(-\frac{R^{2}}{4d\mu_{T_{0}}^{2}+8\sigma_{T_{0}}^{2}}\Big),
\end{align*}
where the second ineq holds from Cauchy-Schwarz inequality, and the last inequality is due to Lemma~\ref{lemma:expectation:t} and Corollary~\ref{corollary:tail:proba:t}. By setting $R^{2}=(4d\mu_{T_{0}}^{2}+8\sigma_{T_{0}}^{2})\log(\varepsilon^{-1})$, we have 
\begin{equation}\label{eq:theorem:rate:controllable:diffusion:3}
\calW_{2}^{2}(\calM_{\sharp}q_{T_{0}},(\calM\circ\calT_{R})_{\sharp}q_{T_{0}})\lesssim\frac{d2^{d}}{\mu_{T_{0}}^{2}}\varepsilon.
\end{equation}

\par\noindent{\em Step 3. Bound the term (iii) in~\eqref{eq:theorem:rate:controllable:diffusion:1}.}
Let $\mZ_{T_{0}}^{R}\sim (\calT_{R})_{\sharp}q_{T_{0}}$ and $\what{\mZ}_{T_{0}}^{R}\sim (\calT_{R})_{\sharp}\what{q}_{T-T_{0}}$ be optimal coupled. This means 
\begin{equation}\label{eq:theorem:rate:controllable:diffusion:4}
\calW_{2}^{2}((\calT_{R})_{\sharp}q_{T_{0}},(\calT_{R})_{\sharp}\what{q}_{T-T_{0}})=\bbE\big[\|\mZ_{T_{0}}^{R}-\what{\mZ}_{T_{0}}^{R}\|_{2}^{2}\big].
\end{equation}
On the other hand, $\mu_{T_{0}}^{-1}\mZ_{T_{0}}^{R}\sim(\calM\circ\calT_{R})_{\sharp}q_{T_{0}}$ and $\mu_{T_{0}}^{-1}\what{\mZ}_{T_{0}}^{R}\sim(\calM\circ\calT_{R})_{\sharp}\what{q}_{T-T_{0}}$. Hence,
\begin{align}
&\calW_{2}^{2}((\calM\circ\calT_{R})_{\sharp}q_{T_{0}},(\calM\circ\calT_{R})_{\sharp}\what{q}_{T-T_{0}}) \nonumber \\
&\leq\bbE\big[\|\mu_{T_{0}}^{-1}\mZ_{T_{0}}^{R}-\mu_{T_{0}}^{-1}\what{\mZ}_{T_{0}}^{R}\|_{2}^{2}\big]=\frac{1}{\mu_{T_{0}}^{2}}\calW_{2}^{2}((\calT_{R})_{\sharp}q_{T_{0}},(\calT_{R})_{\sharp}\what{q}_{T-T_{0}}), \label{eq:theorem:rate:controllable:diffusion:5}
\end{align}
where the equality holds from~\eqref{eq:theorem:rate:controllable:diffusion:4}. Then using~\citet[Theorem 6.15]{Villani2009Optimal} and the data processing inequality, we have 
\begin{align}
\calW_{2}^{2}((\calT_{R})_{\sharp}q_{T_{0}},(\calT_{R})_{\sharp}\what{q}_{T-T_{0}}) 
&=2R^{2}\|(\calT_{R})_{\sharp}q_{T_{0}}-(\calT_{R})_{\sharp}\what{q}_{T-T_{0}}\|_{\tv} \nonumber \\
&\leq 2R^{2}\|q_{T_{0}}-\what{q}_{T-T_{0}}\|_{\tv}. \label{eq:theorem:rate:controllable:diffusion:6}
\end{align}
Combining~\eqref{eq:theorem:rate:controllable:diffusion:5} and~\eqref{eq:theorem:rate:controllable:diffusion:6} yields
\begin{align}
\calW_{2}^{2}((\calM\circ\calT_{R})_{\sharp}q_{T_{0}},(\calM\circ\calT_{R})_{\sharp}\what{q}_{T-T_{0}})
&\leq\frac{2R^{2}}{\mu_{T_{0}}^{2}}\|q_{T_{0}}-\what{q}_{T-T_{0}}\|_{\tv} \nonumber \\
&\leq\frac{2R^{2}}{\mu_{T_{0}}^{2}}\frac{C^{\prime}\varepsilon^{3}}{\sigma_{T_{0}}^{4}} \nonumber \\
&\leq\frac{2(4d\mu_{T_{0}}^{2}+8\sigma_{T_{0}}^{2})\log(\varepsilon^{-1})}{\mu_{T_{0}}^{2}}\frac{C^{\prime}\varepsilon^{3}}{\sigma_{T_{0}}^{4}}\log^{2}\Big(\frac{1}{\varepsilon}\Big), \label{eq:theorem:rate:controllable:diffusion:7}
\end{align}
where $C$ is a constant depending only on $d$, $\bar{B}$, and $\underline{B}$, and the second inequality holds from Corollary~\ref{corollary:error:kl} with $\delta=\varepsilon^{3}$.

\par\noindent{\em Step 4. Conclusion.}
Substituting~\eqref{eq:theorem:rate:controllable:diffusion:2},~\eqref{eq:theorem:rate:controllable:diffusion:3}, and~\eqref{eq:theorem:rate:controllable:diffusion:7} into~\eqref{eq:theorem:rate:controllable:diffusion:1} yields
\begin{align*}
\calW_{2}^{2}(q_{0},(\calM\circ\calT_{R})_{\sharp}\what{q}_{T-T_{0}})
&\lesssim\frac{d\sigma_{T_{0}}^{2}}{\mu_{T_{0}}^{2}}+\frac{d2^{d}}{\mu_{T_{0}}^{2}}\varepsilon+\frac{2(4d\mu_{T_{0}}^{2}+8\sigma_{T_{0}}^{2})\log(\varepsilon^{-1})}{\mu_{T_{0}}^{2}}\frac{C^{\prime}\varepsilon^{3}}{\sigma_{T_{0}}^{4}}\log^{2}\Big(\frac{1}{\varepsilon}\Big) \\
&\leq C\Big\{\sigma_{T_{0}}^{2}+\varepsilon+\frac{\varepsilon^{3}}{\sigma_{T_{0}}^{4}}\log^{3}\Big(\frac{1}{\varepsilon}\Big)\Big\},
\end{align*}
where $C$ is a constant depending only on $d$, $\bar{B}$, and $\underline{B}$. Letting $\sigma_{T_{0}}^{2}\asymp\varepsilon$, i.e., $T_{0}\asymp\varepsilon$, completes the proof. 
\end{proof}

\section{Derivations in Section~\ref{section:rate:low}}\label{appendix:rate:low}

\begin{customproposition}{\ref{proposition:h:function:low}}
Suppose Assumptions~\ref{assumption:intrinsic} and~\ref{assumption:bounded:weight} hold. Then for any $t\in(0,T)$ and $\vx\in\bbR^{d}$, we have
\begin{equation}
h_{t}^{*}(\vx)=\bar{h}_{t}^{*}(\mP^{\top}\vx)\coloneq \bbE[w(\mP\bar{\mX}_{T}^{\leftarrow}) \mid \bar{\mX}_{t}^{\leftarrow}=\mP^{\top}\vx].
\end{equation}
\end{customproposition}

\begin{proof}[Proof of Proposition~\ref{proposition:h:function:low}]
According to Assumption~\ref{assumption:intrinsic}, a particle $\mX_{0}$ following $p_{0}$ satisfies
\begin{equation*}
\mX_{0} \stackrel{\d}{=} \mP\bar{\mX}_{0}, \quad \bar{\mX}_{0} \sim \bar{p}_{0}.
\end{equation*}
We first establish the relations between $p_{t}$ and $\bar{p}_{t}$. It is straightforward that  
\begin{align}
p_{t}(\vx)
&=\int\varphi_{d}(\vx;\mu_{t}\vx_{0},\sigma_{t}^{2}\mI_{d})p_{0}(\vx_{0})\d\vx_{0} \nonumber \\
&=\int\varphi_{d}(\vx;\mu_{t}\vx_{0},\sigma_{t}^{2}\mI_{d})\Big(\int \delta_{\mP\bar{\vx}_{0}}(\vx_{0})\bar{p}_{0}(\bar{\vx}_{0})\d\bar{\vx}_{0}\Big)\d\vx_{0} \nonumber \\
&=\int\Big(\int\varphi_{d}(\vx;\mu_{t}\vx_{0},\sigma_{t}^{2}\mI_{d}) \delta_{\mP\bar{\vx}_{0}}(\vx_{0})\d\vx_{0}\Big)\bar{p}_{0}(\bar{\vx}_{0})\d\bar{\vx}_{0} \nonumber \\
&=\int\varphi_{d}(\vx;\mu_{t}\mP\bar{\vx}_{0},\sigma_{t}^{2}\mI_{d})\bar{p}_{0}(\bar{\vx}_{0})\d\bar{\vx}_{0} \nonumber \\
&=(2\pi\sigma_{t}^{2})^{-\frac{d}{2}}\int\exp\Big(-\frac{\|\vx-\mu_{t}\mP\bar{\vx}_{0}\|_{2}^{2}}{2\sigma_{t}^{2}}\Big)\bar{p}_{0}(\bar{\vx}_{0})\d\bar{\vx}_{0} \nonumber \\
&=(2\pi\sigma_{t}^{2})^{-\frac{d}{2}}\int\exp\Big(-\frac{\|(\mI_{d}-\mP\mP^{\top})\vx+\mP\mP^{\top}\vx-\mu_{t}\mP\bar{\vx}_{0}\|_{2}^{2}}{2\sigma_{t}^{2}}\Big)\bar{p}_{0}(\bar{\vx}_{0})\d\bar{\vx}_{0} \nonumber \\
&=(2\pi\sigma_{t}^{2})^{-\frac{d}{2}}\int\exp\Big(-\frac{\|(\mI_{d}-\mP\mP^{\top})\vx\|_{2}^{2}+\|\mP\mP^{\top}\vx-\mu_{t}\mP\bar{\vx}_{0}\|_{2}^{2}}{2\sigma_{t}^{2}}\Big)\bar{p}_{0}(\bar{\vx}_{0})\d\bar{\vx}_{0} \nonumber \\
&=(2\pi\sigma_{t}^{2})^{-\frac{d}{2}}\exp\Big(-\frac{\|(\mI_{d}-\mP\mP^{\top})\vx\|_{2}^{2}}{2\sigma_{t}^{2}}\Big)\int\exp\Big(-\frac{\|\mP^{\top}\vx-\mu_{t}\bar{\vx}_{0}\|_{2}^{2}}{2\sigma_{t}^{2}}\Big)\bar{p}_{0}(\bar{\vx}_{0})\d\bar{\vx}_{0} \nonumber \\
&=\exp\Big(-\frac{\|(\mI_{d}-\mP\mP^{\top})\vx\|_{2}^{2}}{2\sigma_{t}^{2}}\Big)\bar{p}_{t}(\mP^{\top}\vx), \label{eq:low:dim:density}
\end{align}
where the seventh equality invokes the fact that $(\mI_{d}-\mP\mP^{\top})\vx$ is orthogonal to $\mP\mP^{\top}\vx-\mu_{t}\mP\bar{\vx}_{0}$, the eighth equality is due to $\|\mP\vv\|_{2}=\|\vv\|_{2}$ for each $\vv\in\bbR^{d}$, and the last equality used~\eqref{eq:forward:solution:low}. Then by a similar argument as the density, we find 
\begin{align*}
&h_{T-t}^{*}(\vx)=\bbE[w(\mX_{0}) \mid \mX_{t}=\vx] \\
&=\frac{1}{p_{t}(\vx)}\int w(\vx_{0})\varphi_{d}(\vx;\mu_{t}\vx_{0},\sigma_{t}^{2}\mI_{d})p_{0}(\vx_{0})\d\vx_{0} \\
&=\frac{1}{p_{t}(\vx)}\int w(\mP\bar{\vx}_{0})\varphi_{d}(\vx;\mu_{t}\mP\bar{\vx}_{0},\sigma_{t}^{2}\mI_{d})\bar{p}_{0}(\bar{\vx}_{0})\d\bar{\vx}_{0} \\
&=\frac{1}{p_{t}(\vx)}(2\pi\sigma_{t}^{2})^{-\frac{d}{2}}\int w(\mP\bar{\vx}_{0})\exp\Big(-\frac{\|\vx-\mu_{t}\mP\bar{\vx}_{0}\|_{2}^{2}}{2\sigma_{t}^{2}}\Big)\bar{p}_{0}(\bar{\vx}_{0})\d\bar{\vx}_{0} \\
&=\frac{1}{p_{t}(\vx)}(2\pi\sigma_{t}^{2})^{-\frac{d}{2}}\int w(\mP\bar{\vx}_{0})\exp\Big(-\frac{\|(\mI_{d}-\mP\mP^{\top})\vx+\mP\mP^{\top}\vx-\mu_{t}\mP\bar{\vx}_{0}\|_{2}^{2}}{2\sigma_{t}^{2}}\Big)\bar{p}_{0}(\bar{\vx}_{0})\d\bar{\vx}_{0} \\
&=\frac{1}{p_{t}(\vx)}\exp\Big(-\frac{\|(\mI_{d}-\mP\mP^{\top})\vx\|_{2}^{2}}{2\sigma_{t}^{2}}\Big)\int w(\mP\bar{\vx}_{0})\varphi_{d}(\mP^{\top}\vx;\mu_{t}\bar{\vx}_{0},\sigma_{t}^{2}\mI_{d})\bar{p}_{0}(\bar{\vx}_{0})\d\bar{\vx}_{0} \\
&=\frac{1}{\bar{p}_{t}(\mP^{\top}\vx)}\int w(\mP\bar{\vx}_{0})\varphi_{d}(\mP^{\top}\vx;\mu_{t}\bar{\vx}_{0},\sigma_{t}^{2}\mI_{d})\bar{p}_{0}(\bar{\vx}_{0})\d\bar{\vx}_{0} \\
&=\bbE[w(\mP\bar{\mX}_{0}) \mid \bar{\mX}_{t}=\mP^{\top}\vx]=\bbE[w(\mP\bar{\mX}_{T}^{\leftarrow}) \mid \bar{\mX}_{T-t}^{\leftarrow}=\mP^{\top}\vx], 
\end{align*}
where the second and the eighth equalities are due to Bayes' rule, the seventh equality holds from~\eqref{eq:low:dim:density}. This completes the proof.
\end{proof}

\begin{customproposition}{\ref{proposition:regularity:h:function:low}}
Suppose Assumptions~\ref{assumption:intrinsic} and~\ref{assumption:bounded:weight} hold. Then for all $t\in(0,T)$ and $\bar{\vx}\in\bbR^{d^{*}}$, the following bounds hold:
\begin{enumerate}[label=(\roman*)]
\item $\underline{B}\leq\bar{h}_{t}^{*}(\bar{\vx})\leq\bar{B}$;
\item $\max_{1\leq k\leq d}|D_{k}\bar{h}_{t}^{*}(\bar{\vx})|\leq 2\sigma_{T-t}^{-2}\bar{B}$; and
\item $\max_{1\leq k,\ell\leq d}|D_{k\ell}^{2}\bar{h}_{t}^{*}(\bar{\vx})|\leq 6\sigma_{T-t}^{-4}\bar{B}$,
\end{enumerate}
where $D_{k}$ and $D_{k\ell}^{2}$ denote the first-order and second-order partial derivatives with respect to the input coordinates, respectively.    
\end{customproposition}

\begin{proof}[Proof of Proposition~\ref{proposition:regularity:h:function:low}]
By the simialr argument as Lemmas~\ref{lemma:section:error:guidance:bound},~\ref{lemma:section:error:guidance:regularity:2}, and~\ref{lemma:section:error:guidance:regularity:3}, we conclude the desired results.
\end{proof}

\par By a similar argument as Lemma~\ref{lemma:approximation}, we have the following approximation error bounds for low-dimensional Doob's $h$-function $\bar{h}_{t}^{*}$~\eqref{eq:doob:h:function:low}.

\begin{lemma}[Approximation error]\label{lemma:approximation:low}
Suppose Assumptions~\ref{assumption:intrinsic} and~\ref{assumption:bounded:weight} hold. Let $R\geq 1$, and let the hypothesis class $\scrH_{t}$ be defined as~\eqref{eq:theorem:hypothesis:low} with $L\leq C\log N$ and $S\leq N^{d^{*}}$, then 
\begin{align*}
\|h_{t}-\bar{h}_{t}^{*}\|_{L^{2}(p_{T-t})}^{2} 
&\leq C\frac{\bar{B}^{2}\log^{4}N}{\sigma_{T-t}^{8}N^{4}}, \\
\|\nabla h_{t}-\nabla \bar{h}_{t}^{*}\|_{L^{2}(p_{T-t})}^{2} 
&\leq C\frac{\bar{B}^{2}\log^{2}N}{\sigma_{T-t}^{8}N^{2}}.
\end{align*}
provided that $R^{2}=(4d\mu_{t}^{2}+8\sigma_{t}^{2})\log N^{4}$, where $C$ is a constant only depending on $d^{*}$.
\end{lemma}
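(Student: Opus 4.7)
The plan is to reduce the approximation problem on $\bbR^d$ to the intrinsic $d^*$-dimensional latent space via Proposition~\ref{proposition:h:function:low}, and then mirror the argument of Lemma~\ref{lemma:approximation} with $d$ replaced by $d^*$ in the network sizing. First, Proposition~\ref{proposition:h:function:low} gives $h_t^* = \bar{h}_t^* \circ \mP^\top$ with $\bar{h}_t^*:\bbR^{d^*}\to\bbR$, and Proposition~\ref{proposition:regularity:h:function:low} bounds $\|\bar{h}_t^*\|_{C^2(\bbR^{d^*})}$ by a constant multiple of $\sigma_{T-t}^{-4}\bar{B}$. Applying the intrinsic-dimension version of Lemma~\ref{lemma:approximation:compact} (a rescaling of~\citet[Lemma~6]{ding2025Semi} in ambient dimension $d^*$) produces $\bar{h}_t\in N(L,S)$ with $L\leq C\log N$ and $S\leq N^{d^*}$ satisfying
\begin{align*}
\|\bar{h}_t-\bar{h}_t^*\|_{L^\infty(B(\bzero,R))} &\leq \frac{C\bar{B}R^2}{\sigma_{T-t}^4 N^2}, \\
\|\nabla\bar{h}_t-\nabla\bar{h}_t^*\|_{L^\infty(B(\bzero,R))} &\leq \frac{C\bar{B}R}{\sigma_{T-t}^4 N}.
\end{align*}

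Next, I define the candidate $h_t(\vx)\coloneq\bar{h}_t(\mP^\top\vx)$, which is realized as a neural network in $N(L,S)$ (up to absorbing an additional $d\cdot d^*$ parameters from the prepended affine layer encoding $\mP^\top$ into the constant). To check membership in the hypothesis class~\eqref{eq:theorem:hypothesis:low}, the value bound $\underline{B}\leq h_t\leq\bar{B}$ is inherited from the analogous bound on $\bar{h}_t$. For the derivative bound, $D_k h_t(\vx)=\sum_j P_{kj}D_j\bar{h}_t(\mP^\top\vx)$, so $|D_k h_t(\vx)|\leq \|\mP_{k,\cdot}\|_2\|\nabla\bar{h}_t(\mP^\top\vx)\|_2\leq\|\nabla\bar{h}_t(\mP^\top\vx)\|_2$, because $\mP\mP^\top$ is an orthogonal projection whose diagonal entries are at most $1$. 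The approximation in Step~1 is carried out with clipped output and gradient, so $\bar{h}_t$ inherits the same $C^0$/$C^1$ bounds as $\bar{h}_t^*$ and the constraint $|D_k h_t|\leq 2\sigma_{T-t}^{-2}\bar{B}$ follows directly.

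To convert the uniform bound on $B(\bzero,R)\subset\bbR^{d^*}$ into the $L^2(p_{T-t})$ bound, I split the integral as in Lemma~\ref{lemma:approximation}. Since $\|\mP^\top\|_{\op}=1$, every $\vx\in B(\bzero,R)\subset\bbR^d$ satisfies $\mP^\top\vx\in B(\bzero,R)\subset\bbR^{d^*}$; thus the pointwise estimates from Step~1 control the integrand on $\{\|\vx\|_2\leq R\}$, contributing at most $\bar{B}^2 R^4/(\sigma_{T-t}^8 N^4)$ and $\bar{B}^2 R^2/(\sigma_{T-t}^8 N^2)$ for the value and the gradient, respectively. On the complement, both $h_t$ and $h_t^*$ lie in $[\underline{B},\bar{B}]$, so the contribution is $\lesssim\bar{B}^2\pr\{\|\mX_{T-t}\|_2>R\}$. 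Writing $\mX_{T-t}=\mu_{T-t}\mP\bar{\mX}_0+\sigma_{T-t}\vepsilon$ with $\|\mP\bar{\mX}_0\|_2=\|\bar{\mX}_0\|_2\leq\sqrt{d^*}\leq\sqrt{d}$ and applying the chi-squared concentration as in Lemma~\ref{lemma:tail:proba:t} yields an exponential tail of the same form as in Lemma~\ref{lemma:approximation}. Choosing $R^2=(4d\mu_t^2+8\sigma_t^2)\log N^4$ forces this tail to be at most $N^{-4}$, so that it matches the ball contribution and the claimed rates follow.

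The main obstacle will be the bookkeeping of the composition $\bar{h}_t\circ\mP^\top$ inside the network class $N(L,S)$: the input dimension of every network in $N(L,S)$ is fixed by the paper's convention, so the affine layer encoding $\mP^\top$ must be absorbed into the first hidden layer of $\bar{h}_t$ without inflating the depth or exceeding the sparsity budget $S\leq N^{d^*}$. A related subtlety is that the clipping needed to enforce the box and derivative constraints in~\eqref{eq:theorem:hypothesis:low} must be implemented with a constant number of additional ReLU units, so that both the absolute constants in the error estimates and the size parameters $(L,S)$ remain unchanged up to absolute constants. Once these architectural technicalities are handled, the remainder of the proof is a line-by-line transcription of the argument of Lemma~\ref{lemma:approximation} with $d^*$ in place of $d$ in the approximation rate and the ambient-dimension tail estimate reused verbatim.
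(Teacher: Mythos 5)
Your proposal matches the paper's intended argument: the paper proves Lemma~\ref{lemma:approximation:low} simply by invoking the same steps as Lemma~\ref{lemma:approximation}, i.e., use the low-dimensional representation of Proposition~\ref{proposition:h:function:low} and the regularity bounds of Proposition~\ref{proposition:regularity:h:function:low}, apply the $C^2$ simultaneous-approximation result in dimension $d^{*}$ (so $S\lesssim N^{d^{*}}$), compose with $\mP^{\top}$, and then split the $L^{2}(p_{T-t})$ error over $B(\bzero,R)$ and its complement with the ambient-dimension Gaussian tail bound of Lemma~\ref{lemma:tail:proba:t}, exactly as you describe. The only point to watch (which the paper itself glosses over) is that the coordinate-wise derivative constraint in~\eqref{eq:theorem:hypothesis:low} for the composed network $\bar{h}_{t}\circ\mP^{\top}$ can pick up a factor $\sqrt{d^{*}}$ from $\|\nabla\bar{h}_{t}\|_{2}$ versus $\max_{j}|D_{j}\bar{h}_{t}|$, so your claim that it "follows directly" is slightly optimistic, but this is a constant depending only on $d^{*}$ and does not change the stated rates.
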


\begin{customtheorem}{\ref{theorem:rate:guidance:low}}
Suppose Assumptions~\ref{assumption:intrinsic} and~\ref{assumption:bounded:weight} hold. Let $t\in(0,T)$. Set the hypothesis class $\scrH_{t}$ as
\begin{equation}
\scrH_{t}\coloneq
\left\{
h_{t}\in N(L,S):
\begin{aligned}
&\sup_{\vx\in\bbR^{d}}h_{t}(\vx)\leq\bar{B}, ~ \inf_{\vx\in\bbR^{d}}h_{t}(\vx)\geq\underline{B}, \\
&\max_{1\leq k\leq d}\sup_{\vx\in\bbR^{d}}|D_{k}h_{t}(\vx)|\leq 2\sigma_{T-t}^{-2}\bar{B}
\end{aligned}
\right\},
\end{equation}
where $L=\calO(\log n)$ and $S=\calO(n^{\frac{d^{*}}{d^{*}+8}})$. Let $\what{h}_{t}^{\lambda}$ be the gradient-regularized empirical risk minimizer defined as~\eqref{eq:erm:GP}, and let $h_{t}^{*}$ be the Doob's $h$-function defined as~\eqref{eq:h:function}. Then the following inequality holds:
\begin{equation*}
\bbE\Big[\|\nabla\log\what{h}_{t}^{\lambda}-\nabla\log h_{t}^{*}\|_{L^{2}(p_{T-t})}^{2}\Big]\leq C\sigma_{T-t}^{-8}n^{-\frac{2}{d^{*}+8}}\log^{4}n,
\end{equation*}
provided that the regularization parameter $\lambda$ is set as $\lambda=\calO(n^{-\frac{2}{d^{*}+8}})$, where $C$ is a constant depending only on $d^{*}$, $\bar{B}$, and $\underline{B}$.
\end{customtheorem}

\begin{proof}[Proof of Theorem~\ref{theorem:rate:guidance:low}]
Using the same arguments as the proof of Theorem~\ref{theorem:rate:guidance} and applying Lemma~\ref{lemma:approximation:low} completes the proof.
\end{proof}

\section{Auxilary Lemmas}

\begin{lemma}
\label{lemma:tail:proba:t}
Suppose Assumption~\ref{assumption:bounded:support} holds. Let $\mX_{t}\sim p_{t}$. Then for each $\xi>0$,
\begin{equation*}
\pr\big\{\|\mX_{t}\|_{2}\geq \xi\big\}\leq 2^{d+1}\exp\Big(-\frac{\xi^{2}}{4d\mu_{t}^{2}+8\sigma_{t}^{2}}\Big).
\end{equation*}
\end{lemma}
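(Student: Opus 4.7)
The plan is to apply a Chernoff (exponential Markov) bound to $\|\mX_t\|_2^2$ and tune the free parameter so that the resulting tail matches the claimed constants. Specifically, for any admissible $\lambda>0$,
\begin{equation*}
\pr\{\|\mX_t\|_2 \geq \xi\} = \pr\{\|\mX_t\|_2^2 \geq \xi^2\} \leq e^{-\lambda\xi^2}\,\bbE\big[\exp(\lambda\|\mX_t\|_2^2)\big],
\end{equation*}
so the entire task reduces to bounding the moment generating function on the right and then picking $\lambda$ to match the target exponent $1/(4d\mu_t^2+8\sigma_t^2)$.

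To evaluate the MGF, I would condition on $\mX_0\sim p_0$. By~\eqref{eq:forward:solution}, $\mX_t\mid\mX_0\sim\calN(\mu_t\mX_0,\sigma_t^2\mI_d)$, and a standard Gaussian integral (completing the square coordinate-wise) yields
\begin{equation*}
\bbE\big[\exp(\lambda\|\mX_t\|_2^2)\mid\mX_0\big] = (1-2\lambda\sigma_t^2)^{-d/2}\exp\!\Big(\frac{\lambda\mu_t^2\|\mX_0\|_2^2}{1-2\lambda\sigma_t^2}\Big),
\end{equation*}
valid whenever $2\lambda\sigma_t^2<1$. Assumption~\ref{assumption:bounded:support} (inherited by $p_0$ through Assumption~\ref{assumption:bounded:weight} in the contexts where this lemma is invoked) gives $\|\mX_0\|_\infty\leq 1$ and hence $\|\mX_0\|_2^2\leq d$ almost surely, so taking expectation over $\mX_0$ only enlarges the exponent:
\begin{equation*}
\bbE\big[\exp(\lambda\|\mX_t\|_2^2)\big] \leq (1-2\lambda\sigma_t^2)^{-d/2}\exp\!\Big(\frac{\lambda\mu_t^2 d}{1-2\lambda\sigma_t^2}\Big).
\end{equation*}

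Next, I would set $\lambda = 1/(4d\mu_t^2+8\sigma_t^2)$, which directly produces the desired exponent $e^{-\lambda\xi^2}=\exp(-\xi^2/(4d\mu_t^2+8\sigma_t^2))$. For this $\lambda$ one has $1-2\lambda\sigma_t^2=(4d\mu_t^2+6\sigma_t^2)/(4d\mu_t^2+8\sigma_t^2)\in[3/4,1]$, so the variance factor satisfies $(1-2\lambda\sigma_t^2)^{-d/2}\leq(4/3)^{d/2}$ (worst case at $\mu_t=0$), while the mean-shift term is $\lambda\mu_t^2 d/(1-2\lambda\sigma_t^2) = d\mu_t^2/(4d\mu_t^2+6\sigma_t^2)\leq 1/4$. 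The elementary inequality $(4/3)^{d/2}e^{1/4}\leq 2^{d+1}$, which holds for every $d\geq 0$ (check by taking logarithms: $0.208d+0.361\leq d+1$), shows that the MGF prefactor is at most $2^{d+1}$, and combining with Chernoff yields the stated bound.

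The main point to watch is the clean separation of the two contributions in the MGF---the diagonal variance and the quadratic mean-shift term---so that the resulting exponent is exactly $-\xi^2/(4d\mu_t^2+8\sigma_t^2)$ and the prefactor collapses to a uniform $2^{d+1}$ without residual $t$-dependence. Beyond this careful bookkeeping I do not anticipate a serious obstacle; no sharper concentration inequality (e.g.\ Hanson--Wright or Borell--TIS) is required.
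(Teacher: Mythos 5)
Your argument is correct, and it reaches the stated bound. Both you and the paper take the exponential-Markov route applied to $\|\mX_{t}\|_{2}^{2}$ at the specific parameter $\lambda=1/(4d\mu_{t}^{2}+8\sigma_{t}^{2})$, but the intermediate MGF estimate is obtained differently. The paper writes $\mX_{t}\stackrel{d}{=}\mu_{t}\mX_{0}+\sigma_{t}\vepsilon$, applies the crude decomposition $\|a+b\|_{2}^{2}\leq 2\|a\|_{2}^{2}+2\|b\|_{2}^{2}$ (which is where the two separate denominators $2d\mu_t^2$ and $4\sigma_t^2$ come from), invokes independence to factor the MGF, and bounds the two pieces by $2$ and $2^{d}$ respectively. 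You instead condition on $\mX_{0}$, plug in the exact noncentral Gaussian MGF $(1-2\lambda\sigma_{t}^{2})^{-d/2}\exp\bigl(\lambda\mu_{t}^{2}\|\mX_{0}\|_{2}^{2}/(1-2\lambda\sigma_{t}^{2})\bigr)$, use $\|\mX_{0}\|_{2}^{2}\leq d$, and verify $(4/3)^{d/2}e^{1/4}\leq 2^{d+1}$ by a direct logarithm comparison. Your route is conceptually cleaner since it avoids the parallelogram-inequality slack, while the paper's is more elementary; both land on the same (non-sharp) constant $2^{d+1}$. You also correctly flag the minor support subtlety: Assumption~\ref{assumption:bounded:support} bounds $\supp(q_{0})$, and one reads off $\supp(p_{0})=\supp(q_{0})$ from Assumption~\ref{assumption:bounded:weight} to justify $\|\mX_{0}\|_{2}^{2}\leq d$ a.s.; the paper uses the same fact implicitly.
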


\begin{proof}[Proof of Lemma~\ref{lemma:tail:proba:t}]
According to Assumption~\ref{assumption:bounded:support}, we have 
\begin{equation}\label{eq:lemma:section:proof:warm:start:0:1}
\bbE\Big[\exp\Big(\frac{\|\mu_{t}\mX_{0}\|_{2}^{2}}{2d\mu_{t}^{2}}\Big)\Big]\leq 2.
\end{equation}
Let $\vepsilon\sim N(\bzero,\mI_{d})$. Then it follows that 
\begin{align}
\bbE\Big[\exp\Big(\frac{\|\sigma_{t}\vepsilon\|_{2}^{2}}{4\sigma_{t}^{2}}\Big)\Big]
&=\bbE\Big[\exp\Big(\frac{\|\vepsilon\|_{2}^{2}}{4}\Big)\Big] \nonumber \\
&=(2\pi)^{-\frac{d}{2}}\int\exp\Big(\frac{\|\vepsilon\|_{2}^{2}}{4}\Big)\exp\Big(-\frac{\|\vepsilon\|_{2}^{2}}{2}\Big)\d\vepsilon \nonumber \\
&=(2\pi)^{-\frac{d}{2}}\int\exp\Big(-\frac{\|\vepsilon\|_{2}^{2}}{4}\Big)\d\vepsilon\leq 2^{d}. \label{eq:lemma:section:proof:warm:start:0:2}
\end{align}
Notice that $\mX_{t}\stackrel{\d}{=}\mu_{t}\mX_{0}+\sigma_{t}\vepsilon$, where $\mX_{0}\sim p_{0}$ and $\vepsilon\sim N(\bzero,\mI_{d})$ are independent. Therefore, 
\begin{align}
\bbE\Big[\exp\Big(\frac{\|\mX_{t}\|_{2}^{2}}{4d\mu_{t}^{2}+8\sigma_{t}^{2}}\Big)\Big]
&=\bbE\Big[\exp\Big(\frac{\|\mu_{t}\mX_{0}+\sigma_{t}\vepsilon\|_{2}^{2}}{4d\mu_{t}^{2}+8\sigma_{t}^{2}}\Big)\Big] \nonumber \\
&\leq\bbE\Big[\exp\Big(\frac{\|\mu_{t}\mX_{0}\|_{2}^{2}}{2d\mu_{t}^{2}+4\sigma_{t}^{2}}+\frac{\|\sigma_{t}\vepsilon\|_{2}^{2}}{2d\mu_{t}^{2}+4\sigma_{t}^{2}}\Big)\Big] \nonumber \\
&\leq\bbE\Big[\exp\Big(\frac{\|\mu_{t}\mX_{0}\|_{2}^{2}}{2d\mu_{t}^{2}+4\sigma_{t}^{2}}\Big)\Big]\bbE\Big[\exp\Big(\frac{\|\sigma_{t}\vepsilon\|_{2}^{2}}{2d\mu_{t}^{2}+4\sigma_{t}^{2}}\Big)\Big] \nonumber \\ 
&\leq\bbE\Big[\exp\Big(\frac{\|\mu_{t}\mX_{0}\|_{2}^{2}}{2d\mu_{t}^{2}}\Big)\Big]\bbE\Big[\exp\Big(\frac{\|\sigma_{t}\vepsilon\|_{2}^{2}}{4\sigma_{t}^{2}}\Big)\Big]\leq 2^{d+1}, \label{eq:lemma:section:proof:warm:start:0:3}
\end{align}
where the the first inequality follows from Cauchy-Schwarz inequality, the second inequality holds from the independence of $\mX_{0}$ and $\vepsilon$, and the last inequality is due to~\eqref{eq:lemma:section:proof:warm:start:0:1} and~\eqref{eq:lemma:section:proof:warm:start:0:2}. Then we aim to bound the tail probability. For each $\xi>0$, we have 
\begin{align*}
\pr\big\{\|\mX_{t}\|_{2}\geq \xi\big\}
&=\pr\Big\{\frac{\|\mX_{t}\|_{2}^{2}}{4d\mu_{t}^{2}+8\sigma_{t}^{2}}\geq\frac{\xi^{2}}{4d\mu_{t}^{2}+8\sigma_{t}^{2}}\Big\} \\
&=\pr\Big\{\exp\Big(\frac{\|\mX_{t}\|_{2}^{2}}{4d\mu_{t}^{2}+8\sigma_{t}^{2}}\Big)\geq\exp\Big(\frac{\xi^{2}}{4d\mu_{t}^{2}+8\sigma_{t}^{2}}\Big)\Big\} \\
&\leq\exp\Big(-\frac{\xi^{2}}{4d\mu_{t}^{2}+8\sigma_{t}^{2}}\Big)\bbE\Big[\exp\Big(\frac{\|\mX_{t}\|_{2}^{2}}{4d\mu_{t}^{2}+8\sigma_{t}^{2}}\Big)\Big] \\
&\leq 2^{d+1}\exp\Big(-\frac{\xi^{2}}{4d\mu_{t}^{2}+8\sigma_{t}^{2}}\Big),
\end{align*}
where the first inequality invokes Markov's inequality, and the last inequality is due to~\eqref{eq:lemma:section:proof:warm:start:0:3}. This completes the proof.
\end{proof}

\begin{corollary}
\label{corollary:tail:proba:t}
Suppose Assumptions~\ref{assumption:bounded:support} and~\ref{assumption:bounded:weight} hold. Let $\mZ_{t}\sim q_{t}$. Then for each $\xi>0$,
\begin{equation*}
\pr\big\{\|\mZ_{t}\|_{2}\geq \xi\big\}\leq 2^{d+1}\exp\Big(-\frac{\xi^{2}}{4d\mu_{t}^{2}+8\sigma_{t}^{2}}\Big).
\end{equation*}
\end{corollary}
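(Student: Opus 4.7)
The plan is to mirror the argument of Lemma~\ref{lemma:tail:proba:t} essentially verbatim, since the proof there only exploits the compact support of the initial distribution, together with independence of the Gaussian noise. Under Assumption~\ref{assumption:bounded:support}, the target distribution $q_{0}$ has support contained in the hypercube $\{\vx_{0}:\|\vx_{0}\|_{\infty}\leq 1\}$, which is exactly the property used to establish the sub-Gaussian-type MGF bound on the initial state.

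First I would note that by the forward process~\eqref{eq:forward:base} initialized at $q_{0}$, the marginal law $q_{t}$ admits the representation $\mZ_{t}\stackrel{\d}{=}\mu_{t}\mZ_{0}+\sigma_{t}\vepsilon$ with $\mZ_{0}\sim q_{0}$ and $\vepsilon\sim\calN(\bzero,\mI_{d})$ independent. Since $\|\mZ_{0}\|_{2}^{2}\leq d$ almost surely, we obtain
\begin{equation*}
\bbE\Big[\exp\Big(\frac{\|\mu_{t}\mZ_{0}\|_{2}^{2}}{2d\mu_{t}^{2}}\Big)\Big]\leq\exp(1/2)\leq 2,
\end{equation*}
which is the $q_{0}$-analogue of~\eqref{eq:lemma:section:proof:warm:start:0:1}. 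The Gaussian bound~\eqref{eq:lemma:section:proof:warm:start:0:2} is distribution-free, so it carries over unchanged.

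Next I would combine these two bounds exactly as in~\eqref{eq:lemma:section:proof:warm:start:0:3}: apply Cauchy--Schwarz to split the exponent of $\|\mu_{t}\mZ_{0}+\sigma_{t}\vepsilon\|_{2}^{2}$, use independence of $\mZ_{0}$ and $\vepsilon$ to factor the expectation, and estimate each factor by monotonicity (since $2d\mu_{t}^{2}+4\sigma_{t}^{2}\geq 2d\mu_{t}^{2}$ and $\geq 4\sigma_{t}^{2}$). This yields
\begin{equation*}
\bbE\Big[\exp\Big(\tfrac{\|\mZ_{t}\|_{2}^{2}}{4d\mu_{t}^{2}+8\sigma_{t}^{2}}\Big)\Big]\leq 2^{d+1}.
\end{equation*}
Finally, Markov's inequality applied to the random variable $\exp(\|\mZ_{t}\|_{2}^{2}/(4d\mu_{t}^{2}+8\sigma_{t}^{2}))$ and evaluated at threshold $\exp(\xi^{2}/(4d\mu_{t}^{2}+8\sigma_{t}^{2}))$ gives the claimed tail bound.

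There is no substantive obstacle here; the only thing to verify is that Assumption~\ref{assumption:bounded:support} alone (without invoking Assumption~\ref{assumption:bounded:weight}) already supplies the compactness needed for the MGF estimate on $\mZ_{0}\sim q_{0}$. Assumption~\ref{assumption:bounded:weight} plays no role in this particular tail bound, although it is listed for consistency with the rest of the analysis (and guarantees that $q_{t}$ is well-defined as a probability density via~\eqref{eq:weight}). Consequently, the corollary is a direct transcription of Lemma~\ref{lemma:tail:proba:t} with $p_{0}$ replaced by $q_{0}$.
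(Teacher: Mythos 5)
Your proof is correct and mirrors the paper's argument: the paper's own proof is simply the one-line observation that $q_{0}$ inherits bounded support and then repeats Lemma~\ref{lemma:tail:proba:t} verbatim with $p_{0}$ replaced by $q_{0}$. Your remark that Assumption~\ref{assumption:bounded:weight} is not actually needed for this tail bound is accurate, since Assumption~\ref{assumption:bounded:support} is stated directly in terms of $q_{0}$ and already furnishes $\|\mZ_{0}\|_{2}^{2}\le d$ almost surely.
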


\begin{proof}[Proof of Corollary~\ref{corollary:tail:proba:t}]
Under Assumptions~\ref{assumption:bounded:support} and~\ref{assumption:bounded:weight}, $\supp(q_{0})=\supp(p_{0})$. The same argument as Lemma~\ref{lemma:tail:proba:t} completes the proof.
\end{proof}

\begin{lemma}
\label{lemma:expectation:t}
Suppose Assumptions~\ref{assumption:bounded:support} and~\ref{assumption:bounded:weight} hold. Let $\mZ_{t}\sim q_{t}$. Then for each $\xi>0$,
\begin{equation*}
\bbE\big[\|\mZ_{t}\|_{2}^{4}\big]\lesssim d^{2}.
\end{equation*}
\end{lemma}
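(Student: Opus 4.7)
The plan is to exploit the Gaussian convolution structure of the forward OU process and the compact support of $q_0$. By Assumption~\ref{assumption:bounded:weight}, the tilted distribution $q_0$ is mutually absolutely continuous with $p_0$, so Assumption~\ref{assumption:bounded:support} gives $\supp(q_0)\subseteq\{\vz_0\in\bbR^d:\|\vz_0\|_\infty\leq 1\}$. Consequently, if $\mZ_0\sim q_0$, then $\|\mZ_0\|_2^2\leq d$ almost surely.

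Next I would represent $\mZ_t$ via the transition~\eqref{eq:forward:solution}: write $\mZ_t\stackrel{\d}{=}\mu_t\mZ_0+\sigma_t\vepsilon$, where $\mZ_0\sim q_0$ is independent of $\vepsilon\sim\calN(\bzero,\mI_d)$. A standard inequality, namely $(a+b)^4\leq 8(a^4+b^4)$ applied to $a=\mu_t\|\mZ_0\|_2$ and $b=\sigma_t\|\vepsilon\|_2$ together with the triangle inequality, yields
\begin{equation*}
\|\mZ_t\|_2^4 \leq 8\mu_t^4\|\mZ_0\|_2^4 + 8\sigma_t^4\|\vepsilon\|_2^4.
\end{equation*}
Taking expectations and using $\mu_t,\sigma_t\leq 1$, $\|\mZ_0\|_2^4\leq d^2$ almost surely, and the classical Gaussian moment identity $\bbE[\|\vepsilon\|_2^4]=d(d+2)\leq 3d^2$, I obtain $\bbE[\|\mZ_t\|_2^4]\leq 8d^2+24d^2=32d^2$, which gives the claimed bound.

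This argument involves no analytic subtlety and no main obstacle; the only care needed is to verify explicitly that the boundedness of the weight $w$ transfers the compact support from $p_0$ to $q_0$ (an immediate consequence of~\eqref{eq:weight} and Assumption~\ref{assumption:bounded:weight}), and to recall the exact fourth-moment formula for $\|\vepsilon\|_2^2\sim\chi_d^2$. Alternatively, one could derive the bound directly from the sub-Gaussian tail obtained in Corollary~\ref{corollary:tail:proba:t} via the layer-cake formula $\bbE[\|\mZ_t\|_2^4]=\int_0^\infty 4\xi^3\pr\{\|\mZ_t\|_2\geq\xi\}\d\xi$, which also yields an $\calO(d^2)$ bound after integrating the Gaussian-type tail with scale $\sqrt{4d\mu_t^2+8\sigma_t^2}\lesssim\sqrt{d}$; but the direct moment computation is sharper and simpler.
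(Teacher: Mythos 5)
Your proof is correct and follows essentially the same route as the paper: decompose $\mZ_t\stackrel{\d}{=}\mu_t\mZ_0+\sigma_t\vepsilon$, apply $(a+b)^4\leq 8(a^4+b^4)$, bound $\|\mZ_0\|_2^4\leq d^2$ from the compact support, and use the chi-squared fourth moment. Minor note: Assumption~\ref{assumption:bounded:support} already states bounded support of $q_0$ directly, so the detour through Assumption~\ref{assumption:bounded:weight} to transfer it from $p_0$ is unnecessary (though harmless); also, your exact formula $\bbE[\|\vepsilon\|_2^4]=d(d+2)$ is the correct one, whereas the paper's displayed product of two gamma functions appears to be a typo for the quotient $4\Gamma(\tfrac{d+4}{2})/\Gamma(\tfrac{d}{2})$.
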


\begin{proof}[Proof of Lemma~\ref{lemma:expectation:t}]
Let $\vepsilon\sim N(\bzero,\mI_{d})$. It is straightforward that 
\begin{equation*}
\bbE\big[\|\vepsilon\|_{2}^{4}\big]=4\varGamma\Big(\frac{d+4}{2}\Big)\varGamma\Big(\frac{d}{2}\Big)\leq (d+4)^{2}.
\end{equation*}
Since $\mZ_{t}\stackrel{\d}{=}\mu_{t}\mZ_{0}+\sigma_{t}\vepsilon$ with $\mZ_{0}\sim q_{0}$ independent of $\vepsilon$, it follows from the triangular inequality that 
\begin{align*}
\bbE\big[\|\mZ_{t}\|_{2}^{4}\big]
&\leq 8\mu_{t}^{4}\bbE\big[\|\mZ_{0}\|_{2}^{4}\big]+8\sigma_{t}^{4}\bbE\big[\|\vepsilon\|_{2}^{4}\big] \\
&\leq 8(d^{2}+(d+4)^{2}),
\end{align*}
where we used the fact that $\mu_{t},\sigma_{t}\leq 1$, and $\supp(q_{0})=\supp(p_{0})$ under Assumptions~\ref{assumption:bounded:support} and~\ref{assumption:bounded:weight}. This completes the proof.
\end{proof}

\end{document}